\newtheorem{Theorem}{Theorem}[section]
\newtheorem{Definition}[Theorem]{Definition}
\newtheorem{Proposition}[Theorem]{Proposition}
\newtheorem{Assumption}[Theorem]{Assumption}
\newtheorem{Lemma}[Theorem]{Lemma}
\newtheorem{Corollary}[Theorem]{Corollary}
\newtheorem{Remark}[Theorem]{Remark}
\DeclareMathOperator*{\dprime}{\prime \prime}
\DeclareMathOperator{\Tr}{Tr}
\DeclareMathOperator*{\argmin}{arg\,min}
\DeclareMathOperator{\sx}{\underline{\sigma}_{\pmb{X}}}
\DeclareMathOperator{\sq}{\underline{\sigma}_{\pmb{Q}}}
\DeclareMathOperator{\sr}{\underline{\sigma}_{\pmb{R}}}
\DeclareMathOperator{\HH}{\mathcal{H}}
\DeclareMathOperator{\HECK}{\mathcal{H}(1/\epsilon,C(\pmb{K}))}
\DeclareMathOperator{\HECKI}{\mathcal{H}(1/\epsilon,C(\pmb{K}^0))}
\DeclareMathOperator{\HCK}{\mathcal{H}(C(\pmb{K}))}
\DeclareMathOperator{\HCKR}{\mathcal{H}(1/\epsilon,C(\pmb{K}))}
\DeclareMathOperator{\HCKIR}{\mathcal{H}(1/\epsilon,C(\pmb{K}^0))}
\newcommand*{\rom}[1]{\expandafter\@slowromancap\romannumeral #1@}
\newcommand{\vertiii}[1]{{\left\vert\kern-0.25ex\left\vert\kern-0.25ex\left\vert #1 
    \right\vert\kern-0.25ex\right\vert\kern-0.25ex\right\vert}}
\begin{document}
\title{Policy Gradient Methods for the Noisy Linear Quadratic Regulator over a Finite Horizon}

\author{Ben Hambly
\thanks{Mathematical Institute, University of Oxford. \textbf{Email:} \{hambly,  xur,  yang\}@maths.ox.ac.uk }
\and
Renyuan Xu \footnotemark[1]
\and
Huining Yang\thanks{ 
Supported by the EPSRC Centre for Doctoral Training in Industrially Focused Mathematical Modelling (EP/L015803/1) in collaboration with BP plc.}
  \footnotemark[1]
}
\maketitle
\begin{abstract}
     We explore reinforcement learning methods for finding the optimal policy in the linear quadratic regulator (LQR) problem. In particular we consider the convergence of policy gradient methods in the setting of known and unknown parameters. We are able to produce {a global linear} convergence guarantee for this approach in the setting of finite time horizon and stochastic state dynamics under weak assumptions. The convergence of a projected policy gradient method is also established in order to handle problems with constraints. We illustrate the performance of the algorithm with two examples. The first example is the optimal liquidation of a holding in an asset. We show results for the case where we assume a model for the underlying dynamics and where we apply the method to the data directly. The empirical evidence suggests that the policy gradient method can learn the global optimal solution for a larger class of stochastic systems containing the LQR framework and that it is more robust with respect to model mis-specification when compared to a model-based approach. The second example is an LQR system in a higher dimensional setting with synthetic data.
\end{abstract}

\section{Introduction}

The Linear Quadratic Regulator (LQR) problem is one of the most fundamental in optimal control theory. Its aim is to find a control for a linear dynamical system, that is the dynamics of the state of the system is described by a linear function of the current state and input, subject to a quadratic cost. It is an important problem for a number of reasons: (1) the LQR problem is one of the few optimal control problems for which there exists a closed-form analytical representation of the optimal feedback control; (2) when the dynamics are nonlinear and hard to analyze, a LQR approximation may be obtained as a local expansion and provide an approximation that is provably close to the original problem; (3) the LQR has been used in a wide variety of applications. In particular, in the set-up of fixed time horizon and stochastic dynamics, applications include portfolio optimization \cite{abeille2016} and optimal liquidation \cite{AC2001} in finance, resource allocation in energy markets \cite{patrinos2011,wasa2017}, and biological movement systems \cite{li2004iterative}.


Until recently much of the work on the LQR problem has focused on solving for the optimal controls under the assumption that the model parameters are {\it fully known}. See the book of Anderson and Moore \cite{anderson2007optimal} for an introduction to the LQR problem with known parameters. However, assuming that the controller has access to all the model parameters is not realistic for many applications, and this has lead to the exploration of learning approaches to the problem. We consider reinforcement learning (RL), one of the three basic machine learning paradigms (alongside supervised learning and unsupervised learning). Unlike the situation with full information on the model parameters, RL is learning to make decisions via trial and error, through interactions with the (partially) unknown environment. In RL,  an agent takes an action and receives a reinforcement signal in terms of a numerical reward, which encodes the outcome of her action.  In order to maximize the accumulated reward over time, the agent  learns to select her actions  based on her past experiences (exploitation) and/or  by making new choices (exploration).
There are two popular approaches {in RL} to handle the LQR with unknown parameters: the model-based approach and the model-free approach. 

In the paradigm of the model-based approach, the controller estimates the unknown model parameters and then constructs a control policy based on the estimated parameters. The classical approach is the {\it certainty equivalence principle} \cite{aastrom2013adaptive}:  the unknown parameters are estimated using observations (or samples), and a control policy is then designed by treating the estimated parameters as the truth. In the first step, the unknown model parameters can be estimated by standard statistical methods such as least-square minimization \cite{dean2019}. The second step is to show that when the estimated parameters are accurate enough, the policy using the ``plug-in'' estimates enjoys good theoretical guarantees of being close to optimal. See \cite{dean2019} and \cite{fiechter1997pac} for the optimal gap and sample complexities along this line and see \cite{fattahi2020efficient} for the sample complexity with distributed robust learning.  Another line of work in the model-based regime focuses on {\it uncertainty} quantification. The controller updates their posterior belief or the confidence bounds on the unknown model parameters and then makes decisions in an online manner, see \cite{abbasi2011regret,abeille2017thompson,faradonbeh2020optimism,ibrahimi2012efficient,ouyang2017control}.

Another recently developed approach is the {\it model-free approach}, where the controller  learns the optimal policy {\it directly} via interacting with the system, without inferring the model parameters. As the optimal policy in the LQR problem is a linear function of the state, the aim is to determine this linear function. This is  equivalent to learning a set of parameters in matrix form, called the policy matrix. One natural way to achieve this goal is to apply the gradient descent method in the parameter space of the policy matrix, also referred to as the {\it policy gradient method}. In particular, the policy gradient method computes the gradient of the cost function with respect to the policy matrix and then updates the policy in the steepest decent direction to find the optimal policy. {The paper \cite{FGKM2018} was the first to show that policy gradients converge to the global optimal solution with polynomial (in the relevant quantities) sample complexity. However, \cite{FGKM2018} focuses on the case where the only noise in the system is in the initial state, and the rest of the state transitions are deterministic.} There are other methods that fall into the category of the model-free approach, including the Actor-Critic method \cite{yang2019} and least-squares temporal difference learning \cite{tu2018least}.

If the true system is indeed linear-quadratic, the model-based approaches (may) outperform the model-free approaches by fully utilizing the linear-quadratic structure. For example in the setting that the system transition matrices are unknown and the parameters in the cost function are known, 
\cite{ReBen2019} and \cite{tu2019} showed that model-based methods are (asymptotically) more sample-efficient than some popular model-free methods. However, we are often uncertain about whether the actual system is linear-quadratic in the learning setting; for instance there might be some small nonlinear terms in the system dynamics. Therefore, compared to the model-based approach, which strongly relies on the assumption that the stochastic system lies within the LQR framework and may, in practice, suffer from model mis-specification, the execution of the model-free algorithm does not rely on the assumptions of the model. It has been shown that the policy gradient method can learn the global optimal solution, not only for the LQR framework, but also for a more general class of deterministic systems in the setting of an infinite time horizon \cite{bhandari2019}. Thus the advantage of the model-free approach is that it is more robust against model mis-specification compared to the model-based approach. 

\paragraph{Our Contributions.} 
We now summarize our contributions. Motivated by many real-word decision-making problems with a fixed deadline and uncertainty in the underlying dynamics, such as the optimal liquidation problem that we discuss in Section~2, we extend the framework of \cite{FGKM2018} by incorporating a
finite time horizon and sub-Gaussian noise (which includes Gaussian noise as a special case). In particular, we provide a global linear convergence guarantee and a polynomial sample complexity guarantee for the policy gradient method in this setting with both known parameters (Theorem~\ref{thm:convergence_egd}) and unknown parameters (Theorem~\ref{thm:model_free}).  The analysis with known parameters paves the way for learning LQR with unknown parameters. In addition, numerically solving the Riccati equation with known parameters in high dimensions may suffer from computational inaccuracy. The policy gradient method provides a direct way of searching for the optimal solution with known parameters in this case, which may be of separate interest.
Note that the optimal policy is time-invariant for the LQR with infinite time horizon, whereas the optimal policy is time-dependent with finite time horizon and hence harder to learn in general. 
With noise in the dynamics, we need more careful choices of the hyper-parameters to retrieve compatible sample complexities with noisy observations.
In addition, when optimal polices need to satisfy certain constraints, we provide a global convergence result for the projected policy gradient method  in Theorem~\ref{thm:projected_GD}. This is required in the context of our application to the optimal liquidation problem.

We will formulate the optimal liquidation problem over a fixed horizon as a noisy LQR problem which is essentially the classical Almgren-Chriss formulation \cite{AC2001}. The performance of the algorithm on NASDAQ ITCH data is assessed. As well as using the method within this modelling approach, we also consider the performance of the policy gradient method when applied directly to the data with an appropriate cost function. This improves the performance of the LQR/Almgren-Chriss solution and shows promising results for the use of the policy gradient method for problems that are `close' to the LQR framework.


\subsection{Related Work}

\paragraph{Policy Gradient Methods for LQR Problems.}
Since the policy gradient method is the main focus of our paper, here we provide a review of the previous theoretical work on this method in various LQR settings and extensions. {The first global convergence result for the policy gradient method to learn the optimal policy for LQR problems was developed in \cite{FGKM2018} in the setting of infinite horizon and deterministic dynamics.} The work of \cite{FGKM2018} was extended in \cite{bhandari2019} to give global optimality guarantees of policy gradient methods for a larger class of control problem that includes the linear-quadratic case.  In particular, this class of control problem satisfies a closure condition under policy improvement and convexity of policy improvement steps.
The paper \cite{bu2019} considers policy gradient methods for LQR problems in terms of optimizing a real valued matrix function over the set of feedback gains.
The extension of the policy gradient method to continuous-time can be found in \cite{bu2020}. All of these methods are in the infinite horizon setting and without the addition of noise in the dynamics.

There has been some work on the case of noisy dynamics, but all in the setting of infinite horizon. In \cite{gravell2019} the problem with a multiplicative noise was discussed, using a relatively straightforward extension of the deterministic dynamics considered in the original framework.
In the case of additive noise \cite{jin2020analysis} studies the global convergence of policy gradient and other learning algorithms for the LQR over an infinite time-horizon and with Gaussian noise. In particular, the policy considered in \cite{jin2020analysis} is a randomized policy with Gaussian distribution. There is also \cite{malik2019derivative} which studies derivative-free (zeroth-order) policy optimization methods for the LQR with bounded additive noise.
Finally some other contributions can be found in \cite{bu2019global,zhang2019policy} for zero-sum LQR games and \cite{carmona2019linear,guo2020entropy} for  mean-field LQR games.

Compared to \cite{FGKM2018}, our technical difficulties are three-fold. First due to the time-dependent nature of the admissible policies over a finite-horizon  and randomness from the system noise, we need additional conditions and analysis to guarantee the well-definedness of the state process, i.e., the non-degeneracy of the controlled state-covariance matrices. This holds almost for free in the infinite horizon case with deterministic dynamics. Second, we need to take care of the additional randomness from the sub-Gaussian noise when developing the perturbation analysis and the gradient dominant condition. Third, we need more advanced concentration inequalities and tighter upper bounds to provide compatible sample complexity analysis in the unknown parameter case. See the more detailed discussion in Remark~\ref{rmk:comparison}.


\paragraph{Optimal Liquidation.}
An early mathematical framework 
for the optimal liquidation problem 
is due to Almgren and Chriss \cite{AC2001}.
In this problem a trader is required to liquidate a portfolio of shares over a fixed horizon.
The selling of a large number of shares at once has both temporary and permanent impacts on the share price causing it to decrease. The trader therefore wishes to find a trading strategy which maximizes their return from, or alternatively, minimizes the cost of, the liquidation of the portfolio subject to a given level of risk.

This problem has been considered in many papers and extended in many directions. See for instance \cite{alfonsi2011},  \cite{Almgren2003} and \cite{gatheral2011}. We will cast this as an LQR problem and show how the policy gradient method is a powerful tool for solving this problem even without assumptions on the model.



More recently techniques from reinforcement learning have been applied to the optimal liquidation problem. The first paper to do this was \cite{NFK2006} where the authors showed promising results for this approach by designing a Q-learning based algorithm to optimally select price levels and passively place limit orders. This was further developed in \cite{HW2014} which designed a Q-learning  based algorithm for liquidation within the standard Almgren-Chriss framework. For recent work incorporating deep learning see for example  \cite{BL2019}, \cite{leal2020learning}, \cite{NLJ2018}, and \cite{ZZR2020}.
See \cite{charpentier2020} for a detailed review on reinforcement learning with applications in finance and economics, and the references therein. However, all these works focus on the model-free setting without taking advantage of even weak modelling assumptions on the market dynamics. In addition, the performances of these proposed algorithms are validated only through empirical studies and no theoretical guarantee of convergence is provided.

\paragraph{Organization and Notation.} For any matrix $Z =(Z_1,\cdots,Z_d) \in \mathbb{R}^{m\times d}$ with $Z_j \in \mathbb{R}^m$ ($j=1,2,\cdots, d$), $Z^{\top} \in \mathbb{R}^{m\times d}$ denote the transpose of $Z$, $\|Z\|$ denotes the spectral norm of a matrix $Z$; $\Tr(Z)$ denotes the trace of a square matrix $Z$; {$\|Z\|_F$ denotes the  Frobenius norm of a matrix $Z$}; $\sigma_{\min}(Z)$ denotes the minimal singular value of a square matrix $Z$; and $\text{vec}({Z}) = (Z_1^{\top},\cdots,Z_d^{\top})^{\top}$ denote the vectorized version of a matrix $Z$. For a sequence of matrices $\pmb{D}=(D_0,\cdots,D_T)$, we define a new norm $\vertiii{\pmb{D}}$ as $\vertiii{\pmb{D}} = \sum_{t=0}^T \|D_t\|$, where $D_t \in \mathbb{R}^{m\times d}$. Further denote $\mathcal{N}(\mu,\Sigma)$ as the Gaussian distribution with mean $\mu\in \mathbb{R}^d$ and covariance matrix $\Sigma\in\mathbb{R}^{d\times d}$.

The rest of the paper is organized as follows. We introduce the mathematical framework and problem set-up in Section \ref{sc:single_agent_setup}.  The first step in our convergence analysis of the policy gradient method is to consider the case of known model parameters in Section \ref{sc:single_agent_model_based}. When parameters are unknown, the convergence results for the sample-based policy gradient method and projected policy gradient method are obtained in Section \ref{sc:single_agent_model_free}.  Finally, the algorithm is applied to liquidation problem. See Sections \ref{sec:optimal_liquidation_formulation} and \ref{sc:experiment} for the corresponding set-up and algorithm performance, respectively.

\section{Problem Set-up}\label{sc:single_agent_setup}
We consider the following LQR problem over a finite time horizon $T$,
\begin{equation}\label{LQR_problem}
    \min_{\{u_t\}_{t=0}^{T-1}} \mathbb{E}\left[\sum_{t=0}^{T-1}\left(x_t^{\top}Q_tx_t+u_t^{\top}R_tu_t\right)+x_T^{\top}Q_Tx_T\right],
\end{equation}
such that for $t=0,1,\cdots,T-1$, 
\begin{equation}\label{dynamics}
    x_{t+1} = Ax_t+Bu_t+w_t,\ x_0\sim\mathcal{D}.
\end{equation}
Here $x_t\in\mathbb{R}^d$ is the state of the system with the initial state $x_0$ drawn from a distribution $\mathcal{D}$, $u_t\in\mathbb{R}^k$ is the control at time $t$ and $\{w_t\}_{t=0}^{T-1}$ are zero-mean IID noises which are independent from $x_0$. {At this moment, we only assume $x_0$ and $\{w_t\}_{t=0}^{T-1}$ have finite second moments.} That is, $\mathbb{E}[x_0 x_0^\top]$  and $W:=\mathbb{E}[w_t w_t^{\top}]$ $(\forall\, t=0,1,\cdots,T-1)$ exist. The system parameters $A\in\mathbb{R}^{d\times d}$ and $B\in\mathbb{R}^{d\times k}$ are referred to as system (transition) matrices; $Q_t\in\mathbb{R}^{d\times d}\ (\forall\, t=0,1,\cdots,T$) and $R_t\in\mathbb{R}^{k\times k}\ (\forall t=0,1,\cdots,T-1$) are  matrices that parameterize the quadratic costs. Note that the expectation in \eqref{LQR_problem} is taken with respect to both $x_0\sim\mathcal{D}$ and $w_t$ ($t=0,1,\cdots,T-1$).
We further denote by $\pmb{u} := (u_0,\cdots,u_{T-1})$, $\pmb{x} := (x_0,\cdots,x_{T})$, $\pmb{w} := (w_0,\cdots,w_{T-1})$, $\pmb{Q} := (Q_0,\cdots,Q_{T})$, and $\pmb{R} := (R_0,\cdots,R_{T-1})$, the profile over the decision period $T$.

To solve the LQR problem \eqref{LQR_problem}-\eqref{dynamics}, let us start with some  conditions on the model parameters to assure the well-definedness of the problem.
\begin{Assumption}[Cost Parameter]\label{ass:parameters} Assume $Q_t\in\mathbb{R}^{d\times d}$, for $t=0,1,\cdots,T$, and $R_t\in\mathbb{R}^{k\times k}$, for $t=0,1,\cdots,T-1$, are positive definite matrices.
\end{Assumption}

Under Assumption \ref{ass:parameters}, we can properly define a sequence of matrices $\{P_t^*\}_{t=0}^T$ as the solution to the following dynamic Riccati equation  \cite{Bertsekas2015}:
\begin{equation}\label{eqn:P_t}
    P_{t}^* = Q_t + A^{\top}P_{t+1}^*A - A^{\top}P_{t+1}^*B\left(B^{\top}P_{t+1}^*B+R_t\right)^{-1}B^{\top}P_{t+1}^*A,
\end{equation}
with terminal condition $P_T^* = Q_T.$ The matrices $\{P_t^*\}_{t=0}^T$ can be found by solving the Riccati equations iteratively backwards in time. In particular with a slight modification of the initial state distribution in \cite[Chapter 4.1]{Bertsekas2015}, we have the following result.

\begin{Lemma}[Well-definedness and the Optimal Solution \cite{Bertsekas2015}] Under Assumption \ref{ass:parameters},
\begin{enumerate}
    \item The solution $P_t^*$ to the Riccati equation \eqref{eqn:P_t} is positive definite, $\forall\, t=0,1,\cdots,T$;
    \item Then the optimal control sequence $\{u_t\}_{t=0}^{T-1}$ is given by
\begin{eqnarray}
    u_t &=& -K_t^*x_t, \label{opt_form} \,\,\qquad\qquad {\rm where}\\
    K_t^* &=& \left(B^{\top}P_{t+1}^*B+R_t\right)^{-1}B^{\top}P_{t+1}^*A.\label{optimal_k}
\end{eqnarray}
\end{enumerate}
\end{Lemma}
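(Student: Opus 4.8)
The plan is a standard backward-induction / dynamic-programming argument, essentially that of \cite[Chapter 4.1]{Bertsekas2015}, with the one extra ingredient of tracking the additive constant generated by the noise $\pmb{w}$ and the general initial law $\mathcal{D}$.

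First I would establish part 1 by backward induction on $t$. The base case $P_T^* = Q_T \succ 0$ is Assumption~\ref{ass:parameters}. For the inductive step, assume $P_{t+1}^* \succ 0$; then $B^{\top}P_{t+1}^*B + R_t \succ 0$ (a PSD plus a PD matrix), so the inverse appearing in \eqref{eqn:P_t} is well-defined. The key algebraic fact, which will also serve for part 2, is the completion-of-squares identity: writing $S_t := B^{\top}P_{t+1}^*B + R_t$ and $K_t^* := S_t^{-1}B^{\top}P_{t+1}^*A$, one has for every $K\in\mathbb{R}^{k\times d}$
\begin{equation*}
(A-BK)^{\top}P_{t+1}^*(A-BK) + K^{\top}R_tK = A^{\top}P_{t+1}^*A - A^{\top}P_{t+1}^*BS_t^{-1}B^{\top}P_{t+1}^*A + (K-K_t^*)^{\top}S_t(K-K_t^*).
\end{equation*}
Taking $K=K_t^*$ shows that $A^{\top}P_{t+1}^*A - A^{\top}P_{t+1}^*BS_t^{-1}B^{\top}P_{t+1}^*A = (A-BK_t^*)^{\top}P_{t+1}^*(A-BK_t^*) + (K_t^*)^{\top}R_tK_t^* \succeq 0$, so from \eqref{eqn:P_t} we get $P_t^* \succeq Q_t \succ 0$, closing the induction.

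For part 2 I would introduce the cost-to-go functions $V_t(x) := \min\,\mathbb{E}\big[\sum_{s=t}^{T-1}(x_s^{\top}Q_sx_s+u_s^{\top}R_su_s)+x_T^{\top}Q_Tx_T \mid x_t=x\big]$ and prove by backward induction that $V_t(x) = x^{\top}P_t^*x + c_t$ with $c_t = \sum_{s=t}^{T-1}\Tr(P_{s+1}^*W)$, and that the minimizing control at time $t$ in state $x$ is $-K_t^*x$. The base case is immediate. For the step, the dynamic programming principle gives $V_t(x) = \min_u\big[x^{\top}Q_tx + u^{\top}R_tu + \mathbb{E}\,V_{t+1}(Ax+Bu+w_t)\big]$; since $w_t$ is zero-mean, independent of $(x,u)$, and $\mathbb{E}[w_tw_t^{\top}]=W$, the cross terms vanish and $\mathbb{E}\,V_{t+1}(Ax+Bu+w_t) = (Ax+Bu)^{\top}P_{t+1}^*(Ax+Bu) + \Tr(P_{t+1}^*W) + c_{t+1}$. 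Minimizing the resulting strictly convex quadratic in $u$ (using $S_t\succ 0$) yields minimizer $-K_t^*x$, and the identity above gives minimum value $x^{\top}P_t^*x$; collecting constants gives $c_t = c_{t+1}+\Tr(P_{t+1}^*W)$. Specializing to $t=0$ and averaging over $x_0\sim\mathcal{D}$ gives the optimal cost $\mathbb{E}[x_0^{\top}P_0^*x_0]+\sum_{s=0}^{T-1}\Tr(P_{s+1}^*W)$, and the optimal control is exactly the feedback law \eqref{opt_form}--\eqref{optimal_k}.

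The only genuinely delicate point is justifying the dynamic programming principle under the weak moment hypothesis (only finite second moments of $x_0$ and of the $w_t$): one must check that every $V_t$ is finite-valued and that the interchange of minimization with expectation is legitimate. This is handled by observing that the candidate feedback policy $u_s=-K_s^*x_s$ keeps each $x_s$ with finite second moment (it is a finite product of the matrices $A-BK_s^*$ applied to $x_0$ plus noise contributions), so the associated cost is finite, while the per-stage costs are nonnegative; the standard finite-horizon verification argument then applies. Everything else is routine linear algebra, which is why the statement is attributed to \cite{Bertsekas2015}.
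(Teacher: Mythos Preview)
The paper does not supply its own proof of this lemma: it is stated as a known result and attributed to \cite[Chapter~4.1]{Bertsekas2015}, with the remark that only a ``slight modification of the initial state distribution'' is needed. Your proposal is correct and is precisely the standard Bertsekas backward-induction argument the citation points to, including the completion-of-squares identity and the tracking of the additive constants $c_t=\sum_{s=t}^{T-1}\Tr(P_{s+1}^*W)$; these constants reappear later in the paper (for a general policy $\pmb{K}$) as the quantities $L_t$ in \eqref{qt}, and your positive-definiteness induction is the same one the paper uses for the more general $P_t^{\pmb{K}}$ in Proposition~\ref{prop:P_positive}.
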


To find the optimal solution in the linear feedback form \eqref{opt_form}, we only need to focus on the following class of linear {\it admissible policies} in feedback form
\begin{equation}\label{admissible_form}
    {u}_t=-{K}_t{x}_t, \qquad t=0,1,\cdots,T-1,
\end{equation}
which can be fully characterized by $\pmb{K}:= (K_0,\cdots,K_{T-1})$.

\subsection{Application: The Optimal Liquidation Problem}\label{sec:optimal_liquidation_formulation}

One application of the LQR framework \eqref{LQR_problem}-\eqref{dynamics} is the optimal liquidation problem. We give a slight variant of the setup of Almgren-Chriss \cite{AC2001}. Our aim is to liquidate an amount $q_0$ of an asset, with price $S_0$ at time 0, over the time period $[0,T]$ with trading decisions made at discrete time points $t=0,1,\dots,T-1$. At each time $t$ our decision is to liquidate an amount $u_t$ of the asset. Any residual holding is then liquidated at time $T$. This will have two types of price impact. There will be a temporary price impact, caused when the order `walks the book' and a permanent price impact as traders rearrange their positions in the light of the sell order.
We will assume the impacts are linear in the number of traded shares.


We write $S_t$ for the asset price at time $t$. This evolves according to a Bachelier model with a linear permanent price impact in that
\[ S_{t+1} = S_t + \sigma Z_{t+1} - \gamma u_t, \]
where, for each $t=1,\dots,T$, $Z_t$ is an independent standard normal random variable, $\sigma$ is the volatility and $\gamma$ is the permanent price impact parameter.  The inventory process $q_t $ records the current holding in the asset at time $t$. Thus we have 
\[ q_{t+1}=q_t - u_t. \]
Therefore, the two-dimensional state process is
\begin{eqnarray}\label{eq:liquidaton_states}
    \begin{pmatrix}
  S_{t+1} \\   q_{t+1} 
\end{pmatrix} 
=\begin{pmatrix}
 1 &0\\  0&1
\end{pmatrix} 
 \begin{pmatrix}
  S_{t} \\   q_{t} 
\end{pmatrix}
+
 \begin{pmatrix}
  -\gamma\\   -1
\end{pmatrix} u_t
+
 \begin{pmatrix}
 \sigma Z_{t+1} \\  0
\end{pmatrix}.
\end{eqnarray}

When selling shares we incur a temporary price impact, parameter $\beta$, in that if, at time $t$, we trade $u_t$ of our asset then we obtain $\Tilde{S}_t = S_t-\beta u_t$ per share. Therefore the total revenue is
$\sum_{t=0}^{T-1}u_t\Tilde{S}_t + q_T\Tilde{S}_T$, and $C_T$,  the total cost of execution over $[0,T]$, is the book value at time 0 minus the revenue: 
\[ C_T=q_0 S_0-\sum_{t=0}^{T-1}u_t\Tilde{S}_t-q_T\Tilde{S}_T. \]
In a similar way to \cite{AC2001}, after summation by parts, we have
\[ C_T = -\sigma \sum_{t=1}^{T} q_t Z_t - \frac{\gamma}{2} \sum_{t=0}^{T-1} u_t^2 +\frac{\gamma}{2}\left(q_0^2-q_T^2\right) +\beta \sum_{t=0}^{T-1} u_t^2 + \beta q_T^2.
\]
The mean and variance of the total cost of execution are given by
\begin{equation*}
    \mathbb{E}(C) =  \sum_{t=0}^{T-1}  \delta u_t^2 + \delta q^2_T + \frac{\gamma}{2} q_0^2,  \quad \mbox{var}(C) = \sum_{t=1}^{T}\sigma^2 q_t^2,
\end{equation*}
where $\delta=\beta-\gamma/2$ summarizes the impact and is assumed positive.

Following Almgren-Chriss \cite{AC2001}, we minimize the following cost function
\begin{equation}\label{eq:mincost_AC}
    C_{\rm AC} = \min\,\left( \mathbb{E}(C) + \phi\,\mbox{var}(C)\right),
\end{equation}
where $\phi$ is a parameter balancing risk versus return. For our  LQR framework we take the cost function to be
\begin{equation}\label{eq:mincost_LQR}
C_{\rm LQR}(\epsilon) = \min \, \left(\mathbb{E}(C) + \phi\,\mbox{var}(C) + \epsilon \sum_{t=0}^T S_t^2\right) = \min \, \left(\sum_{t=0}^{T-1}  \delta u_t^2 + \delta q^2_T + \frac{\gamma}{2} q_0^2 + \phi\sum_{t=1}^{T}\sigma^2 q_t^2 + \epsilon \sum_{t=0}^T S_t^2\right). 
\end{equation}
Note that the term ${\epsilon \sum_{t=0}^T S_t^2}$, with some small $\epsilon>0$, serves as a regularization term to guarantee Assumption \ref{ass:parameters} holds. In practice, we can show that the optimal solution with  $\epsilon$ small  is close to the Almgren-Chriss solution (when $\epsilon=0$). In addition, the algorithm will still converge with $\epsilon=0$. See more discussion in Section \ref{sc:experiment}.
Thus, in the LQR formulation we have $A = \begin{pmatrix} 1 &0\\  0&1 \end{pmatrix} $, $B = (-\gamma,-1)^{\top}$, and $w_t = (\sigma Z_{t+1}, 0)^{\top}$
and the objective function has
$Q_T = \begin{pmatrix}
  \epsilon &0\\ 
  0&\delta+\phi\sigma^2
 \end{pmatrix} $, $Q_t = \begin{pmatrix}
  \epsilon &0\\ 
  0&\phi\sigma^2
 \end{pmatrix} $ and $R_t=\delta$.
It is easy to see that $Q_t$, for $t=0,1,\cdots,T$ and $R_t$ for $t=0,1,\cdots,T-1$ are positive definite, hence Assumption \ref{ass:parameters} is satisfied. 

We will show that the problem is well-defined and can be solved using the methods of this paper with rigorous convergence guarantees.

\section{Exact Gradient Methods with Known Parameters}\label{sc:single_agent_model_based}

In this section we assume all the parameters in the model, $\{Q_t\}_{t=0}^T$, $\{R_t\}_{t=0}^{T-1}$, $A$, $B$, are known.
The analysis of exact gradient methods with known parameters paves the way for learning LQR with unknown parameters in Section \ref{sc:single_agent_model_free}. 
In addition, the policy gradient method provides an alternative way to solve the LQR problem when the parameters are fully known. In this setting the Riccati equation \eqref{eqn:P_t} is just solved backward in time. However this operation involves inverting large matrices when the problem is in high dimensions, which may lead to high computational cost and accumulation of computational errors.

Since an admissible policy can be fully characterized by $\pmb{K}$, the cost of a policy $\pmb{K}$ can be correspondingly defined as 
\begin{equation}\label{defn of CK}
    C(\pmb{K})=\mathbb{E}\left[\sum_{t=0}^{T-1}\left(x_t^\top Q_tx_t+u_t^{\top}R_tu_t\right)+x_T^{\top}Q_Tx_T\right],
\end{equation}
where $\{x_t\}_{t=1}^T$ and $
\{u_t\}_{t=0}^{T-1}$ are the dynamics and controls induced by following $\pmb{K}$, starting with $x_0\sim\mathcal{D}$. Recall that $\pmb{K}^*$ is the optimal policy for the problem, in that
\begin{equation}\label{optimal_solution}
   \pmb{K}^*= \argmin_{\pmb{K}} C(\pmb{K}),
\end{equation}
subject to the dynamics \eqref{dynamics}.

\paragraph{Well-definedness of the State Process.}  

To prove the global convergence of policy gradient methods, the essential idea is to show the {\it gradient dominance condition}, which states that $C(\pmb{K}) - C(\pmb{K}^*)$ can be bounded by $\|\nabla C(\pmb{K})\|_F$ for any admissible policy $\pmb{K}$. One of the key steps to guarantee this gradient dominance condition is the well-definedness of the state covariance matrix. That is, $\mathbb{E}[x_t x_t^{\top}]$ is positive definite for $t=0,1,\cdots,T$. This condition holds almost for free for LQR problems with infinite time horizon and deterministic dynamics. The only condition needed there is the positive definiteness of  $\mathbb{E}[x_0 x_0^{\top}]$ (See \cite{FGKM2018}). However, some effort needs to be made to ensure that the state covariance matrix is well-defined for LQR problems with finite horizon and stochastic dynamics. We show that this condition holds under moderate conditions.

\begin{Assumption}[Initial State and Noise Process]\label{ass:State} We assume that
\begin{enumerate}
    \item Initial state: $x_0\sim\mathcal{D}$ such that $\mathbb{E}[x_0x_0^\top]$ is positive definite;
    \item Noise: $\{w_t\}_{t=0}^{T-1}$ are IID  and independent from $x_0$ such that 
    $\mathbb{E}[w_t] =  0$, and $W=\mathbb{E}[w_tw_t^{\top}]$ is positive definite, $\forall t=0,1,\cdots,T-1$.
\end{enumerate}
\end{Assumption}
Define $\sx$ as the lower bound over all the minimum singular values of $\mathbb{E}[x_tx_t^\top]$:
\begin{equation}\label{Defn of barmu}
   \sx  = \min_t \sigma_{\min}(\mathbb{E}[x_tx_t^{\top}]),
\end{equation}
then we have the following result and the proof can be found in  Appendix \ref{appendix:missing_proofs_sec3_1}.
\begin{Lemma}[Well-definedness of the State Covariance Matrix]\label{lemma:covariance}
Under Assumption \ref{ass:State}, we have $\mathbb{E}[x_t x^{\top}_t]$  is positive definite  for $t=0,1,\cdots,T$ under any control policy $\pmb{K}$. Therefore, $\sx>0$.
\end{Lemma}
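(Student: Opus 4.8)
The plan is to derive a closed-form recursion for the state second-moment matrices $\Sigma_t := \mathbb{E}[x_t x_t^\top]$ under an arbitrary admissible policy $\pmb{K}$, and then read off positive definiteness directly from that recursion. Fix $\pmb{K} = (K_0,\dots,K_{T-1})$ and write the closed-loop dynamics from \eqref{dynamics} as $x_{t+1} = L_t x_t + w_t$ with $L_t := A - B K_t$. First I would observe that $x_t$ is a (measurable, in fact affine) function of $x_0, w_0, \dots, w_{t-1}$ only; hence by the IID property of the noise and its independence from $x_0$ (Assumption~\ref{ass:State}), $w_t$ is independent of $x_t$, and combined with $\mathbb{E}[w_t] = 0$ this gives $\mathbb{E}[x_t w_t^\top] = \mathbb{E}[x_t]\,\mathbb{E}[w_t]^\top = 0$.

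Expanding $x_{t+1} x_{t+1}^\top = (L_t x_t + w_t)(L_t x_t + w_t)^\top$ and taking expectations, the two cross terms vanish by the previous step and, using $W = \mathbb{E}[w_t w_t^\top]$, we obtain
\[
\Sigma_{t+1} = L_t \Sigma_t L_t^\top + W, \qquad t = 0,1,\dots,T-1,
\]
with $\Sigma_0 = \mathbb{E}[x_0 x_0^\top]$. Next I would argue that $\Sigma_t \succ 0$ for every $t = 0,1,\dots,T$. The case $t=0$ is exactly the first part of Assumption~\ref{ass:State}. For $t \ge 1$: any second-moment matrix satisfies $\Sigma_{t-1} \succeq 0$, so for any vector $v$ we have $v^\top L_{t-1} \Sigma_{t-1} L_{t-1}^\top v = (L_{t-1}^\top v)^\top \Sigma_{t-1} (L_{t-1}^\top v) \ge 0$, i.e. $L_{t-1} \Sigma_{t-1} L_{t-1}^\top \succeq 0$; hence $\Sigma_t = L_{t-1}\Sigma_{t-1}L_{t-1}^\top + W \succeq W \succ 0$ by the second part of Assumption~\ref{ass:State}. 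In particular $\sigma_{\min}(\Sigma_t) > 0$ for every $t$, and since $t$ ranges over the finite set $\{0,1,\dots,T\}$, $\sx = \min_t \sigma_{\min}(\mathbb{E}[x_t x_t^\top]) > 0$.

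I do not expect any genuinely hard step here. The only point requiring care is the independence argument that kills the cross term $\mathbb{E}[x_t w_t^\top]$, which is precisely where the IID-and-independent-of-$x_0$ hypothesis on the noise enters; this is the ingredient that is \emph{not} automatic in the finite-horizon stochastic setting, in contrast to the deterministic infinite-horizon case of \cite{FGKM2018}. One minor bookkeeping remark is that the bound $\Sigma_t \succeq W$ is only available for $t \ge 1$, so the $t=0$ positive definiteness must be supplied separately by the initial-state assumption; both hypotheses of Assumption~\ref{ass:State} are therefore genuinely used, and no stability or boundedness condition on $\pmb{K}$ is needed because the horizon is finite.
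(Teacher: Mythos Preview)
Your proof is correct and follows essentially the same route as the paper's: derive the recursion $\Sigma_{t+1}=(A-BK_t)\Sigma_t(A-BK_t)^\top+W$ and conclude positive definiteness from $\Sigma_0\succ0$ and $W\succ0$. If anything, you are slightly more careful than the paper in explicitly justifying why the cross term $\mathbb{E}[x_tw_t^\top]$ vanishes.
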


Lemma \ref{lemma:covariance} implies that if the initial state and the noise driving the dynamics are non-degenerate, the covariance matrices of the state dynamics are positive definite for any policy $\pmb{K}$. However, the covariance matrix may be degenerate in many applications, especially when inventory processes are involved. (See, for example, the liquidation problem \eqref{eq:liquidaton_states}.) In this case, some problem-dependent conditions are needed to guarantee that $\sx>0$ holds. See more discussion on the condition $\sx>0$  for the liquidation problem in Section \ref{sec:singleassetliq}. In the light of this we will assume $\sx>0$ in the analysis of the convergence of the algorithm in Sections \ref{sc:single_agent_model_based} and \ref{sc:single_agent_model_free}.

Similarly, we define $\sr$ and $\sq$ to be the smallest values of all the minimum singular values of $\pmb{R}$ and $\pmb{Q}$:
\begin{eqnarray}
    \sr&=&\min_t \sigma_{\min}(R_t),\label{Defn of barSigmaR}\\
   \sq&=&\min_t\sigma_{\min}(Q_t). \label{Defn of barSigmaQ}
\end{eqnarray}
Under Assumption \ref{ass:parameters}, we have $\sr>0$ and $\sq>0$. 

{We write  $\HH =\left\{ h\,|\, h \,\,\text{are polynomials in the model parameters}\right\}$ and $\HH(.)$ when there are other dependencies. The model parameters are in terms of $d$, $k$, $\frac{1}{\|A\|}, \frac{1}{\|A\|+1}, \|A\|, \frac{1}{\|B\|}, \frac{1}{\|B\|+1}, \|B\|, \frac{1}{\vertiii{\pmb{R}}}, \frac{1}{\vertiii{\pmb{R}}+1},\\
\vertiii{\pmb{R}}, \frac{1}{\|W\|}, \frac{1}{\|W\|+1},
\|W\|$, $\frac{1}{\sq}$, $\frac{1}{\sq+1}$, $\sq$, $\frac{1}{\sr}$, $\frac{1}{\sr+1}$, $\sr$, $\frac{1}{\sx}$, $\frac{1}{\sx+1}$, $\sx$, $\vertiii{\pmb{Q}}$, $\mathbb{E}[x_0x_0^\top]$, and $\frac{1}{\mathbb{E}[x_0x_0^\top]}$.}

\paragraph{Exact Gradient Descent.} We consider the following {\it exact} gradient descent updating rule to find the optimal solution \eqref{optimal_solution},
\begin{eqnarray}\label{eq:EGD}
K_t^{n+1} = K_t^n -\eta \nabla_{t}C(\pmb{K}^n),\,\, \forall\, 0 \leq t \leq T-1,
\end{eqnarray}
where  $n$ is the number of iterations, $\nabla_{t}C(\pmb{K}) = \frac{\partial C(\pmb{K})}{\partial {K_t}}$ is the gradient of $C(\pmb{K})$ with respect to $K_t$, and $\eta$ is the step size. {We further denote  $\nabla C(\pmb{K})=(\nabla_0 C(\pmb{K}),\cdots,\nabla_{T-1}C(\pmb{K}))$.}

Let us define the state covariance matrix 
\begin{equation}\label{eq:Sigma_t}
    \Sigma_{t} = \mathbb{E}\left[x_tx_t^\top\right], \ t=0,1,\cdots,T,
\end{equation}
where $\{x_t\}_{t=1}^T$ is a state trajectory generated by $\pmb{K}$.
Further define a matrix $\Sigma_{\pmb{K}}$ as the sum of $\Sigma_t$, 
\begin{equation}\label{sigma_K}
    \Sigma_{\pmb{K}} = \sum_{t=0}^T \Sigma_t= \mathbb{E}\Big[\sum_{t=0}^{T} x_tx_t^{\top}\Big].
\end{equation}
Then, the main result for this setting is the following.
\begin{Theorem}[Global Convergence of Gradient Methods]\label{thm:convergence_egd}
Assume Assumption \ref{ass:parameters} holds. Further assume {$\sx>0$} and $C(\pmb{K}^0)$ is finite. Then, for an appropriate (constant) setting of the stepsize {$\eta\in\mathcal{H}(\frac{1}{C(\pmb{K}^0)+1})$},
and for {$\epsilon>0$, if we have} 
\begin{equation*}
    N \geq \frac{\|\Sigma_{\pmb{K}^*}\|}{2\eta\sx^2\sr}\log\frac{C(\pmb{K}^0)-C(\pmb{K}^*)}{\epsilon},
\end{equation*}
the exact gradient descent method \eqref{eq:EGD} enjoys the following performance bound:
\begin{equation*}
    C(\pmb{K}^N)-C(\pmb{K}^*)\leq\epsilon.
\end{equation*}
\end{Theorem}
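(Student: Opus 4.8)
The plan is to follow the three-step template of \cite{FGKM2018} — closed-form expressions, gradient domination, and an ``almost smoothness'' descent estimate — adapted to the time-indexed, noisy, finite-horizon setting, and then to combine the last two ingredients into a linear contraction. \emph{Step 1 (closed forms).} For a fixed admissible $\pmb{K}$, introduce the policy value matrices $\{P_t^{\pmb{K}}\}_{t=0}^T$ by the backward recursion
\[
P_t^{\pmb{K}} = Q_t + K_t^\top R_t K_t + (A-BK_t)^\top P_{t+1}^{\pmb{K}}(A-BK_t), \qquad P_T^{\pmb{K}} = Q_T,
\]
and verify by a backward induction / telescoping over $t$ that
\[
C(\pmb{K}) = \mathbb{E}\!\left[x_0^\top P_0^{\pmb{K}} x_0\right] + \sum_{t=0}^{T-1}\Tr\!\left(P_{t+1}^{\pmb{K}} W\right),
\]
the second term being the noise contribution absent in \cite{FGKM2018}. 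Differentiating $C$ with respect to $K_t$ (holding the other $K_s$ fixed, via the usual advantage / performance-difference decomposition) gives $\nabla_t C(\pmb{K}) = 2 E_t^{\pmb{K}}\Sigma_t$ with $E_t^{\pmb{K}} = (R_t + B^\top P_{t+1}^{\pmb{K}} B) K_t - B^\top P_{t+1}^{\pmb{K}} A$, so that, since $\Sigma_t \succ 0$, $\nabla C(\pmb{K})=0$ iff every $E_t^{\pmb{K}}=0$ iff $\pmb{K}=\pmb{K}^*$. I would isolate these identities as lemmas; they are the finite-horizon analogues of those in \cite{FGKM2018} and need only Assumption~\ref{ass:parameters}.

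\emph{Step 2 (gradient domination).} Applying the performance-difference lemma with $\pmb{K}$ and $\pmb{K}^*$ and completing the square yields the cost-gap bound
\[
C(\pmb{K}) - C(\pmb{K}^*) \;\le\; \sum_{t=0}^{T-1}\Tr\!\left(\Sigma_t^{*}\, E_t^{\pmb{K}\top}\!\left(R_t + B^\top P_{t+1}^{\pmb{K}} B\right)^{-1}\! E_t^{\pmb{K}}\right)\;\le\;\frac{\|\Sigma_{\pmb{K}^*}\|}{\sr}\sum_{t=0}^{T-1}\Tr\!\left(E_t^{\pmb{K}\top} E_t^{\pmb{K}}\right),
\]
where $\Sigma_t^{*}$ is the state covariance under $\pmb{K}^*$. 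In the opposite direction, the standing hypothesis $\sx>0$ (Lemma~\ref{lemma:covariance} and the subsequent discussion) gives $\Sigma_t \succeq \sx I$, hence
\[
\|\nabla C(\pmb{K})\|_F^2 = 4\sum_{t=0}^{T-1}\Tr\!\left(\Sigma_t E_t^{\pmb{K}\top} E_t^{\pmb{K}}\Sigma_t\right) \;\ge\; 4\,\sx^2\sum_{t=0}^{T-1}\Tr\!\left(E_t^{\pmb{K}\top} E_t^{\pmb{K}}\right).
\]
Combining the two displays gives $C(\pmb{K}) - C(\pmb{K}^*) \le \frac{\|\Sigma_{\pmb{K}^*}\|}{4\,\sx^2\,\sr}\,\|\nabla C(\pmb{K})\|_F^2$.

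\emph{Step 3 (descent and iteration).} Expanding $C(\pmb{K}-\eta\nabla C(\pmb{K})) - C(\pmb{K})$ by propagating the perturbation through the recursions for $P_t^{\,\cdot}$ and $\Sigma_t^{\,\cdot}$ produces a bound of the form $C(\pmb{K}') - C(\pmb{K}) \le -\eta\|\nabla C(\pmb{K})\|_F^2 + \eta^2\,h\,\|\nabla C(\pmb{K})\|_F^2$, where $h\in\mathcal{H}(C(\pmb{K})+1)$ is built from uniform a priori bounds on $\|P_t^{\pmb{K}}\|$, $\|\Sigma_t\|$ and $\|K_t\|$ — all controlled by finiteness of $C(\pmb{K})$ (e.g. $\|P_t^{\pmb{K}}\|\le C(\pmb{K})/\sx$, $\|\Sigma_t\|\le C(\pmb{K})/\sq$, $\|K_t\|^2\le C(\pmb{K})/(\sr\sx)$). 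Choosing $\eta\in\mathcal{H}(1/(C(\pmb{K}^0)+1))$ small enough that $\eta h\le\tfrac12$ gives $C(\pmb{K}') - C(\pmb{K}) \le -\tfrac{\eta}{2}\|\nabla C(\pmb{K})\|_F^2$, and combining with Step~2,
\[
C(\pmb{K}^{n+1}) - C(\pmb{K}^*) \;\le\; \left(1 - \frac{2\eta\,\sx^2\,\sr}{\|\Sigma_{\pmb{K}^*}\|}\right)\left(C(\pmb{K}^n) - C(\pmb{K}^*)\right).
\]
The admissible range of $\eta$ a priori depends on $C(\pmb{K}^n)$, but the inequality above also shows $C(\pmb{K}^{n+1})\le C(\pmb{K}^n)$, so an induction gives $C(\pmb{K}^n)\le C(\pmb{K}^0)$ for all $n$ and the single choice $\eta\in\mathcal{H}(1/(C(\pmb{K}^0)+1))$ works throughout. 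Unrolling the contraction, using $\log(1-x)\le -x$, and solving for the number of steps needed to reach $C(\pmb{K}^N)-C(\pmb{K}^*)\le\epsilon$ yields exactly the stated bound on $N$.

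\emph{Main obstacle.} The delicate part is the almost-smoothness estimate of Step~3: unlike the infinite-horizon case, the value matrices and covariances carry a time index and the perturbed policy $\pmb{K}'$ propagates through a $T$-step recursion, so bounding $\|P_t^{\pmb{K}'}-P_t^{\pmb{K}}\|$ and $\|\Sigma_t^{\pmb{K}'}-\Sigma_t^{\pmb{K}}\|$ uniformly in $t$, with constants that remain polynomial in the model parameters and in $C(\pmb{K}^0)$, requires a careful Gr\"onwall-type perturbation analysis that tracks the noise level $\|W\|$ and the non-degeneracy constant $\sx$ at every stage. Establishing the uniform a priori bounds on $\|K_t\|$, $\|P_t^{\pmb{K}}\|$ and $\|\Sigma_t\|$ from finiteness of $C(\pmb{K})$ is the remaining piece of bookkeeping that must precede naming the constants; everything else is routine.
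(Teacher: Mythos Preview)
Your proposal is essentially the paper's own proof: closed-form cost and gradient via $P_t^{\pmb{K}}$ and $\Sigma_t$, gradient domination from the performance-difference lemma plus completion of the square (the paper's Lemma~\ref{lemma 11}), an almost-smoothness descent estimate (the paper's Lemma~\ref{lemma 12} combined with a perturbation bound on $\Sigma$), and then the inductive contraction $C(\pmb{K}^{n+1})-C(\pmb{K}^*)\le(1-2\eta\sx^2\sr/\|\Sigma_{\pmb{K}^*}\|)(C(\pmb{K}^n)-C(\pmb{K}^*))$ with the observation that $C(\pmb{K}^n)\le C(\pmb{K}^0)$ keeps a single stepsize valid throughout.

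The one place where you make more work for yourself than the paper does is in Step~3. The almost-smoothness identity (Lemma~\ref{lemma 12}) is \emph{exact}:
\[
C(\pmb{K}')-C(\pmb{K})=\sum_{t=0}^{T-1}\Big[2\Tr\big(\Sigma_t'(K_t'-K_t)^\top E_t\big)+\Tr\big(\Sigma_t'(K_t'-K_t)^\top(R_t+B^\top P_{t+1}B)(K_t'-K_t)\big)\Big],
\]
and the only primed object on the right is $\Sigma_t'$; the matrices $E_t$ and $P_{t+1}$ are those of the \emph{old} policy $\pmb{K}$. Hence for the exact-gradient theorem you never need to control $\|P_t^{\pmb{K}'}-P_t^{\pmb{K}}\|$ --- only a perturbation bound for $\|\Sigma_t'-\Sigma_t\|$ (the paper's Lemma~\ref{lemma:perturbation}) is required, and that is where the finite-horizon product-of-transition-matrices bookkeeping with $\rho$ and $\|W\|$ actually enters. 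The $P$-perturbation you flag as the ``main obstacle'' is in fact used only later, in the model-free Section~\ref{sc:single_agent_model_free} (Lemmas~\ref{lemma 27}--\ref{lemma 28}), not here. Dropping that piece, your plan coincides with the paper's proof.
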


The proof of Theorem \ref{thm:convergence_egd} relies on the regularity of the LQR problem, some properties of the gradient descent dynamics, and the perturbation analysis of the  covariance matrix of the controlled dynamics.

\subsection{Regularity of the LQR Problem and Properties of the Gradient Descent Dynamics} \label{sec:model_based_regularity}
Let us start with the analysis of some properties of the LQR problem \eqref{LQR_problem}-\eqref{dynamics}.
To start, Proposition \ref{prop:P_positive} focuses on the well-definedness of the Ricatti system $\{P^{\pmb{K}}_t\}_{t=0}^T$ induced by a control $\pmb{K}$; Lemma \ref{lemma 1} gives a representation of the gradient term; {Lemma \ref{lemma 11} and Lemma \ref{lemma 12} provide the gradient dominance condition and a smoothness condition on the cost function $C(\pmb{K})$ with respect to policy $\pmb{K}$, respectively; and finally, Lemma \ref{lemma 13} gives two useful upper bounds on Ricatti system and state covariance matrices.}\\

In the finite time horizon setting, define $P_t^{\pmb{K}}$ as the solution to
\begin{equation}\label{Definition of P}
    P_t^{\pmb{K}} = Q_t+K_t^{\top}R_tK_t+\left(A-BK_t\right)^{\top}P_{t+1}^{\pmb{K}}\left(A-BK_t\right),\quad t = 0,1,\cdots,T-1,
\end{equation}
with terminal condition 
\begin{equation*}
    P_{T}^{\pmb{K}} = Q_T.
\end{equation*}
{Note that \eqref{Definition of P} is equivalent to the Riccati equation \eqref{eqn:P_t} with optimal $K_t=K_t^*$ as given by \eqref{optimal_k}. We have the following result on the well-definedness of $P^{\pmb{K}}_t$ and the proof can be found in Appendix \ref{appendix:missing_proofs_sec3_1}.}
\begin{Proposition}\label{prop:P_positive}
Under Assumption \ref{ass:parameters}, the matrices $ P_t^{\pmb{K}}$ for $t=0,1,\dots,T$ derived from \eqref{Definition of P} are positive definite.
\end{Proposition}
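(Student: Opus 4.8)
The plan is to prove positive definiteness of $P_t^{\pmb{K}}$ by backward induction on $t$, running from $t=T$ down to $t=0$. The base case is immediate: $P_T^{\pmb{K}} = Q_T$, which is positive definite by Assumption \ref{ass:parameters}. For the inductive step, suppose $P_{t+1}^{\pmb{K}}$ is positive definite; I want to conclude the same for $P_t^{\pmb{K}}$ using the recursion
\begin{equation*}
    P_t^{\pmb{K}} = Q_t+K_t^{\top}R_tK_t+\left(A-BK_t\right)^{\top}P_{t+1}^{\pmb{K}}\left(A-BK_t\right).
\end{equation*}

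The key observation is that this expression is a sum of three terms, each of which is positive semidefinite: $Q_t \succ 0$ by Assumption \ref{ass:parameters}; $K_t^{\top}R_tK_t \succeq 0$ since $R_t \succ 0$ (it is $\succ 0$ only when $K_t$ has full column rank, so we cannot rely on it); and $(A-BK_t)^{\top}P_{t+1}^{\pmb{K}}(A-BK_t) \succeq 0$ by the inductive hypothesis that $P_{t+1}^{\pmb{K}} \succ 0$ (again only semidefinite in general, since $A - BK_t$ need not be invertible). Therefore $P_t^{\pmb{K}} \succeq Q_t \succ 0$; more precisely, for any nonzero $v \in \mathbb{R}^d$ we have $v^\top P_t^{\pmb{K}} v \geq v^\top Q_t v > 0$. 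This already closes the induction, and no delicate argument is needed: the single strictly positive definite summand $Q_t$ dominates, so the degeneracy of the other two terms is harmless.

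There is essentially no hard step here — the result is a straightforward consequence of the additive structure of \eqref{Definition of P} together with the positive definiteness of the cost matrices. The only point requiring a moment's care is to note that we do \emph{not} need $K_t$ to be full-rank or $A-BK_t$ to be nonsingular; relying on those would be a mistake, and the proof should make explicit that it only uses positive \emph{semi}definiteness of the latter two terms and strict positive definiteness of $Q_t$. For cleanliness I would also remark that symmetry of $P_t^{\pmb{K}}$ is preserved along the recursion (each term is manifestly symmetric given $Q_t, R_t, P_{t+1}^{\pmb{K}}$ symmetric), so "positive definite" is well-posed at every step. This completes the proof.
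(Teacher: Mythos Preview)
Your proof is correct and follows essentially the same backward-induction argument as the paper: base case $P_T^{\pmb{K}}=Q_T\succ 0$, and in the inductive step write $P_t^{\pmb{K}}$ as the sum of $Q_t\succ 0$ with the two positive semidefinite terms $K_t^{\top}R_tK_t$ and $(A-BK_t)^{\top}P_{t+1}^{\pmb{K}}(A-BK_t)$. Your extra remarks about not needing $K_t$ or $A-BK_t$ to be full rank, and about symmetry being preserved, are fine clarifications but do not change the argument.
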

To ease the exposition, we write $P_t^{\pmb{K}}$ as $P_t$ when there is no confusion. Then the cost of $\pmb{K}$ can be rewritten as
\begin{equation*}
    C(\pmb{K}) = \mathbb{E}_{x_0\sim\mathcal{D}}\Big[x_0^{\top}P_0x_0+{L_0}\Big],
\end{equation*}
where, for $t=0,1,\cdots,T-1$,
\begin{equation}\label{qt}
    { L_t} = { L_{t+1}} + \mathbb{E}[w_t^\top P_{t+1}w_t] = { L_{t+1}} + \Tr(WP_{t+1}),
\end{equation}
with ${ L_T} = 0$. To see this,
\begin{equation*} 
\begin{split}
&\mathbb{E}[x_0^{\top}P_0x_0] + { L_0}  = \mathbb{E}\left[x_0^{\top}Q_0x_0 + x_0^{\top}K_0^{\top}R_0K_0x_0 +  x_0^{\top}\left(A-BK_0\right)^{\top}P_{1}\left(A-BK_0\right)x_0+ \sum_{t=0}^{T-1} w_t^\top P_{t+1}w_t\right]\\ 
 & = \mathbb{E}\left[x_0^{\top}Q_0x_0 + u_0^{\top}R_0u_0 +  x_1^{\top}P_{1}x_1 + \sum_{t=1}^{T-1} w_t^\top P_{t+1}w_t\right] = \mathbb{E}\Big[\sum_{t=0}^{T-1}\left(x_t^{\top}Q_tx_t+u_t^{\top}R_tu_t\right)+x_T^{\top}Q_Tx_T\Big].
\end{split}
\end{equation*}
In addition, define
\begin{equation}\label{eq:barEt}
    E_t = (R_t+B^{\top}P_{t +1}B)K_t-B^{\top}P_{t +1}A,\ t=0,1,\cdots,T-1.
\end{equation}
Then we have the following representation of the gradient term.
\begin{Lemma} \label{lemma 1}
 The policy gradient has the following representation, for $t = 0,1,\cdots,T-1$,
\begin{equation*}
\begin{split}
    \nabla_t C(\pmb{K}) &=  2\left(\left(R_t+B^{\top}P_{t +1}B\right)K_t-B^{\top}P_{t+1}A\right) \mathbb{E}\left[x_t x_t^{\top}\right]= 2E_{t}\Sigma_{t}.
\end{split}
\end{equation*}

\end{Lemma}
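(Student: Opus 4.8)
The plan is to compute $\nabla_t C(\pmb{K})$ directly from the representation $C(\pmb{K}) = \mathbb{E}_{x_0\sim\mathcal{D}}[x_0^\top P_0 x_0 + L_0]$, where $P_t = P_t^{\pmb{K}}$ solves \eqref{Definition of P} and $L_0 = \sum_{t=0}^{T-1}\Tr(WP_{t+1})$. Since the dependence of $C(\pmb{K})$ on $K_t$ enters only through $P_0, P_1, \dots, P_{t+1}$ (in fact $P_{t+1}$ does \emph{not} depend on $K_t$, only $P_0,\dots,P_t$ do), the key observation is a backward recursion: differentiating \eqref{Definition of P} with respect to $K_t$ while treating $P_{t+1}$ as a constant, and then propagating the resulting perturbation of $P_t$ through the chain $P_t \to P_{t-1} \to \cdots \to P_0$ via the deterministic linear maps $X \mapsto (A-BK_s)^\top X (A-BK_s)$.

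First I would fix $t$ and write $C(\pmb{K}) = \mathbb{E}[x_t^\top P_t x_t] + (\text{terms not depending on } K_t)$; here I use the telescoping identity established in the excerpt, namely that $\mathbb{E}[x_s^\top P_s x_s] + L_s = \mathbb{E}[\sum_{\tau=s}^{T-1}(x_\tau^\top Q_\tau x_\tau + u_\tau^\top R_\tau u_\tau) + x_T^\top Q_T x_T]$, so the ``cost-to-go'' from time $t$ is exactly $\mathbb{E}[x_t^\top P_t x_t] + L_t$, and neither $x_t$ (which depends only on $K_0,\dots,K_{t-1}$ and the noise) nor $L_t = \sum_{\tau \geq t}\Tr(WP_{\tau+1})$ (which depends only on $K_{t+1},\dots,K_{T-1}$) involves $K_t$. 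Hence $\nabla_t C(\pmb{K}) = \nabla_{K_t}\,\mathbb{E}[x_t^\top P_t x_t] = \mathbb{E}[x_t x_t^\top] \cdot (\text{something})$ once I differentiate $P_t$ in $K_t$. From \eqref{Definition of P}, with $P_{t+1}$ held fixed,
\begin{equation*}
P_t = Q_t + K_t^\top R_t K_t + (A - BK_t)^\top P_{t+1}(A-BK_t),
\end{equation*}
and a standard matrix-calculus computation gives, for the directional derivative in direction $\Delta$ on the $K_t$ slot,
\begin{equation*}
\nabla_{K_t}\big(x_t^\top P_t x_t\big) = 2\big((R_t + B^\top P_{t+1}B)K_t - B^\top P_{t+1}A\big)x_t x_t^\top = 2E_t\, x_t x_t^\top,
\end{equation*}
using the definition \eqref{eq:barEt} of $E_t$. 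Taking expectations over $x_0 \sim \mathcal{D}$ and the noise, and recalling $\Sigma_t = \mathbb{E}[x_t x_t^\top]$, yields $\nabla_t C(\pmb{K}) = 2E_t\Sigma_t$.

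The one subtlety I would be careful about — and I expect this to be the main (though still routine) obstacle — is justifying that the contributions from $P_0, \dots, P_{t-1}$ vanish, i.e., that I may legitimately treat everything except the explicit $K_t$-dependence in the single equation for $P_t$ as constant. The clean way is precisely the cost-to-go decomposition above rather than brute-force chain rule through $P_0 = f(P_1) = f(f(P_2)) = \cdots$; the telescoping identity already in the text makes this immediate, since it exhibits $C(\pmb{K})$ as $\mathbb{E}[x_t^\top P_t x_t] + L_t$ plus the first-$t$ running costs $\mathbb{E}[\sum_{\tau<t}(x_\tau^\top Q_\tau x_\tau + u_\tau^\top R_\tau u_\tau)]$, none of which sees $K_t$. (If one instead does the chain rule through all the $P_s$, one has to check that the accumulated perturbation of $P_0$ propagated forward against $\mathbb{E}[x_0 x_0^\top]$ reassembles into $2E_t\Sigma_t$ via the identity $\mathbb{E}[x_t x_t^\top] = \mathbb{E}[(A-BK_{t-1})x_{t-1}x_{t-1}^\top(A-BK_{t-1})^\top] + W$ iterated — this works but is more laborious, so I would avoid it.) A secondary point is interchanging $\nabla_{K_t}$ with the expectation, which is harmless here since $C(\pmb{K})$ is a finite-degree polynomial in the entries of $\pmb{K}$ with coefficients given by the (finite) moments $\mathbb{E}[x_0 x_0^\top]$ and $W$, so differentiation under the expectation is valid. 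This completes the proof.
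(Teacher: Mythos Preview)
Your proof is correct and follows essentially the same approach as the paper: isolate the $K_t$-dependence in the expression $\mathbb{E}[x_t^\top P_t x_t]$ and differentiate the single recursion step \eqref{Definition of P} while holding $P_{t+1}$ fixed. The paper carries this out explicitly only for $t=0$ and then writes ``Similarly'' for general $t$; your cost-to-go decomposition is precisely the content of that ``similarly,'' made explicit, so the two arguments are the same in substance.
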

\begin{proof}
Since
\begin{equation*}
\begin{split}
    C(\pmb{K})  = \mathbb{E}\Big[x_0^{\top}P_0x_0+{ L_0}\Big]= \mathbb{E}\Big[x_0^{\top}(Q_0+K_0^{\top}R_0K_0)x_0 + x_0^{\top}(A-BK_0)^{\top}P_1(A-BK_0)x_0 + \sum_{t=0}^{T-1}w_t^\top P_{t+1}w_t\big], 
\end{split}
\end{equation*}
we have
\begin{equation*}
\begin{split}
    \nabla_0 C(\pmb{K})  = \frac{\partial C(\pmb{K})}{\partial K_0} = \mathbb{E}\Big[2R_0K_0x_0x_0^{\top} - 2B^{\top}P_1(A-BK_0)x_0x_0^{\top}\Big]  = 2E_0\mathbb{E}\Big[x_0 x_0^{\top}\Big] = 2E_0\Sigma_0.
\end{split}
\end{equation*}
Similarly, $\forall\,t=0,1,\cdots,T-1$, 
\begin{equation*}
\begin{split}
    \nabla_t C(\pmb{K}) =  2\left(\left(R_t+B^{\top}P_{t +1}B\right)K_t-B^{\top}P_{t+1}A\right) \mathbb{E}[x_t x_t^{\top}]= 2E_{t}\mathbb{E}\Big[x_{t} x_{t}^{\top}\Big] = 2E_{t}\Sigma_t,
\end{split}
\end{equation*}
where the expectation $\mathbb{E}$ is taken with respect to both initial distribution $x_0\sim \mathcal{D}$ and noises $\pmb{w}$.
\end{proof}

In classical optimization theory \cite{FGKM2018}, gradient domination and smoothness of the objective function are two key conditions to guarantee the global convergence of the gradient descent methods. To prove that $C(\pmb{K})$ is gradient dominated, we first prove Lemma \ref{lemma 11}, which indicates that for a policy $\pmb{K}$, the distance between $C(\pmb{K})$ and the optimal cost $C(\pmb{K}^*)$ is bounded by the sum of the magnitude of the gradient $\nabla_t C(\pmb{K})$ for $t=0,1,\cdots,T-1$.

\begin{Lemma}\label{lemma 11}
Assume Assumption \ref{ass:parameters} and {$\sx>0$}. Let $\pmb{K}^*$ be an optimal policy and $C(\pmb{K})$ be finite, then
\begin{equation*}
    \sx \sum_{t=0}^{T-1} \frac{1}{\|R_t+B^{\top}P_{t+1} B\|}\Tr(E_t^{\top}E_t) \leq C(\pmb{K})-C(\pmb{K}^*)\leq \frac{\|\Sigma_{\pmb{K}^*}\|}{4\sx^2\sr}\sum_{t=0}^{T-1}\Tr(\nabla_t C(\pmb{K})^{\top}\nabla_t C(\pmb{K})),
\end{equation*}
where $\sx$ and $\sq$ are defined in \eqref{Defn of barmu} and \eqref{Defn of barSigmaR}.
\end{Lemma}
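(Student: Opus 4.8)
The plan is to mimic the "cost difference / gradient domination" argument of \cite{FGKM2018}, but adapted to the finite-horizon, noisy setting where the key new quantities are the state covariance matrices $\Sigma_t = \mathbb{E}[x_tx_t^\top]$ and their uniform positivity $\sx > 0$. The backbone is a \emph{cost difference lemma}: for any two policies $\pmb{K}$ and $\pmb{K}'$, expressing $C(\pmb{K}) - C(\pmb{K}')$ via a telescoping sum over $t$ of the per-step advantage terms, using the $P_t^{\pmb{K}}$ recursion \eqref{Definition of P}. Concretely, by adding and subtracting $x_t^\top P_t^{\pmb{K}} x_t$ along the trajectory generated by $\pmb{K}'$ and using \eqref{Definition of P}, one gets an identity of the form $C(\pmb{K}') - C(\pmb{K}) = \sum_{t=0}^{T-1} \mathbb{E}\big[ 2 x_t^\top (K'_t - K_t)^\top E_t^{\pmb{K}} x_t + x_t^\top (K'_t - K_t)^\top (R_t + B^\top P_{t+1}^{\pmb{K}} B)(K'_t - K_t) x_t \big]$ where the expectation is over the $\pmb{K}'$-trajectory; here the noise terms $\mathrm{Tr}(WP_{t+1})$ cancel because they appear identically in both $L_0$ expansions (this is exactly where the finite-horizon noise bookkeeping is handled, and it is clean precisely because of the structure in \eqref{qt}).

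From this identity I would derive both inequalities. For the \textbf{upper bound} (gradient domination), take $\pmb{K}' = \pmb{K}^*$; completing the square in each summand gives $C(\pmb{K}^*) - C(\pmb{K}) \geq -\sum_t \mathbb{E}[ x_t^\top E_t^\top (R_t + B^\top P_{t+1}B)^{-1} E_t x_t ]$, hence $C(\pmb{K}) - C(\pmb{K}^*) \leq \sum_t \mathrm{Tr}\big( (R_t + B^\top P_{t+1}B)^{-1} E_t \Sigma_t^{(*)} E_t^\top \big)$ where $\Sigma_t^{(*)}$ is the state covariance along $\pmb{K}^*$. Then bound $\Sigma_t^{(*)} \preceq \Sigma_{\pmb{K}^*}$ (since $\Sigma_{\pmb{K}^*} = \sum_s \Sigma_s^{(*)}$ and each term is PSD), bound $(R_t + B^\top P_{t+1}B)^{-1} \preceq \frac{1}{\sr} I$, and — crucially — relate $E_t$ to $\nabla_t C(\pmb{K}) = 2E_t\Sigma_t$ from Lemma \ref{lemma 1}: since $\Sigma_t \succeq \sx I$ we have $\mathrm{Tr}(E_t^\top E_t) \leq \frac{1}{\sx^2}\mathrm{Tr}(\Sigma_t E_t^\top E_t \Sigma_t) = \frac{1}{4\sx^2}\mathrm{Tr}(\nabla_t C(\pmb{K})^\top \nabla_t C(\pmb{K}))$. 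Assembling these factors of $\|\Sigma_{\pmb{K}^*}\|$, $1/\sr$, and $1/(4\sx^2)$ yields exactly the stated right-hand side. For the \textbf{lower bound}, take $\pmb{K}'$ to be the one-step-greedy improvement of $\pmb{K}$ (i.e. $K'_t = (R_t + B^\top P_{t+1}^{\pmb{K}} B)^{-1} B^\top P_{t+1}^{\pmb{K}} A$, so that $(R_t + B^\top P_{t+1}B)(K'_t - K_t) = -E_t$); plugging into the cost difference identity, the quadratic-in-$(K'-K)$ terms combine to give $C(\pmb{K}) - C(\pmb{K}^*) \geq C(\pmb{K}) - C(\pmb{K}') = \sum_t \mathbb{E}[ x_t^\top E_t^\top (R_t + B^\top P_{t+1}B)^{-1} E_t x_t ] \geq \sum_t \frac{1}{\|R_t + B^\top P_{t+1}B\|}\mathrm{Tr}(E_t \Sigma_t E_t^\top) \geq \sx \sum_t \frac{1}{\|R_t + B^\top P_{t+1}B\|}\mathrm{Tr}(E_t^\top E_t)$, using $\Sigma_t \succeq \sx I$ again and $C(\pmb{K}^*) \leq C(\pmb{K}')$ by optimality.

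The main obstacle is establishing the cost-difference identity rigorously in the noisy, time-dependent setting: one must be careful that the expectations are taken along the correct trajectory (the comparator policy's), that the cross terms $\mathbb{E}[x_t^\top(\cdots)w_t] = 0$ vanish by independence and zero mean of $w_t$, and that the noise contributions $\sum_t \mathrm{Tr}(WP_{t+1})$ genuinely cancel between the two policies' cost expansions — this last point uses that $P_{t+1}^{\pmb{K}}$ appears in $L_t$ for policy $\pmb{K}$ but the telescoping in the identity is with respect to $P^{\pmb{K}}$, not $P^{\pmb{K}'}$, so the $W$-terms need to be tracked explicitly. Once the identity is in hand, the rest is the chain of PSD/trace inequalities above, where the only genuinely new ingredient relative to \cite{FGKM2018} is the systematic use of $\Sigma_t \succeq \sx I$ (Lemma \ref{lemma:covariance} / the standing assumption $\sx>0$) to convert between $E_t$ and $\nabla_t C(\pmb{K})$ and to lower-bound the greedy-improvement gap. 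I would also note that finiteness of $C(\pmb{K})$ guarantees all the $P_t^{\pmb{K}}$ and $\Sigma_t$ are finite so every trace above is well-defined.
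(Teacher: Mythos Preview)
Your proposal is correct and follows essentially the same route as the paper: the paper first proves the cost-difference identity you describe (packaged as an advantage-function lemma, Lemma~\ref{lemma 10}), then derives the upper bound by taking $\pmb{K}'=\pmb{K}^*$, completing the square, and chaining the same PSD bounds $\Sigma_t^{(*)}\preceq\Sigma_{\pmb{K}^*}$, $(R_t+B^\top P_{t+1}B)^{-1}\preceq \sr^{-1}I$, and $\Tr(E_t^\top E_t)\le (4\sx^2)^{-1}\Tr(\nabla_t C(\pmb{K})^\top\nabla_t C(\pmb{K}))$; and the lower bound by taking $K_t'=K_t-(R_t+B^\top P_{t+1}B)^{-1}E_t$ (your one-step greedy choice) together with $C(\pmb{K}^*)\le C(\pmb{K}')$. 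One small notational slip: in your lower-bound chain the covariance appearing is $\Sigma_t'$ (along the $\pmb{K}'$-trajectory), not $\Sigma_t$, but since $\sx$ is a policy-independent lower bound this does not affect the argument.
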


We defer the proof of Lemma \ref{lemma 11} to Appendix \ref{appendix:missing_proofs_sec3_1}. {Lemma \ref{lemma 11} implies that when the gradient becomes small, the value of the objective function is close to $C(\pmb{K}^*)$. Now we consider the smoothness condition of the objective function. Recall that a function $f:\mathbb{R}^n\rightarrow \mathbb{R}$ is said to be smooth if
\[
|f(x)-f(y)-\nabla f(y)^\top (x-y)|\leq \frac{M}{2}\|x-y\|^2,\ \forall\, x,y\in\mathbb{R}^n,
\]
for some finite constant $M$. In general, it is difficult to characterize the smoothness of $C(\pmb{K})$, since it may blow up when $A-BK_t$ is large. Here we will prove that $C(\pmb{K})$ is ``almost'' smooth, in the sense that when $\pmb{K}^\prime$ is sufficiently close to $\pmb{K}$, $C(\pmb{K}')-C(\pmb{K})$ is bounded by the sum of the first and second order terms in $\pmb{K}-\pmb{K}^\prime$.
}
\begin{Lemma}[``Almost Smoothness'']\label{lemma 12}
Let $\{x_t'\}$ be the sequence of states for a single trajectory generated by $\pmb{K}'$ starting from $x_0'=x_0$. Then, $C(\pmb{K})$ satisfies
\begin{equation}\label{eq:smootheqn}
     C(\pmb{K}')-C(\pmb{K})=\sum_{t=0}^{T-1} \Big[2\Tr\Big(\Sigma_t'(K_t'-K_t)^{\top}E_t\Big) + \Tr\Big(\Sigma_t'(K_t'-K_t)^{\top}(R_t+B^{\top}P_{t +1}B)(K_t'-K_t)\Big)\Big],
\end{equation}
where $\Sigma_t'=\mathbb{E}\left[x_t'(x_t')^{\top}\right]$.
\end{Lemma}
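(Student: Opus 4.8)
The plan is to prove the ``almost smoothness'' identity \eqref{eq:smootheqn} by the same telescoping/summation-by-parts technique that was used just above to verify the cost representation $C(\pmb{K}) = \mathbb{E}[x_0^\top P_0 x_0 + L_0]$. The key point is that $C(\pmb{K}')$ is the cost accrued along a trajectory $\{x_t'\}$ generated by the policy $\pmb{K}'$, whereas the matrices $\{P_t\} = \{P_t^{\pmb{K}}\}$ and the error matrices $\{E_t\}$ in the right-hand side are those of the \emph{other} policy $\pmb{K}$. So the natural object to track is $\mathbb{E}[(x_t')^\top P_t x_t']$ along the $\pmb{K}'$-trajectory, where $P_t$ solves the $\pmb{K}$-Riccati recursion \eqref{Definition of P}.

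First I would write $C(\pmb{K}') = \mathbb{E}\big[\sum_{t=0}^{T-1}\big((x_t')^\top Q_t x_t' + (u_t')^\top R_t u_t'\big) + (x_T')^\top Q_T x_T'\big]$ with $u_t' = -K_t' x_t'$, and insert the telescoping sum $\sum_{t=0}^{T-1}\big(\mathbb{E}[(x_t')^\top P_t x_t'] - \mathbb{E}[(x_{t+1}')^\top P_{t+1} x_{t+1}']\big) = \mathbb{E}[(x_0')^\top P_0 x_0'] - \mathbb{E}[(x_T')^\top P_T x_T']$. Since $P_T = Q_T$ and $x_0' = x_0$, and since by the $\pmb{K}$-representation above $C(\pmb{K}) = \mathbb{E}[x_0^\top P_0 x_0] + \sum_{t=0}^{T-1}\Tr(W P_{t+1})$, subtracting gives $C(\pmb{K}') - C(\pmb{K}) = \sum_{t=0}^{T-1}\Big(\mathbb{E}\big[(x_t')^\top(Q_t + (K_t')^\top R_t K_t') x_t'\big] + \mathbb{E}[(x_{t+1}')^\top P_{t+1} x_{t+1}'] - \mathbb{E}[(x_t')^\top P_t x_t'] - \Tr(W P_{t+1})\Big)$. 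Now I would substitute the dynamics $x_{t+1}' = (A - B K_t') x_t' + w_t$, expand the quadratic form using independence of $w_t$ from $x_t'$ and $\mathbb{E}[w_t]=0$ to cancel the $\Tr(W P_{t+1})$ term against $\mathbb{E}[w_t^\top P_{t+1} w_t]$, so that each summand becomes $\mathbb{E}\big[(x_t')^\top\big(Q_t + (K_t')^\top R_t K_t' + (A - BK_t')^\top P_{t+1}(A - BK_t') - P_t\big) x_t'\big]$.

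The final step is to replace $P_t$ using its definition \eqref{Definition of P}, $P_t = Q_t + K_t^\top R_t K_t + (A - BK_t)^\top P_{t+1}(A - BK_t)$, and then simplify the resulting matrix $M_t := (K_t')^\top R_t K_t' - K_t^\top R_t K_t + (A-BK_t')^\top P_{t+1}(A-BK_t') - (A-BK_t)^\top P_{t+1}(A-BK_t)$ by completing the square in $(K_t' - K_t)$. Writing $A - BK_t' = (A - BK_t) - B(K_t' - K_t)$ and expanding, the cross terms combine with the $R_t$ terms to produce exactly $2(K_t'-K_t)^\top\big((R_t + B^\top P_{t+1}B)K_t - B^\top P_{t+1} A\big) + (K_t'-K_t)^\top(R_t + B^\top P_{t+1}B)(K_t'-K_t)$, and recognizing $E_t = (R_t + B^\top P_{t+1}B)K_t - B^\top P_{t+1}A$ from \eqref{eq:barEt} and taking $\Tr$ with $\Sigma_t' = \mathbb{E}[x_t'(x_t')^\top]$ (using $\mathbb{E}[(x_t')^\top M_t x_t'] = \Tr(M_t \Sigma_t') = \Tr(\Sigma_t' M_t)$) yields the claimed identity. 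The only mildly delicate point — the part I would be most careful about — is the algebraic bookkeeping in completing the square for $M_t$ and checking that no stray first-order term survives; everything else is a routine telescoping argument identical in spirit to the $C(\pmb{K})$ computation already displayed. Note this identity is purely algebraic and needs no positivity assumptions, consistent with the fact that Lemma \ref{lemma 12} is stated without invoking Assumption \ref{ass:parameters}.
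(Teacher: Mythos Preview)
Your proof is correct and is essentially the same computation as the paper's. The paper packages the telescoping step and the quadratic expansion of $M_t$ into a separate Lemma~\ref{lemma 10} (the performance-difference identity $V_{\pmb{K}'}(x,0)-V_{\pmb{K}}(x,0)=\mathbb{E}\big[\sum_t A_{\pmb{K}}(x_t',u_t',t)\big]$ together with the explicit formula for $A_{\pmb{K}}(x,-K_\tau' x,\tau)$), and then Lemma~\ref{lemma 12} is a one-line application of that lemma; you have simply inlined both steps, arriving at the same identity by the same telescoping-plus-completing-the-square argument.
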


We defer the proof of Lemma \ref{lemma 12} to Appendix \ref{appendix:missing_proofs_sec3_1}. {To see why Lemma \ref{lemma 12} is related to the smoothness, observe that when $\pmb{K}^\prime$ is sufficiently close to $\pmb{K}$, in the sense that
\[
\Sigma_t^\prime \approx \Sigma_t + O(\|K_t-K_t^\prime\|),\ \forall t=0,1,\cdots,T-1,
\]
the first term in \eqref{eq:smootheqn} will behave as $\Tr\left((K_t-K_t^\prime)\nabla_tC(\pmb{K})\right)$ by Lemma \ref{lemma 1}, and the second term in \eqref{eq:smootheqn} will be of second order in $K_t-K_t^\prime$. 

To utilize Lemmas $\ref{lemma 11}$ and $\ref{lemma 12}$ {in the proof of  Theorem \ref{thm:convergence_egd}}, we need to further bound $P_{t}$ and $\Sigma_{\pmb{K}}$, which is provided below in Lemma \ref{lemma 13}. The proof can be found in Appendix \ref{appendix:missing_proofs_sec3_1}.}

\begin{Lemma}\label{lemma 13}
Assume Assumption \ref{ass:parameters} holds, and {$\sx>0$}. Then we have
\begin{equation*}
    \|P_t\|\leq \frac{C(\pmb{K})}{\sx},\ \|\Sigma_{\pmb{K}}\|\leq \frac{C(\pmb{K})}{\sq},
\end{equation*}
where $\sx$ and $\sq$ are defined in \eqref{Defn of barmu} and \eqref{Defn of barSigmaQ}.
\end{Lemma}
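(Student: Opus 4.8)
The plan is to prove the two bounds separately, both by exploiting the positive definiteness established in Proposition \ref{prop:P_positive} together with the representations of the cost already derived in the text. For the bound on $\|P_t\|$, the starting point is the identity $C(\pmb{K}) = \mathbb{E}_{x_0\sim\mathcal{D}}[x_0^\top P_0 x_0 + L_0]$, and more generally the "cost-to-go" structure: running the dynamics forward, one has for each $t$ that $\mathbb{E}[x_t^\top P_t x_t] + L_t = \mathbb{E}\big[\sum_{s=t}^{T-1}(x_s^\top Q_s x_s + u_s^\top R_s u_s) + x_T^\top Q_T x_T\big]$, which is exactly the remaining cost incurred from time $t$ onwards under $\pmb{K}$. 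Since every $Q_s, R_s$ is positive definite (Assumption \ref{ass:parameters}) and $L_t = \sum_{s=t}^{T-1}\Tr(WP_{s+1}) \ge 0$ by Proposition \ref{prop:P_positive}, each term in that sum is nonnegative, so $\mathbb{E}[x_t^\top P_t x_t] \le C(\pmb{K})$. On the other hand, $\mathbb{E}[x_t^\top P_t x_t] = \Tr(P_t \Sigma_t) \ge \sigma_{\min}(\Sigma_t)\,\Tr(P_t) \ge \sx\,\|P_t\|$, using $P_t \succ 0$ and the definition of $\sx$ in \eqref{Defn of barmu}. Combining gives $\|P_t\| \le C(\pmb{K})/\sx$.

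For the bound on $\|\Sigma_{\pmb{K}}\|$, I would go the other direction: bound the total cost below by a multiple of $\Sigma_{\pmb{K}}$. From the definition \eqref{defn of CK}, dropping the (nonnegative) control-cost terms $u_t^\top R_t u_t$, we get $C(\pmb{K}) \ge \mathbb{E}\big[\sum_{t=0}^{T} x_t^\top Q_t x_t\big] = \sum_{t=0}^T \Tr(Q_t \Sigma_t) \ge \sq \sum_{t=0}^T \Tr(\Sigma_t) = \sq\,\Tr(\Sigma_{\pmb{K}})$, using $Q_t \succeq \sq I$ and the definition \eqref{Defn of barSigmaQ}. Since each $\Sigma_t \succeq 0$, their sum $\Sigma_{\pmb{K}}$ is positive semidefinite, so $\|\Sigma_{\pmb{K}}\| \le \Tr(\Sigma_{\pmb{K}}) \le C(\pmb{K})/\sq$, which is the claim.

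Neither step is technically deep; the only subtlety to handle carefully is the forward propagation identity $\mathbb{E}[x_t^\top P_t x_t] + L_t = \text{(remaining cost)}$, which should be checked by the same telescoping computation already displayed in the excerpt for $t=0$ — i.e. using $P_t = Q_t + K_t^\top R_t K_t + (A-BK_t)^\top P_{t+1}(A-BK_t)$ from \eqref{Definition of P} and $x_{t+1} = (A-BK_t)x_t + w_t$ with $w_t$ zero-mean and independent of $x_t$, so that the cross term vanishes and $\mathbb{E}[w_t^\top P_{t+1} w_t] = \Tr(WP_{t+1})$ appears, matching the recursion \eqref{qt} for $L_t$. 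Once that identity is in hand, the inequalities are immediate from positive definiteness. I do not anticipate a genuine obstacle here; the main thing is simply to be clean about the fact that $C(\pmb{K})$ being finite guarantees all the expectations $\Sigma_t$ are finite, so the manipulations are legitimate.
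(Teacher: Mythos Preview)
Your proposal is correct and follows essentially the same route as the paper's proof: for $\|P_t\|$ you use $C(\pmb{K})\ge \mathbb{E}[x_t^\top P_t x_t]=\Tr(P_t\Sigma_t)\ge \sigma_{\min}(\Sigma_t)\Tr(P_t)\ge \sx\|P_t\|$, and for $\|\Sigma_{\pmb{K}}\|$ you lower-bound the cost by $\sq\Tr(\Sigma_{\pmb{K}})\ge \sq\|\Sigma_{\pmb{K}}\|$. The paper records these two chains in one line each without spelling out the cost-to-go identity you plan to verify, so your version is simply a more detailed rendition of the same argument.
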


\subsection{Perturbation Analysis of $\Sigma_K$}
\label{sec:model_based_perturbation}
First, let us define two linear operators on symmetric matrices. For {$X\in\mathbb{R}^{d\times d}$} we set
\begin{equation*}
    \mathcal{F}_{K_t}(X)=(A-BK_t)X(A-BK_t)^{\top}, \quad \text{and}\quad
    \mathcal{T}_{\pmb{K}}(X):=X+\sum_{t=0}^{T-1} \Pi_{i=0}^t (A-BK_i)\,X\,\Pi_{i=0}^t (A-BK_{t-i})^{\top}.
\end{equation*}
If we write $\mathcal{G}_t =\mathcal{F}_{K_t}\circ\mathcal{F}_{K_{t-1}}\circ\cdots\circ \mathcal{F}_{K_0}$, then
\begin{eqnarray}
\mathcal{G}_t(X) &=& \mathcal{F}_{K_t}\circ \mathcal{G}_{t-1}(X) = \Pi_{i=0}^t (A-BK_i)\,X\,\Pi_{i=0}^t (A-BK_{t-i})^{\top}, \,\, {\rm and }\label{Gt}\\
    \mathcal{T}_{\pmb{K}}(X) &=& X\,+\,\sum_{t=0}^{T-1} \mathcal{G}_t(X). \label{eq:Tau_K defn}
\end{eqnarray}
 We first show the relationship between the operator  $\mathcal{T}_{\pmb{K}}$ and the quantity $\Sigma_{\pmb{K}}$. The proof can be found in Appendix \ref{appendix:missing_proofs_sec3_1}.

\begin{Proposition}\label{prop:Sigma_Gamma_relation} For $T \ge 2$, we have that
\begin{equation}\label{eqn:Sigma_Gamma_relation}
   \Sigma_{\pmb{K}}  =\mathcal{T}_{\pmb{K}}(\Sigma_0)+\Delta(\pmb{K},W),
\end{equation}
where $ \Delta(\pmb{K},W) = \sum_{t=1}^{T-1} \sum_{s=1}^{t}\,D_{t,s} W D_{t,s}^{\top}+T\,W,$
with $D_{t,s} = \Pi_{u=s}^{t} (A-BK_{u})$ (for $s=1,2,\cdots,t$),  and $ \Sigma_0=\mathbb{E}\left[x_0x_0^{\top}\right]$. 
\end{Proposition}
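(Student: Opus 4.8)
The plan is to solve the controlled recursion $x_{t+1}=(A-BK_t)x_t+w_t$ in closed form and then take expectations, using only that the $w_t$ are zero mean, IID, and independent of $x_0$ (Assumption~\ref{ass:State}). Throughout I would extend the notation $D_{t,s}=\prod_{u=s}^{t}(A-BK_u)$ of the statement to all $0\le s\le t+1$, with the empty product $D_{t,t+1}$ set to the identity; then $D_{t-1,s}$ is the noiseless transition operator carrying the state (or a noise term) from time $s$ to time $t$.

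First I would unroll the recursion: by a one-line induction on $t$,
\[
x_t = D_{t-1,0}\,x_0+\sum_{s=0}^{t-1}D_{t-1,s+1}\,w_s,\qquad t=1,\dots,T,
\]
the base case being $x_1=(A-BK_0)x_0+w_0$ and the step amounting to left-multiplication by $A-BK_t$ and adding $w_t$. Taking outer products and expectations, every cross term vanishes ($\mathbb{E}[x_0w_s^\top]=0$ by independence and $\mathbb{E}[w_s]=0$; $\mathbb{E}[w_sw_{s'}^\top]=0$ for $s\ne s'$ since the noises are IID and zero mean), so for $t\ge 1$
\[
\Sigma_t = D_{t-1,0}\,\Sigma_0\,D_{t-1,0}^\top+\sum_{s=0}^{t-1}D_{t-1,s+1}\,W\,D_{t-1,s+1}^\top,
\]
while $\Sigma_0$ has no noise contribution. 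Here $D_{t-1,0}\,\Sigma_0\,D_{t-1,0}^\top=\mathcal{G}_{t-1}(\Sigma_0)$ by \eqref{Gt}.

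Next I would sum over $t=0,\dots,T$. The $\Sigma_0$-propagation parts assemble, after shifting $\sum_{t=1}^{T}\mathcal{G}_{t-1}(\Sigma_0)=\sum_{t=0}^{T-1}\mathcal{G}_t(\Sigma_0)$, into $\Sigma_0+\sum_{t=0}^{T-1}\mathcal{G}_t(\Sigma_0)=\mathcal{T}_{\pmb{K}}(\Sigma_0)$ by \eqref{eq:Tau_K defn}, leaving
\[
\Sigma_{\pmb{K}}=\mathcal{T}_{\pmb{K}}(\Sigma_0)+\sum_{t=1}^{T}\sum_{s=0}^{t-1}D_{t-1,s+1}\,W\,D_{t-1,s+1}^\top.
\]
To finish, I would split the last double sum: the $s=t-1$ terms have $D_{t-1,t}=I$ and contribute $\sum_{t=1}^{T}W=T\,W$, and for $s\le t-2$ the substitution $t'=t-1$, $s'=s+1$ sends the index set $\{2\le t\le T,\ 0\le s\le t-2\}$ onto $\{1\le t'\le T-1,\ 1\le s'\le t'\}$ with $D_{t-1,s+1}=D_{t',s'}$, giving $\sum_{t'=1}^{T-1}\sum_{s'=1}^{t'}D_{t',s'}\,W\,D_{t',s'}^\top$; together these are exactly $\Delta(\pmb{K},W)$, which is \eqref{eqn:Sigma_Gamma_relation}.

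The mathematical content is elementary — a closed-form solution of a linear recursion plus orthogonality of the noises. The only place requiring care, and the main obstacle, is the index bookkeeping: keeping the matrix-product orderings consistent with \eqref{Gt} and with the definition of $D_{t,s}$, and correctly isolating the empty-product ($s=t-1$) terms that produce the additive $T\,W$ in $\Delta(\pmb{K},W)$. The hypothesis $T\ge 2$ is needed only to make the double sum in $\Delta(\pmb{K},W)$ nonempty; the identity is in fact valid at $T=1$ too, where it reads $\Sigma_{\pmb{K}}=\mathcal{T}_{\pmb{K}}(\Sigma_0)+W$.
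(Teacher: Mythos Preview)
Your proof is correct and follows essentially the same approach as the paper's: derive a closed form for each $\Sigma_t$ (you unroll the state recursion first and then take expectations, the paper inducts directly on the covariance recursion $\Sigma_{t+1}=(A-BK_t)\Sigma_t(A-BK_t)^\top+W$), then sum over $t$ and regroup indices. The index bookkeeping you flag as the only delicate point is handled correctly, and your reindexing $(t',s')=(t-1,s+1)$ matches the paper's $\Delta(\pmb K,W)$ exactly.
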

Let
\begin{eqnarray}\label{rho}
\rho := \max\Big\{\max_{0\leq t \leq T-1}\|A-BK_t\|,\max_{0\leq t \leq T-1}\|A-BK^{\prime}_t\|,1+ \xi\Big\},
\end{eqnarray}
for some small constant $\xi>0$.
Then we have the following result on perturbations of $\Sigma_{\pmb{K}}$.
\begin{Lemma}[Perturbation Analysis of $\Sigma_K$]\label{lemma:perturbation}
Assume Assumption \ref{ass:parameters} holds. Then
{
\begin{equation*}
\begin{split}
     \Big\|\Sigma_{\pmb{K}}-\Sigma_{\pmb{K}^{\prime}}\Big\|&\leq\Big\|(\mathcal{T}_{\pmb{K}}-\mathcal{T}_{\pmb{K}^{\prime}})(\Sigma_0)\Big\|+\left\|\Delta(\pmb{K},W)-\Delta(\pmb{K}^\prime,W)\right\|\\
     & \leq { \frac{ \rho^{2T}-1}{\rho^2-1} \left(\frac{C(\pmb{K})}{\sq}+T\|W\|\right)\left(2\rho\,\|B\|\,\vertiii{\pmb{K}-\pmb{K}^\prime}+\|B\|^2\,\vertiii{\pmb{K}-\pmb{K}^\prime}^2\right)}.
\end{split}
\end{equation*}}
\end{Lemma}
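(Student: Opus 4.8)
The plan is to start from the decomposition $\Sigma_{\pmb{K}} = \mathcal{T}_{\pmb{K}}(\Sigma_0) + \Delta(\pmb{K},W)$ provided by Proposition~\ref{prop:Sigma_Gamma_relation}, so that $\Sigma_{\pmb{K}} - \Sigma_{\pmb{K}'} = (\mathcal{T}_{\pmb{K}} - \mathcal{T}_{\pmb{K}'})(\Sigma_0) + (\Delta(\pmb{K},W) - \Delta(\pmb{K}',W))$ and the triangle inequality yields the first displayed inequality at once. It then remains to estimate the two pieces on the right separately, and both reduce to one elementary estimate on perturbations of products of matrices.

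The building block I would isolate first is the following. For matrices $M_0,\dots,M_\ell$ and $M_0',\dots,M_\ell'$ with all $\|M_i\|,\|M_i'\|\le\rho$, writing $\Pi$ and $\Pi'$ for the two ordered products and $D = \Pi - \Pi'$, a telescoping expansion of $\Pi - \Pi'$ into $\ell+1$ terms, each with one factor replaced by a difference $M_j - M_j'$ and the rest bounded by $\rho$, gives $\|D\|\le\rho^\ell\sum_{j}\|M_j - M_j'\|$ and $\|\Pi\|,\|\Pi'\|\le\rho^{\ell+1}$; combining this with the identity $\Pi X\Pi^\top - \Pi'X(\Pi')^\top = DXD^\top + DX(\Pi')^\top + \Pi'XD^\top$ gives $\|\Pi X\Pi^\top - \Pi'X(\Pi')^\top\|\le\|X\|\big(\|D\|^2 + 2\rho^{\ell+1}\|D\|\big)$. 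Applied with $M_i = A - BK_i$ and $M_i' = A - BK_i'$, one has $M_j - M_j' = -B(K_j - K_j')$, so $\sum_j\|M_j - M_j'\|\le\|B\|\vertiii{\pmb{K}-\pmb{K}'}$, and after substituting the resulting bound on $\|D\|$ each such quantity is at most $\|X\|\,\rho^{2\ell}\big(2\rho\|B\|\vertiii{\pmb{K}-\pmb{K}'} + \|B\|^2\vertiii{\pmb{K}-\pmb{K}'}^2\big)$. This is the single source of the quadratic-in-$\vertiii{\pmb{K}-\pmb{K}'}$ factor in the claim.

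I would then apply this block twice. For the $\mathcal{T}$-term, $(\mathcal{T}_{\pmb{K}} - \mathcal{T}_{\pmb{K}'})(\Sigma_0) = \sum_{t=0}^{T-1}\big(\mathcal{G}_t(\Sigma_0) - \mathcal{G}_t'(\Sigma_0)\big)$ with $\mathcal{G}_t(\Sigma_0)$ of the above product form and $\ell = t$, so the $t$-th summand is bounded by $\rho^{2t}\|\Sigma_0\|\big(2\rho\|B\|\vertiii{\pmb{K}-\pmb{K}'} + \|B\|^2\vertiii{\pmb{K}-\pmb{K}'}^2\big)$; summing, $\sum_{t=0}^{T-1}\rho^{2t} = \frac{\rho^{2T}-1}{\rho^2-1}$, which is legitimate because $\rho\ge 1+\xi>1$ by \eqref{rho}. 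Since each $\Sigma_t\succeq 0$ we have $\Sigma_0\preceq\Sigma_{\pmb{K}}$, hence $\|\Sigma_0\|\le\|\Sigma_{\pmb{K}}\|\le C(\pmb{K})/\sq$ by Lemma~\ref{lemma 13}, producing the $C(\pmb{K})/\sq$ factor. For the $\Delta$-term, the common summand $TW$ cancels, leaving $\sum_{t=1}^{T-1}\sum_{s=1}^{t}\big(D_{t,s}WD_{t,s}^\top - D_{t,s}'W(D_{t,s}')^\top\big)$ with $D_{t,s} = \Pi_{u=s}^{t}(A-BK_u)$ a product of $t-s+1$ factors, i.e.\ $\ell = t-s$; the block bounds each term by $\rho^{2(t-s)}\|W\|\big(2\rho\|B\|\vertiii{\pmb{K}-\pmb{K}'} + \|B\|^2\vertiii{\pmb{K}-\pmb{K}'}^2\big)$, and $\sum_{t=1}^{T-1}\sum_{s=1}^{t}\rho^{2(t-s)} = \sum_{t=1}^{T-1}\frac{\rho^{2t}-1}{\rho^2-1}\le T\,\frac{\rho^{2T}-1}{\rho^2-1}$, which yields the $T\|W\|$ factor. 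Adding the two bounds gives exactly the stated inequality. The computations are routine; the only point requiring care is the bookkeeping in the telescoping product estimate and keeping track that $\rho>1$ so that the geometric-sum identities and the division by $\rho^2-1$ are valid, so that is where I would concentrate the rigor rather than on any conceptual difficulty.
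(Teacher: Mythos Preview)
Your proposal is correct and follows essentially the same route as the paper: the paper also starts from Proposition~\ref{prop:Sigma_Gamma_relation}, splits into the $\mathcal{T}$-part and the $\Delta$-part, and bounds each via a telescoping argument on the products $\Pi_{i}(A-BK_i)$, finishing with $\|\Sigma_0\|\le C(\pmb{K})/\sq$ from Lemma~\ref{lemma 13}. The only cosmetic difference is packaging: the paper isolates the single-factor perturbation as an operator-norm bound $\|\mathcal{F}_{K_t}-\mathcal{F}_{K'_t}\|\le 2\|A-BK_t\|\|B\|\|K_t-K'_t\|+\|B\|^2\|K_t-K'_t\|^2$ (Lemma~\ref{lemma 19}) and then telescopes at the level of the operators $\mathcal{G}_t$ (Lemma~\ref{Lemma 20}, Corollary~\ref{Corr 20}), whereas you carry out the same telescoping directly on the matrix products and the identity $\Pi X\Pi^\top-\Pi'X(\Pi')^\top=DXD^\top+DX(\Pi')^\top+\Pi'XD^\top$; both yield the same $\rho^{2\ell}(2\rho\|B\|\vertiii{\pmb{K}-\pmb{K}'}+\|B\|^2\vertiii{\pmb{K}-\pmb{K}'}^2)$ factor and the same geometric sum.
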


\begin{Remark}{\rm
By the definition of $\rho$ in \eqref{rho}, we have $\rho \ge 1+\xi>1$. This regularization term $1+\xi$ is defined for ease of exposition. Alternatively, if we define $\rho := \max\Big\{\max_{0\leq t \leq T-1}\|A-BK_t\|,\max_{0\leq t \leq T-1}\|A-BK^{\prime}_t\|\Big\}$, a similar analysis can still be carried out by considering the different cases: $\rho<1$, $\rho=1$ and $\rho>1$. Note that for the  infinite horizon problem, the spectral radius of $A-BK$ needs to be smaller than 1 to guarantee the stability of the system (see \cite{FGKM2018}). In our setting with finite horizon, instability is not an issue and we do not need a condition on the boundedness of $\rho$. However, we will show later that $\rho$ does appear in the sample complexity results. The smaller the $\rho$, the smaller the sample complexity.}
\end{Remark}

\noindent The proof of Lemma \ref{lemma:perturbation} is based on 
the following Lemmas \ref{lemma 19} and \ref{Lemma 20}, which establish the 
 Lipschitz property for the operators $\mathcal{F}_{K_t}$ and  $\mathcal{G}_t$, respectively.
\begin{Lemma}\label{lemma 19}
It holds that, $\forall\,t=0,1,\cdots,T-1$,
\begin{eqnarray}\label{F_bound}
\|\mathcal{F}_{K_t}-\mathcal{F}_{K^{\prime}_t}\| \leq 2\|A-BK_t\|\|B\|\|K_t-K_t^{\prime}\| +\|B\|^2 \|K_t-K_t^{\prime}\|^2.
\end{eqnarray}
\end{Lemma}
\noindent We refer to \cite[Lemma 19]{FGKM2018} for the proof of Lemma \ref{lemma 19}.\\

Recall the definition of $\mathcal{G}_t$ in \eqref{Gt} associated with $\pmb{K}$, similarly let us define $\mathcal{G}^{\prime}_t =\mathcal{F}_{K^{\prime}_t}\circ\mathcal{F}_{K^{\prime}_{t-1}}\circ\cdots\circ \mathcal{F}_{K^{\prime}_0}$ for policy $\pmb{K}^{\prime}$. Then we have the following perturbation analysis for  $\mathcal{G}_t$.

\begin{Lemma}[Perturbation Analysis for  $\mathcal{G}_t$]\label{Lemma 20}
For any symmetric matrix $\Sigma\in\mathbb{R}^{d\times d}$, we have that 
\begin{eqnarray}\label{Tau_closeness_G}
\sum_{t=0}^{T-1}\Big\|(\mathcal{G}_{t}-\mathcal{G}_{t}^{\prime})(\Sigma)\Big\| \leq \frac{\rho^{2T}-1}{\rho^2-1} \Big( \sum_{t=0}^{T-1}\|\mathcal{F}_{K_t}-\mathcal{F}_{K^{\prime}_t}\|\Big)\|\Sigma\|.
\end{eqnarray}
\end{Lemma}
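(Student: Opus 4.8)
The plan is to use a standard hybrid (telescoping) decomposition of the composition $\mathcal{G}_t$ from \eqref{Gt} into its single factors $\mathcal{F}_{K_i}$, swapping $K_i$ for $K_i'$ one index at a time, and then to control each resulting term by the operator norm of the single perturbation $\mathcal{F}_{K_j}-\mathcal{F}_{K_j'}$ multiplied by a product of operator norms of the unperturbed factors.

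First I would record the elementary facts that make the bookkeeping work. For a symmetric matrix $X$ with $\|X\|\le 1$ we have $\|\mathcal{F}_{K_t}(X)\|=\|(A-BK_t)X(A-BK_t)^{\top}\|\le\|A-BK_t\|^2$, so $\|\mathcal{F}_{K_t}\|\le\|A-BK_t\|^2\le\rho^2$, and similarly $\|\mathcal{F}_{K_t'}\|\le\rho^2$, by the definition of $\rho$ in \eqref{rho}; moreover operator norms are submultiplicative under composition. Next, for a fixed $t$ I introduce the hybrid operators $H_j:=\mathcal{F}_{K_t}\circ\cdots\circ\mathcal{F}_{K_j}\circ\mathcal{F}_{K'_{j-1}}\circ\cdots\circ\mathcal{F}_{K'_0}$ for $j=0,1,\dots,t+1$, with the convention that an empty block of factors is the identity, so that $H_0=\mathcal{G}_t$ and $H_{t+1}=\mathcal{G}_t'$. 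Telescoping gives $\mathcal{G}_t-\mathcal{G}_t'=\sum_{j=0}^{t}(H_j-H_{j+1})$, where $H_j-H_{j+1}=\mathcal{F}_{K_t}\circ\cdots\circ\mathcal{F}_{K_{j+1}}\circ(\mathcal{F}_{K_j}-\mathcal{F}_{K'_j})\circ\mathcal{F}_{K'_{j-1}}\circ\cdots\circ\mathcal{F}_{K'_0}$. Applying the norm bounds above, the product of the $t-j$ unprimed factors and the $j$ primed factors is at most $\rho^{2(t-j)}\cdot\rho^{2j}=\rho^{2t}$, hence $\|(\mathcal{G}_t-\mathcal{G}_t')(\Sigma)\|\le\rho^{2t}\big(\sum_{j=0}^{t}\|\mathcal{F}_{K_j}-\mathcal{F}_{K'_j}\|\big)\|\Sigma\|$, where the perturbation norms are controlled via Lemma \ref{lemma 19}.

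Finally I would sum over $t=0,\dots,T-1$, bound $\sum_{j=0}^{t}\|\mathcal{F}_{K_j}-\mathcal{F}_{K'_j}\|\le\sum_{j=0}^{T-1}\|\mathcal{F}_{K_j}-\mathcal{F}_{K'_j}\|$, pull this factor out, and evaluate the geometric series $\sum_{t=0}^{T-1}\rho^{2t}=\frac{\rho^{2T}-1}{\rho^2-1}$, which is legitimate since $\rho\ge 1+\xi>1$ forces $\rho^2\neq 1$. This yields exactly \eqref{Tau_closeness_G}. The only point requiring care — and the main (mild) obstacle — is to keep the sharp factor $\rho^{2t}$ in the $t$-th term rather than a crude uniform $\rho^{2(T-1)}$: it is the per-term decay that produces the geometric sum $\frac{\rho^{2T}-1}{\rho^2-1}$ claimed in the statement, which is genuinely tighter than $T\rho^{2(T-1)}$ when $\rho$ is large.
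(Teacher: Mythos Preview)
Your proof is correct and is essentially the same argument as the paper's: the paper peels off one outer factor to obtain the recursion $\|(\mathcal{G}_{t+1}'-\mathcal{G}_{t+1})(\Sigma)\|\le\rho^{2}\|(\mathcal{G}_{t}'-\mathcal{G}_{t})(\Sigma)\|+\rho^{2(t+1)}\|\mathcal{F}_{K_{t+1}}-\mathcal{F}_{K_{t+1}'}\|\|\Sigma\|$ and then sums, which unrolls to exactly your hybrid/telescoping bound $\|(\mathcal{G}_t-\mathcal{G}_t')(\Sigma)\|\le\rho^{2t}\sum_{j\le t}\|\mathcal{F}_{K_j}-\mathcal{F}_{K_j'}\|\|\Sigma\|$, followed by the same geometric sum.
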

We defer the proof of Lemma \ref{Lemma 20} to Appendix \ref{appendix:missing_proofs_sec3_2}. The following perturbation analysis on $\mathcal{T}$ follows immediately from 
Lemma \ref{Lemma 20}.

\begin{Corollary}\label{Corr 20} For any symmetric matrix $\Sigma\in\mathbb{R}^{d\times d}$, we have
\begin{eqnarray}\label{Tau_closeness}
\Big\|(\mathcal{T}_{\pmb{K}}-\mathcal{T}_{\pmb{K}^{\prime}})(\Sigma)\Big\| \leq \frac{\rho^{2T}-1}{\rho^2-1} \Big( \sum_{t=0}^{T-1}\|\mathcal{F}_{K_t}-\mathcal{F}_{K^{\prime}_t}\|\Big)\|\Sigma\|,
\end{eqnarray}
where $\rho$ is defined in \eqref{rho}.
\end{Corollary}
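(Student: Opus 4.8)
The plan is to derive Corollary \ref{Corr 20} as a direct consequence of Lemma \ref{Lemma 20} together with the representation of $\mathcal{T}_{\pmb{K}}$ in terms of the $\mathcal{G}_t$ operators given in \eqref{eq:Tau_K defn}. Recall that $\mathcal{T}_{\pmb{K}}(X) = X + \sum_{t=0}^{T-1} \mathcal{G}_t(X)$ and likewise $\mathcal{T}_{\pmb{K}^{\prime}}(X) = X + \sum_{t=0}^{T-1} \mathcal{G}_t^{\prime}(X)$. Subtracting, the identity terms cancel, so $(\mathcal{T}_{\pmb{K}} - \mathcal{T}_{\pmb{K}^{\prime}})(\Sigma) = \sum_{t=0}^{T-1} (\mathcal{G}_t - \mathcal{G}_t^{\prime})(\Sigma)$.

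First I would apply the triangle inequality for the spectral norm to obtain
\[
\Big\|(\mathcal{T}_{\pmb{K}} - \mathcal{T}_{\pmb{K}^{\prime}})(\Sigma)\Big\| = \Big\|\sum_{t=0}^{T-1} (\mathcal{G}_t - \mathcal{G}_t^{\prime})(\Sigma)\Big\| \leq \sum_{t=0}^{T-1} \Big\|(\mathcal{G}_t - \mathcal{G}_t^{\prime})(\Sigma)\Big\|.
\]
Then I would invoke Lemma \ref{Lemma 20} directly to bound the right-hand side by $\frac{\rho^{2T}-1}{\rho^2-1}\big(\sum_{t=0}^{T-1}\|\mathcal{F}_{K_t} - \mathcal{F}_{K_t^{\prime}}\|\big)\|\Sigma\|$, which is exactly the claimed inequality \eqref{Tau_closeness}. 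The statement about $\rho$ being defined in \eqref{rho} is just a pointer and needs no argument.

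There is essentially no obstacle here: the only subtlety worth a half-sentence of care is confirming that the summand indices match — the sum defining $\mathcal{T}_{\pmb{K}}$ in \eqref{eq:Tau_K defn} runs over $t=0,\dots,T-1$, which is precisely the range appearing in Lemma \ref{Lemma 20}, so the bound applies verbatim with no reindexing. One might also note in passing that $\Sigma$ symmetric is preserved under each $\mathcal{F}_{K_t}$ (hence under $\mathcal{G}_t$ and $\mathcal{G}_t^{\prime}$), so the hypotheses of Lemma \ref{Lemma 20} are met. The proof is therefore a two-line deduction and can be relegated to an appendix or stated inline.
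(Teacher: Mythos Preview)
Your proposal is correct and matches the paper's approach exactly: the paper states only that the corollary ``follows immediately from Lemma \ref{Lemma 20}'', and your two-line deduction via the representation \eqref{eq:Tau_K defn}, cancellation of the identity terms, the triangle inequality, and a direct appeal to Lemma \ref{Lemma 20} is precisely what is intended.
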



\noindent Now we are ready for the proof of Lemma \ref{lemma:perturbation}.
\begin{proof}[Proof of Lemma \ref{lemma:perturbation}]
Using Lemma \ref{lemma 19},
\begin{equation*}
    \begin{split}
        \sum_{t=0}^{T-1} \|\mathcal{F}_{K_t}-\mathcal{F}_{K^{\prime}_t}\| &= \sum_{t=0}^{T-1} \Big( 2\|A-BK_t\|\|B\|\|K_t-K_t^{\prime}\| +\|B\|^2 \|K_t-K_t^{\prime}\|^2\Big) \\
        & \leq  2\rho\|B\|\sum_{t=0}^{T-1}\|K_t-K_t^{\prime}\|+\|B\|^2\sum_{t=0}^{T-1}  \|K_t-K_t^{\prime}\|^2.
    \end{split}
\end{equation*}

In the same way as for the proof of Lemma \ref{Lemma 20},  we have, $\forall\,t=1,\cdots,T-1$,
\begin{eqnarray}\label{eq:sum_D_bound}
 \sum_{s=1}^{t}\left\|D_{t,s}WD_{t,s}^{\top}-D^{\prime}_{t,s}W(D^{\prime}_{t,s})^{\top}\right\| \leq \frac{\rho^{2T}-1}{\rho^2-1} \left(\sum_{s=0}^{t}\|\mathcal{F}_{K_s}-\mathcal{F}_{K^{\prime}_s}\| \right) \|W\|.
\end{eqnarray}
By Proposition \ref{prop:Sigma_Gamma_relation}, Corollary \ref{Corr 20}, \eqref{eq:Tau_K defn} and \eqref{eq:sum_D_bound}, we have
\begin{equation}\label{eq:Colla_intem}
\begin{split}
  \Big\|\Sigma_{\pmb{K}}-\Sigma_{\pmb{K}^\prime}\Big\| & \leq 
  \Big\|(\mathcal{T}_{\pmb{K}}-\mathcal{T}_{\pmb{K}^{\prime}})(\Sigma_0)\Big\| + \sum_{t=1}^{T-1} \sum_{s=1}^{t}\,\Big\|D_{t,s} W D_{t,s}^{\top}-D_{t,s}^\prime W (D_{t,s}^\prime)^{\top}\Big\|\\
     & \leq \frac{\rho^{2T}-1}{\rho^2-1} \Big( \sum_{t=0}^{T-1}\|\mathcal{F}_{K_t}-\mathcal{F}_{K^{\prime}_t}\|\Big)\left(\|\Sigma_0\|+T\|W\|\right) \\
     & \leq {\frac{ \rho^{2T}-1}{\rho^2-1} \left(\frac{C(\pmb{K})}{\sq}+T\|W\|\right)\left(2\rho\,\|B\|\,\vertiii{\pmb{K}-\pmb{K}^\prime}+\|B\|^2\,\vertiii{\pmb{K}-\pmb{K}^\prime}^2\right)}.
\end{split}
\end{equation}
The last inequality holds since $\|\Sigma_0\| \leq \|\Sigma_{\pmb{K}}\|\leq \frac{C(\pmb{K})}{\sq}$ by Lemma \ref{lemma 13}.
\end{proof}

\subsection{Convergence and Complexity Analysis} 
\label{sec:model_based_convergence}

We now provide the proof of Theorem \ref{thm:convergence_egd} after two preliminary Lemmas.


\begin{Lemma}\label{lemma 24}
Assume Assumption \ref{ass:parameters} holds,  { $\sx>0$}, and that 
\begin{equation}\label{eq:gd}
    K_t^{\prime}=K_t-\eta\nabla_{t} C(\pmb{K}), \qquad {\rm where }
\end{equation}
\begin{equation}\label{stepsize condition}
    \eta \leq \min\left\{\frac{(\rho^2-1)\sq\sx}{2T(\rho^{2T}-1)(2\rho+1)(C(\pmb{K})+\sq T\|W\|)\|B\|\max_{t}\{\|\nabla_tC(\pmb{K})\|\}},\frac{1}{2C_1}\right\}, \qquad {\rm with}
\end{equation}
\begin{equation}\label{eq:C1}
    C_1 = \left(\frac{C(\pmb{K})}{\sq}+T\|W\|\right)\left({\frac{ (2\rho+1)\|B\|(\rho^{2T}-1)}{(\rho^2-1)\sx}}\sum_{t=0}^{T-1}\|\nabla_t C(\pmb{K})\|\right)
    +\frac{{2}C(\pmb{K})}{\sq}\sum_{t=0}^{T-1}\|R_t+B^{\top}P_{t +1}B \|.
\end{equation}
Then we have
\begin{equation*}
    C(\pmb{K}^{\prime})-C(\pmb{K}^*)\leq \Big(1-2\eta\sr\frac{\sx^2}{\|\Sigma_{\pmb{K}^*}\|}\Big)\Big(C(\pmb{K})-C(\pmb{K}^*)\Big).
\end{equation*}
\end{Lemma}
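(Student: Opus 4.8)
The plan is to establish a one-step contraction by chaining three facts already set up: the ``almost smoothness'' identity (Lemma~\ref{lemma 12}), the gradient representation $\nabla_t C(\pmb{K}) = 2E_t\Sigma_t$ (Lemma~\ref{lemma 1}) together with the covariance perturbation estimate (Lemma~\ref{lemma:perturbation}) to absorb the error caused by moving from $\pmb{K}$ to $\pmb{K}'$, and finally the gradient-domination bound (Lemma~\ref{lemma 11}) to turn a lower bound on $\sum_t\Tr(\nabla_tC(\pmb{K})^{\top}\nabla_tC(\pmb{K}))$ into the advertised decay of the suboptimality.

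First I would substitute the gradient step \eqref{eq:gd}, $K_t'-K_t=-\eta\,\nabla_tC(\pmb{K})$, into the identity \eqref{eq:smootheqn}, obtaining
\[
C(\pmb{K}')-C(\pmb{K}) = \sum_{t=0}^{T-1}\Big[-2\eta\,\Tr\big(\Sigma_t'\,\nabla_tC(\pmb{K})^{\top}E_t\big) + \eta^2\,\Tr\big(\Sigma_t'\,\nabla_tC(\pmb{K})^{\top}(R_t+B^{\top}P_{t+1}B)\,\nabla_tC(\pmb{K})\big)\Big].
\]
In the linear-in-$\eta$ term I split $\Sigma_t'=\Sigma_t+(\Sigma_t'-\Sigma_t)$. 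Using $\nabla_tC(\pmb{K})=2E_t\Sigma_t$ and cyclicity of the trace, the ``diagonal'' piece collapses to the clean identity $-2\eta\,\Tr(\Sigma_t\,\nabla_tC(\pmb{K})^{\top}E_t)=-\eta\,\Tr(\nabla_tC(\pmb{K})^{\top}\nabla_tC(\pmb{K}))$, which is exactly the negative drift we want. It then remains to absorb the cross term and the $\eta^2$ term into $\eta^2C_1\sum_t\Tr(\nabla_tC(\pmb{K})^{\top}\nabla_tC(\pmb{K}))$.

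For the cross term I would use $\|E_t\|_F\le\|\nabla_tC(\pmb{K})\|_F/(2\sx)$ (again from $\nabla_tC(\pmb{K})=2E_t\Sigma_t$ and $\Sigma_t\succeq\sx I$) to get $|\Tr((\Sigma_t'-\Sigma_t)\nabla_tC(\pmb{K})^{\top}E_t)|\le\tfrac{1}{2\sx}\|\Sigma_t'-\Sigma_t\|\,\|\nabla_tC(\pmb{K})\|_F^2$, and then control $\sum_t\|\Sigma_t'-\Sigma_t\|$. The key observation is that running the argument of Lemma~\ref{Lemma 20} and estimate~\eqref{eq:sum_D_bound} term-by-term on the decomposition of each $\Sigma_t$ into a $\mathcal{G}$-part and a $D W D^{\top}$-part yields the \emph{same} bound as in Lemma~\ref{lemma:perturbation}, namely $\sum_t\|\Sigma_t'-\Sigma_t\|\le\frac{\rho^{2T}-1}{\rho^2-1}\big(\frac{C(\pmb{K})}{\sq}+T\|W\|\big)\big(2\rho\|B\|\vertiii{\pmb{K}-\pmb{K}'}+\|B\|^2\vertiii{\pmb{K}-\pmb{K}'}^2\big)$ with $\vertiii{\pmb{K}-\pmb{K}'}=\eta\sum_t\|\nabla_tC(\pmb{K})\|$. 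The first branch of \eqref{stepsize condition} is calibrated precisely so that $\|B\|\vertiii{\pmb{K}-\pmb{K}'}$ is small enough for the linear term to dominate the quadratic one (producing the factor $2\rho+1$) and so that $\sum_t\|\Sigma_t'-\Sigma_t\|\le\sx/2\le C(\pmb{K})/\sq$; collecting constants, the cross term is at most $\eta^2$ times the first summand of $C_1$ times $\sum_t\Tr(\nabla_tC(\pmb{K})^{\top}\nabla_tC(\pmb{K}))$. For the $\eta^2$ term I would use $R_t+B^{\top}P_{t+1}B\preceq\|R_t+B^{\top}P_{t+1}B\|I$, $\Sigma_t'\preceq\Sigma_{\pmb{K}'}$, and $\|\Sigma_{\pmb{K}'}\|\le\|\Sigma_{\pmb{K}}\|+\sum_t\|\Sigma_t'-\Sigma_t\|\le\frac{C(\pmb{K})}{\sq}+\frac{\sx}{2}\le\frac{2C(\pmb{K})}{\sq}$ (Lemma~\ref{lemma 13} and $\sx\le C(\pmb{K})/\sq$), which reproduces exactly the second summand of $C_1$ in \eqref{eq:C1}. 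This gives $C(\pmb{K}')-C(\pmb{K})\le-\eta(1-\eta C_1)\sum_t\Tr(\nabla_tC(\pmb{K})^{\top}\nabla_tC(\pmb{K}))$, and $\eta\le1/(2C_1)$ yields $C(\pmb{K}')-C(\pmb{K})\le-\tfrac{\eta}{2}\sum_t\Tr(\nabla_tC(\pmb{K})^{\top}\nabla_tC(\pmb{K}))$.

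Finally, I would apply the upper bound in Lemma~\ref{lemma 11}, rearranged as $\sum_t\Tr(\nabla_tC(\pmb{K})^{\top}\nabla_tC(\pmb{K}))\ge\frac{4\sx^2\sr}{\|\Sigma_{\pmb{K}^*}\|}\big(C(\pmb{K})-C(\pmb{K}^*)\big)$, to deduce $C(\pmb{K}')-C(\pmb{K})\le-2\eta\sr\frac{\sx^2}{\|\Sigma_{\pmb{K}^*}\|}\big(C(\pmb{K})-C(\pmb{K}^*)\big)$; adding $C(\pmb{K})-C(\pmb{K}^*)$ to both sides gives the claim. I expect the main obstacle to be the middle step: establishing the per-time-step summed perturbation estimate with the \emph{same} constant as Lemma~\ref{lemma:perturbation}, and then carrying out the (numerous) operator/Frobenius-norm conversions carefully enough that the cross term and the quadratic term assemble into precisely the constant $C_1$ in \eqref{eq:C1} and the first branch of \eqref{stepsize condition} is seen to be exactly the right threshold.
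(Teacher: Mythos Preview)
Your proposal is correct and follows essentially the same route as the paper's own proof: substitute the gradient step into the almost-smoothness identity, split $\Sigma_t'=\Sigma_t+(\Sigma_t'-\Sigma_t)$ to isolate the clean descent term $-\eta\sum_t\Tr(\nabla_tC(\pmb{K})^{\top}\nabla_tC(\pmb{K}))$, bound the cross term via the covariance-perturbation estimate (which the paper phrases by inserting $\Sigma_t\Sigma_t^{-1}$ rather than via $\|E_t\|_F\le\|\nabla_tC(\pmb{K})\|_F/(2\sx)$, but the resulting inequality is identical), bound the quadratic term using $\|\Sigma_{\pmb{K}'}\|\le 2C(\pmb{K})/\sq$, and finish with Lemma~\ref{lemma 11}. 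Your anticipated difficulty about $\sum_t\|\Sigma_t'-\Sigma_t\|$ is handled exactly as you suggest: the paper invokes the intermediate inequality~\eqref{eq:Colla_intem} from the proof of Lemma~\ref{lemma:perturbation}, which indeed gives the per-time-step summed bound with the same constant.
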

We defer the proof of Lemma \ref{lemma 24} to Appendix \ref{appendix:model_based_convergence}.

\begin{Lemma}\label{lemma 25}
Assume Assumption \ref{ass:parameters} holds and  {$\sx>0$}. Then we have that
\begin{equation*}
        \sum_{t=0}^{T-1} \|\nabla_t C(\pmb{K})\|^2 \leq 4\Big(\frac{C(\pmb{K})}{\sq}\Big)^2\frac{\max_t\|R_t+B^{\top}P_{t+1} B\|}{\sx}(C(\pmb{K})-C(\pmb{K}^*)),
\end{equation*}
and that:
\begin{equation*}
    \sum_{t=0}^{T-1} \|K_t\| \leq \frac{1}{\sr}\Big(\sqrt{T\cdot\frac{\max_t\|R_t+B^{\top}P_{t+1} B\|}{\sx}(C(\pmb{K})-C(\pmb{K}^*))}+\sum_{t=0}^{T-1} \|B^{\top}P_{t+1}A\|\Big).
\end{equation*}
\end{Lemma}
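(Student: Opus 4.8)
The plan is to obtain both bounds directly from the gradient representation of Lemma~\ref{lemma 1}, the left-hand inequality of the gradient dominance condition in Lemma~\ref{lemma 11}, and the covariance bound in Lemma~\ref{lemma 13}; no new machinery is needed. I would first record the key intermediate estimate
\[
\sum_{t=0}^{T-1}\|E_t\|^2 \;\le\; \frac{\max_t\|R_t+B^{\top}P_{t+1}B\|}{\sx}\bigl(C(\pmb{K})-C(\pmb{K}^*)\bigr),
\]
which follows from Lemma~\ref{lemma 11}: since $\|E_t\|^2\le\Tr(E_t^{\top}E_t)$ (the spectral norm is dominated by the Frobenius norm) and $\frac{1}{\|R_t+B^{\top}P_{t+1}B\|}\ge\frac{1}{\max_s\|R_s+B^{\top}P_{s+1}B\|}$ for every $t$, one gets
\[
\frac{\sx}{\max_s\|R_s+B^{\top}P_{s+1}B\|}\sum_{t=0}^{T-1}\|E_t\|^2 \;\le\; \sx\sum_{t=0}^{T-1}\frac{1}{\|R_t+B^{\top}P_{t+1}B\|}\Tr(E_t^{\top}E_t)\;\le\; C(\pmb{K})-C(\pmb{K}^*).
\]

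For the first claimed inequality, Lemma~\ref{lemma 1} gives $\nabla_t C(\pmb{K})=2E_t\Sigma_t$. Because $\Sigma_{\pmb{K}}=\sum_t\Sigma_t$ with each $\Sigma_t\succeq 0$, we have $\Sigma_t\preceq\Sigma_{\pmb{K}}$ and hence $\|\Sigma_t\|\le\|\Sigma_{\pmb{K}}\|\le C(\pmb{K})/\sq$ by Lemma~\ref{lemma 13}. Therefore $\|\nabla_t C(\pmb{K})\|^2\le 4\|\Sigma_t\|^2\|E_t\|^2\le 4\bigl(C(\pmb{K})/\sq\bigr)^2\|E_t\|^2$; summing over $t$ and substituting the intermediate estimate yields exactly the stated bound.

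For the second inequality I would solve the defining relation $E_t=(R_t+B^{\top}P_{t+1}B)K_t-B^{\top}P_{t+1}A$ for $K_t$, namely $K_t=(R_t+B^{\top}P_{t+1}B)^{-1}(E_t+B^{\top}P_{t+1}A)$. Since $P_{t+1}$ is positive definite (Proposition~\ref{prop:P_positive}), $B^{\top}P_{t+1}B\succeq 0$, so $\sigma_{\min}(R_t+B^{\top}P_{t+1}B)\ge\sigma_{\min}(R_t)\ge\sr$, which gives $\|K_t\|\le\frac{1}{\sr}\bigl(\|E_t\|+\|B^{\top}P_{t+1}A\|\bigr)$. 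Summing over $t$ and applying Cauchy--Schwarz, $\sum_t\|E_t\|\le\sqrt{T\sum_t\|E_t\|^2}$, then inserting the intermediate estimate for $\sum_t\|E_t\|^2$ completes the argument. Since every step is an elementary norm or convexity manipulation applied to results already proved, there is no genuine obstacle here; the only points needing a little care are the spectral-versus-Frobenius comparison used to pass from $\Tr(E_t^{\top}E_t)$ to $\|E_t\|^2$, pulling the maximum over $t$ out of the weighted sum in Lemma~\ref{lemma 11}, and the two positive-semidefiniteness facts ($\Sigma_t\succeq 0$ and $B^{\top}P_{t+1}B\succeq 0$) underlying the monotonicity bounds.
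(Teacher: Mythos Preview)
Your proposal is correct and follows essentially the same route as the paper: both combine the gradient representation $\nabla_tC(\pmb{K})=2E_t\Sigma_t$ from Lemma~\ref{lemma 1}, the bound $\|\Sigma_t\|\le C(\pmb{K})/\sq$ from Lemma~\ref{lemma 13}, the lower bound in Lemma~\ref{lemma 11} to control $\sum_t\Tr(E_t^{\top}E_t)$ (respectively $\sum_t\|E_t\|^2$), and then Cauchy--Schwarz for the second claim. The only cosmetic difference is that the paper passes through the Frobenius quantity $\Tr(E_t^{\top}E_t)$ throughout while you work with $\|E_t\|^2$; since $\|E_t\|^2\le\Tr(E_t^{\top}E_t)$ both versions yield the stated bounds.
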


\begin{proof}
Using Lemma \ref{lemma 13} we have
\begin{equation*}
        \sum_{t=0}^{T-1} \|\nabla_t C(\pmb{K})\|^2 \leq 4\sum_{t=0}^{T-1}\Tr(\Sigma_tE_t^{\top}E_t\Sigma_t) \leq 4\sum_{t=0}^{T-1}\|\Sigma_t\|^2\Tr(E_t^{\top}E_t) \leq 4\Big(\frac{C(\pmb{K})}{\sq}\Big)^2\sum_{t=0}^{T-1}\Tr(E_t^{\top}E_t).
\end{equation*}
From Lemma \ref{lemma 11} we have
\begin{equation}\label{eqn:tr_bound}
C(\pmb{K})-C(\pmb{K}^*) \geq  \sx \sum_{t=0}^{T-1} \frac{1}{\|R_t+B^{\top}P_{t+1} B\|}\Tr(E_t^{\top}E_t) \geq
    \frac{\sx}{\max_t\|R_t+B^{\top}P_{t+1} B\|}\sum_{t=0}^{T-1} \Tr(E_t^{\top}E_t),
\end{equation}
and hence
\begin{equation*}
        \sum_{t=0}^{T-1} \|\nabla_t C(\pmb{K})\|^2 \leq 4\Big(\frac{C(\pmb{K})}{\sq}\Big)^2\frac{\max_t\|R_t+B^{\top}P_{t+1} B\|}{\sx}(C(\pmb{K})-C(\pmb{K}^*)).
\end{equation*}
For the second claim, using Lemma \ref{lemma 11} again,
\begin{equation*}
\begin{split}
    \sum_{t=0}^{T-1} \|K_t\| & = \sum_{t=0}^{T-1} \|(R_t+B^{\top}P_{t+1}B)^{-1}K_t(R_t+B^{\top}P_{t+1}B)\| \\
    & \leq \sum_{t=0}^{T-1} \frac{1}{\sigma_{\min}(R_t)}\|K_t(R_t+B^{\top}P_{t+1}B)\| 
     \leq \sum_{t=0}^{T-1} \frac{1}{\sigma_{\min}(R_t)}\Big(\|E_t\|+\|B^{\top}P_{t+1}A\|\Big) \\
    & \leq \sum_{t=0}^{T-1}\left( \frac{\sqrt{\Tr(E_t^{\top}E_t)}}{\sigma_{\min}(R_t)}+\frac{\|B^{\top}P_{t+1}A\|}{\sigma_{\min}(R_t)}\right) 
     \leq \frac{1}{\sr} \Big(\sqrt{T\cdot\sum_{t=0}^{T-1}\Tr(E_t^{\top}E_t)}+\sum_{t=0}^{T-1}\|B^{\top}P_{t+1}A\|\Big) \\
    & \leq \frac{1}{\sr}\Big(\sqrt{T\cdot\frac{\max_t\|R_t+B^{\top}P_{t+1} B\|}{\sx}(C(\pmb{K})-C(\pmb{K}^*))}+\sum_{t=0}^{T-1} \|B^{\top}P_{t+1}A\|\Big). \\
\end{split}
\end{equation*}
The second inequality holds by the definition of $E_t$ in \eqref{eq:barEt}, the second last step uses the Cauchy-Schwarz inequality, and the last inequality holds by \eqref{eqn:tr_bound}.
\end{proof}
\begin{proof}[Proof of Theorem \ref{thm:convergence_egd}]
In order to show the existence of a positive $\eta$ such that \eqref{stepsize condition} holds, it suffices to show there exists a positive lower bound on the RHS of \eqref{stepsize condition}. {By Lemma \ref{lemma 25} and the Cauchy-Schwarz inequality, 
\begin{equation}\label{bound for sum nablaCK}
\begin{split}
     \sum_{t=0}^{T-1} \|\nabla_t C(\pmb{K})\|
     \leq \sqrt{T\cdot\sum_{t=0}^{T-1} \|\nabla_t C(\pmb{K})\|^2}
    \leq \sqrt{4T\cdot\Big(\frac{C(\pmb{K})}{\sq}\Big)^2\frac{\max_t\|R_t+B^{\top}P_{t+1} B\|}{\sx}(C(\pmb{K})-C(\pmb{K}^*))}.
\end{split}
\end{equation} 
Note that if $d<ab+c$ for some $a>0$, $b>0$ $c>0$ and $d>0$, then $\frac{1}{d}>\frac{1}{(a+1)(b+1)(c+1)}$. Also $\frac{1}{a^n+1}>\frac{1}{(a+1)^n}$ for $a>0$ and $n\in \mathbb{N}^+$.  Therefore, based on \eqref{eq:C1} and \eqref{bound for sum nablaCK}, $\frac{1}{C_1}$ is bounded below by polynomials in $\frac{1}{\rho}$, $\frac{1}{C(\pmb{K})+1}$, $\frac{1}{\|B\|+1}$, $\frac{1}{\vertiii{\pmb{R}}+1}$, $\frac{1}{\|W\|+1}$, $\sx$, $\sq$, $\frac{1}{\sx + 1}$, and $\frac{1}{\sq + 1}$.}

Now we aim to show that $\frac{1}{\rho}$ is bounded below by some polynomials in the parameters. To see this, let us first show that $\rho$ is bounded above by polynomials in $\|A\|$, $\|B\|$, $\vertiii{\pmb{R}}$, $\frac{1}{\sx}$, $\frac{1}{\sr}$ and $C(\pmb{K})$.
Since $\|B\|\|K_t'-K_t\| \leq \frac{\sq\sx}{4C(\pmb{K})}
\leq \frac{1}{2}$ holds under the assumptions in Lemma \ref{lemma 24}, we have
\begin{equation*}
    \max_{0\leq t \leq T-1}\|A-BK^{\prime}_t\| \leq \max_{0\leq t \leq T-1}\left(\|A-BK_t\|+\|B\|\,\|K_t^{\prime}-K_t\|\right) \leq \max_{0\leq t \leq T-1}\|A-BK_t\|+\frac{1}{2}, \,\, {\rm thus}
\end{equation*}
\begin{equation}\label{rho eqn}
\begin{split}
      \rho &= \max\Big\{\max_{0\leq t \leq {T-1}}\|A-BK_t\|,\max_{0\leq t \leq {T-1}}\|A-BK^{\prime}_t\|, \ 1+\xi \Big\}\\
      &\leq \max\Big\{\max_{0\leq t \leq T-1}\|A-BK_t\|+\frac{1}{2},\ 1+\xi\Big\}
      \leq \max \Big\{ \|A\|+\|B\|\,\sum_{t=0}^{T-1}\|K_t\|+\frac{1}{2},\ 1+\xi\Big\}.
\end{split}
\end{equation}
Given the bound on $\sum_{t=0}^{T-1} \|K_t\|$ by Lemma \ref{lemma 25} and $\|P_t\|\leq \frac{C(\pmb{K})}{\sx}$ by Lemma \ref{lemma 13}, $\rho$ is bounded {above} by polynomials in $\|A\|$, $\|B\|$, $\vertiii{\pmb{R}}$, $\frac{1}{\sx}$, $\frac{1}{\sr}$ and $C(\pmb{K})$, or a constant $1+\xi$. {Therefore $\frac{1}{\rho}$ is bounded below by polynomials in $\frac{1}{\|A\|+1}$,  $\frac{1}{\|B\|+1}$, $\frac{1}{\vertiii{\pmb{R}}+1}$, ${\sx}$, ${\sr}$ and $\frac{1}{C(\pmb{K})+1}$, or a constant $\frac{1}{1+\xi}$.  Hence, by choosing $\eta\in\mathcal{H}(\frac{1}{C(\pmb{K}^0)+1})$ to be an appropriate polynomial in  $\frac{1}{C(\pmb{K}^0)}$, 
$\frac{1}{C(\pmb{K}^0)+1}$, $\frac{1}{\|A\|+1}$, 
$\frac{1}{\|B\|+1}$, $\frac{1}{\vertiii{\pmb{R}}+1}$, $\frac{1}{\|W\|+1}$, $\sx$, $\sq$, ${\sr}$, $\frac{1}{\sx + 1}$, and $\frac{1}{\sq + 1}$, \eqref{stepsize condition} is satisfied, since by performing gradient descent, $C(\pmb{K}^1)<C(\pmb{K}^0)$. Therefore,}
by Lemma \ref{lemma 24}, we have
\begin{equation*}
    C(\pmb{K}^1) - C(\pmb{K}^*) \leq \Big(1-2\eta\sr\frac{\sx^2}{\|\Sigma_{\pmb{K}^*}\|}\Big)\Big(C(\pmb{K}^0)-C(\pmb{K}^*)\Big),
\end{equation*}
which implies that the cost decreases at $t=1$. Suppose that $C(\pmb{K}^n)\leq C(\pmb{K}^0)$, then the stepsize condition in \eqref{stepsize condition} is still satisfied by Lemma \ref{lemma 25}. Thus, Lemma \ref{lemma 24} can again be applied for the update at round $n+1$ to obtain:
\begin{equation*}
     C(\pmb{K}^{n+1}) - C(\pmb{K}^*) \leq \Big(1-2\eta\sr\frac{\sx^2}{\|\Sigma_{\pmb{K}^*}\|}\Big)\Big(C(\pmb{K}^n)-C(\pmb{K}^*)\Big).
\end{equation*}
For $\epsilon>0$, provided
$N \geq \frac{\|\Sigma_{\pmb{K}^*}\|}{2\eta\sx^2\sr}\log\frac{C(\pmb{K}^0)-C(\pmb{K}^*)}{\epsilon}$,
we have
\begin{equation*}
     C(\pmb{K}^N)-C(\pmb{K}^*)\leq\epsilon.
\end{equation*}
\end{proof}

\section{Sample-based Policy Gradient Method with Unknown Parameters}\label{sc:single_agent_model_free}
In the setting with unknown parameters, the controller has only simulation access to the model; the model parameters, $A$, $B$, $\{Q_t\}_{t=0}^{T}$, $\{R_t\}_{t=0}^{T-1}$, are unknown. By using a zeroth-order optimization method to approximate the gradient, this section proves the policy gradient method with unknown parameters also leads to a global optimal policy, with both polynomial computational and sample complexities.

{Note that in this section, when bounding the Frobenius norm of a matrix, we usually treat the matrix as a stacked vector. Therefore we denote by $D=k\times d$ the dimension of the corresponding vector formed from the $\pmb{K}$ matrix for convenience in the proofs.}
Therefore in each iteration $n=1,2,\cdots,N$, we can update the policy  as, for $t=0,1,\cdots,T-1$,
\begin{eqnarray}\label{eqn:model_free_policy_update}
K^{n+1}_t = K^n_t -\eta \widehat{\nabla_t C(\pmb{K}^n)},
\end{eqnarray} {where $\widehat{\nabla_tC(\pmb{K}^n)}$ is the estimate of $\nabla_tC(\pmb{K}^n)$.} We analyze the following Algorithm \ref{alg:MFP}. 

\begin{algorithm}[H]
\caption{\textbf{Policy Gradient Estimation with Unknown Parameters}}
\label{alg:MFP}
\begin{algorithmic}[1]
    \STATE \textbf{Input}: $\pmb{K}$, number of trajectories $m$, smoothing parameter $r$, dimension $D$
        \FOR {$i\in\{1, \ldots, m\}$}
         \FOR {$t\in\{0, \ldots, T-1\}$}
           \STATE Sample the (sub)-policy at time $t$: $\widehat{K}^i_t = K_t +U^i_t$ where $U_t^i$ is drawn uniformly at random over matrices such that $\|U_t^i\|_F=r$.
           \STATE Denote  $\widehat{c_t}^i$ as the single trajectory cost with policy\\ $(\pmb{K}_{-t},\widehat{K}^i_t):=({K}_0,\cdots,{K}_{t-1},\widehat{K}^i_t,{K}_t,\cdots,K_{T-1})$ starting from $x^i_0 \sim \mathcal{D}$.
            \ENDFOR
        \ENDFOR
 \STATE Return the estimates of $\nabla_t C(\pmb{K})$ for each $t$:
 \begin{eqnarray}\label{eqn:biased_estimate_gradient}
\widehat{\nabla_t C(\pmb{K})} = \frac{1}{m}\sum_{i=1}^m \frac{D}{r^2}\,\widehat{c_t}^i\, U^i_t.
 \end{eqnarray}
\end{algorithmic}
\end{algorithm}

\begin{Remark}{\rm [Zeroth-order Optimization Approach in the Sub-routine \eqref{eqn:biased_estimate_gradient}] In the estimation of the gradient term \eqref{eqn:biased_estimate_gradient}, we adopt a zeroth-order optimization method, using only query access to a sample of the reward function $c(\cdot)$ at input points $\pmb{K}$, without querying the gradients and higher order derivatives of $c(\cdot)$. In a similar way to the observation in \cite{FGKM2018}, the objective $C(\pmb{K})$ may not be finite for every policy $\pmb{K}$ when Gaussian smoothing is applied, therefore  $\mathbb{E}_{\pmb{U} \sim \mathcal{N} (0,\sigma^2I)}[C(\pmb{K}+\pmb{U})]$ may not be well-defined. 
This is avoidable by smoothing over the surface of a ball. The step \eqref{eqn:biased_estimate_gradient} (in Algorithm \ref{alg:MFP}) provides a procedure to find an (bounded bias) estimate $\nabla\widehat{ C(\pmb{K})} $ of $\nabla  C(\pmb{K})$. 
}
\end{Remark}

{The idea in \eqref{eqn:biased_estimate_gradient} is to approximate the gradient of a function by only using the function values (see e.g. Lemma 2.1 in \cite{Flaxman2005}). Observe that by a Taylor expansion to first order $\mathbb{E}[f(x+U)U] \approx \mathbb{E}[(\nabla f(x) \cdot U)U] = \nabla f(x) r^2/D$ when $x\in\mathbb{R}^D$ and $U$ is uniform over the surface of the ball of radius $r$ in $\mathbb{R}^D$. Thus the gradient of the function $f$ at $x$ can be estimated by averaging over the samples $\frac{D}{r^2}f(x+U)U$.}


Note that in Algorithm \ref{alg:MFP}, we require $mNT^2$ samples to perform the policy gradient method $N$ times.

{To guarantee the global convergence of the sample-based algorithm (Algorithm \ref{alg:MFP}), we propose some conditions on the distribution of $x_0$ and $\{w_t\}_{t=0}^{T-1}$, in addition to the finite second moment condition specified in Section \ref{sc:single_agent_setup}.}
\begin{Definition}
A zero-mean random variable $X$
\begin{enumerate}
    \item is said to be sub-Gaussian with variance proxy $\sigma^2$ and we write $X \in SG(\sigma^2)$
if its moment generating function satisfies
$\mathbb{E}[\exp(\lambda X)] \leq \exp\left(\frac{\lambda^2\sigma^2}{2}\right)$ for all $\lambda\in\mathbb{R}$.
\item is said to be sub-exponential with parameters $(\nu^2,\alpha)$ and we write $X\in SE(\nu^2,\alpha)$, if  
$\,\mathbb{E}[\exp(\lambda X)] \leq \exp\left(\frac{\lambda^2\nu^2}{2}\right)$
for any $\lambda$ such that $|\lambda| \leq \frac{1}{\alpha}$.
\end{enumerate}
\end{Definition}
We assume the initial distribution and the noise in the state process dynamics satisfy the following assumptions.
\begin{Assumption}[Initial State and Noise Process (II)]\label{ass:State-2}
\begin{enumerate}
    \item Initial state:  $x_0= \widetilde{W}_0 z_0$ where \\
    $z_0 = (z_{0,1}\cdots,z_{0,d})\in \mathbb{R}^d$ is a random
vector with independent components $z_{0,i}$ which are sub-Gaussian, mean-zero, and have sub-Gaussian parameter $\sigma_0^2$; $\widetilde{W}_0 \in \mathbb{R}^{d \times d}$ is an unknown and deterministic matrix.
    \item Noise process: $w_t = \widetilde{W} v_{t}$  where
    $v_t := (v_{t,1}\cdots,v_{t,d})\in \mathbb{R}^d$ are  IID  and independent from $x_0$. $v_t$ has independent components $v_{t,i}$ which are sub-Gaussian, mean-zero, and have sub-Gaussian parameter $\sigma_w^2$, $\forall\,t=0,1,\cdots,T-1$. $\widetilde{W} \in \mathbb{R}^{d \times d}$ is an unknown and deterministic matrix.
\end{enumerate}
\end{Assumption}

{Note that Assumptions \ref{ass:State} and \ref{ass:State-2} serve different purposes in this paper. Assumption \ref{ass:State} provides one sufficient condition to assure $\sx>0$. Assumption \ref{ass:State-2} is used to guarantee the convergence of the sample based algorithm (Algorithm \ref{alg:MFP}).}

{In addition to the model parameters specified in Section \ref{sc:single_agent_model_based}, here we assume $\HH(\cdot)$ includes polynomials that are also functions of $\sigma_0$, $\frac{1}{\sigma_0}$, $\frac{1}{\sigma_0+1}$, $\sigma_w$, $\frac{1}{\sigma_w}$, $\frac{1}{\sigma_w+1}$ $\|\widetilde{W}\|$,$\frac{1}{\|\widetilde{W}\|}$, $\frac{1}{\|\widetilde{W}\|+1}$, $\|\widetilde{W}_0\|$, $\frac{1}{\|\widetilde{W}_0\|}$, and $\frac{1}{\|\widetilde{W}_0\|+1}$. }

\begin{Theorem}\label{thm:model_free}
Assume Assumptions \ref{ass:parameters} and \ref{ass:State-2} hold and further assume {$\sx>0$} and $C(\pmb{K}^0)$ is finite. At every step the policy is updated as in \eqref{eqn:model_free_policy_update}, that is
\begin{eqnarray*}
K^{n+1}_t = K^n_t -\eta \widehat{\nabla_t C(\pmb{K}^n)},
\end{eqnarray*}
with {$\eta\in\mathcal{H}(\frac{1}{C(\pmb{K}^0)+1})$}
and $\widehat{\nabla_t C(\pmb{K}^n)}$ is computed with hyper-parameters $(r,m)$ such that
{$r<1/\overline{h}_{radius}$ and $m>\overline{h}_{sample}$ with some fixed polynomials $\overline{h}_{radius}\in\HECKI$ and $\overline{h}_{sample}\in$ $\HCKIR$}. Then for $\epsilon>0$, if we have
\begin{equation*}
    N \geq \frac{\|\Sigma_{\pmb{K}^*}\|}{\eta\sx^2\sr}\log\frac{C(\pmb{K}^0)-C(\pmb{K}^*)}{\epsilon},
\end{equation*}
it holds that $C(\pmb{K}^N)-C(\pmb{K}^*)\leq \epsilon$ with high probability (at least $1-\exp(-D)$).
\end{Theorem}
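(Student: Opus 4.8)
The plan is to run the argument behind Theorem~\ref{thm:convergence_egd} verbatim, but with the exact gradient $\nabla_t C(\pmb{K}^n)$ replaced throughout by the zeroth-order estimate $\widehat{\nabla_t C(\pmb{K}^n)}$ of Algorithm~\ref{alg:MFP}. The whole proof then rests on two ingredients: (i) a \emph{reliable-estimate} step showing that, for $r$ small enough and $m$ large enough, the event $\mathcal{E}_n:=\bigl\{\max_{0\le t\le T-1}\|\widehat{\nabla_t C(\pmb{K}^n)}-\nabla_t C(\pmb{K}^n)\|_F\le \epsilon_1\bigr\}$ has probability at least $1-\exp(-D)/N$ conditional on $\pmb{K}^n$, where $\epsilon_1$ is a target accuracy to be fixed; and (ii) a perturbed version of Lemma~\ref{lemma 24} showing that, on $\mathcal{E}_n$, one approximate-gradient step still contracts the optimality gap. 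Combining (i), (ii), an induction over the $N$ iterations and a union bound gives the claim.

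For ingredient (i) I would split the estimation error into a bias term and a fluctuation term. By the standard smoothing identity (as in \cite{Flaxman2005}), $\mathbb{E}[\widehat{\nabla_t C(\pmb{K})}]=\nabla_t\widehat{C}_r(\pmb{K})$, where $\widehat{C}_r$ denotes the average of $C$ over a radius-$r$ ball in the $t$-th block of policy coordinates; the ``almost smoothness'' of $C$ (Lemma~\ref{lemma 12}), together with the bounds on $P_t$ and $\Sigma_{\pmb{K}}$ from Lemma~\ref{lemma 13}, shows that the Hessian of $C$ is bounded on $\{\|K_t'-K_t\|\le r\}$ by a polynomial in the model parameters, so the bias $\|\nabla_t\widehat{C}_r(\pmb{K})-\nabla_t C(\pmb{K})\|$ is $O(r)$ times such a polynomial, and is below $\epsilon_1/2$ for $r<1/\overline{h}_{radius}$ with $\overline{h}_{radius}\in\HECKI$. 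For the fluctuation, $\widehat{\nabla_t C(\pmb{K})}=\frac1m\sum_{i=1}^m\frac{D}{r^2}\widehat{c_t}^i U_t^i$ is an average of $m$ i.i.d.\ matrices; under Assumption~\ref{ass:State-2} each single-trajectory cost $\widehat{c_t}^i$ is a quadratic form in the concatenation of the (linear images of) sub-Gaussian vectors $x_0^i,w_0^i,\dots,w_{T-1}^i$, whose coefficient matrix is built from products of the $A-BK_u$ and the $Q_t,R_t$ and hence has operator and Frobenius norms bounded by polynomials in the parameters and in $\rho^{T}$; a Hanson--Wright inequality then gives that $\widehat{c_t}^i$ is sub-exponential, and a Bernstein bound for sums of i.i.d.\ sub-exponential vectors yields $\|\widehat{\nabla_t C(\pmb{K})}-\mathbb{E}[\widehat{\nabla_t C(\pmb{K})}]\|_F\le\epsilon_1/2$ with the required probability once $m>\overline{h}_{sample}$ with $\overline{h}_{sample}\in\HCKIR$ (the $D/r^2$ prefactor is exactly what forces $m$ to grow as $r\to0$ and as $\epsilon\to0$). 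A union bound over $t$ is absorbed into the polynomials.

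For ingredient (ii), write $\pmb{K}'=\pmb{K}-\eta\,\widehat{\nabla C(\pmb{K})}=\bigl(\pmb{K}-\eta\,\nabla C(\pmb{K})\bigr)-\eta\bigl(\widehat{\nabla C(\pmb{K})}-\nabla C(\pmb{K})\bigr)$ and expand $C(\pmb{K}')-C(\pmb{K})$ with Lemma~\ref{lemma 12}. The part coming from $\pmb{K}-\eta\,\nabla C(\pmb{K})$ reproduces the computation behind Lemma~\ref{lemma 24} and contributes the decrease of order $-\eta\sum_t\Tr(\nabla_t C(\pmb{K})^{\top}\nabla_t C(\pmb{K}))$ up to the usual second-order correction; the extra terms involving $\widehat{\nabla C(\pmb{K})}-\nabla C(\pmb{K})$ are controlled, via Cauchy--Schwarz, the bound $\|\widehat{\nabla_t C(\pmb{K})}-\nabla_t C(\pmb{K})\|_F\le\epsilon_1$, Lemma~\ref{lemma 25}, and the covariance perturbation Lemma~\ref{lemma:perturbation}, by $\eta$ times a polynomial in the parameters times terms of order $\epsilon_1$ and $\epsilon_1^2$ (and lower order in $\eta$). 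Invoking gradient dominance (Lemma~\ref{lemma 11}) to lower bound $\sum_t\Tr(\nabla_t C(\pmb{K})^{\top}\nabla_t C(\pmb{K}))$ by a multiple of $C(\pmb{K})-C(\pmb{K}^*)$, and taking $\epsilon_1$ to be a sufficiently small polynomial multiple of $\epsilon$ (which is precisely what introduces the $1/\epsilon$-dependence into $\overline{h}_{radius}$ and $\overline{h}_{sample}$), one gets, whenever $C(\pmb{K})-C(\pmb{K}^*)>\epsilon$,
\[
C(\pmb{K}')-C(\pmb{K}^*)\le\Bigl(1-\eta\,\sr\,\tfrac{\sx^2}{\|\Sigma_{\pmb{K}^*}\|}\Bigr)\bigl(C(\pmb{K})-C(\pmb{K}^*)\bigr),
\]
i.e.\ the same contraction as in the exact case but with the constant halved, which is exactly why the iteration count here is twice that of Theorem~\ref{thm:convergence_egd}.

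It remains to iterate. By induction on $n$, on the event $\bigcap_{j\le n}\mathcal{E}_j$, ingredient (ii) gives the contraction at each step and in particular $C(\pmb{K}^{n+1})\le C(\pmb{K}^0)<\infty$; hence all the parameter-dependent bounds used above --- especially the upper bound on $\rho$ and on $\sum_t\|K_t^n\|$ from the proof of Theorem~\ref{thm:convergence_egd} and Lemma~\ref{lemma 25} --- remain valid with constants independent of $n$, so the fixed choices of $\eta,r,m$ are legitimate throughout and Lemma~\ref{lemma 24} (as perturbed) applies at every round. Telescoping over $N\ge\frac{\|\Sigma_{\pmb{K}^*}\|}{\eta\sx^2\sr}\log\frac{C(\pmb{K}^0)-C(\pmb{K}^*)}{\epsilon}$ steps yields $C(\pmb{K}^N)-C(\pmb{K}^*)\le\epsilon$, and since $N$ is polynomial in the parameters while each $\mathcal{E}_n$ fails with probability at most $\exp(-D)/N$ (achievable by enlarging $m$, still within $\HCKIR$), the union bound keeps the total failure probability below $\exp(-D)$. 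The main obstacle is ingredient (i), and within it the concentration of the single-trajectory cost $\widehat{c_t}^i$: unlike the deterministic-dynamics setting of \cite{FGKM2018}, this is a quadratic form in $O(T)$ sub-Gaussian vectors whose coefficient matrices grow like $\rho^{T}$, so obtaining the right (polynomial, in particular $1/\epsilon$- and $C(\pmb{K}^0)$-dependent) sample complexity $\overline{h}_{sample}$ requires careful Hanson--Wright/sub-exponential tail bookkeeping together with the operator-norm bounds of Lemmas~\ref{lemma 13} and~\ref{lemma:perturbation}, and a corresponding tightening of the exact-gradient constants so that the estimation error can be absorbed.
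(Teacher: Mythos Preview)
Your proposal is correct and follows the paper's overall strategy: concentration for the zeroth-order estimate (your ingredient (i)) plus a perturbed one-step contraction (your ingredient (ii)), then induction and union bound. The concentration analysis you sketch---bias via smoothing plus fluctuation via Hanson--Wright on the sub-Gaussian quadratic form and then Bernstein---is exactly what the paper does in Lemmas~\ref{lemma:subexponential} and~\ref{lemma 30}, though the paper further splits the fluctuation into two pieces: one from sampling $U_t^i$ (with $C(\pmb{K}+\pmb{U}_t^i)$ treated as the exact expectation, bounded via the Operator-Bernstein inequality with bounded summands), and one from replacing this expectation by a single-trajectory cost (the sub-exponential piece).

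Where you diverge slightly is in (ii). You propose to expand $C(\pmb{K}')-C(\pmb{K})$ directly via Lemma~\ref{lemma 12} with $\pmb{K}'=\pmb{K}-\eta\widehat{\nabla C(\pmb{K})}$ and separate out the $\widehat{\nabla}-\nabla$ contributions by hand. The paper takes a cleaner, more modular route: it keeps Lemma~\ref{lemma 24} untouched for the \emph{exact} step $\pmb{K}'=\pmb{K}-\eta\nabla C(\pmb{K})$, then writes $C(\pmb{K}'')-C(\pmb{K}^*)=[C(\pmb{K}'')-C(\pmb{K}')]+[C(\pmb{K}')-C(\pmb{K}^*)]$ with $\pmb{K}''=\pmb{K}-\eta\widehat{\nabla C(\pmb{K})}$ and bounds the first bracket purely through the $C(\pmb{K})$-perturbation Lemma~\ref{lemma 27}, since $\|K_t''-K_t'\|=\eta\|\nabla_t-\widehat{\nabla}_t\|$. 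This avoids re-opening the proof of Lemma~\ref{lemma 24} and means the only thing the gradient-accuracy level $\epsilon_1$ has to achieve is $|C(\pmb{K}'')-C(\pmb{K}')|\le \eta\sr\frac{\sx^2}{\|\Sigma_{\pmb{K}^*}\|}\epsilon$, which feeds straight into Lemma~\ref{lemma 30}. Your direct expansion would work too, but you would have to redo the $\Sigma_t'$-versus-$\Sigma_t$ bookkeeping of Lemma~\ref{lemma 24} with the extra $\widehat{\nabla}-\nabla$ terms present; the paper's decomposition sidesteps that entirely.
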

Note that $\overline{h}_{sample}$  is quadratic in $1/\epsilon$ (when the logarithmic order is omitted) and cubic in dimension $D$. The proof of Theorem \ref{thm:model_free} is based on a perturbation analysis of $C(\pmb{K})$ and $\nabla_t C(\pmb{K})$, smoothing and the gradient descent analysis of the procedures in Algorithm \ref{alg:MFP}. We provide the perturbation analysis and the smoothing analysis in Sections \ref{sec:model_free_perturbation} and \ref{sec:smoothing}, respectively. We defer the proof of Theorem \ref{thm:model_free} to Section \ref{sec:thm2}.

\paragraph{Projected Policy Gradient Method.} {In many situations constrained optimization problems arise and 
the \emph{projected} gradient descent method is one popular approach to solve such problems.} Recall the projection of a point $\pmb{y}=(y_0,\cdots,y_{T-1})$ with $y_t\in\mathbb{R}^{k\times d}$ ($t=0,1,\cdots,T-1$) onto a set $\mathcal{S}\subset\mathbb{R}^{k\times (T \times d)}$ is defined as
\begin{equation}\label{eqn:projection}
    \Pi_{\mathcal{S}}(\pmb{y}) = \argmin_{\pmb{x}\in \mathcal{S}}\frac{1}{2}\sum_{t=0}^{T-1}\left\|x_t-y_t\right\|^2_{F}.
\end{equation}
Then the projected policy gradient (PPG) updating rule can be defined as
\begin{equation}\label{eq:projected_updateing_rule}
    \pmb{K}^{n+1} = \Pi_{\mathcal{S}}\left(\pmb{K}^n-\eta\widehat{\nabla C(\pmb{K}^n)}\right),
\end{equation}
where 
$\widehat{\nabla C(\pmb{K}^n)}=\left(\widehat{\nabla_0 C(\pmb{K}^n}),\cdots,\widehat{\nabla_{T-1} C(\pmb{K}^n})\right)$ denotes the estimate of $\nabla C(\pmb{K}^n)$.

If the projection set $\mathcal{S}$ is convex and closed,  the projection onto $\mathcal{S}$ is non-expansive, that is, \\
{$\sum_{t=0}^{T-1}\left\|\widetilde{z}_t^1-\widetilde{z}_t^2\right\|_F\leq \sum_{t=0}^{T-1}\left\|z_t^1-z_t^2\right\|_F$ with $\widetilde{\pmb{z}}^1=\Pi_{\mathcal{S}}(\pmb{z}^1)$ and $\widetilde{\pmb{z}}^2=\Pi_{\mathcal{S}}(\pmb{z}^2)$. Given any policy matrix $\pmb{K}$ and learning rate $\eta$, define the {\it gradient mapping for the projection operator}
\begin{equation}\label{eqn:defn_grad_map}
    G(\pmb{K}) := \frac{\Pi_{\mathcal{S}}{(\pmb{K}-\eta\nabla C(\pmb{K})})-\pmb{K}}{2\eta},
\end{equation}
 with $G(\pmb{K}) = (G_0(\pmb{K}),\cdots,G_{T-1}(\pmb{K}))$.
 Note that the gradient mapping has been commonly adopted in the analysis of projected gradient descent methods in constrained optimization \cite{nesterov2003introductory,zhang2019policy}.
    A policy matrix $\widetilde{\pmb{K}}\in \mathcal{S}$ is called a stationary point of $C(\cdot)$ if
\begin{eqnarray}\label{eqn:stationary_theta}
    \nabla C(\widetilde{\pmb{K}})^{\top}({\pmb{K}}-\widetilde{\pmb{K}})\leq 0,\quad\forall \pmb{K}\in\mathcal{S}.
\end{eqnarray}
It is  well-known in the  optimization literature that \eqref{eqn:stationary_theta} holds if and only if $G(\widetilde{\pmb{K}})=0$.
We have  the following sub-linear convergence result for the PPG version.
\begin{Theorem}\label{thm:projected_GD}
Assume Assumptions \ref{ass:parameters} and \ref{ass:State-2} hold, and the projection set of policies, denoted by $\mathcal{S}$, is convex and closed. Further assume $\pmb{K}^{*}\in \mathcal{S}$, $\pmb{K}^{0}\in \mathcal{S}$, $\sx>0$ and $C(\pmb{K}^0)$ is finite. At every step the policy is updated as in (4.4), that is
\[
\pmb{K}^{n+1} = \Pi_{\mathcal{S}}\left(\pmb{K}^n-\eta\widehat{\nabla C(\pmb{K}^n)}\right)
\]
with {$\eta\in\mathcal{H}(\frac{1}{C(\pmb{K}^0)+1})$} and $\widehat{\nabla_t C(\pmb{K}^n)}$ $(t=0,1,\cdots,T-1)$ is computed with hyper-parameters $(r,m)$ such that
{$r<1/\widehat{h}_{radius}$ and $m>\widehat{h}_{sample}$ with some fixed polynomials $\widehat{h}_{radius}\in\HECKI$ and $\widehat{h}_{sample}\in\HCKIR$}.
Then the projected policy gradient method has a global sublinear convergence rate, that is, $\left\{\frac{1}{N} \sum_{n=0}^{N-1}\left(\sum_{t=0}^{T-1} \|G_t({\pmb{K}^n})\|^2_F\right)\right\}_{N \ge 1}$ converges to 0 at rate $\mathcal{O}\left(\frac{1}{N}\right)$, where $G_t(\pmb{K})$ is defined in \eqref{eqn:defn_grad_map}.
\end{Theorem}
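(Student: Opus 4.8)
The plan is to run the classical potential–function argument for projected gradient descent, adapted to three features of our setting: $C(\pmb{K})$ is only \emph{locally} smooth, the iteration uses the zeroth–order estimate $\widehat{\nabla C}$ rather than the true gradient, and the quantity appearing in the conclusion, the gradient mapping $G(\pmb{K})$, is defined from the \emph{exact} gradient. First I would fix the good region $\mathcal{B}:=\{\pmb{K}\in\mathcal{S}\,:\,C(\pmb{K})\le C(\pmb{K}^0)+1\}$. On $\mathcal{B}$ the bounds of Lemma~\ref{lemma 13}, the perturbation estimate of Lemma~\ref{lemma:perturbation}, the ``almost smoothness'' identity of Lemma~\ref{lemma 12}, and the gradient representation of Lemma~\ref{lemma 1} combine---exactly as in the proof of Theorem~\ref{thm:model_free}---into a genuine local smoothness inequality: there is $L\in\HCKI$ so that, for $\pmb{K},\pmb{K}'\in\mathcal{B}$ with $\vertiii{\pmb{K}-\pmb{K}'}$ small,
\[
C(\pmb{K}')\ \le\ C(\pmb{K})+\sum_{t=0}^{T-1}\Tr\!\big((K_t'-K_t)^{\top}\nabla_t C(\pmb{K})\big)+\tfrac{L}{2}\sum_{t=0}^{T-1}\|K_t'-K_t\|_F^2 .
\]

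Next I would invoke the smoothing analysis of Section~\ref{sec:smoothing}: for $\pmb{K}^n\in\mathcal{B}$, taking $r<1/\widehat{h}_{radius}$ and $m>\widehat{h}_{sample}$ forces $\|\widehat{\nabla_t C(\pmb{K}^n)}-\nabla_t C(\pmb{K}^n)\|_F\le\epsilon_0$ for every $t$ with probability at least $1-\exp(-\Omega(D))$, where $\epsilon_0$ can be pushed below any prescribed level (in particular below $\epsilon$) by enlarging $m$ and shrinking $r$; the $1/\epsilon$–dependence of $\widehat{h}_{sample}\in\HCKIR$ is exactly what makes this possible, and a union bound over the $N$ iterations (absorbed into $\widehat{h}_{sample}$) produces an event on which \emph{all} estimates along the run are $\epsilon_0$–accurate. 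Working on that event, from $\pmb{K}^{n+1}=\Pi_{\mathcal{S}}(\pmb{K}^n-\eta\widehat{\nabla C(\pmb{K}^n)})$ and the variational characterisation of the projection with test point $\pmb{K}^n$ one gets $\sum_t\Tr\!\big((K_t^{n+1}-K_t^n)^{\top}\widehat{\nabla_t C(\pmb{K}^n)}\big)\le-\tfrac1\eta\sum_t\|K_t^{n+1}-K_t^n\|_F^2$. Inserting this into the local smoothness inequality with $(\pmb{K},\pmb{K}')=(\pmb{K}^n,\pmb{K}^{n+1})$, splitting $\nabla C=\widehat{\nabla C}+(\nabla C-\widehat{\nabla C})$, controlling the cross term by Cauchy--Schwarz and Young, and choosing the constant stepsize $\eta\in\mathcal{H}(\tfrac{1}{C(\pmb{K}^0)+1})$ small enough that $\pmb{K}^{n+1}$ stays near $\pmb{K}^n$ and $\eta\le1/(2L)$, gives $C(\pmb{K}^{n+1})\le C(\pmb{K}^n)-\tfrac1{4\eta}\sum_t\|K_t^{n+1}-K_t^n\|_F^2+O(\eta\epsilon_0^2)$. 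Finally I would bridge to the true gradient mapping: writing $\widehat G_t(\pmb{K}^n)=\tfrac1{2\eta}(K_t^{n+1}-K_t^n)$, non-expansiveness of $\Pi_{\mathcal{S}}$ gives $\sum_t\|\widehat G_t(\pmb{K}^n)-G_t(\pmb{K}^n)\|_F\le\tfrac12\sum_t\|\widehat{\nabla_t C(\pmb{K}^n)}-\nabla_t C(\pmb{K}^n)\|_F=O(\epsilon_0)$, hence $\sum_t\|K_t^{n+1}-K_t^n\|_F^2\ge2\eta^2\sum_t\|G_t(\pmb{K}^n)\|_F^2-O(\eta^2\epsilon_0^2)$, so that
\[
C(\pmb{K}^{n+1})\ \le\ C(\pmb{K}^n)-\tfrac{\eta}{2}\sum_{t=0}^{T-1}\|G_t(\pmb{K}^n)\|_F^2+O\!\big(\eta\,\epsilon_0^2\big).
\]

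With $\epsilon_0$ small enough that the cumulated error over $N$ steps stays below $1$, the displayed inequality and an induction on $n$ keep $\pmb{K}^n\in\mathcal{B}$ for all $n\le N$, so the smoothness constant and the $\epsilon_0$–accuracy remain valid throughout. Summing over $n=0,\dots,N-1$ and using $C(\pmb{K}^N)\ge C(\pmb{K}^*)$ then yields
\[
\frac{1}{N}\sum_{n=0}^{N-1}\sum_{t=0}^{T-1}\|G_t(\pmb{K}^n)\|_F^2\ \le\ \frac{2\big(C(\pmb{K}^0)-C(\pmb{K}^*)\big)}{\eta\,N}+O\!\big(\epsilon_0^2\big),
\]
which is $\mathcal{O}(1/N)$ once the hyper-parameter–controlled floor $O(\epsilon_0^2)$ is made negligible; this is the asserted global sublinear convergence, holding with the stated high probability.

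The step I expect to be the main obstacle is turning the ``almost smoothness'' of Lemma~\ref{lemma 12} into the clean local $L$–smoothness bound with $L$ polynomial in the model data and $C(\pmb{K}^0)$, and then \emph{maintaining} the whole apparatus along the trajectory: this needs the perturbation estimate of Lemma~\ref{lemma:perturbation} together with a quantitative ``$\pmb{K}^{n+1}$ is close to $\pmb{K}^n$'' statement, and it must be threaded through an induction that simultaneously keeps every iterate in $\mathcal{B}$, keeps the zeroth–order estimates $\epsilon_0$–accurate, and closes the mutual scaling constraints on $\eta\in\mathcal{H}(\tfrac{1}{C(\pmb{K}^0)+1})$, $r<1/\widehat{h}_{radius}$ and $m>\widehat{h}_{sample}$. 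The non-expansiveness bridge between the estimated gradient mapping that drives the recursion and the exact $G$ that appears in the statement is conceptually routine but essential.
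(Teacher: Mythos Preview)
Your plan is sound and would yield the stated $\mathcal{O}(1/N)$ bound, but it proceeds along a somewhat different route from the paper's own argument. The paper never distils the ``almost smoothness'' identity of Lemma~\ref{lemma 12} into a clean local $L$--smoothness inequality; instead it keeps the exact identity $C(\widetilde{\pmb{K}})-C(\pmb{K})=\sum_t\big[2\Tr(\widetilde{\Sigma}_t(\widetilde{K}_t-K_t)^{\top}E_t)+\Tr(\widetilde{\Sigma}_t(\widetilde{K}_t-K_t)^{\top}(R_t+B^{\top}P_{t+1}B)(\widetilde{K}_t-K_t))\big]$, splits $\widetilde{\Sigma}_t=(\widetilde{\Sigma}_t-\Sigma_t)+\Sigma_t$, and then uses the projection optimality condition directly on the term $\sum_t\Tr(\Sigma_tG_t(\pmb{K})^{\top}E_t)$ together with $E_t\Sigma_t=\tfrac12\nabla_t C(\pmb{K})$ to obtain $\sum_t\Tr(\Sigma_tG_t(\pmb{K})^{\top}E_t)\le-\sum_t\Tr(G_t(\pmb{K})^{\top}G_t(\pmb{K}))$. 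This gives the one--step bound $2\eta\sum_t\|G_t(\pmb{K})\|_F^2\le C(\pmb{K})-C(\widetilde{\pmb{K}})$ for the \emph{exact} projected step (Lemma~\ref{lemma 24 projection}); the extension to the estimated gradient is then declared to follow exactly as in the proof of Theorem~\ref{thm:model_free}, rather than being worked out explicitly.

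Your approach converts to a genuine local $L$--smoothness bound upfront (which is legitimate via Lemma~\ref{lemma 28}) and works throughout with the estimated gradient, bridging to the exact gradient mapping $G$ only at the end via non-expansiveness of $\Pi_{\mathcal{S}}$. This is arguably cleaner on the sample-based side, and it makes explicit the $O(\eta\epsilon_0^2)$ floor coming from gradient estimation that the paper leaves implicit. The price you pay is the extra step of extracting $L$ from Lemma~\ref{lemma 12} plus Lemma~\ref{lemma:perturbation}, which you correctly flag as the delicate point; the paper sidesteps that by never needing the smooth form. Both routes use the same projection inequality and the same perturbation machinery, so the difference is one of packaging rather than substance.
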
}
 The proof of Theorem \ref{thm:projected_GD} can be found in Appendix \ref{appendix:missing_proofs_sec4}.
\begin{Remark}{\rm
We assume the projection step is perform accurately and the associated computational cost is of a separate interest and hence omitted here. The convergence result in Theorem \ref{thm:projected_GD} is described in terms of the sample complexity and to perform the projection step does not need extra samples.}
\end{Remark}

\subsection{Perturbation analysis of $C(\pmb{K})$ and $\nabla_t C(\pmb{K})$}\label{sec:model_free_perturbation}

{This section shows that the objective function $C(\pmb{K})$ and its gradient are stable with respect to small perturbations. The proofs of the following Lemmas can be found in Appendix \ref{appendix:missing_proofs_sec4}.}

\begin{Lemma}[$C(\pmb{K})$ Perturbation]\label{Model-free C_K perturbation}\label{lemma 27}
Assume Assumptions \ref{ass:parameters} and \ref{ass:State-2} hold,  {$\sx>0$}, and $\pmb{K}^{\prime}$  such that, $\forall\,t=0,1,\cdots,T-1$,
\begin{equation}\label{Lemma 27 assumption}
    \|K_t'-K_t\| \leq \min\left\{{\frac{(\rho^2-1)\sq\sx}{2T(\rho^{2T}-1)(2\rho+1)(C(\pmb{K})+\sq T\|W\|)\|B\|}}, \|K_t\|,\,\frac{1}{T}\right\},
\end{equation}
where $\rho$ is defined in \eqref{rho}. Then there exists a polynomial  $h_{cost}\in\HCK$ such that
\begin{equation*}
    |C(\pmb{K}^\prime)-C(\pmb{K})| \leq h_{cost}\vertiii{\pmb{K}^\prime-\pmb{K}}.
\end{equation*}


\end{Lemma}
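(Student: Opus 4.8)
The plan is to start from the exact ``almost smoothness'' identity of Lemma~\ref{lemma 12}, which writes $C(\pmb K')-C(\pmb K)$ as $\sum_{t=0}^{T-1}\big[2\Tr(\Sigma_t'(K_t'-K_t)^{\top}E_t)+\Tr(\Sigma_t'(K_t'-K_t)^{\top}(R_t+B^{\top}P_{t+1}B)(K_t'-K_t))\big]$, where $E_t$ and $P_{t+1}$ are quantities of the \emph{unperturbed} policy $\pmb K$ and only $\Sigma_t'$ depends on $\pmb K'$. Taking absolute values and using submultiplicativity together with $|\Tr(MN)|\le\|M\|_F\|N\|_F$ and the equivalence of the spectral and Frobenius norms (the dimension-dependent constants are harmless since $\HCK$ allows dependence on $d,k$), each summand is bounded by a product of $\|\Sigma_t'\|$, $\|E_t\|$ (resp. $\|R_t+B^{\top}P_{t+1}B\|$) and $\|K_t'-K_t\|$, with the second-order summand carrying an extra factor $\|K_t'-K_t\|$. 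The hypothesis $\|K_t'-K_t\|\le 1/T$ gives $\vertiii{\pmb K-\pmb K'}\le 1$, so $\|K_t'-K_t\|^2\le\|K_t'-K_t\|$ and the whole bound collapses to a coefficient times $\sum_t\|K_t'-K_t\|=\vertiii{\pmb K'-\pmb K}$.

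It then remains to bound that coefficient, i.e. the factors $\|E_t\|$, $\|R_t+B^{\top}P_{t+1}B\|$ and $\|\Sigma_t'\|$, by fixed polynomials in the model parameters and $C(\pmb K)$. For the first two I will use the definition $E_t=(R_t+B^{\top}P_{t+1}B)K_t-B^{\top}P_{t+1}A$, the bound $\|P_{t+1}\|\le C(\pmb K)/\sx$ from Lemma~\ref{lemma 13}, the bound on $\sum_t\|K_t\|$ from Lemma~\ref{lemma 25}, and $\|R_t\|\le\vertiii{\pmb R}$; this yields $\|E_t\|\in\HCK$ and $\|R_t+B^{\top}P_{t+1}B\|\le\vertiii{\pmb R}+\|B\|^2C(\pmb K)/\sx$. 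Since these involve only the unperturbed policy, no extra perturbation argument is needed.

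The one genuinely new estimate is $\|\Sigma_t'\|$. Because $\Sigma_t'\succeq 0$ and $\Sigma_{\pmb K'}=\sum_t\Sigma_t'$, we have $\|\Sigma_t'\|\le\|\Sigma_{\pmb K'}\|\le\|\Sigma_{\pmb K}\|+\|\Sigma_{\pmb K'}-\Sigma_{\pmb K}\|\le C(\pmb K)/\sq+\|\Sigma_{\pmb K'}-\Sigma_{\pmb K}\|$ by Lemma~\ref{lemma 13}. For the last term I apply the perturbation bound of Lemma~\ref{lemma:perturbation}; using $\vertiii{\pmb K-\pmb K'}\le 1$ the quadratic term there is absorbed into the linear one, and the first constraint in the hypothesis is exactly calibrated so that $\|\Sigma_{\pmb K'}-\Sigma_{\pmb K}\|\le\sx/2$ (a short computation cancelling the matching factors $\tfrac{\rho^{2T}-1}{\rho^2-1}$, $(2\rho+1)$, $\|B\|$ and $C(\pmb K)+\sq T\|W\|$), hence $\|\Sigma_t'\|\le C(\pmb K)/\sq+\sx/2$. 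As in the proof of Theorem~\ref{thm:convergence_egd}, I will recall that $\rho$ and therefore $\tfrac{\rho^{2T}-1}{\rho^2-1}$ are bounded above by polynomials in $\|A\|,\|B\|,\vertiii{\pmb R},1/\sx,1/\sr,C(\pmb K)$ (or the constant $1+\xi$), so the perturbation estimate really is of the form $[\text{poly}]\cdot\vertiii{\pmb K'-\pmb K}$.

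Assembling the three estimates, the coefficient multiplying $\vertiii{\pmb K'-\pmb K}$ is a sum of products of polynomials in the model parameters and $C(\pmb K)$, hence a polynomial $h_{cost}\in\HCK$, which gives the claim. I expect the main obstacle to be purely bookkeeping rather than conceptual: checking that the two constraints on $\|K_t'-K_t\|$ in the hypothesis are strong enough both to force $\vertiii{\pmb K'-\pmb K}\le 1$ (so the quadratic terms collapse) and to keep $\|\Sigma_t'\|$ uniformly bounded via Lemma~\ref{lemma:perturbation}, and then carefully tracing the $\rho$-dependence back through to the model parameters so that the final constant lies in $\HCK$ and is independent of the particular $\pmb K'$.
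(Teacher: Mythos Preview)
Your proposal is correct and takes a genuinely different route from the paper's own proof.

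The paper does \emph{not} invoke the almost-smoothness identity (Lemma~\ref{lemma 12}). Instead it writes each running cost term as $\Tr\big(\mathcal{G}_t(\Sigma_0)S_{t+1}+\sum_s D_{t,s}WD_{t,s}^{\top}S_{t+1}+WS_{t+1}\big)$ with $S_t=Q_t+K_t^{\top}R_tK_t$, splits $C(\pmb K')-C(\pmb K)$ into four blocks (the $t=0$ term, the $\mathcal{G}_t$-differences, the noise-propagator differences, and the terminal term), and bounds each block separately using Lemma~\ref{Lemma 20} and direct estimates on $\|(K_t')^{\top}R_tK_t'-K_t^{\top}R_tK_t\|$. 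This yields an inequality $|C(\pmb K')-C(\pmb K)|\le h_{CK}\vertiii{\pmb K'-\pmb K}+h_{CK}'\vertiii{\pmb K'-\pmb K}^2$ with explicit $\rho$-dependent coefficients, and then collapses the quadratic term using $\vertiii{\pmb K'-\pmb K}\le 1$.

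Your argument is shorter because Lemma~\ref{lemma 12} already packages the telescoping into the two $E_t$- and $(R_t+B^{\top}P_{t+1}B)$-terms, and the only perturbed object left is $\Sigma_t'$, which you control in one line via $\|\Sigma_t'\|\le\|\Sigma_{\pmb K'}\|$ and Lemma~\ref{lemma:perturbation}. The hypothesis is indeed calibrated so that the latter gives $\|\Sigma_{\pmb K'}-\Sigma_{\pmb K}\|\le\sx/2$ (your cancellation is correct once one uses $\|B\|\vertiii{\pmb K'-\pmb K}\le 1$ to absorb the quadratic piece of Lemma~\ref{lemma:perturbation} into $(2\rho+1)\|B\|\vertiii{\pmb K'-\pmb K}$). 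A pleasant byproduct is that your final $h_{cost}$ does not carry any $\rho$-factors at all: the $\rho$ enters only through Lemma~\ref{lemma:perturbation}, and that estimate is used as a constant bound $\sx/2$ rather than as a coefficient of $\vertiii{\pmb K'-\pmb K}$. So the paragraph in your plan about tracing $\rho$ back to the model parameters is unnecessary for your route. The paper's approach, by contrast, produces coefficients that explicitly involve $\tfrac{\rho^{2T}-1}{\rho^2-1}$ and then appeals to \eqref{rho eqn} and Lemma~\ref{lemma 25} to argue these are in $\HCK$; it is more self-contained but substantially longer.
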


\begin{Lemma}[$\nabla_t C(\pmb{K})$ Perturbation]\label{lemma 28}
{Under the same assumptions as in Lemma \ref{Model-free C_K perturbation},} there exists a polynomial {$h_{grad}\in\HCK$} such that
\begin{equation*}
    \|\nabla_t C(\pmb{K}^{\prime}) - \nabla_t C(\pmb{K})\| \leq h_{grad}\vertiii{\pmb{K}^\prime-\pmb{K}},\quad {\rm and } \quad
    \|\nabla_t C(\pmb{K}^{\prime}) - \nabla_t C(\pmb{K})\|_F \leq h_{grad}\vertiii{\pmb{K}^\prime-\pmb{K}}_F.
\end{equation*}
\end{Lemma}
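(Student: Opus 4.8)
The plan is to differentiate through the closed form $\nabla_t C(\pmb{K}) = 2E_t\Sigma_t$ of Lemma~\ref{lemma 1} and control the two resulting error sources separately. Writing $E_t',\Sigma_t',P_t'$ for the quantities built from $\pmb{K}'$, I would use the decomposition
\begin{equation*}
\nabla_t C(\pmb{K}')-\nabla_t C(\pmb{K}) = 2E_t'\left(\Sigma_t'-\Sigma_t\right) + 2\left(E_t'-E_t\right)\Sigma_t ,
\end{equation*}
so that by submultiplicativity it suffices to bound $\|E_t'\|$ and $\|\Sigma_t\|$ by fixed polynomials in $\HCK$ and the two difference terms $\|\Sigma_t'-\Sigma_t\|,\ \|E_t'-E_t\|$ by polynomials times $\vertiii{\pmb{K}'-\pmb{K}}$. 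For the uniform bounds: hypothesis \eqref{Lemma 27 assumption} forces $\vertiii{\pmb{K}'-\pmb{K}}\le 1$ and $\|K_t'\|\le 2\|K_t\|$, so Lemma~\ref{lemma 27} bounds $C(\pmb{K}')$ by a polynomial in $C(\pmb{K})$ and the model parameters; combined with Lemma~\ref{lemma 13} this gives $\|P_t\|,\|P_t'\|\le \max\{C(\pmb{K}),C(\pmb{K}')\}/\sx$ and $\|\Sigma_t\|\le C(\pmb{K})/\sq$, while Lemma~\ref{lemma 25} bounds $\sum_t\|K_t\|$; hence $\|E_t'\|=\|(R_t+B^\top P_{t+1}'B)K_t'-B^\top P_{t+1}'A\|$ is controlled by a polynomial in $\HCK$.

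For the covariance perturbation, note that for $t\ge 1$ the building blocks of Proposition~\ref{prop:Sigma_Gamma_relation} give $\Sigma_t=\mathcal{G}_{t-1}(\Sigma_0)+\sum_{s=1}^{t-1}D_{t-1,s}W D_{t-1,s}^\top+W$ (and $\Sigma_0$ for $t=0$), exactly the structure perturbed in the proof of Lemma~\ref{lemma:perturbation}. Running that same argument for the single index $t$ rather than for the sum --- i.e.\ combining the $\mathcal{F}_{K_t}$-Lipschitz bound of Lemma~\ref{lemma 19}, the $\mathcal{G}_t$-bound of Lemma~\ref{Lemma 20}, and the $D$-term estimate \eqref{eq:sum_D_bound} --- yields
\begin{equation*}
\|\Sigma_t'-\Sigma_t\| \le \frac{\rho^{2T}-1}{\rho^2-1}\left(\frac{C(\pmb{K})}{\sq}+T\|W\|\right)\left(2\rho\|B\|\vertiii{\pmb{K}'-\pmb{K}}+\|B\|^2\vertiii{\pmb{K}'-\pmb{K}}^2\right) .
\end{equation*}
Since under \eqref{Lemma 27 assumption} the quantity $\rho$ is bounded above by a polynomial in the parameters (the argument in the proof of Theorem~\ref{thm:convergence_egd} applies, as $\|B\|\|K_t'-K_t\|$ stays small), and $\vertiii{\pmb{K}'-\pmb{K}}\le 1$, the right-hand side is at most $h_\Sigma\vertiii{\pmb{K}'-\pmb{K}}$ for a polynomial $h_\Sigma\in\HCK$.

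For the $E_t$ perturbation, a short computation from \eqref{eq:barEt} gives $E_t'-E_t=(R_t+B^\top P_{t+1}'B)(K_t'-K_t)+B^\top(P_{t+1}'-P_{t+1})(BK_t-A)$, hence
\begin{equation*}
\|E_t'-E_t\| \le \left(\|R_t\|+\|B\|^2\|P_{t+1}'\|\right)\|K_t'-K_t\| + \|B\|\left(\|B\|\|K_t\|+\|A\|\right)\|P_{t+1}'-P_{t+1}\| .
\end{equation*}
The remaining ingredient is a perturbation bound $\|P_t'-P_t\|\le h_P\vertiii{\pmb{K}'-\pmb{K}}$ with $h_P\in\HCK$, obtained by unrolling the backward recursion \eqref{Definition of P}: the difference $P_t-P_t'$ satisfies a recursion forced by $K_t^\top R_tK_t-(K_t')^\top R_tK_t'$ and $(A-BK_t)^\top P_{t+1}(A-BK_t)-(A-BK_t')^\top P_{t+1}'(A-BK_t')$, which telescopes to a polynomial bound using $\|A-BK_t\|\le\rho$, the uniform $P$- and $K$-bounds above, and Lemma~\ref{lemma 19} --- this is essentially the computation already performed inside the proof of Lemma~\ref{lemma 27}, which may be quoted. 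Substituting the three estimates into the decomposition gives $\|\nabla_t C(\pmb{K}')-\nabla_t C(\pmb{K})\|\le 2\|E_t'\|\,\|\Sigma_t'-\Sigma_t\|+2\|E_t'-E_t\|\,\|\Sigma_t\|\le h_{grad}\vertiii{\pmb{K}'-\pmb{K}}$ for a suitable $h_{grad}\in\HCK$.

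The Frobenius statement follows by rerunning the identical chain of inequalities, keeping the ``perturbation factor'' in Frobenius norm: replace $\|K_t'-K_t\|$ by $\|K_t'-K_t\|_F$ wherever it is the quantity being made small, and use $\|XY\|_F\le\|X\|\,\|Y\|_F$ together with $\|X\|\le\|X\|_F$ for the remaining factors, producing the same polynomial constant against $\vertiii{\pmb{K}'-\pmb{K}}_F$. The main obstacle is the two per-time-step perturbation estimates, $\|\Sigma_t'-\Sigma_t\|$ and $\|P_t'-P_t\|$: each requires controlling products $\prod_u(A-BK_u)$ over the whole horizon via Lemmas~\ref{lemma 19}--\ref{Lemma 20}, and it is precisely the smallness hypothesis \eqref{Lemma 27 assumption} that keeps $C(\pmb{K}')$, $\rho$, and hence the factor $(\rho^{2T}-1)/(\rho^2-1)$, bounded by polynomials in the model parameters.
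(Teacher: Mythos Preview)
Your proposal is correct and follows essentially the same route as the paper. The paper uses the mirror decomposition $2(E_t'-E_t)\Sigma_t'+2E_t(\Sigma_t'-\Sigma_t)$ instead of yours, and bounds $\|E_t\|$ via the lower bound in Lemma~\ref{lemma 11} rather than bounding $\|E_t'\|$ directly through $\|P_{t+1}'\|$, but the two variants are interchangeable; the $\Sigma_t$ and $P_t$ perturbations are obtained exactly as you outline, by reusing the $\mathcal{G}_t$ estimates from Lemmas~\ref{lemma 19}--\ref{Lemma 20} and the computation inside the proof of Lemma~\ref{lemma 27}.
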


\subsection{Smoothing and the Gradient Descent Analysis}\label{sec:smoothing}
In this section,  Lemma \ref{lemma 29} provides the formula for the perturbed gradient term, Lemma \ref{lemma:subexponential} provides the concentration inequality for finite samples, and Lemma \ref{lemma 30} provides the guarantees for the gradient approximation.

Recall that $D = k\times d$.
Let $\mathbb{S}_r$ represent the uniform distribution over the points with norm $r$ in  dimension $D$, and $\mathbb{B}_r$ represent the uniform distribution over all points with norm at most $r$ in dimension $D$. For each $K_t$ $(t=0,1,\cdots,T-1)$, the algorithm performs gradient descent on the following function:
\begin{eqnarray}\label{eqn:smooth_C_t}
C_t^r(\pmb{K}) = \mathbb{E}_{V_t\sim\mathbb{B}_r}\left[C(\pmb{K}+\pmb{V}_t)\right],
\end{eqnarray}
where $\pmb{V}_t:= (0,\cdots,V_t,\cdots,0)$ and ${V_t} \in \mathbb{R}^{k\times d}$.




\begin{Lemma}\label{lemma 29}
Assume $C(\pmb{K})$ is finite,
\begin{equation}\label{perturbation}
    \nabla_t C_t^r(\pmb{K})=\frac{D}{r^2}\mathbb{E}_{U_t\sim\mathbb{S}_r}[C(\pmb{K}+\pmb{U}_t)U_t].
\end{equation}
\end{Lemma}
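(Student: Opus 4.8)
The plan is to recognize \eqref{perturbation} as the matrix version of the ball-smoothing identity (Lemma 2.1 of \cite{Flaxman2005}) and prove it via the divergence theorem. Identify the slot $K_t\in\mathbb{R}^{k\times d}$ with a vector in $\mathbb{R}^D$, $D=kd$, compatibly with the Frobenius inner product, and write $\pmb{V}_t$ for the lift $v\mapsto(0,\dots,v,\dots,0)$ of $v\in\mathbb{R}^D$ into the $t$-th coordinate. Let $\mathrm{vol}(\mathbb{B}_r)$ denote the Lebesgue volume of the radius-$r$ ball in $\mathbb{R}^D$ and $\mathrm{area}(\mathbb{S}_r)$ the surface area of its boundary, so that by the definition \eqref{eqn:smooth_C_t},
\[
C_t^r(\pmb{K})=\frac{1}{\mathrm{vol}(\mathbb{B}_r)}\int_{\mathbb{B}_r}C(\pmb{K}+\pmb{V}_t)\,dv .
\]

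First I would justify differentiating under the integral sign. Over the finite horizon $C(\pmb{K})$ is a polynomial in the entries of $\pmb{K}$: each $P_t^{\pmb{K}}$ is built from finitely many sums and products of $A-BK_s$ and $K_s^\top R_sK_s$ by the backward recursion \eqref{Definition of P}, and each $\Sigma_t$ is a polynomial in $K_0,\dots,K_{t-1}$ (together with $\Sigma_0$ and $W$) by the dynamics \eqref{dynamics}. Hence $C$ is smooth everywhere, its gradient is given by Lemma \ref{lemma 1}, and both $C$ and $\nabla_t C$ are bounded on the compact set $\pmb{K}+\{\pmb{V}_t:v\in\bar{\mathbb{B}}_r\}$; the hypothesis that $C(\pmb{K})$ is finite is then automatic and merely records that $C_t^r(\pmb{K})$ is well-defined. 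This licenses $\nabla_t C_t^r(\pmb{K})=\frac{1}{\mathrm{vol}(\mathbb{B}_r)}\int_{\mathbb{B}_r}\nabla_t C(\pmb{K}+\pmb{V}_t)\,dv$, and since $\nabla_t$ differentiates exactly the slot occupied by the integration variable, the chain rule gives $\nabla_t C(\pmb{K}+\pmb{V}_t)=\nabla_v\big[C(\pmb{K}+\pmb{V}_t)\big]$.

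Next I would apply the divergence theorem componentwise: for a $C^1$ scalar field $g$ on a neighbourhood of $\bar{\mathbb{B}}_r$, $\int_{\mathbb{B}_r}\partial_{v_i}g(v)\,dv=\int_{\mathbb{S}_r}g(u)\,\nu_i(u)\,dS(u)$ with outward unit normal $\nu(u)=u/r$. Taking $g(v)=C(\pmb{K}+\pmb{V}_t)$ and stacking the $D$ components yields
\[
\nabla_t C_t^r(\pmb{K})=\frac{1}{r\,\mathrm{vol}(\mathbb{B}_r)}\int_{\mathbb{S}_r}C(\pmb{K}+\pmb{U}_t)\,u\,dS(u)=\frac{\mathrm{area}(\mathbb{S}_r)}{r\,\mathrm{vol}(\mathbb{B}_r)}\,\mathbb{E}_{U_t\sim\mathbb{S}_r}\big[C(\pmb{K}+\pmb{U}_t)U_t\big].
\]
Finally, using $\mathrm{vol}(\mathbb{B}_r)=c_Dr^D$ and $\mathrm{area}(\mathbb{S}_r)=Dc_Dr^{D-1}$ with $c_D=\pi^{D/2}/\Gamma(D/2+1)$, the prefactor collapses to $D/r^2$, which is precisely \eqref{perturbation}. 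The only thing needing care is bookkeeping rather than substance: checking that the identification $\mathbb{R}^{k\times d}\cong\mathbb{R}^D$ is consistent with the Frobenius structure so that "$\nabla_t$" and the vector-valued surface integral land in the same space, and confirming the differentiation-under-the-integral and divergence-theorem hypotheses — both immediate here because $C$ is a polynomial in $\pmb{K}$ and $\mathbb{B}_r$ is compact.
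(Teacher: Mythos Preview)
Your argument is correct and is precisely the standard ball-smoothing/divergence-theorem proof of \cite[Lemma 2.1]{Flaxman2005}, which is exactly what the paper defers to (it omits the proof, citing \cite[Lemma 29]{FGKM2018}, which in turn is this same argument). Your additional observation that $C(\pmb{K})$ is a polynomial in the entries of $\pmb{K}$ over the finite horizon cleanly handles the regularity needed for differentiation under the integral and the divergence theorem.
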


The proof of Lemma \ref{lemma 29} is similar to the proof of \cite[Lemma 29]{FGKM2018} and hence omitted.



We first state two facts on sub-Gaussian and sub-exponential random variables.
Firstly, if $X$ and $Y$ are zero-mean independent random variables such that $X\in SG(\sigma_x^2)$ and $Y\in SG(\sigma_y^2)$, then $XY \in SE(\sigma_x\sigma_y,4\sigma_x\sigma_y)$.  Secondly, if  $X_1,\cdots,X_n$ are zero-mean independent random variables such that $X_i\in SE(\nu_i^2,\alpha_i)$, then
\[
\sum_{i=1}^n X_i\in SE \left(\sum_{i=1}^n\nu_i^2,\max_i\alpha_i\right).
\]
Using the above two facts, we have the following.

\begin{Lemma}\label{lemma:subexponential}
Assume Assumptions  \ref{ass:parameters} and \ref{ass:State-2} hold and {$\sx>0$},
then there exist {polynomials $\nu\in\HCK$ and $\alpha\in\HCK$}
such that
\[
\left[\sum_{t=0}^{T-1}\Big(x_t^{\top}Q_tx_t+u_t^{\top} R_t u_t\Big)+x_T^{\top} Q_Tx_T\right]
\]
is sub-exponential with parameter $\left(\nu^2,\alpha\right)$. Here $\{x_t\}_{t=0}^T$ is the dynamics under policy  $\pmb{K}$.
\end{Lemma}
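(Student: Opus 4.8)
## Proof Proposal for Lemma~\ref{lemma:subexponential}

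The plan is to decompose the random quadratic form into a sum of scalar products of the driving noise components, bound each such product as a sub-exponential random variable using the two facts quoted just above the statement, and then aggregate. First I would write out the state $x_t$ explicitly in terms of $x_0$ and the noise increments $w_0,\ldots,w_{t-1}$ under the fixed policy $\pmb{K}$: since $x_{t+1}=(A-BK_t)x_t+w_t$, we get $x_t = \big(\prod_{i=0}^{t-1}(A-BK_{t-1-i})\big)x_0 + \sum_{s=0}^{t-1}\big(\prod_{u=s+1}^{t-1}(A-BK_u)\big)w_s$, i.e. $x_t = M_{t,-1}x_0 + \sum_{s=0}^{t-1}M_{t,s}w_s$ for deterministic matrices $M_{t,s}$ whose norms are bounded by $\rho^{T}$ (or more crudely by a polynomial in the model parameters, using Lemma~\ref{lemma 25} and Lemma~\ref{lemma 13} to control $\sum_t\|K_t\|$ and hence $\|A-BK_t\|$, exactly as in the proof of Theorem~\ref{thm:convergence_egd}). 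Substituting $x_0=\widetilde W_0 z_0$ and $w_s = \widetilde W v_s$ from Assumption~\ref{ass:State-2}, every $x_t$ (and every $u_t = -K_t x_t$) becomes a fixed linear combination $\sum_{j} a_{t,j}\,\xi_j$ of the independent scalar sub-Gaussian coordinates $\xi_j \in \{z_{0,1},\ldots,z_{0,d},v_{0,1},\ldots,v_{T-1,d}\}$, with $|a_{t,j}|$ bounded by a polynomial $h_{coef}\in\HCK$ in the model parameters (absorbing $\|\widetilde W\|,\|\widetilde W_0\|,\rho^T$). Each $\xi_j$ is in $SG(\sigma^2)$ with $\sigma^2 = \max\{\sigma_0^2,\sigma_w^2\}$.

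Next I would expand the quadratic form. Each term $x_t^\top Q_t x_t$ equals $\sum_{j,\ell}(\text{coeff})\,\xi_j\xi_\ell$ where the coefficient is bounded by $\vertiii{\pmb{Q}}\,h_{coef}^2$; similarly for $u_t^\top R_t u_t$ with $\vertiii{\pmb{R}}$, and for the terminal term. For the diagonal terms $\xi_j^2$: a mean-zero sub-Gaussian $\xi_j\in SG(\sigma^2)$ has $\xi_j^2 - \mathbb{E}[\xi_j^2]$ sub-exponential with parameters that are an absolute constant multiple of $(\sigma^4,\sigma^2)$ (a standard fact; alternatively handle $\xi_j^2$ directly since products of sub-Gaussians are sub-exponential). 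For the off-diagonal terms $\xi_j\xi_\ell$ with $j\ne\ell$, independence and the first quoted fact give $\xi_j\xi_\ell \in SE(\sigma^2, 4\sigma^2)$, and $\xi_j\xi_\ell$ is already mean-zero. Writing the whole cost as $S := \sum_{t=0}^{T-1}(x_t^\top Q_t x_t + u_t^\top R_t u_t)+x_T^\top Q_T x_T$, I collect all these contributions: $S - \mathbb{E}[S]$ is a linear combination, with polynomially-bounded coefficients, of $O\big((Td)^2\big)$ many mean-zero sub-exponential random variables. Applying the second quoted fact (sum of independent sub-exponentials — here I must be slightly careful that the summands sharing a common $\xi_j$ are not independent, but I can group by pairs $\{j,\ell\}$ and use the fact that a sub-exponential scaled by a constant $c$ lies in $SE(c^2\nu^2, |c|\alpha)$, then sum over the groups; within-group dependence is avoided because distinct unordered pairs involve, at worst, a shared single coordinate — so to be fully rigorous I would instead bound the MGF of $S-\mathbb{E}[S]$ directly via Hölder/Cauchy–Schwarz over the coordinates, or simply note that $S$ is a quadratic form in the sub-Gaussian vector $\xi$ and invoke the Hanson–Wright-type tail, which yields exactly sub-exponential behavior with $\nu^2, \alpha$ controlled by $\|$(coefficient matrix)$\|_F$ and $\|$(coefficient matrix)$\|$). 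Either route produces $\nu^2$ and $\alpha$ that are polynomials in the model parameters and in $C(\pmb{K})$ (the latter entering through the bound on $\sum_t\|K_t\|$ and hence on $\rho$), i.e. $\nu\in\HCK$, $\alpha\in\HCK$. Finally, since $S$ differs from $S-\mathbb{E}[S]$ only by the deterministic constant $\mathbb{E}[S]=C(\pmb{K})$, and a sub-exponential random variable shifted by a constant is still sub-exponential with the same parameters up to adjusting $\nu^2$ (or one simply states the result for the centered version and notes $\mathbb{E}[S]=C(\pmb{K})$ is itself in $\HCK$), the claim follows.

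The main obstacle I anticipate is the dependence bookkeeping: the summands in the expanded quadratic form are \emph{not} mutually independent (they share the underlying coordinates $\xi_j$), so the ``sum of independent sub-exponentials'' fact cannot be applied naively to the full double sum. The cleanest fix is to treat $S$ as $\xi^\top \Lambda \xi$ for a fixed symmetric matrix $\Lambda$ with $\|\Lambda\|$ and $\|\Lambda\|_F$ bounded by polynomials in the model parameters (each block of $\Lambda$ is a product like $M_{t,s}^\top \widetilde W^\top Q_t \widetilde W M_{t,s'}$, whose norm is a polynomial in $\|A\|,\|B\|,\vertiii{\pmb{K}},\vertiii{\pmb{Q}},\vertiii{\pmb{R}},\|\widetilde W\|,\|\widetilde W_0\|,\rho^T$), and then quote the Hanson–Wright inequality for sub-Gaussian vectors, which gives precisely $\mathbb{E}[\exp(\lambda(S-\mathbb{E}S))]\le\exp(\lambda^2\nu^2/2)$ for $|\lambda|\le1/\alpha$ with $\nu^2\asymp \sigma^4\|\Lambda\|_F^2$ and $\alpha\asymp\sigma^2\|\Lambda\|$ up to absolute constants. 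The secondary care point is keeping all the intermediate bounds polynomial in the declared list of parameters (in particular making $\rho^T$, which appears via $\|M_{t,s}\|$, controlled by a polynomial in $C(\pmb{K})$ and the model parameters using \eqref{rho eqn}-style estimates from the proof of Theorem~\ref{thm:convergence_egd}) so that the final $\nu,\alpha$ genuinely lie in $\HCK$.
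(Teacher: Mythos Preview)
Your approach is correct and would yield the claim, but it misses the key structural simplification the paper exploits. The paper observes that under a fixed policy $\pmb{K}$ the single-trajectory cost telescopes exactly as
\[
\sum_{t=0}^{T-1}\big(x_t^\top Q_t x_t + u_t^\top R_t u_t\big) + x_T^\top Q_T x_T \;=\; x_0^\top P_0 x_0 + \sum_{t=0}^{T-1} w_t^\top P_{t+1} w_t,
\]
which is the scalar analogue of the identity behind the value-function representation $C(\pmb{K})=\mathbb{E}[x_0^\top P_0 x_0]+L_0$. The right-hand side is a sum of $T+1$ quadratic forms in the \emph{mutually independent} vectors $x_0,w_0,\ldots,w_{T-1}$, so Hanson--Wright applies to each term separately, and the ``sum of independent sub-exponentials'' fact applies cleanly with no dependence bookkeeping. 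The bounds $\|P_t\|\le C(\pmb{K})/\sx$ from Lemma~\ref{lemma 13} then give the sub-exponential parameters directly in $\HCK$.

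By contrast, you expand everything into the primitive coordinates $\xi$ and confront the full $(T{+}1)d$-dimensional quadratic form $\xi^\top\Lambda\xi$, which forces you to deal with the cross-dependence between monomials $\xi_j\xi_\ell$ (which you correctly flag and resolve via Hanson--Wright on the whole matrix $\Lambda$). This works, but the resulting bounds on $\|\Lambda\|$ and $\|\Lambda\|_F$ go through $\rho^T$ and $\sum_t\|K_t\|$, producing substantially looser polynomials than the paper's, which depend on $C(\pmb{K})$ only linearly (in $\alpha$) or quadratically (in $\nu^2$) through $\|P_t\|$. In short: your route is a brute-force Hanson--Wright on the stacked noise vector; the paper's route first diagonalizes the problem across time via the Riccati-type matrices $P_t$, turning the dependence obstacle you identify into a non-issue and yielding cleaner constants.
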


\begin{proof}
We first observe that, by direct calculation,
\begin{eqnarray}
\left[\sum_{t=0}^{T-1}\Big(x_t^{\top}Q_t x_t+u_t^{\top} R_t u_t\Big)+x_T^{\top} Q_T x_T\right] = x_0^{\top} P_0 x_0 +\sum_{t=0}^{T-1} w_t^{\top} P_{t+1}w_t.\label{eq:sample_cost}
\end{eqnarray}

Note that by \eqref{Definition of P} and Proposition \ref{prop:P_positive}, $P_t$ is symmetric and positive definite. The Frobenius norm $\|\cdot\|_F$ and the spectral norm  $\|\cdot\|$ of the matrix $P_t\in \mathbb{R}^{d\times d}$ have the following property:
\begin{eqnarray}\label{norm_property_P_t}
\|P_t\| \leq \|P_t\|_F \leq \sqrt{d} \|P_t\|,\ \forall\, t=0,1,\cdots, T.
\end{eqnarray}
Let ${\widehat{\sigma}} = \max \{\sigma_0,\sigma_w\}$. Given the Hanson-Wright inequality (Theorem 2.5 in \cite{Radoslaw2015}),
\begin{eqnarray}\label{Hanson-Wright_inequality}
&&\mathbb{P} \left(\left| w_t^{\top}P_{t+1} w_t-\mathbb{E}\left[w_t^{\top} P_{t+1} w_t\right]\right|\geq t\right)=\mathbb{P} \left(\left| v_t^{\top} (\widetilde{W}^{\top}P_{t+1}\widetilde{W}) v_t-\mathbb{E}\left[v_t^{\top} (\widetilde{W}^\top P_{t+1} \widetilde{W}) v_t\right]\right|\geq t\right)\nonumber\\
&&\qquad \qquad \qquad \leq 2\exp\left(-c\min\left\{\frac{t^2}{2{\widehat{\sigma}}^4\|\widetilde{W}^{\top}P_{t+1}\widetilde{W}\|_F^2},\frac{t}{{\widehat{\sigma}}^2\|\widetilde{W}^{\top}P_{t+1}\widetilde{W}\|}\right\}\right),
\end{eqnarray}
for some universal constant $c>0$ which is independent of $P_{t+1}$ and $w_t$.

Combining \eqref{norm_property_P_t}, \eqref{Hanson-Wright_inequality} and Lemma \ref{lemma 13},
\begin{eqnarray*}
\mathbb{P} \left(\left| w_t^{\top} P_{t+1} w_t-\mathbb{E}\left[w_t^{\top} P_{t+1} w_t\right]\right|\geq t\right) &\leq& 2\exp\left(-c\min\left\{\frac{t^2}{2{ \widehat{\sigma}}^4\,d\,\|P_{t+1}\|^2\|\widetilde{W}\|^4},\frac{t}{{ \widehat{\sigma}}^2\|P_{t+1}\|\|\widetilde{W}\|^2}\right\}\right)\\
&\leq& 2\exp\left(-c\min\left\{\frac{t^2}{2{ \widehat{\sigma}}^4\|\widetilde{W}\|^4\,d\,C^2(\pmb{K})/\sx^2},\frac{t}{{ \widehat{\sigma}^2}\|\widetilde{W}\|^2C(\pmb{K})/\sx}\right\}\right).
\end{eqnarray*}
Therefore the random variable $w_t^{\top} P_{t+1} w_t$ is sub-exponential with parameters$
\left(\frac{ { \widehat{\sigma}}^4\|\widetilde{W}\|^4d C^2(\pmb{K})}{c\sx^2},\frac{{ \widehat{\sigma}}^2\|\widetilde{W}\|^2C(\pmb{K})}{2{ c}\sx}\right).$ In the same way  $x_0^{\top} P_{0} x_0$ is sub-exponential with parameters
$\left(\frac{{ \widehat{\sigma}}^4\|\widetilde{W}_0\|^4d C^2(\pmb{K})}{c\sx^2},\frac{{ \widehat{\sigma}}^2\|\widetilde{W}_0\|^2C(\pmb{K})}{2{ c}\sx}\right)$.
Let $\overline{\sigma} =$ \\ $\max\{\|\widetilde{W}_0\|,\|\widetilde{W}\|\}$. Since $\{w_t\}_{t=0}^{T-1}$ are IID and independent from $x_0$, we have 
\eqref{eq:sample_cost} is sub-exponential with parameters
\[
\left((T+1)\,\frac{ { \widehat{\sigma}}^4 \overline{\sigma}^4d C^2(\pmb{K})}{c\sx^2},\frac{{ \widehat{\sigma}}^2\overline{\sigma}^2C(\pmb{K})}{2{ c}\sx}\right).
\]
\end{proof}

{
\noindent Define
$$
     \widetilde{\nabla}_t := \frac{1}{m}\sum_{i=1}^m \left(\frac{D}{r^2} C(\pmb{K}+\pmb{U}_t^i)U_t^i\right)
$$
as the average of perturbed {cost functions} across $m$ scenarios which is an empirical approximation of \eqref{perturbation}. Similarly, define
\begin{equation}\label{eq:hat_delta}
     \widehat{\nabla}_t := \frac{1}{m}\sum_{i=1}^m \left(\frac{D}{r^2} \left[\sum_{t=0}^{T-1}\Big((x^i_t)^{\top}Q_tx^i_t+(u_t^i)^{\top}R_tu^i_t\Big)+(x_T^i)^{\top}Q_Tx^i_T\right]U_t^i\right)
\end{equation}
as the average of perturbed and single-trajectory-based {cost functions} across $m$ scenarios, which is the same as \eqref{eqn:biased_estimate_gradient} in Algorithm \ref{alg:MFP}.
Note that in order to calculate $\widetilde{\nabla}_t$, we require access to $C(\pmb{K}+\pmb{U}_t^i)$, which involves the calculation of expectations with respect to unknown initial states and state noises. This may be restrictive in some settings. On the other hand, the calculation of $\widehat{\nabla}_t$ only involves single-trajectory-based {cost functions}. 
}

\begin{Lemma}\label{lemma 30}
Assume Assumptions \ref{ass:parameters} and \ref{ass:State-2} hold, and {$\sx>0$}. Given any $\epsilon$, there are fixed polynomials $h_{radius}\in\HECK$ and $h_{sample}\in\HCKR$ such that when $r\leq 1/h_{radius}$, with $m\geq h_{sample}$ samples of $U_t^1,\cdots,U_t^m\sim\mathbb{S}_r$ for each $t=0,\cdots,T-1$, 
\[
\left\|{\widetilde{\nabla}_t}-\nabla_t C(\pmb{K})\right\|_F \leq \epsilon,
\]
holds with high probability (at least $1-\left(\frac{D}{\epsilon}\right)^{-D}$).
In addition, there is a polynomial {$h_{sample,2}\in\HCKR$
such that when $r\leq 1/h_{radius}$, with $m\geq h_{sample}\,+\,h_{sample,2}$} samples of $U_t^1,...,U_t^m\sim\mathbb{S}_r$ for each $t=0,\cdots,T-1$, 
\[
\left\|{\widehat{\nabla}_t}-\nabla_t C(\pmb{K})\right\|_F\leq \frac{3}{2}\epsilon,
\] 
holds with high probability (at least $1-{2}\left(\frac{D}{\epsilon}\right)^{-D}$).
Here, for each $i=1,2,\cdots,m$, $\{x^i_t\}_{t=0}^T$ and $\{u^i_t\}_{t=0}^{T-1}$ are the dynamics and controls for a single path sampled using policy $\pmb{K}+\pmb{U}_t^i$.

\end{Lemma}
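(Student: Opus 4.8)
The plan is to prove both inequalities by a bias--variance decomposition, viewing $\widetilde{\nabla}_t$ and $\widehat{\nabla}_t$ as empirical averages of i.i.d.\ $\mathbb{R}^{k\times d}$-valued samples and separately controlling (i) the smoothing bias $\nabla_t C_t^r(\pmb{K})-\nabla_t C(\pmb{K})$ coming from the ball-smoothing radius $r$, (ii) the fluctuation of $\widetilde{\nabla}_t$ about its mean, and, for the second claim, (iii) the additional error from replacing each expected cost $C(\pmb{K}+\pmb{U}_t^i)$ by a single-trajectory realisation $\widehat{c}_t^i$. For the bias I would first pick $h_{radius}$ large enough that whenever $\|U_t\|_F=r\le 1/h_{radius}$ the shifted policy $(\pmb{K}_{-t},K_t+U_t)$ satisfies the hypothesis \eqref{Lemma 27 assumption} of Lemmas~\ref{Model-free C_K perturbation} and~\ref{lemma 28}. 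Since $C$ is a polynomial in the entries of $\pmb{K}$, one may differentiate under the integral in \eqref{eqn:smooth_C_t} to get $\nabla_t C_t^r(\pmb{K})=\mathbb{E}_{V_t\sim\mathbb{B}_r}[\nabla_t C(\pmb{K}+\pmb{V}_t)]$, and Lemma~\ref{lemma 28} then gives $\|\nabla_t C_t^r(\pmb{K})-\nabla_t C(\pmb{K})\|_F\le h_{grad}\,r$; by Lemma~\ref{lemma 29} this is exactly the bias of $\widetilde{\nabla}_t$, so enlarging $h_{radius}$ (still a polynomial in $1/\epsilon$ and $C(\pmb{K})$) makes it at most $\epsilon/2$.

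For the fluctuation of $\widetilde{\nabla}_t$, each summand $Y^i:=\frac{D}{r^2}C(\pmb{K}+\pmb{U}_t^i)U_t^i$ is i.i.d.\ with mean $\nabla_t C_t^r(\pmb{K})$ and, by Lemma~\ref{Model-free C_K perturbation}, satisfies $\|Y^i\|_F=\frac{D}{r}C(\pmb{K}+\pmb{U}_t^i)\le\frac{D}{r}\big(C(\pmb{K})+h_{cost}r\big)=:M$, which is a polynomial in the model quantities once $r\le 1/h_{radius}$. Projecting onto a fixed unit vector $\theta$ in $\mathbb{R}^D$ gives a bounded, mean-zero, i.i.d.\ scalar to which Hoeffding's inequality applies; a union bound over a $\tfrac12$-net of the sphere of $\mathbb{R}^D$ (of cardinality at most $6^D$) together with the standard net-to-norm comparison then yields $\|\widetilde{\nabla}_t-\nabla_t C_t^r(\pmb{K})\|_F\le\epsilon/2$ with failure probability at most $2\cdot 6^D\exp(-c\,m\,\epsilon^2/M^2)$. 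Choosing $m\ge h_{sample}$ with $h_{sample}$ a polynomial -- cubic in $D$, quadratic up to logarithms in $1/\epsilon$ through $M$ -- drives this below $(D/\epsilon)^{-D}$, which combined with the bias bound proves the first assertion.

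For the passage to $\widehat{\nabla}_t$, I would write $\widehat{\nabla}_t-\widetilde{\nabla}_t=\frac1m\sum_{i=1}^m\frac{D}{r^2}\big(\widehat{c}_t^i-C(\pmb{K}+\pmb{U}_t^i)\big)U_t^i$. Conditionally on $U_t^i$ the single-trajectory cost $\widehat{c}_t^i$ has, by \eqref{eq:sample_cost} and Lemma~\ref{lemma:subexponential} applied to the policy $\pmb{K}+\pmb{U}_t^i$ (whose Riccati matrices are controlled by Lemma~\ref{lemma 13} and $C(\pmb{K}+\pmb{U}_t^i)\le C(\pmb{K})+h_{cost}r$), a sub-exponential fluctuation with parameters $(\nu^2,\alpha)$ that are polynomial in the model quantities uniformly in $U_t^i$. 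Hence for fixed unit $\theta$, the scalar $\frac{D}{r^2}(\widehat{c}_t^i-C(\pmb{K}+\pmb{U}_t^i))\langle U_t^i,\theta\rangle$ is i.i.d., mean-zero, and sub-exponential with parameters at most $(D^2\nu^2/r^2,\,D\alpha/r)$ since $|\langle U_t^i,\theta\rangle|\le r$; Bernstein's inequality for sub-exponential sums plus the same net/union-bound argument give $\|\widehat{\nabla}_t-\widetilde{\nabla}_t\|_F\le\epsilon/2$ with failure probability at most $(D/\epsilon)^{-D}$ once $m$ exceeds $h_{sample}$ by a further polynomial $h_{sample,2}\in\HCKR$. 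A triangle inequality with the first assertion and a union bound over the two failure events then yields $\|\widehat{\nabla}_t-\nabla_t C(\pmb{K})\|_F\le\tfrac32\epsilon$ with probability at least $1-2(D/\epsilon)^{-D}$.

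The routine part is the bias estimate. The main obstacle lies in steps (ii)--(iii): one must verify that the per-sample quantities are bounded (respectively sub-exponential) with constants that are \emph{genuinely} polynomial in the model parameters -- which is where the perturbation bounds (Lemmas~\ref{Model-free C_K perturbation}, \ref{lemma 28}, \ref{lemma:subexponential}) and the a priori bounds (Lemma~\ref{lemma 13}) must be fed into the tail inequalities -- and then carry out the covering-number bookkeeping so that the required $m$ stays polynomial in $D$ and $1/\epsilon$ while the failure probability collapses to the prescribed $(D/\epsilon)^{-D}$. Controlling how the shrinking radius $r$ (which scales with $\epsilon$) enters $M$ and the sub-exponential parameters, without blowing up the sample complexity, is the delicate point.
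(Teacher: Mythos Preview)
Your proposal is correct and follows the same bias--variance decomposition as the paper: control the smoothing bias $\nabla_t C_t^r(\pmb{K})-\nabla_t C(\pmb{K})$ via Lemma~\ref{lemma 28}, then concentrate the empirical average about its mean, and finally control the single-trajectory noise via Lemma~\ref{lemma:subexponential}. The only substantive difference is in the concentration tool. The paper invokes the matrix Operator--Bernstein inequality (Gross) directly on the $\mathbb{R}^{k\times d}$-valued summands, which keeps the bookkeeping short; you instead project onto a fixed unit direction, apply scalar Hoeffding (respectively Bernstein), and lift to the Frobenius norm via a $\tfrac12$-net of the unit sphere. Both routes give the same polynomial sample complexity $m=\Theta\big(D(DC(\pmb{K})/(r\epsilon))^2\log(D/\epsilon)\big)$ and the same failure probability. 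Your approach is more elementary and, for step~(iii), arguably cleaner: you make explicit that the sub-exponential parameters of $\frac{D}{r^2}(\widehat{c}_t^i-C(\pmb{K}+\pmb{U}_t^i))\langle U_t^i,\theta\rangle$ are uniform in $U_t^i$, whereas the paper's statement that $Z_i$ is a ``sub-exponential matrix with parameters $(\frac{D}{r^2}\nu^2,\alpha)$'' is somewhat informal. The paper's route is shorter because it avoids the net argument entirely.
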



\begin{proof}
Note that
\begin{equation*}
    {\widetilde{\nabla}_t}-\nabla_t C(\pmb{K})=  (\nabla_t C_t^r(\pmb{K})- \nabla_t C(\pmb{K}) ) + ({\widetilde{\nabla}_t} - \nabla_t C_t^r(\pmb{K})),
\end{equation*}
where $C_t^r$ is defined in \eqref{eqn:smooth_C_t}.

For the first term, choose $h_{radius}=\max\{1/{r_0},4h_{grad}/\epsilon\}$ {($r_0$ is chosen later)}, where $h_{grad}\in\HCK$ is defined in Lemma \ref{lemma 28}. By Lemma \ref{lemma 28} when $r\leq 1/{h_{radius}}\leq\epsilon/{4h_{grad}}$, {for $\pmb{V}_t:= (0,\cdots,V_t,\cdots,0)$ where ${V_t}\sim\mathbb{B}_r$}, we have
\begin{equation}\label{eqn:grad_C_U}
    \|\nabla_t C(\pmb{K}+\pmb{V}_t) - \nabla_t C(\pmb{K})\|_F \leq h_{grad}\|\pmb{V}_t\|_F \leq h_{grad}\frac{\epsilon}{4h_{grad}}=\frac{\epsilon}{4}.
\end{equation}

Since $\nabla_t C_t^r(\pmb{K}) = \mathbb{E}_{V_t \sim \mathbb{B}_r}[\nabla_t C(\pmb{K}+\pmb{V}_t)]$, we have
\[
 \|\nabla_t C(\pmb{K}+\pmb{V}_t) - \nabla_t C_t^r(\pmb{K})\|_F \leq \frac{\epsilon}{4},
\]
by \eqref{eqn:grad_C_U} and the continuity of $\nabla_t C$.
Therefore
\begin{equation}\label{first_nabla_diff}
    \|\nabla_t C_t^r(\pmb{K}) - \nabla_t C(\pmb{K})\|_F \leq  \|\nabla_t C(\pmb{K}+\pmb{V}_t) - \nabla_t C(\pmb{K})\|_F+  \|\nabla_t C(\pmb{K}+\pmb{V}_t) - \nabla_t C_t^r(\pmb{K})\|_F \leq \frac{\epsilon}{2}
\end{equation}
holds by triangle inequality. We choose $r_0$ such that for any $\pmb{U}_t\sim\mathbb{S}_r$, we have that $C(\pmb{K}+\pmb{U}_t)\leq 2C(\pmb{K})$. By Lemma \ref{lemma 27}, {we can pick $1/r_0=h_{cost}/C(\pmb{K})$, then $|C(\pmb{K}+\pmb{U}_t)-C(\pmb{K})|\leq r_0\cdot h_{cost}\leq C(\pmb{K})$.}


For the second term, by Lemma \ref{lemma 29}, $\mathbb{E}[{\widetilde{\nabla}_t}]=\nabla_tC_t^r(\pmb{K})$, and each individual sample is bounded by $2DC(\pmb{K})/r$, so by the Operator-Bernstein inequality \cite[Theorem 12]{gross2011recovering} with 
\[
m\geq h_{sample}=\Theta\left(D\left(\frac{D\cdot C(\pmb{K})}{r\epsilon}\right)^2\log(D/\epsilon)\right),
\]
we have
\begin{equation}\label{bern1}
    \mathbb{P}\left[\left\|{\widetilde{\nabla}_t} - \nabla_t C_t^r(\pmb{K})\right\|_F\leq\frac{\epsilon}{2}\right]\geq 1-\left(\frac{D}{\epsilon}\right)^{-D}.
\end{equation}
{Note that $h_{sample}\in\HCKR$ since $1/r>h_{radius}\in\mathcal{H}(1/\epsilon,C(\pmb{K}))$.}
Adding these two terms together and applying the triangle inequality gives the result. 

For the second part, note that
\begin{eqnarray}
\mathbb{E}_{x_0,\pmb{w}}[{\widehat{\nabla}_t}] = {\widetilde{\nabla}_t}.
\end{eqnarray}
By Lemma \ref{lemma:subexponential}, 
\[
\left[\sum_{t=0}^{T-1}\Big((x^i_t)^{\top}Q_tx^i_t+(u_t^i)^{\top}R_tu^i_t\Big)+(x_T^i)^{\top}Q_Tx^i_T\right]
\]
is sub-exponential with parameters $(\nu^2,\alpha)$. Therefore,
\[
Z_i:=\left(\frac{D}{r^2} \left[\sum_{t=0}^{T-1}\Big((x^i_t)^{\top}Q_tx^i_t+(u_t^i)^{\top}R_tu^i_t\Big)+(x_T^i)^{\top}Q_Tx^i_T\right]U_t^i\right)\]
is sub-exponential matrix with parameters $(\widetilde{\nu}^2,\widetilde{\alpha}) := \left(\frac{D}{r^2}\nu^2,\alpha\right)$. Then by Operator-Berinstein inequality \cite[Theorem 12]{gross2011recovering},
\[
\mathbb{P}\left[\left\|\frac{1}{m}\sum_{i=1}^m Z_i-\mathbb{E}[Z_1]\right\|_F \leq t\right] \geq 1- 2 D\exp \left( - m\frac{t^2}{2\widetilde{\nu}^2 }\right),
\]
when $t \leq \frac{\widetilde{\nu}^2}{\widetilde{\alpha}}$. That is, there exists a polynomial {$h_{sample,2}\in\HCKR$} where
\[
h_{sample,2}:=h_{sample,2} \left(D,\frac{1}{\epsilon},\frac{1}{r},\sigma_0,\sigma_w,\|\widetilde{W}_0\|,\|\widetilde{W}\|,C(\pmb{K}),\frac{1}{\sx}\right) =\Theta\left(D \left( \frac{\widetilde{\nu}}{\epsilon}\right)^2\log(D/\epsilon)\right),
\]
such that when
$m\geq h_{sample,2}$,
\begin{equation}\label{bern2}
    \mathbb{P}\left[\left\|\widehat{\nabla}_t - \widetilde{\nabla}_t\right\|_F\leq\frac{\epsilon}{2}\right]\geq 1-\left(\frac{D}{\epsilon}\right)^{-D}.
\end{equation}
Combining \eqref{bern2} with \eqref{first_nabla_diff} and \eqref{bern1}, we arrive at the desired result.
\end{proof}

\subsection{Proof of Theorem \ref{thm:model_free}} \label{sec:thm2}
With the results in Section \ref{sec:model_free_perturbation} and Section \ref{sec:smoothing}, now we are ready to prove the main theorem.
\begin{proof}[Proof of Theorem \ref{thm:model_free}]
{By Lemma \ref{lemma 24} and by choosing $\eta\in\mathcal{H}(\frac{1}{C(\pmb{K}^0)+1})$ such that the step size condition \eqref{stepsize condition} is satisfied,
}
\begin{equation*}
    C(\pmb{K}^{\prime})-C(\pmb{K}^*)\leq \Big(1-2\eta\sr\frac{\sx^2}{\|\Sigma_{\pmb{K}^*}\|}\Big)\Big(C(\pmb{K})-C(\pmb{K}^*)\Big).
\end{equation*}
Recall the definition of {$\widehat{\nabla}_t$} in \eqref{eq:hat_delta} and let $K_t^{\dprime}=K_t-\eta {\widehat{\nabla}_t}$ be the iterate that uses the approximate gradient. We will show later that given enough samples, the gradient can be estimated with enough accuracy that makes sure
\begin{equation}\label{eqn:small_cost}
     |C(\pmb{K}^{\dprime})-C(\pmb{K}^{\prime})|\leq \eta\sr\frac{\sx^2}{\|\Sigma_{\pmb{K}^*}\|}\epsilon.
\end{equation}
That means as long as $C(\pmb{K})-C(\pmb{K}^*)\geq \epsilon$, we have
\begin{equation*}
    C(\pmb{K}^{\dprime})-C(\pmb{K}^*)\leq \Big(1-\eta\sr\frac{\sx^2}{\|\Sigma_{\pmb{K}^*}\|}\Big)\Big(C(\pmb{K})-C(\pmb{K}^*)\Big).
\end{equation*}
Then the same proof as that of Theorem \ref{thm:convergence_egd} gives the convergence guarantee.

Now let us prove \eqref{eqn:small_cost}. First note that $C(\pmb{K}^{\dprime})-C(\pmb{K}^{\prime})$ is bounded. By Lemma \ref{lemma 27}, if $\|K_t^{\dprime}-K_t^{\prime}\|\leq \eta\sr\frac{\sx^2}{\|\Sigma_{\pmb{K}^*}\|}\cdot\epsilon/(T\cdot h_{cost})$, where $h_{cost}\in\HCK$ is the polynomial in Lemma \ref{lemma 27}, then \eqref{eqn:small_cost} holds.
To get this bound, {recall $K_t^\prime = K_t-\eta\nabla_t C(\pmb{K})$ in \eqref{eq:gd} and writing $\nabla_t=\nabla_t C(\pmb{K})$ for ease of exposition,} observe that $K_t^{\dprime}-K_t^{\prime}=\eta(\nabla_t-{\widehat{\nabla}_t)}$, therefore it suffices to make sure 
\begin{equation*}
    \|\nabla_t-{\widehat{\nabla}_t}\|\leq \frac{\sx^2\sr}{T\|\Sigma_{\pmb{K}^*}\|h_{cost}}\epsilon.
\end{equation*}
By Lemma \ref{lemma 30}, it is enough to pick $\overline{h}_{ radius}=h_{radius}({3}T\|\Sigma_{\pmb{K}^*}\|h_{cost}(C(\pmb{K}))/({2}\sx^2\sr\epsilon),C(\pmb{K}))$\\
$\in\HECK$, and 
\begin{eqnarray*}
\overline{h}_{sample}&=&h_{sample}\left( \frac{{ 3}h_{cost}(C(\pmb{K}))\|\Sigma_{\pmb{K}^*}\|}{{2}\sx^2\sr\epsilon},C(\pmb{K})\right) +h_{sample,2} \left(\frac{{ 3}h_{cost}(C(\pmb{K}))\|\Sigma_{\pmb{K}^*}\|}{{ 2}\sx^2\sr\epsilon}, C(\pmb{K})\right).
\end{eqnarray*}
This gives the desired upper bound on $\|\nabla_t-{\widehat{\nabla}_t}\|$ with high probability (at least {$1-2(\epsilon/D)^{D}$}).

Since the number of steps is a polynomial, we have $TN = o(\epsilon^D)$. By the union bound with probability at least 
{
\[
\left(1-2\left(\frac{\epsilon}{D}\right)^{D}\right)^{TN}\geq 1-2\,TN \left(\frac{\epsilon}{D}\right)^{D}\geq 1-\exp(-D),
\]}
{we have $\|\nabla_t-\widehat{\nabla}_t\|\leq { \frac{\sx^2\sr}{T\|\Sigma_{\pmb{K}^*}\|h_{cost}}\epsilon}$},  $\forall\,t=0,1,\cdots,T-1$. Therefore,
\begin{equation}\label{eq:mf_one_step_prog}
    C(\pmb{K}^{\dprime})-C(\pmb{K}^*)\leq \Big(1-\eta\sr\frac{\sx^2}{\|\Sigma_{\pmb{K}^*}\|}\Big)\Big(C(\pmb{K})-C(\pmb{K}^*)\Big).
\end{equation}
{This implies $C(\pmb{K}^{\dprime})<C(\pmb{K})$. To guarantee that \eqref{eq:mf_one_step_prog} holds at each iteration $n=1,2,\cdots,N$,  it suffices to pick $\overline{h}_{radius}\in\HECKI$ and $\overline{h}_{sample}\in\HECKI$. The rest of the proof is the same as that of Theorem \ref{thm:convergence_egd}.} Note again that in the smoothing, because the function value is monotonically decreasing, and by the choice of radius, all the function values encountered are bounded by $2C(\pmb{K}^0)$, so the polynomials are indeed bounded throughout the algorithm.
\end{proof}

\subsection{Discussion}
\begin{Remark}[Comparison with \cite{FGKM2018}]\label{rmk:comparison} {\rm The proofs of our main results, Theorems \ref{thm:convergence_egd} and \ref{thm:model_free}, are different to those from \cite{FGKM2018}.
Firstly, to prove the gradient dominant condition, \cite{FGKM2018} only required conditions on the distribution of the initial position. However, we need conditions to guarantee the non-degeneracy of the state covariance matrix at any time. Secondly, the extra randomness from the sub-Gaussian noise needs to be taken care of in the perturbation analysis of $\Sigma_{\pmb{K}}$. Finally, we need more advanced concentration inequalities to provide the number of samples and number of simulation trajectories that leads to the theoretical guarantee in the case with unknown parameters.}
\end{Remark}

\begin{Remark}[Non-stationary Dynamics]{\rm
Note that our framework can be generalized to non-stationary dynamics, that is, for $t=0,1,\cdots,T-1$,
\begin{equation}\label{nonstationary_dynamics}
    x_{t+1} = A_tx_t+B_tu_t+w_t,\ x_0\sim\mathcal{D}.
\end{equation}
with $\{A_t\}_{t=0}^{T-1}$ and $\{B_t\}_{t=0}^{T-1}$ time-dependent state parameters.}
\end{Remark}

\begin{Remark}[Other Policy Gradient Methods]{\rm
Our convergence and sample complexity analysis could be applied to other policy gradient methods, including the Natural policy gradient method and the Gauss-Newton method, in the framework of the LQR with stochastic dynamics and finite horizons.}
\end{Remark}

\section{Numerical Experiments}\label{sec:experiment}
\label{sc:experiment}
The performance of the PPG algorithm \eqref{eq:projected_updateing_rule} is demonstrated for the optimal liquidation problem with single asset and  the empirical analysis of the policy gradient method \eqref{eqn:model_free_policy_update} in higher dimensions is also provided with synthetic data. We will specifically focus on the following questions.
\begin{itemize}
    \item In practice, how fast do the policy gradient algorithm and the PPG algorithm with known and unknown parameters converge to the true solution?
    \item How does the deadline (the finite horizon) influence the optimal policy?
    \item When the real-word system does not exactly follow the LQR framework, does the policy-gradient method outperform mis-specified LQR models?
\end{itemize}
This section is organized as follows.  We demonstrate the performance of the PPG algorithms for the optimal liquidation problem with a single asset in the LQR framework in Section~\ref{sec:singleassetliq}. We then show that without the LQR model specification, the learned policy from the policy gradient algorithm improves the Almgren-Chriss solution in Section~\ref{sec:learning_to_liquidate}. Finally, we test the performance of the algorithm with unknown parameters in high dimensions in Section~\ref{sec:synthetic}.

Note the policy gradient method outperforms the Q-learning algorithm, a popular model-free method, in terms of both sample complexity and accuracy in our setting. An illustration in a one-dimensional example can be found in the Appendix \ref{appendix:q_learning}.

\subsection{Optimal Liquidation within the LQR Framework}\label{sec:singleassetliq}
Recall the set up of the optimal liquidation problem in \eqref{sec:optimal_liquidation_formulation}. By convention, we write the control in the feedback form as $u_t = -K_t x_t$. Writing $K_t = (k_t^1,k_t^2)$, we have $u_t = -k_t^1 S_t - k_t^2 q_t$, the state equation becomes
\[
x_{t+1} = 
\begin{pmatrix}
 1+\gamma k_t^1 &\gamma k_t^2\\ 
 k_t^1& 1 + k_t^2 
\end{pmatrix} x_t +w_t.
\]
In the liquidation problem, we assume $u_t \ge 0$ $( 0 \leq t \leq T-1)$. That is, $k_t^1 \leq 0$ and $k_t^2 \leq 0$ $( 0 \leq t \leq T-1)$.

\begin{Assumption}[Assumptions for the Optimal Liquidation Problems]\label{ass:liquidation} {\rm We assume
\begin{enumerate} 
    \item[(1)]  $\gamma k_t^1 +k_t^2>-1$ $(0 \leq t \leq T-1)$;
    \item[(2)] $\beta >\frac{\gamma}{2}$.
\end{enumerate}}
\end{Assumption}

\paragraph{Justification of the Assumption.} Assumption \ref{ass:liquidation}-(1)  is essential to ensure that the liquidation problem is well defined. First, $\gamma k_t^1>-1$ makes sure that the stock price process $\{S_t\}_{t=0}^T$ is well-behaved:
\[
\mathbb{E}[S_{t+1}] = \mathbb{E}[S_{t}] -\gamma \mathbb{E}[u_{t}] = 
(1+\gamma k_t^1) \mathbb{E}[S_{t}] + \gamma k_t^2 q_t.
\]
If $\gamma k_t^1 < -1$, then $\mathbb{E}[S_{t+1}]\le 0$ since $k_t^2 \leq 0$. Second, $k_t^2 \ge -1$ guarantees  that inventory will not be negative. Note that
\[
q_{t+1} = q_t - (-k_t^1 S_t -k_t^2 q_t) = (1+k_t^2)q_t + k_t^1 S_t.
\]
If $k_t^2\leq -1$ and $q_t >0$, then $q_{t+1}<0$.
Assumption \ref{ass:liquidation}-(2) implies that the temporary market impact is ``bigger'' than one half of the permanent market impact, which is consistent with the empirical evidence \cite{almgren2005direct} and assumptions in \cite{AC2001}.

\paragraph{Learning to Liquidate.}
In practice, traders may not know the market impact parameter $\gamma$. But one can always take some $\bar{\gamma}> \gamma$ based on some basic understandings of the market and perform a PPG algorithm to the closed convex set $\mathcal{S}$:
\begin{equation}\label{eq:big_projection_set}
\mathcal{S} := \left\{\pmb{K}=\left(K_0,\cdots,K_{T-1}\right):K_t = (k_t^1,k_t^2),\,\, \bar{\gamma}k_t^1 + k_t^2 \geq -1+\zeta,\,\, k_t^1 \leq 0,\,\, k_t^2 \leq 0,\,\,\forall t=0,\cdots,T-1\right\},
\end{equation}
with some small parameter $\zeta >0$.

In practice $\gamma$ is usually on the order of $10^{-5}\sim 10^{-6}$ (See Table~\ref{tb:single-asset-estimation} in Appendix \ref{sec:parameter_est}) and hence a universal upper bound ${\bar{\gamma}}$ in \eqref{eq:big_projection_set} is not a strong assumption for a given portfolio of stocks to liquidate.
 
\begin{Proposition}\label{prop:liquidation}Assume $\pmb{K} \in \mathcal{S}$ and Assumptions \ref{ass:parameters}, \ref{ass:State-2} and \ref{ass:liquidation} hold, we have $\sx>0$ and $ \{P_t^{\pmb{K}}\}_{t=0}^T $ derived from \eqref{Definition of P} are  positive definite for the optimal liquidation problem \eqref{eq:liquidaton_states} and \eqref{eq:mincost_LQR}.
\end{Proposition}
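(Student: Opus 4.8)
The statement has two parts; the first, positive definiteness of the Riccati iterates, is essentially immediate. For the liquidation problem (Section~\ref{sec:optimal_liquidation_formulation}) the cost matrices are $Q_T=\mathrm{diag}(\epsilon,\,\delta+\phi\sigma^2)$, $Q_t=\mathrm{diag}(\epsilon,\,\phi\sigma^2)$ for $t<T$, and $R_t=\delta$, where $\delta=\beta-\gamma/2>0$ by Assumption~\ref{ass:liquidation}-(2) and $\epsilon,\phi,\sigma^2>0$ by construction. Hence every $Q_t$ and $R_t$ is positive definite, Assumption~\ref{ass:parameters} holds, and Proposition~\ref{prop:P_positive} gives $P_t^{\pmb{K}}\succ 0$ for all $t$ and any admissible $\pmb{K}$, in particular for $\pmb{K}\in\mathcal{S}$. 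So the real work is the bound $\sx>0$.

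To establish $\sx>0$, fix $\pmb{K}\in\mathcal{S}$ with $K_t=(k_t^1,k_t^2)$ and write the controlled dynamics as $x_{t+1}=\Phi_t x_t+w_t$, where $\Phi_t=A-BK_t=\begin{pmatrix}1+\gamma k_t^1 & \gamma k_t^2\\ k_t^1 & 1+k_t^2\end{pmatrix}$ and $w_t=(\sigma Z_{t+1},0)^\top$. The plan is to show each $\Phi_t$ is invertible and then let positive definiteness propagate from the initial covariance. A direct computation gives $\det\Phi_t=1+\gamma k_t^1+k_t^2$; since $k_t^1\le 0$ and $0<\gamma<\bar{\gamma}$ we have $\gamma k_t^1\ge\bar{\gamma}k_t^1$, so $\det\Phi_t\ge 1+\bar{\gamma}k_t^1+k_t^2\ge\zeta>0$ by the definition of $\mathcal{S}$ (equivalently, Assumption~\ref{ass:liquidation}-(1) already forces $\det\Phi_t>0$). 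Hence every $\Phi_t$, and therefore the transition map $\Psi_t:=\Phi_{t-1}\Phi_{t-2}\cdots\Phi_0$ (with $\Psi_0=I$), is invertible.

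Unrolling the recursion gives $x_t=\Psi_t x_0+\sum_{s=0}^{t-1}(\Phi_{t-1}\cdots\Phi_{s+1})\,w_s$, and since $x_0$ and $\{w_s\}$ are independent and mean zero,
\[
\mathbb{E}[x_t x_t^\top]=\Psi_t\,\mathbb{E}[x_0 x_0^\top]\,\Psi_t^\top+\sigma^2\sum_{s=0}^{t-1}\big(\Phi_{t-1}\cdots\Phi_{s+1}e_1\big)\big(\Phi_{t-1}\cdots\Phi_{s+1}e_1\big)^\top\ \succeq\ \Psi_t\,\mathbb{E}[x_0 x_0^\top]\,\Psi_t^\top,
\]
with $e_1=(1,0)^\top$ and the discarded noise sum positive semidefinite. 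Under Assumption~\ref{ass:State-2} (with $\widetilde{W}_0$ nonsingular, so that $\mathbb{E}[x_0x_0^\top]=\widetilde{W}_0\,\mathbb{E}[z_0 z_0^\top]\,\widetilde{W}_0^\top\succ 0$) one then gets, for each $t$, $\sigma_{\min}(\mathbb{E}[x_t x_t^\top])\ge\sigma_{\min}(\Psi_t)^2\,\sigma_{\min}(\mathbb{E}[x_0 x_0^\top])>0$. Taking the minimum over the finite index set $t\in\{0,1,\dots,T\}$ yields $\sx>0$.

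The key (and only non-routine) point is recognizing that the feasibility constraint defining $\mathcal{S}$ — equivalently Assumption~\ref{ass:liquidation}-(1) — is precisely a non-singularity statement for the closed-loop matrices $\Phi_t$. Note that the noise $w_t=(\sigma Z_{t+1},0)^\top$ is degenerate, never exciting the inventory coordinate, so it contributes nothing to the argument; the entire burden of keeping $\mathbb{E}[x_t x_t^\top]$ non-degenerate falls on invertibility of $\Phi_t$ together with the non-degeneracy of $\mathbb{E}[x_0 x_0^\top]$.
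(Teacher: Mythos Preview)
Your proof is correct and follows essentially the same line as the paper's: both compute $\det(A-BK_t)=1+\gamma k_t^1+k_t^2>0$ from the constraint in $\mathcal{S}$ (equivalently Assumption~\ref{ass:liquidation}-(1)), and then propagate positive definiteness of the state covariance from $\mathbb{E}[x_0x_0^\top]$ through the invertible closed-loop maps. The paper does this by one-step forward induction while you unroll the full recursion, and the paper re-derives the $P_t^{\pmb{K}}\succ 0$ claim by backward induction while you more economically cite Proposition~\ref{prop:P_positive}; these are cosmetic differences. If anything, your phrasing is slightly cleaner: the paper calls the (non-symmetric) matrix $H_t=\Phi_t$ ``positive definite'' on the strength of its real positive eigenvalues, whereas you correctly isolate \emph{invertibility} as the property actually used, and you make explicit the implicit requirement that $\mathbb{E}[x_0x_0^\top]\succ 0$.
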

The proof of Proposition \ref{prop:liquidation} is deferred to Appendix \ref{appendix:missing_proofs_sec5}. It is easy to check that the projection set $\mathcal{S}$ defined in \eqref{eq:big_projection_set} is convex and closed. {Along with Proposition \ref{prop:liquidation}, the convergence result in Theorem \ref{thm:projected_GD} holds for the liquidation problem \eqref{eq:liquidaton_states} and
\eqref{eq:mincost_LQR} as long as the conditions in Proposition \ref{prop:liquidation} are satisfied.}

We test the performance of the PPG algorithm with projection set $\mathcal{S}$ on  Apple (AAPL) and Facebook (FB) stocks. The market simulator of the associated LQR framework is constructed with NASDAQ ITCH  data and the details can be found in Appendix \ref{sec:parameter_est}.

\paragraph{Performance Measure.} We use the following {\it normalized error} to quantify the performance of a given policy $\pmb{K}$,
\[
\text{Normalized error}=\frac{C(\pmb{K})-C(\pmb{K}^*)}{C(\pmb{K}^*)},
\]
where $\pmb{K}^*$ is the optimal policy defined in \eqref{optimal_k}.

\paragraph{Set-up.}  
        (1) Parameters: $\phi = 5\times 10^{-6}$ (for both AAPL and FB), $\epsilon = 10^{-8}$, $T=10$; smoothing parameter $r=0.6$, number of trajectories $m=200$; initial policy {$\pmb{K}^0 \in \mathbb{R}^{1\times 2T}$ with $\{\pmb{K}^0\}_{ij}=-0.2$ for all $i$, $j$,} for both algorithms with known and unknown parameters; step sizes are indicated in the figures; { $\bar{\gamma} = 5\times 10^{-5}$, $\zeta=10^{-12}$ for the projection set.}
        (2) Initialization:  Assume the initial inventory $q_0$ follows $\mathcal{N}(500,1)$. 
        The small variance of the initial inventory distribution is used to guarantee the initial state covariance matrix is positive definite. In practice, the algorithm converges with deterministic initial inventories.
        
\vspace{-0.15cm}
\hspace{-1.1cm}
\begin{minipage}{18cm}
\centering
\begin{figure}[H]
  \centering
  \begin{subfigure}[b]{0.4\textwidth}
    \includegraphics[width=\textwidth]{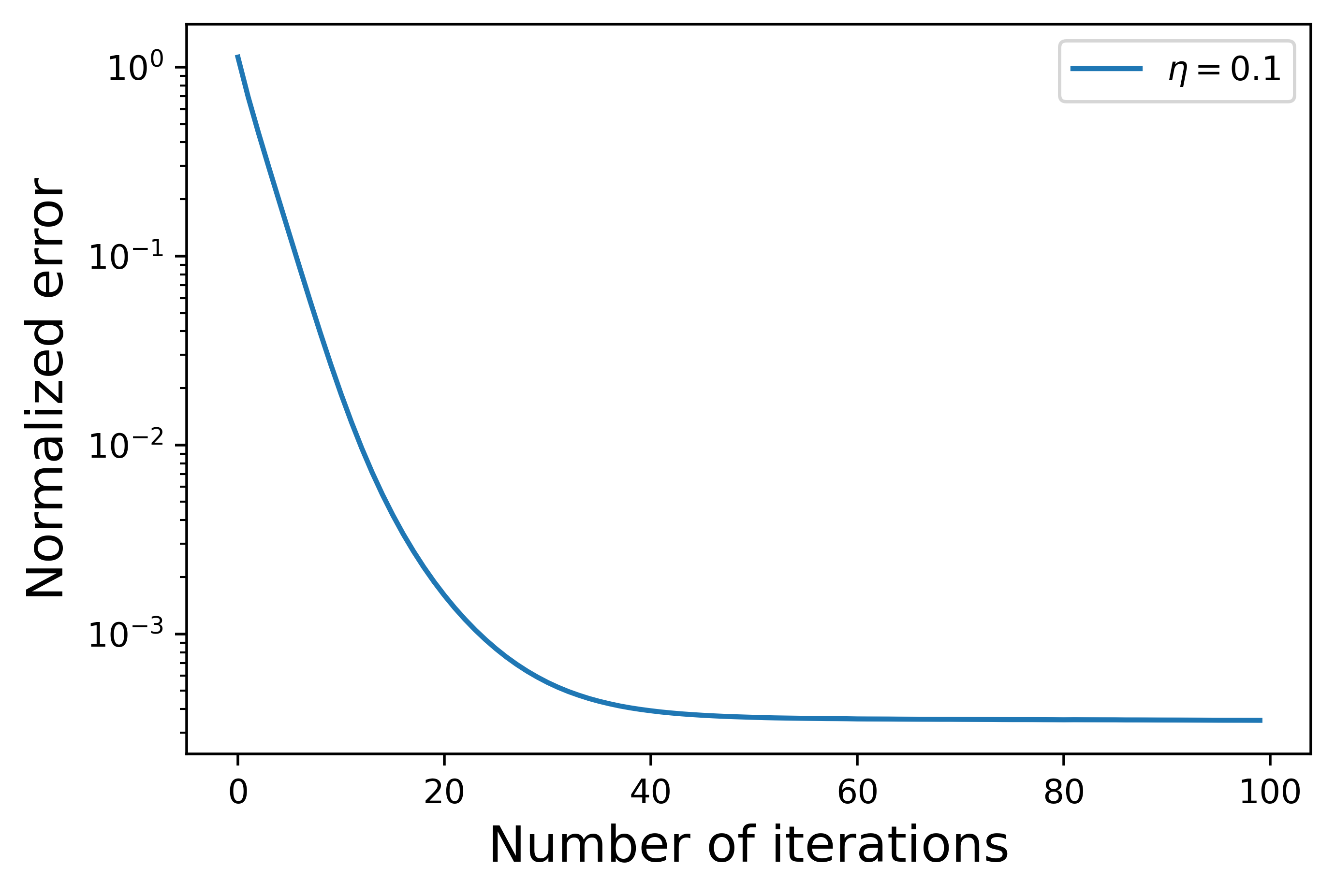}
    \caption{\label{fig:onestock_modelbased_performance}PPG with known parameters ($\eta=0.1$).}
  \end{subfigure}
  \begin{subfigure}[b]{0.4\textwidth}
    \includegraphics[width=\textwidth]{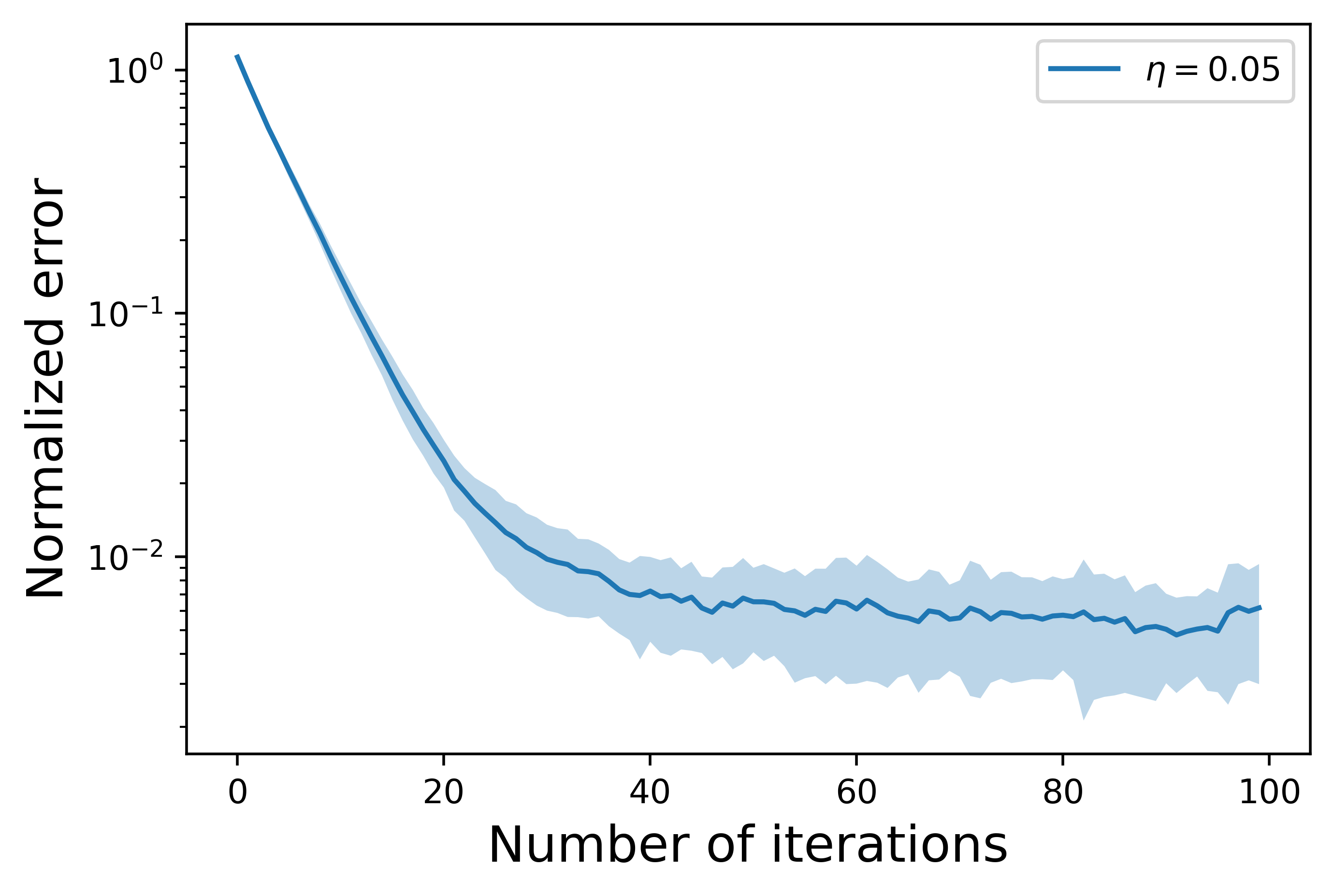}
    \caption{\label{fig:onestock_modelfree_performance}PPG  with unknown parameters ($\eta=0.05$).}
  \end{subfigure}
  \caption{\label{fig:onestock_performance}Performance of the PPG algorithms  (50 simulation scenarios).}
\end{figure}
\end{minipage}

\paragraph{Convergence.}
Both PPG algorithms with known parameters and unknown parameters show a reasonable level of accuracy within 50 iterations (that is the normalized error is less than $10^{-2}$). The PPG algorithm with known parameters has almost no fluctuations across the 50 scenarios. By  choosing $m=200$, the performance of the PPG algorithm with unknown parameters is stable with relatively small fluctuations (see the blue area in Figure \ref{fig:onestock_modelfree_performance}) across the 50 scenarios.

\paragraph{Impact of the Deadline.} The optimal policy is sensitive to the deadline in that the shapes of the optimal inventory trajectories are different with  different deadlines. See Figure \ref{fig:optimal_inventory} for both AAPL and FB with $T=30,60$ and $120$ minutes. The liquidation speed is almost linear when $T$ is small; and it is faster in the initial trading phase and slower at the end when $T$ is relatively large.
\paragraph{Impact of the Parameter $\phi$.}
Recall that in \eqref{eq:mincost_LQR} the parameter $\phi$ is used to balance the expected terminal wealth $\mathbb{E}[C]$ and the variance of the terminal wealth  $\text{var}[C]$. To show the impact of $\phi$, we set $\phi$ to be $10^{-4}$, $10^{-5}$, $10^{-6}$, and $10^{-7}$ and show the corresponding inventory trajectories in Figure \ref{fig:phi}. The optimal liquidation speed is almost linear when $\phi$ is small, while it is faster in the initial trading phase and slower at the end when $\phi$ is relatively large.

\begin{figure}[H]
  \centering
  \begin{subfigure}[b]{0.4\textwidth}
    \includegraphics[width=\textwidth]{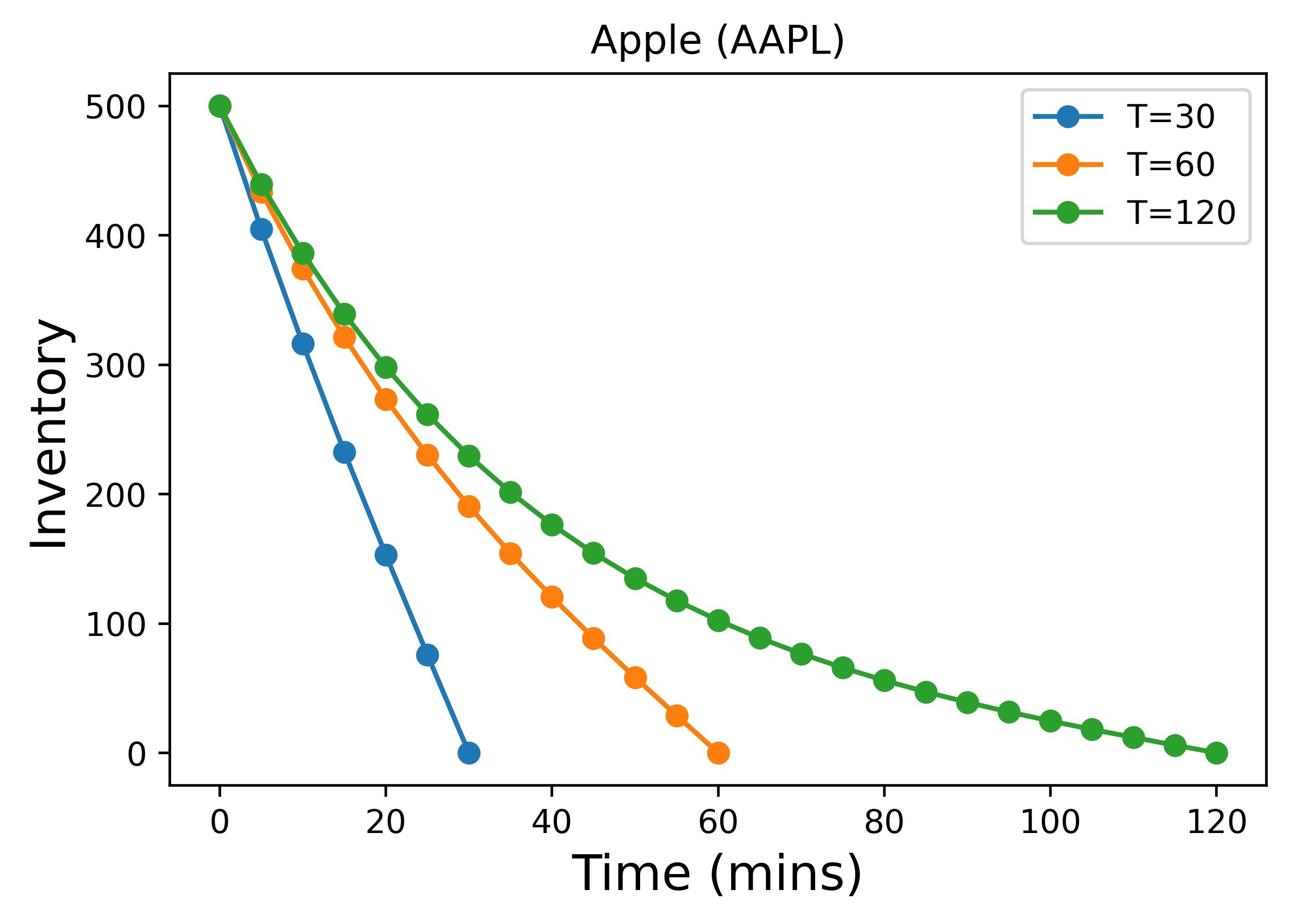}
    \caption{\label{fig:optimal_inventory_apple}AAPL.}
  \end{subfigure}
  \begin{subfigure}[b]{0.4\textwidth}
    \includegraphics[width=\textwidth]{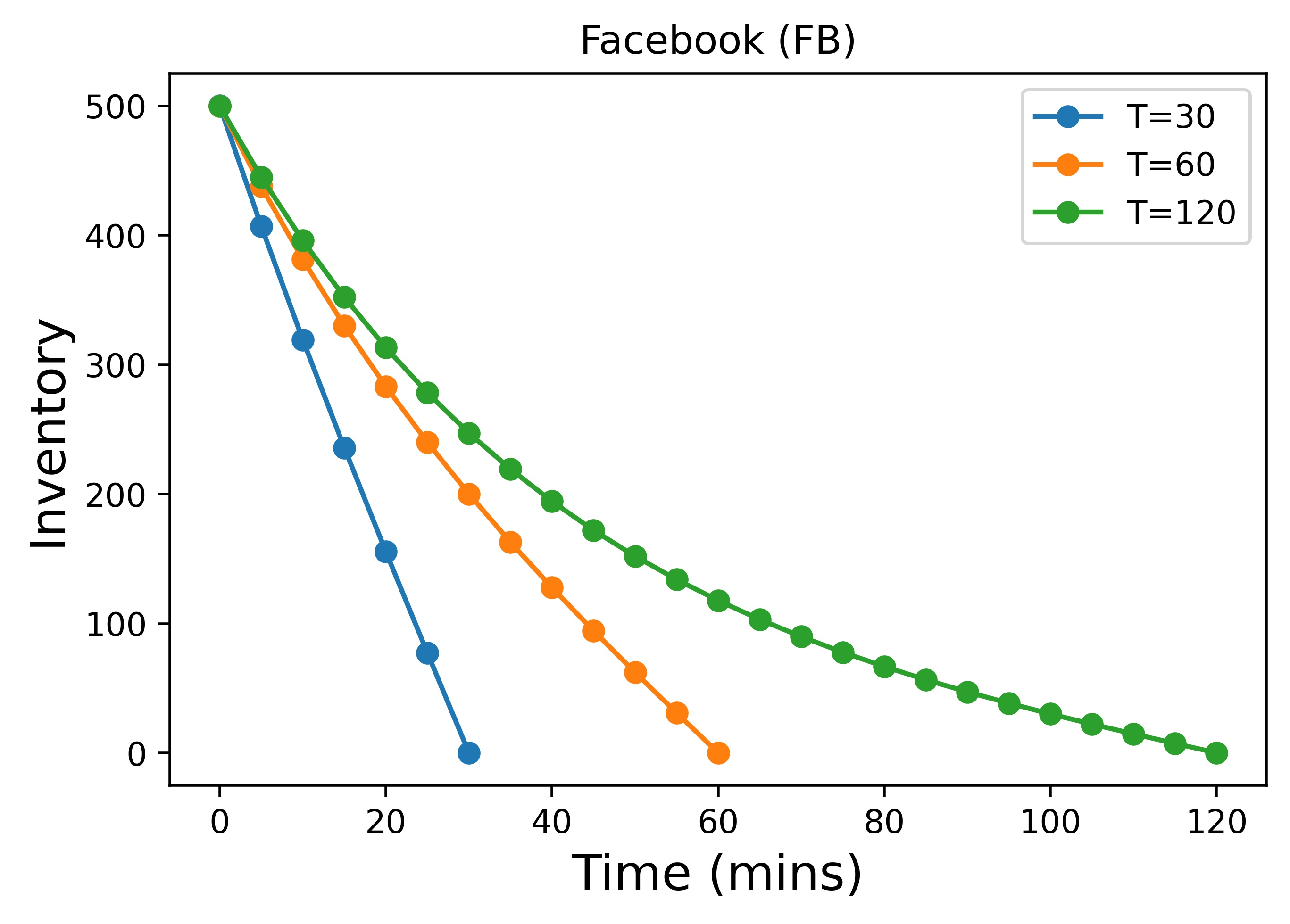}
    \caption{\label{fig:optimal_inventory_amazon}FB.}
  \end{subfigure}
  \caption{\label{fig:optimal_inventory}Optimal inventory trajectory under different deadlines (200 simulation scenarios).}
\end{figure}
\begin{figure}[H]
\begin{minipage}{5.2cm}
\centering
  \vspace{0.3cm}
\includegraphics[width=1\linewidth]{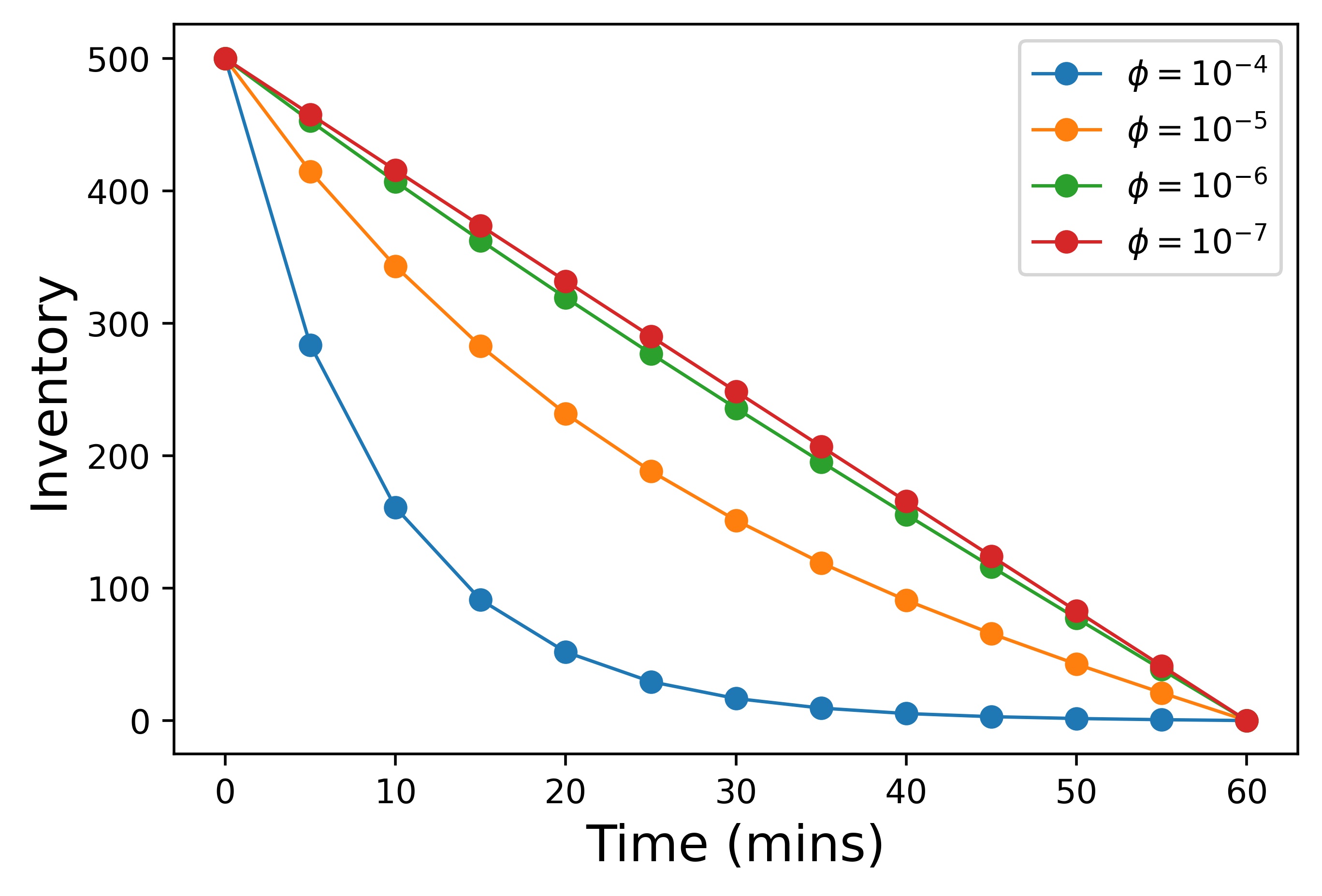}
\caption{\label{fig:phi}Inventory trajectories of AAPL under different $\phi$ (average across 200 simulation scenarios).}
\end{minipage}
\qquad
\begin{minipage}{11cm}
  \centering
  \begin{subfigure}[b]{0.48\textwidth}
    \includegraphics[width=\textwidth]{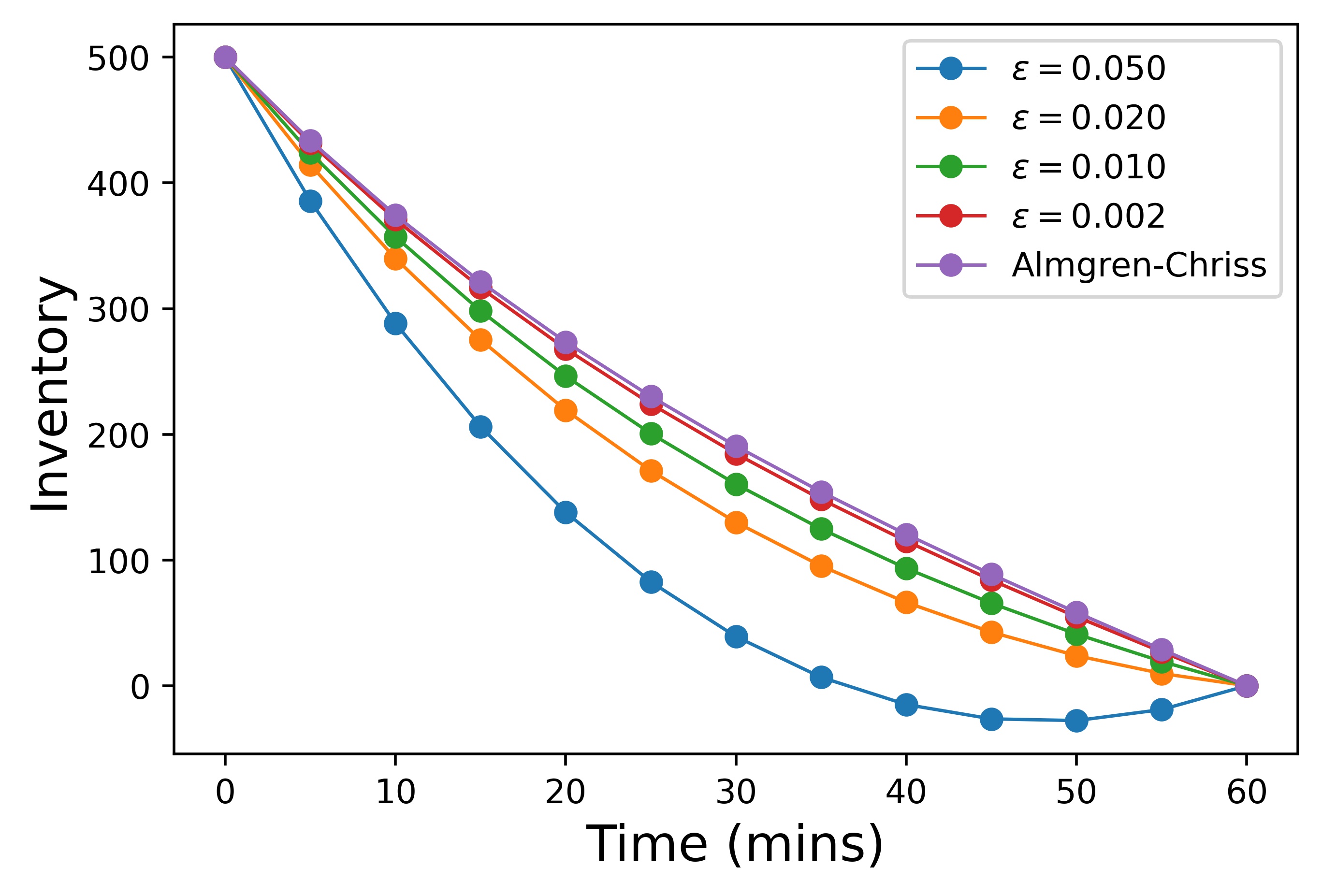}
    \caption{\label{fig:AlgChs_traj}Inventory trajectories.}
  \end{subfigure}
  \begin{subfigure}[b]{0.49\textwidth}
    \includegraphics[width=\textwidth]{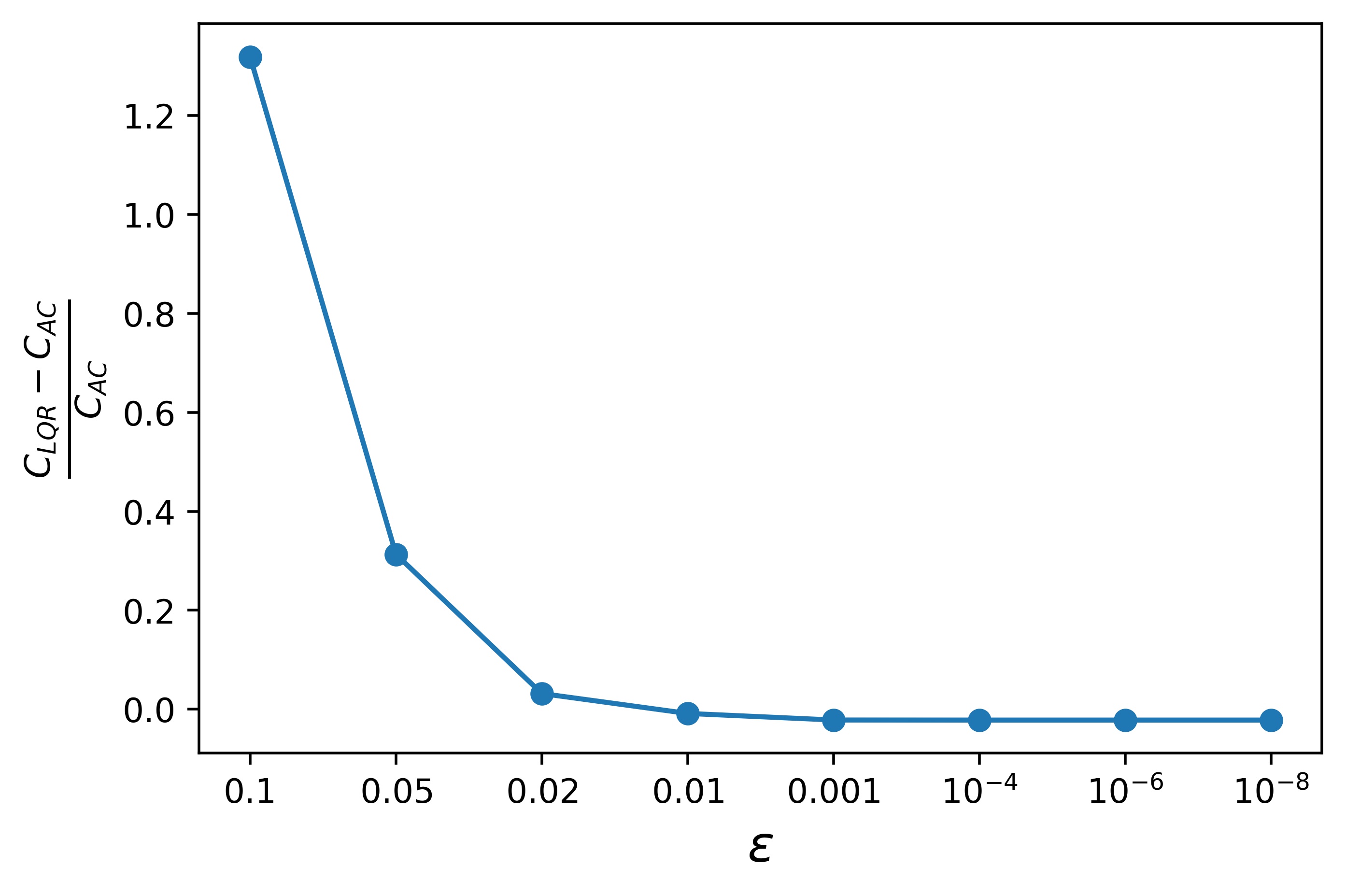}
    \caption{\label{fig:AlgChs_cost}Relative cost difference.}
  \end{subfigure}
  \caption{\label{fig:epsilon} Original Almgren-Chriss framework versus LQR formulation under different $\epsilon$ {(AAPL)}.}
\end{minipage}
\end{figure}

\paragraph{Impact of the Parameter $\epsilon$.}  Recall that our liquidation formulation \eqref{eq:mincost_LQR} differs from the Almgren-Chriss formulation \eqref{eq:mincost_AC} by an additional regularization term $\sum_{t=0}^T\epsilon S_t^2$. The role of this term is to enable the problem to be cast in the LQR framework and to guarantee the well-definedness of the Ricatti equation. From Figure~\ref{fig:AlgChs_traj},  the optimal policies and inventory trajectories are close to the Almgren-Chriss solution when $\epsilon \leq 0.01$. However, when $\epsilon=0.05$, the optimal policy is far away from the Almgren-Chriss solution. We show the difference between $C_{\rm AC}$, defined in \eqref{eq:mincost_AC}, and $C_{\rm LQR}(\epsilon)$, defined in \eqref{eq:mincost_LQR}, in Figure \ref{fig:AlgChs_cost}. We see that $C_{\rm LQR}(\epsilon)$ is close to $C_{\rm AC}$ when $\epsilon<0.02$ and is markedly different from $C_{\rm AC}$ when $\epsilon \geq 0.02$. It is worth noticing that when $\epsilon=0$, the algorithm does converge to the Almgren-Chriss solution in our setting although the convergence of the algorithm in this case is not guaranteed by our theoretical results. 


\subsection{Learning to Liquidate without Model Specification}\label{sec:learning_to_liquidate}
In practice, the dynamics of the trading system may not be exactly those assumed in the LQR framework but we might expect that the policy gradient method could still perform well when the system is ``nearly'' linear quadratic as the execution of the policy gradient method does not rely on the model specification. In this section, we consider liquidation problems in the Limit Order Book (LOB) setting. A LOB is a list of orders that a trading venue, for example the NASDAQ exchange, uses to record the interest of buyers and sellers in a particular financial instrument. There are two types of orders the buyers (sellers) can submit: a limit buy (sell) order with a preferred price for a given volume or a market buy (sell) order with a given volume which will be immediately executed with the best available limit sell (buy) orders. Here we perform the policy gradient method to learn the optimal strategies to liquidate using market orders in the LOB.

 
We denote by $S_t$ the mid-price of the asset at time $t$, that is the average of the best-bid price and best-ask price. At each time $t$, the decision is to liquidate an amount  $u_t$ of the asset. The action $u_t$ will have an impact on the market, with possibly both temporary and permanent impacts. Unlike the LQR framework or the classical Almgren-Chriss model, where dynamics are assumed to follow some stochastic model, here we run the policy gradient method directly on the LOB without any assumption on how the mid-price $S_t$ moves and what are the forms of the market impacts. 
Denote by $q_{t} = q_{t-1} - u_{t-1}$ the inventory at time $t$.  We restrict the admissible controls to be of the linear feedback form $u_t = - K_t (S_t,q_t)^{\top}$ with some $K_t \in \mathbb{R}^{1 \times 2}$.

The cost $c_t = \phi^{\prime}  (q_t-u_t)^2  -r_t(u_t)$ at time $t$ consists of two parts.  The first part $ \phi^{\prime}  (q_t-u_t)^2$ is the holding cost of the inventory weighted by a parameter $\phi^{\prime}$.   The quantity $r_t(u_t)$ is the amount we receive by liquidating $u_t$ shares at time $t$. Note that $r_t(\cdot)$ may depend on $S_t$ and other market observables. For example, if we liquidate $u_t = 1000$ shares of the asset with the market conditions given in Table \ref{tab:LOB_configuration}, then the amount received would be 
\[
 r_t(u_t) =  397\times 200.1 +  412 \times 200.0 + (1000-397-412)\times 199.9 = 200020.6.
\]
This transaction moves the best bid price two levels down. This is commonly referred to as the {\it temporary impact} of a market order.
 \begin{table}[H] 
 \centering
        \begin{tabular}{l*{5}{c}r}
Bid  level        & One & Two & Three & Four & Five   \\
\hline\hline
 Bid price  (USD)        & 200.1 & 200.0 & 199.9 & 199.8 & 199.7   \\ \hline
Volume available & 397 & 412 & 502 & 442 & 529 &   \\
\end{tabular}
       \caption{One snapshot of the LOB.}
       \label{tab:LOB_configuration}
   \end{table}

\paragraph{Performance Metric: Implementation Shortfall \cite{perold1988implementation}.}
\begin{eqnarray}\label{eq:implementation_shortfall}
{\rm IS} (\pmb{u}) =    \left( \sum_{t=0}^{T-1} c_t(u_t)+c_T\left(q_0-\sum_{t=0}^{T-1}u_t\right)\right)-c_0(q_0).
\end{eqnarray}
 The first term of \eqref{eq:implementation_shortfall} is the cost of implementing policy $\pmb{u}$ over the horizon $[0,T]$. The second term is the cost  when liquidating $q_0$ market orders at time $0$.
 If we expect $\pmb{u}$ is better than liquidating everything at time $0$, then ${\rm IS} (\pmb{u})<0$. A smaller implementation shortfall implies the strategy is more profitable.

We use the following {\it relative performance} (evaluated on a single trajectory) to compare the performance of two policies $\pmb{u}^1$ and $\pmb{u}^2$,
\[
\text{Relative performance}=\frac{{\rm IS}(\pmb{u}^2)-{\rm IS}(\pmb{u}^1)}{|{\rm IS}(\pmb{u}^2)|}.
\]

\paragraph{Experiment Set-up.} 
We consider the LOB data consisting of the best $5$ levels  and we assume the trading frequency $\Delta = 1$ minute and the trading horizon $T=10$ minutes. We perform a numerical analysis for five different stocks, Apple (AAPL), Facebook (FB), International Business Machines Corporation (IBM), American Airlines (AAL) and JP Morgan (JPM), during the period from 01/01/2019 to 12/31/2019. The data is divided into two sets, a training set with data between 10:00AM-12:00AM 01/01/2019-08/31/2019 and a test set with data between  10:00AM-12:00AM 09/01/2019-12/31/2019.

We take $ \phi^{\prime} = 5\times 10^{-6}$; $T=10$; smoothing parameter $r=0.4$; number of trajectories $m=200$; initial policy $\pmb{K}^0\in \mathbb{R}^{1\times 20}$ with $(\pmb{K}^0)_{ij}=-0.2$ for all $i,j$; and step size $\eta=10^{-6}$. We assume the initial inventory follows $q_0=2000$.  We compare the performance of the policy gradient method with the Almgren-Chriss solution with fitted parameters given in Table \ref{tb:single-asset-estimation} in the Appendix. In the Almgren-Chriss model, we set $\phi = \sigma^2 \phi^{\prime}$ to ensure a reasonable comparison.

\paragraph{Results.} 
From Table \ref{tab:relative_performance} and Figure \ref{fig:relative_performance},
the policy gradient method improves on the Almgren-Chriss solution by around $20\%$ on five different stocks from different financial sectors. Note that the goal of the policy gradient method is to learn the global minimizer of the expected cost function, hence it is expected that the Almgren-Chriss solution could perform better than the policy gradient method for some sample trajectories, as shown in Figure \ref{fig:relative_performance}.
This result is compatible with the performance of the Q-learning algorithms \cite{HW2014}. The drawback of Q-learning algorithms is that the computational complexity is highly dependent on the size of the set of (discrete) states and actions, where as the policy gradient method can handle continuous states and actions.

We conjecture that the policy gradient method may be capable of learning the global ``optimal'' solution for a larger class of models that are ``similar'' to the LQR framework with stochastic dynamics and finite time horizon. In addition, as the policy gradient method is a model-free algorithm, it is more robust with respect to model mis-specification as compared to the Almgren-Chriss framework.
\begin{figure}[H]
  \centering
  \begin{subfigure}[b]{0.32\textwidth}
    \includegraphics[width=\textwidth]{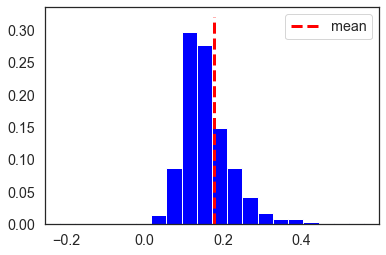}
    \caption{IBM.}
  \end{subfigure}
  \hfill
  \begin{subfigure}[b]{0.32\textwidth}
    \includegraphics[width=\textwidth]{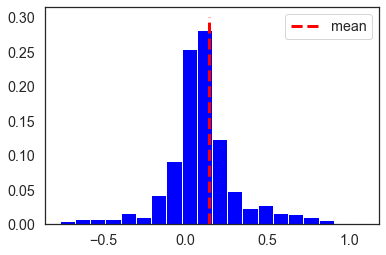}
    \caption{AAL.}
  \end{subfigure}
  \begin{subfigure}[b]{0.32\textwidth}
    \includegraphics[width=\textwidth]{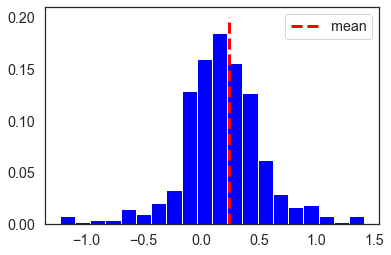}
    \caption{JPM.}
  \end{subfigure}
  \begin{subfigure}[b]{0.32\textwidth}
    \includegraphics[width=\textwidth]{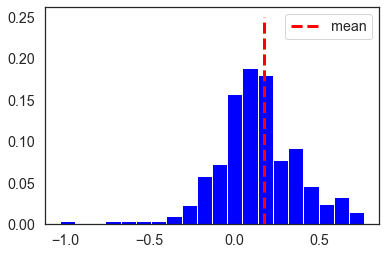}
    \caption{FB.}
  \end{subfigure}
  \begin{subfigure}[b]{0.32\textwidth}
    \includegraphics[width=\textwidth]{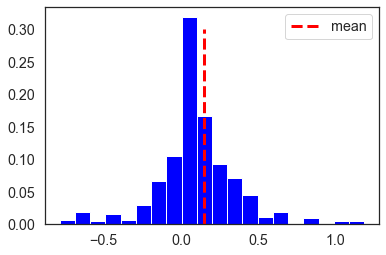}
    \caption{AAPL.}
  \end{subfigure}
  \caption{\label{fig:relative_performance}Empirical distribution of the relative performance on the test set.}
\end{figure}
\begin{table}[H]
    \centering
    \begin{tabular}{l r r r r r}
    Asset     &  IBM & AAL & JPM & FB &AAPL\\ \hline\hline
In sample       & 0.173 &0.152& 0.251 &0.181&0.165\\
(std) & (0.09) & (0.27) & (0.31) & (0.32) & (0.31)\\\hline
Out of sample   & 0.178 & 0.146 &0.245&0.175& 0.163 \\
 (std) & (0.08) & (0.29) & (0.36) & (0.24) & (0.37)\\\hline
    \end{tabular}
    \caption{Average relative performance of  the policy gradient ($\pmb{u}^1$) compared to Almgren-Chriss solution ($\pmb{u}^2$).}
    \label{tab:relative_performance}
\end{table}
\subsection{Learning LQR in Higher Dimensions}\label{sec:synthetic}
In practice we can perform the policy gradient method for the optimal liquidation problem with multiple assets. However it is difficult to capture the cross impact and permanent impact with historical LOB data. Therefore we test the performance of the policy gradient method in higher dimensions on synthetic data  consisting of a four-dimensional state variable and a two-dimensional control variable.  The parameters are randomly picked such that the conditions for our LQR framework are satisfied.

\paragraph{Set-up.} 
        (1) Parameters: 
        \[
        A =
        \begin{pmatrix}
         0.5 & 0.05 & 0.1 & 0.2\\ 
         0 & 0.2 & 0.3 & 0.1 \\
         0.06 & 0.1 & 0.2 & 0.4 \\
         0.05 & 0.2 & 0.15 & 0.1
        \end{pmatrix},
        \ B = 
        \begin{pmatrix}
         -0.05 & -0.01 \\ 
         -0.005 & -0.01 \\
         -1 & -0.01 \\
         -0.01 & -0.9 
        \end{pmatrix},
        \ Q_t=
        \begin{pmatrix}
         1 & 0.2 & -0.005 & 0.015\\ 
         0.2 & 1.1 & 0.15 & 0 \\
         -0.05 & 0.15 & 0.9 & -0.08 \\
         0.015 & 0 & -0.08 & 0.88
        \end{pmatrix},
        \]
        \[
        R_t = 
        \begin{pmatrix}
         0.4 & -0.25 \\ 
         -0.25 & 0.7
        \end{pmatrix},
        \ W=
        \begin{pmatrix}
         0.1 & 0 & 0 & 0\\ 
         0 & 0.5 & 0 & 0\\ 
         0 & 0 & 0.2 & 0\\ 
         0 & 0 & 0 & 0.3
        \end{pmatrix},
        \]
        $Q_T=Q_t$, $T=10$; smoothing parameter $r=1$, number of trajectories $m=200$; initial policy {$\pmb{K}^0\in\mathbb{R}^{2\times 40}$ with $\{\pmb{K}^0\}_{ij}= 0.05$ for all $i$, $j$}, for both known and unknown parameters;
        
        (2)  Initialization: 
        We  assume $x_0=(x_0^1,x_0^2,x_0^3,x_0^4)^\top$ and $x_0^i$ are independent. $x_0^1,x_0^2,x_0^3$, and $x_0^4$ are sampled from $\mathcal{N}(5,0.1)$, $\mathcal{N}(2,0.3)$, $\mathcal{N}(8,1)$, $\mathcal{N}(5,0.5)$.

    \paragraph{Convergence.} For the high-dimensional case, the normalized error falls below the threshold $10^{-2}$ within 80 iterations for the policy gradient algorithm with known parameters. It takes substantially more iterations for the policy gradient algorithm with unknown parameters to have an error near such a threshold, which is as expected.

\begin{figure}[h]
\centering
\begin{minipage}{10.9cm}
\begin{subfigure}[b]{0.48\textwidth}
    \includegraphics[width=\textwidth]{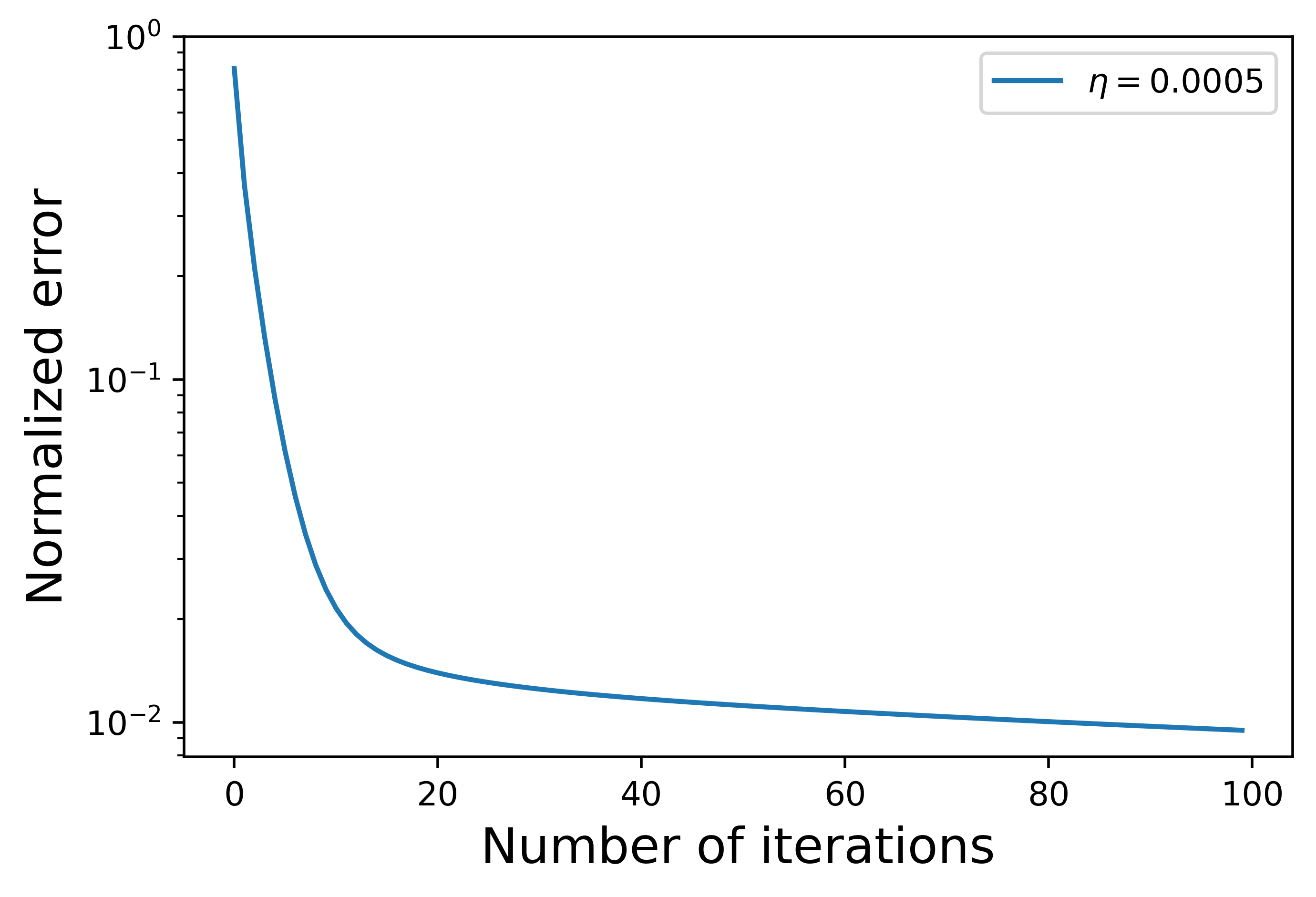}
    \caption{Known parameters \\($\eta = 0.0005$).}
  \end{subfigure}
  \begin{subfigure}[b]{0.48\textwidth}
    \includegraphics[width=\textwidth]{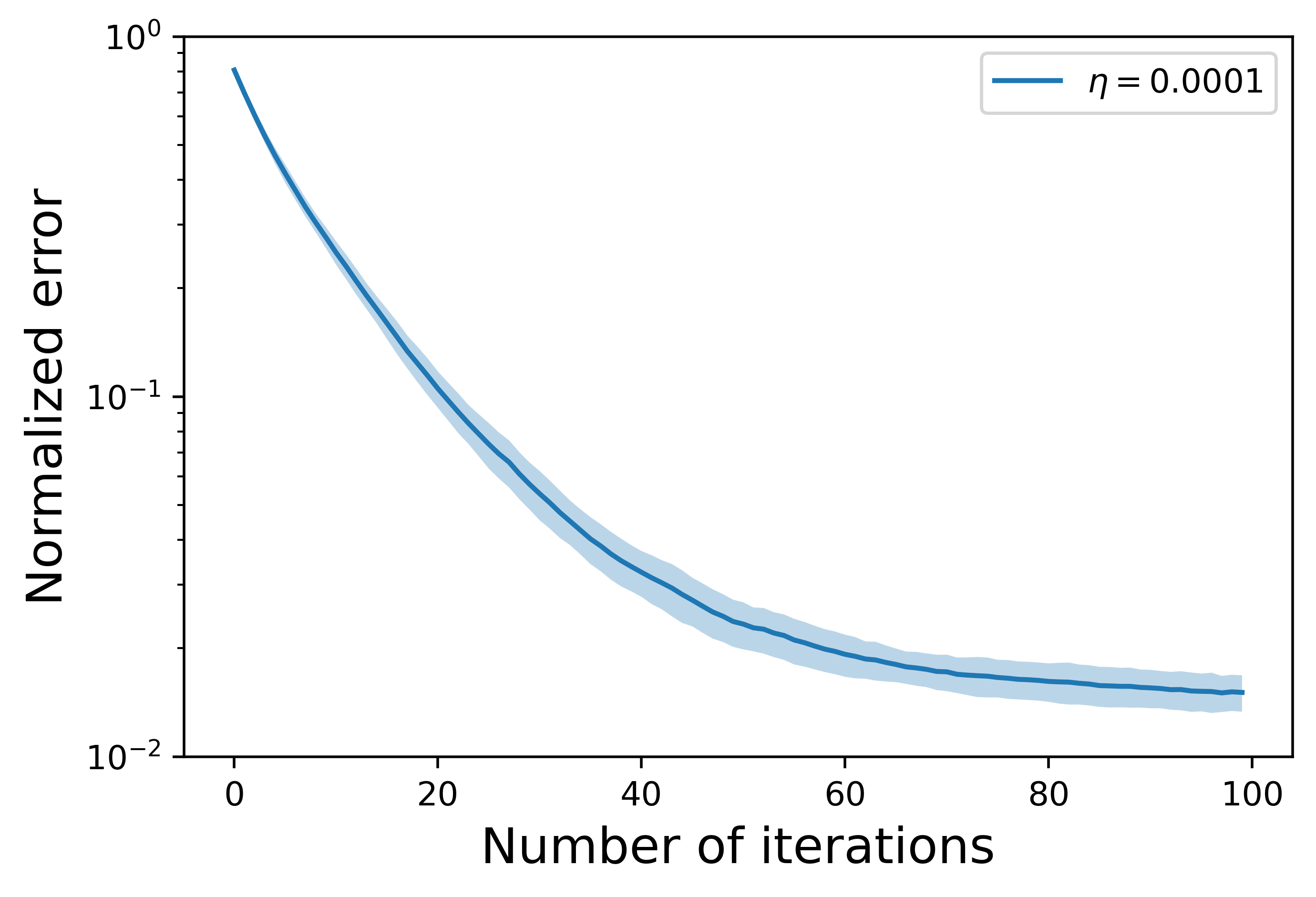}
    \caption{Unknown parameters \\($\eta = 0.0001$).}
  \end{subfigure}
  \caption{Performance of the policy gradient algorithms\\ (50 simulation scenarios)}
\end{minipage}
\qquad
\begin{minipage}{5.2cm}
\centering
\vspace{0.47cm}
\includegraphics[width=1.0\linewidth]{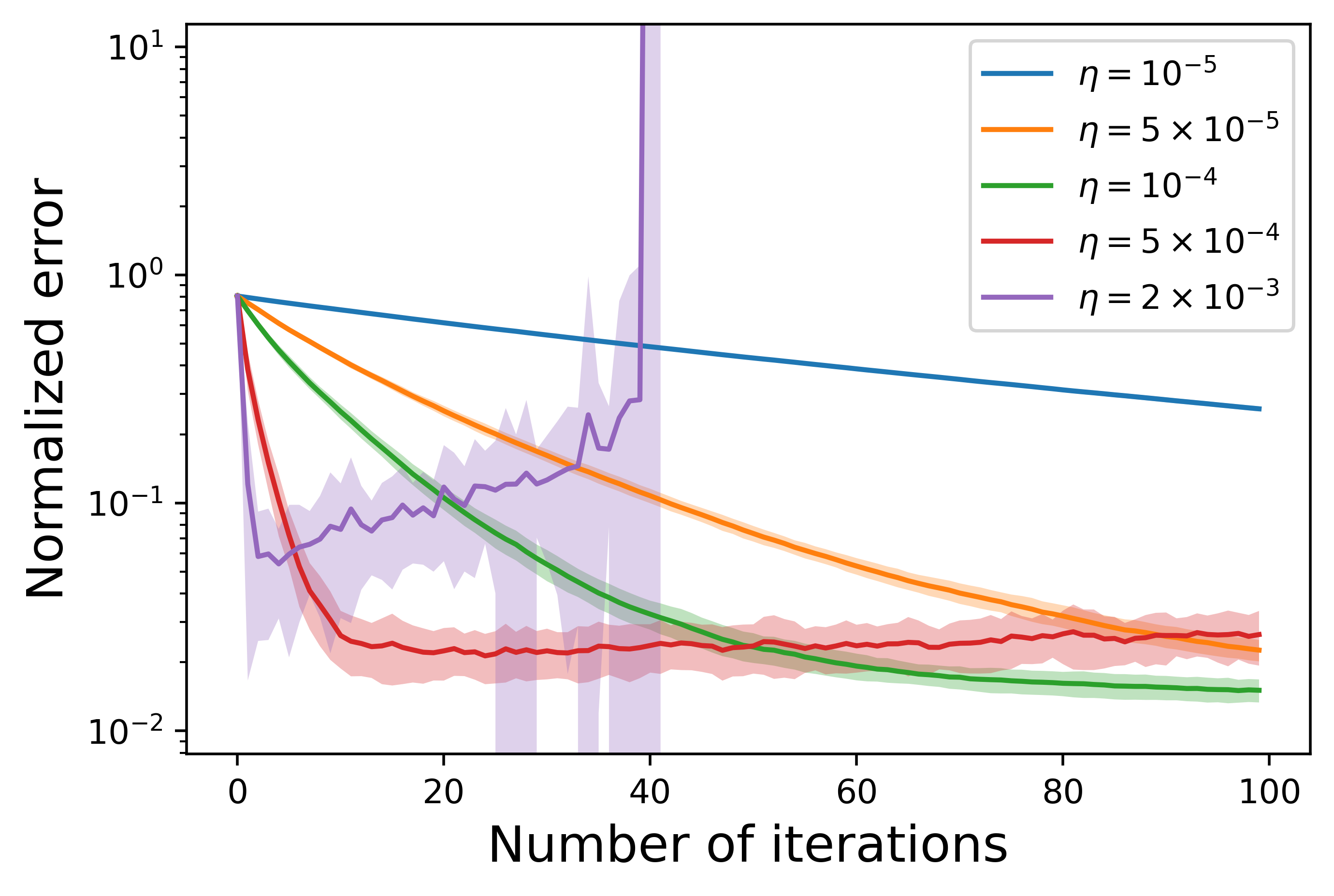}
\caption{\label{fig:eta_4D}Performance of the policy gradient algorithm with unknown parameters under different step size $\eta$ (50 simulation scenarios).}
\end{minipage}
\end{figure}


\paragraph{Outcomes from Varying the Parameter $\eta$.} The performance of the policy gradient algorithm also depends on the values of the step size $\eta$. We show how the values of the step size $\eta\in[10^{-5},2\times 10^{-3}]$ affect the convergence of the policy gradient algorithm with unknown parameters in Figure \ref{fig:eta_4D}. A tiny step size leads to slow convergence (see the blue line when $\eta=10^{-5}$) and a larger step size may cause divergence (see the purple line when $\eta=2\times 10^{-3}$).

	\newpage
\bibliographystyle{plain}
\bibliography{references}

\newpage
\appendix
\section{Market Simulator for Linear Price Dynamics}\label{sec:parameter_est}
 We estimate the parameters for the LQR model using NASDAQ ITCH data taken from Lobster\footnote{https://lobsterdata.com/}. 

\paragraph{Permanent Price Impact and Volatility.}
 The model in \eqref{eq:liquidaton_states} implies that prices changes are proportional to the {\it market-order flow imbalances} (MFI). We adopt the framework from \cite{CKS2014}, namely that the price change $\Delta S$ is given by
\begin{eqnarray}
\Delta S = \gamma \, \text{MFI}+\sigma\,\epsilon,
\end{eqnarray}
with $\text{MFI} = M^b-M^s$ where $M^s$ and $M^b$ are the volumes of market sell orders and market buy orders respectively during a time interval $\Delta T=5$mins and $\epsilon\sim \mathcal{N}(0,1)$. We then estimate $\gamma$ and $\sigma$ from the data.
	\begin{figure}[H]
\centering
\includegraphics[width=0.19\linewidth]{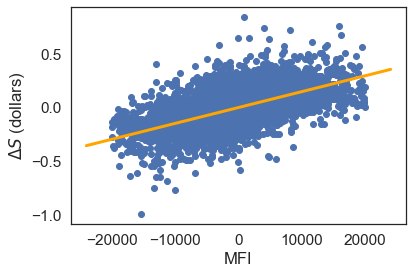}
\includegraphics[width=0.19\linewidth]{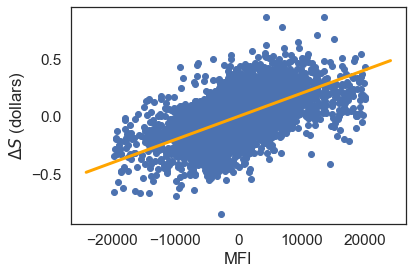}
\includegraphics[width=0.19\linewidth]{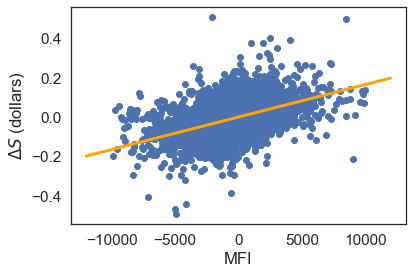}
\includegraphics[width=0.19\linewidth]{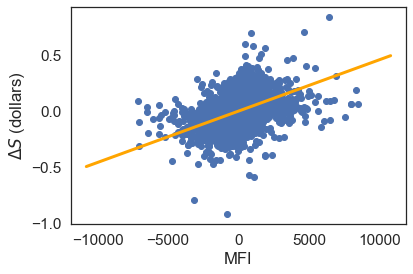}
\includegraphics[width=0.19\linewidth]{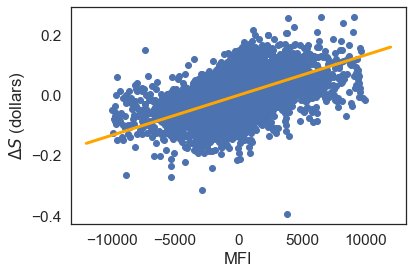}
\caption{Relationship between MFI and $\Delta S$. (Example (from left to right): AAP, FB, JPM, IBM and AAL, 10:00AM-11:00AM 01/01/2019-08/31/2019, $\Delta T = 1$min)}
\end{figure}

\paragraph{Temporary Price Impact.} We assume the LOB has a flat shape with constant queue length $l$ for the first few levels. Figure~\ref{fig:queue_length} shows the average queue lengths for the first 5 levels so that our assumption is not too unreasonable. Therefore the following equation, on the amount received when we liquidate $u$ shares with best bid price $S$, holds
\[
u(S-\beta u) = \int_{S-\frac{u\,\Delta }{l}}^{S} l v dv.
\]
Therefore we have $\beta =\frac{\Delta}{2l}$, where $\Delta$ is the tick size and $l$ is the average queue length.
	\begin{figure}[H]
\centering
\includegraphics[width=0.19\linewidth]{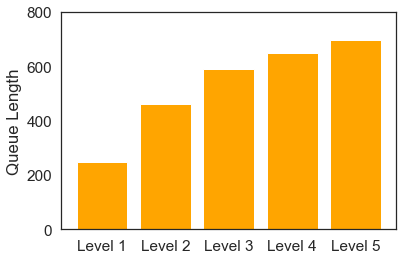}
\includegraphics[width=0.19\linewidth]{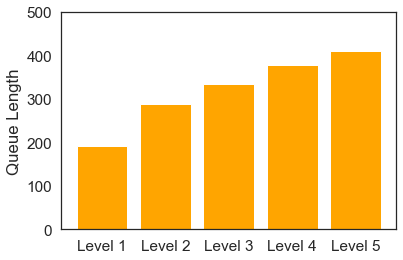}
\includegraphics[width=0.19\linewidth]{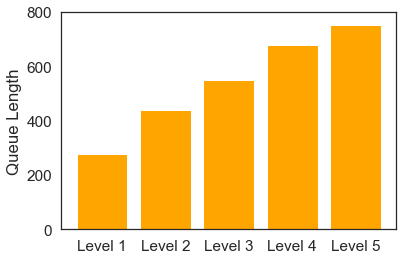}
\includegraphics[width=0.19\linewidth]{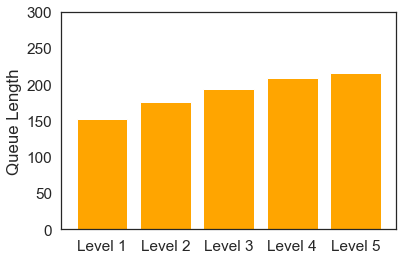}
\includegraphics[width=0.19\linewidth]{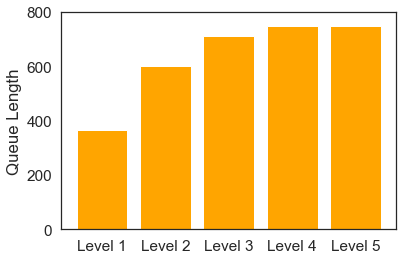}
\caption{\label{fig:queue_length}Average queue length (volume) of the first five levels on the limit buy side (Example (from left to right): AAP, FB, JPM, IBM and AAL, 10:00AM-11:00AM 01/01/2019-08/31/2019 with $5000$ samples uniformly sampled with natural time clock in each trading day.) }
\end{figure}

\paragraph{Parameter Estimation.}
	See the estimates for AAPL, FB, IBM, JPM, and AAL in Table \ref{tb:single-asset-estimation}.

   \begin{table}[H]
  \centering
\begin{tabular}{ |p{3cm}||p{2cm}p{2cm}p{2.2cm}p{2cm}p{2cm}|| }
  \hline
   Paramters/Stock& AAPL  & FB & IBM &JPM &AAL \\
   \hline\hline
  $\beta$ & $1.03\times 10^{-5}$ &  $1.30\times 10^{-5}$ &$2.65\times{10}**{-5}$&$9.28\times 10^{-6}$&$3.27\times 10^{-5}$\\
   $\gamma$  & $7.27\times 10^{-6}$  & $1.40\times 10^{-5}$ &$4.60\times 10^{-5}$&$1.65\times 10^{-5}$&$1.3310\times 10^{-5}$ \\
   ${ \sigma}$  & $0.107$ &$0.115$&$0.082$ &$0.059$&$0.042$ \\ 
  \hline\hline
  \end{tabular}
  \caption{\label{tb:single-asset-estimation}Parameter estimation from NASDAQ ITCH Data (10:00AM-11:00PM 01/01/2019-08/31/2019).}
  \end{table}

\section{Comparison between the Policy Gradient Method and Q-learning}\label{appendix:q_learning}

The computational complexity of Q-learning is highly dependent on the size of the set of the  (discrete) states and actions. Therefore Q-learning is typically less suited to problems with continuous and unbounded states and actions. In order to apply Q-learning for such problems, we need to discretize the continuous state and action space. Intuitively speaking, Q-learning suffers from low accuracy when the discretization scheme is less refined (see Figures \ref{fig:q_table1} and \ref{fig:q_learning}). On the other hand, the computational complexity grows quadratically when increasing the level of granularity of discretization (see Figures \ref{fig:q_table2} and \ref{fig:q_learning}).

To demonstrate this view point, we  compare the performance of the Q-learning algorithm with the policy gradient method on a one-dimensional LQR problem with finite horizon as suggested by the reviewer. (We would expect the deep Q-learning algorithm and the deep policy gradient method to have similar comparison results.)

\paragraph{Q learning update.}
We initialize the Q table $\{q^{(0)}_t(x,u)\}_{x,u,t}$ with all zeros.
In the $i$-th iteration, we update the Q table for  $t=0,1,\cdots,T-1$,
\begin{eqnarray}
q^{(i)}_t(x,u) = (1-\tilde{\eta}) \, q^{(i-1)}_t(x,u) + \tilde{\eta}\, \left[ c_t(x,u) +\min_{u^{\prime}}\,\,q_{t+1}^{(i)}(x^{\prime},u^{\prime}) \right],
\end{eqnarray}
with terminal condition $q^{(i)}_{T}(x,u) = x^2Q_T$. Here $c_t(x,u) = x^2Q_t +u^2R_t$ is the instantaneous cost at time $t$; $x'$ is the next state simulated from the system when the agent takes an action $u$ in state $x$ at time $t$; and $\tilde{\eta} \in (0,1)$ is the learning rate.

\paragraph{Model set-up.} We set $d=k=1$, $T=5$, $A=1.0$, $B=0.2$, $Q_t=0.2$ for $t=0,1,2,3,4$, $Q_T=0.4$, $R_t = 0.1(t+1)$ for $t=0,1,\cdots,4$, $w_t\sim\mathcal{N}(0,0.1)$, and $x_0\sim\mathcal{N}(0,0.1)$.

\paragraph{Parameter set-up.}
To perform Q-learning, we uniformly partition the states and actions in $[-1,1]$. We set the learning rate for Q-learning as $\tilde{\eta}=0.1$. For the policy gradient method, we set the learning rate $\eta = 0.2$ and the number of trajectories in the zero-th optimization as $m=50$.

\paragraph{Conclusion.} From Figure \ref{fig:q_learning}, we observe that
\begin{itemize}
    \item For LQR with finite horizon, the policy gradient method outperforms Q-learning algorithms (with the size of actions and states varying from 10 to 100) in terms of both sample efficiency and accuracy.
    \item When increasing the size of the states and actions from 10 to 100, the accuracy of the Q-learning algorithm improves, however, it requires many more samples to converge.
\end{itemize}
To conclude,  Q-learning is less suited to handling decision-making problems with continuous and unbounded states and actions. More advanced approximation  techniques may be needed in this case \cite{tu2018least}.

	\begin{figure}[H]
\centering
\includegraphics[width=0.32\linewidth]{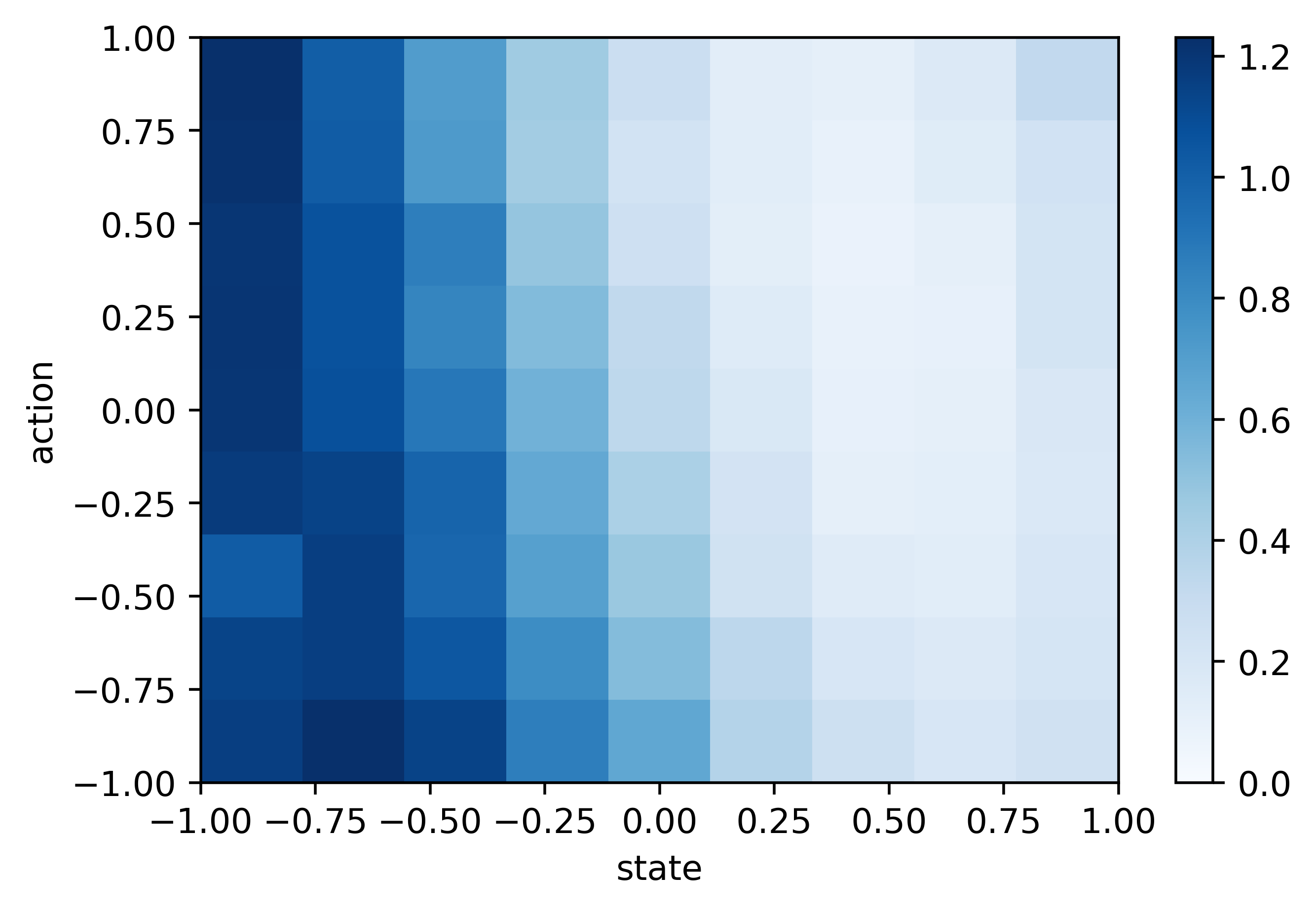}
\includegraphics[width=0.32\linewidth]{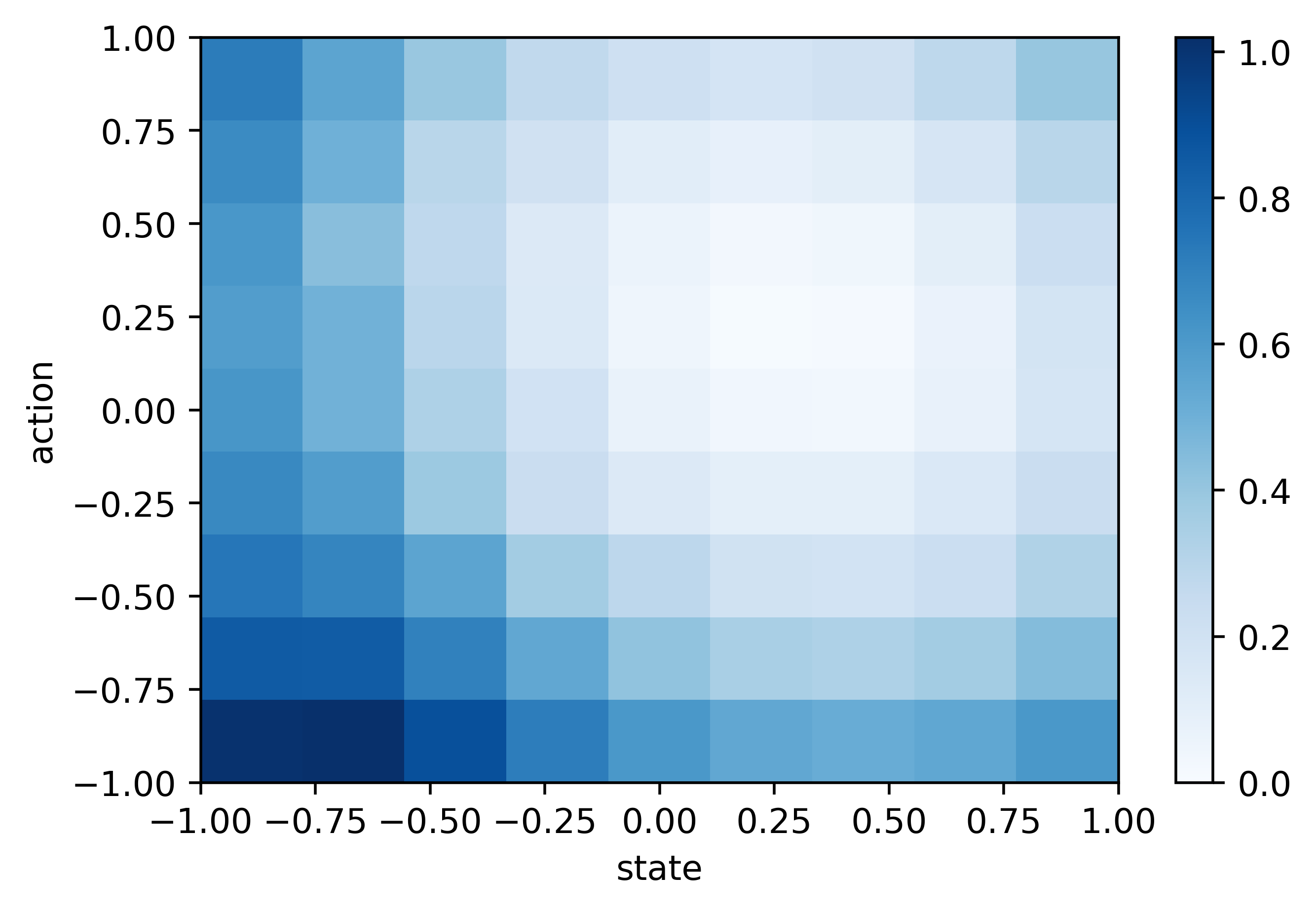}
\includegraphics[width=0.32\linewidth]{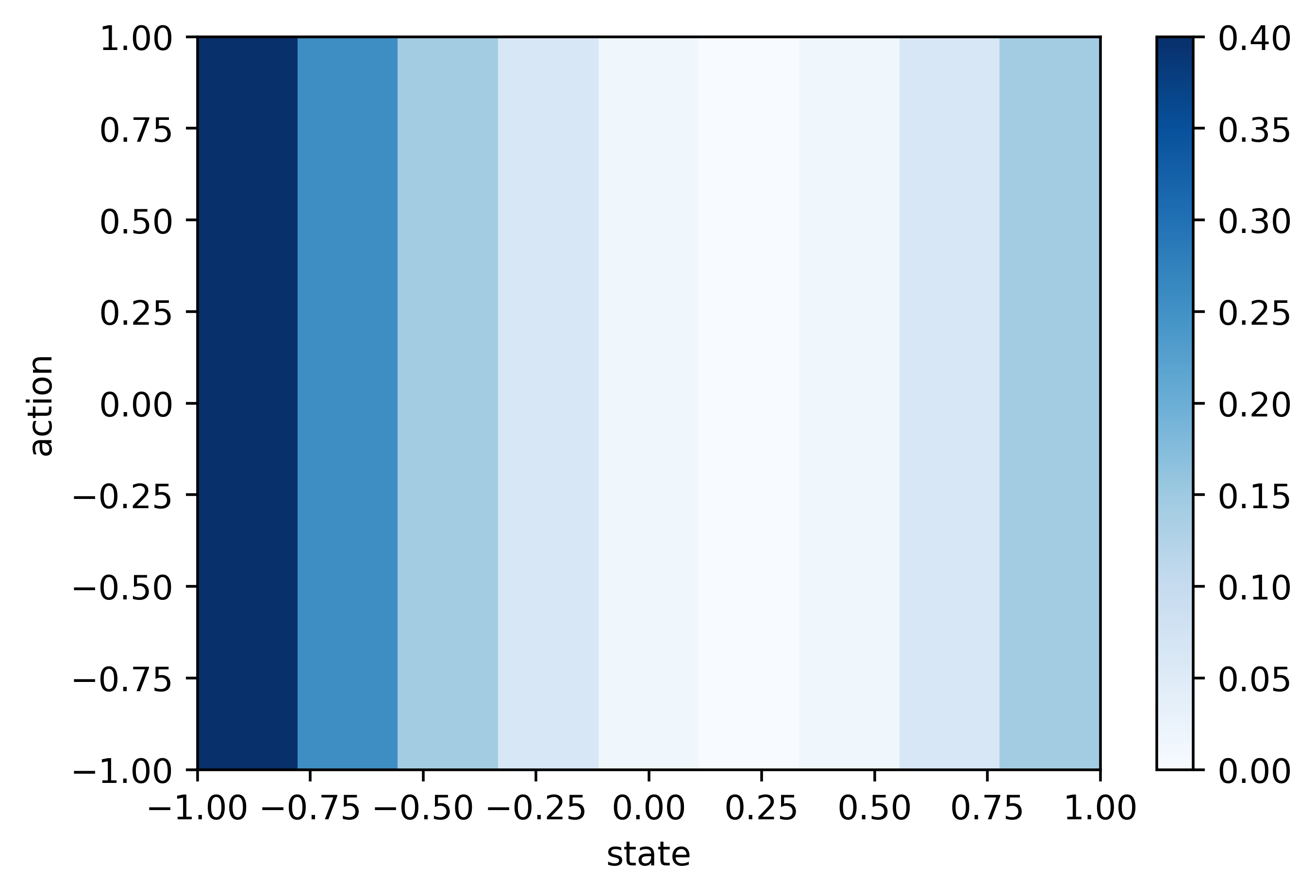}
\caption{\label{fig:q_table1}Q tables with $10$ states and $10$ actions: $q_0(s,a)$, $q_4(s,a)$ and $q_5(s,a)$ (from left to right).}
\end{figure}
	\begin{figure}[H]
\centering
\includegraphics[width=0.32\linewidth]{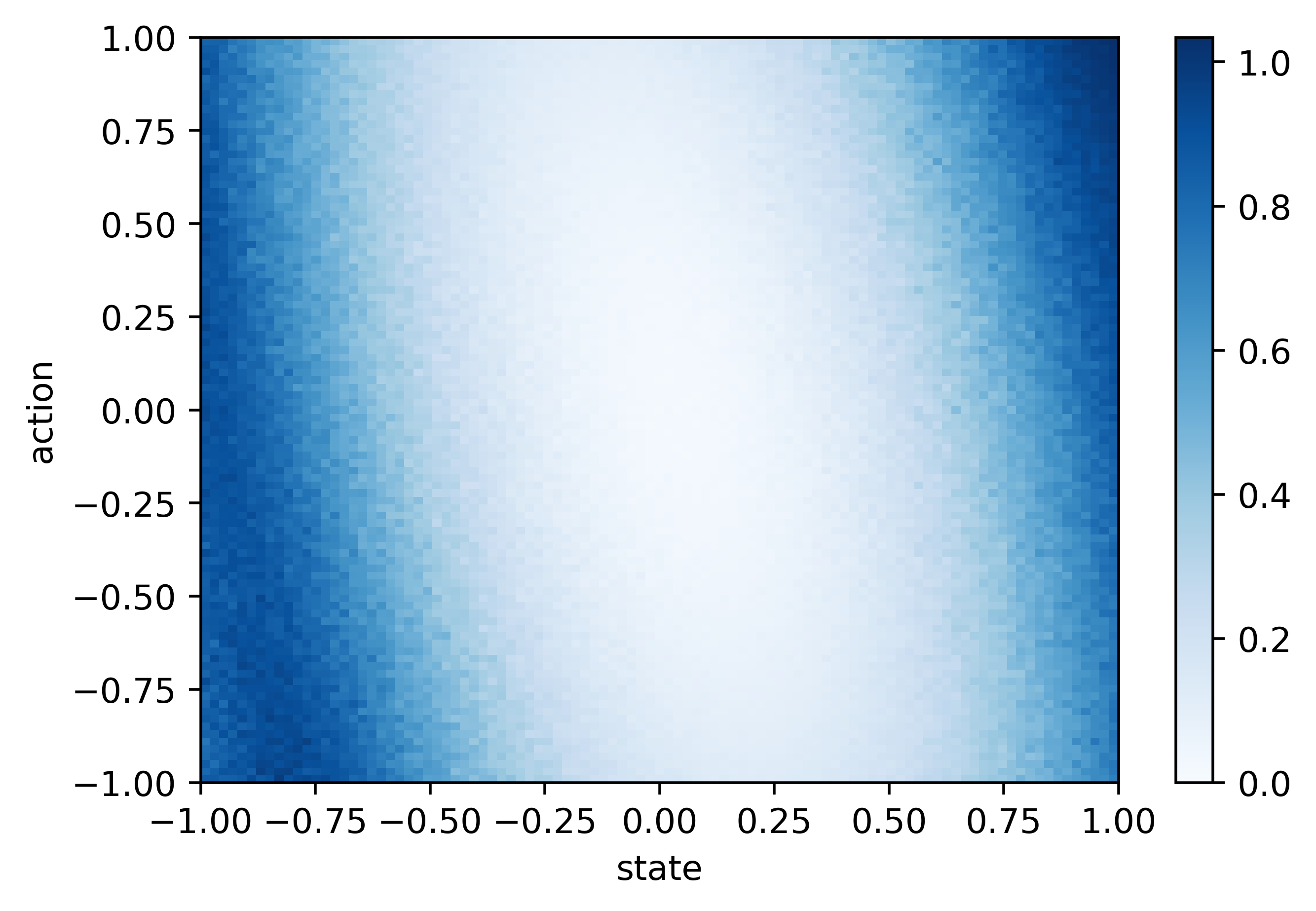}
\includegraphics[width=0.32\linewidth]{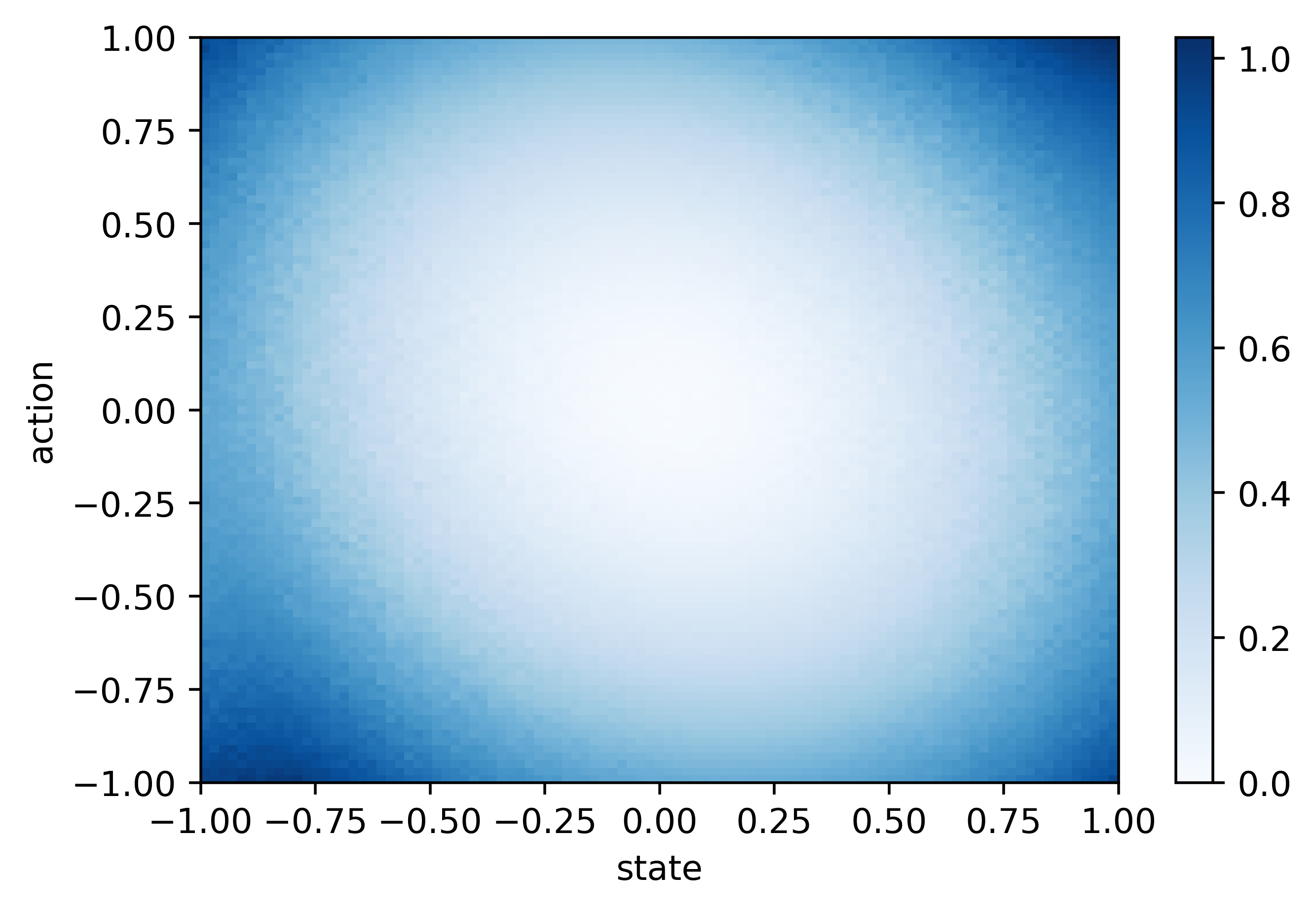}
\includegraphics[width=0.32\linewidth]{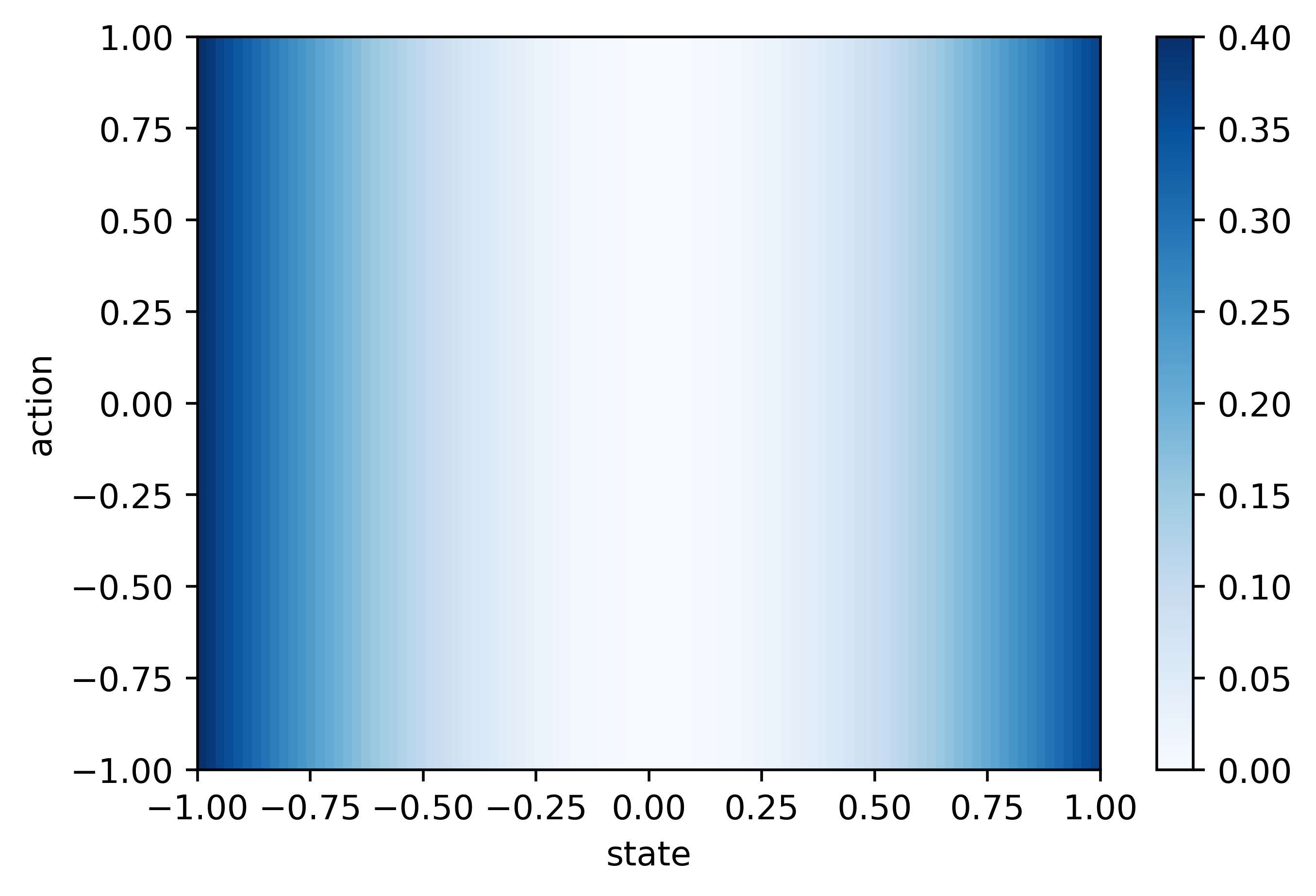}
\caption{\label{fig:q_table2} Q tables with $100$ states and $100$ actions: $q_0(s,a)$, $q_4(s,a)$ and $q_5(s,a)$ (from left to right).}
\end{figure}

	\begin{figure}[H]
\centering
\includegraphics[width=0.5\linewidth]{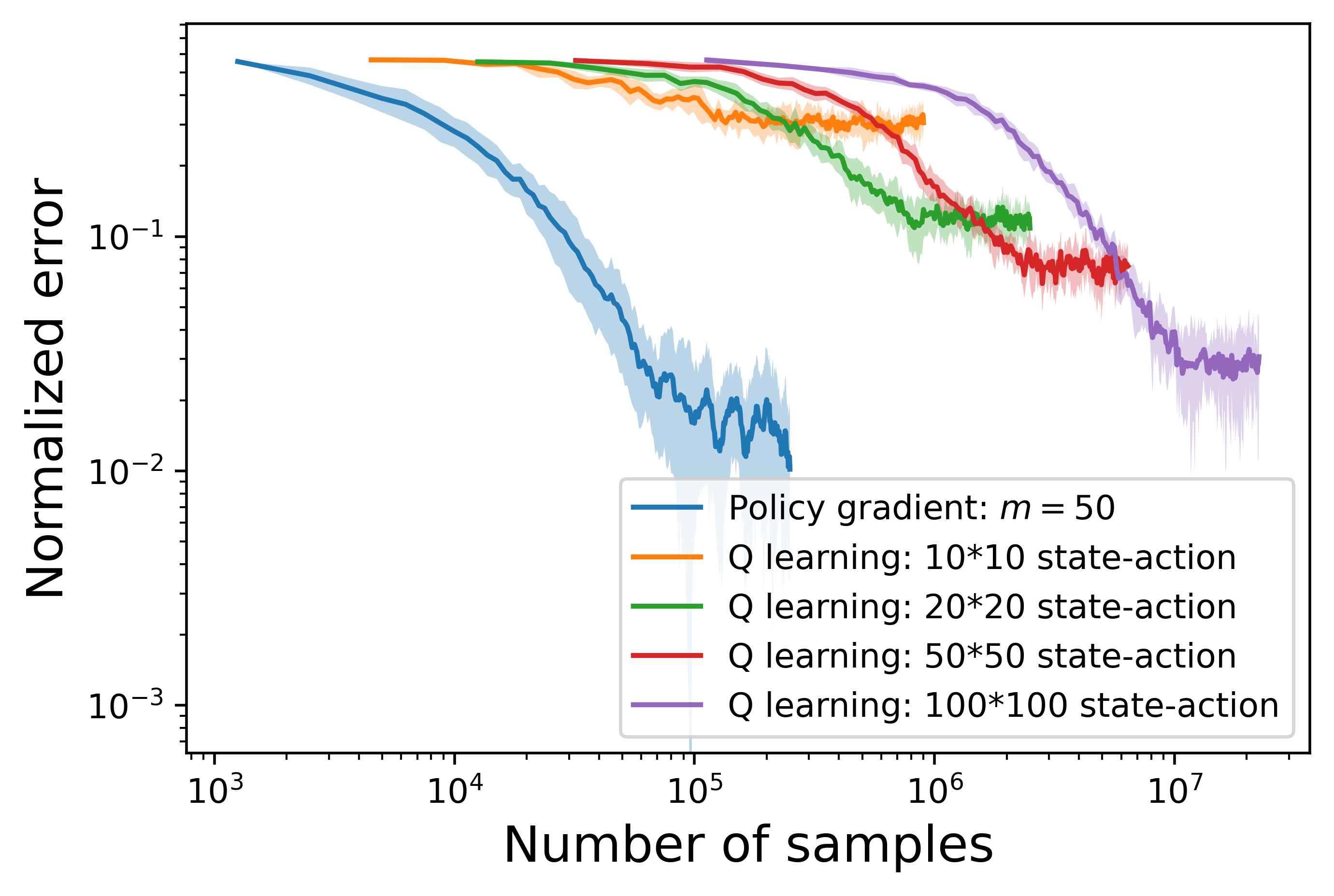}
\caption{\label{fig:q_learning}Comparison between Q-learning and the policy gradient method (log-log scale). (Average across 10 scenarios.)}
\end{figure}

\section{Proofs of Technical Results}\label{appendix:missing_proofs}

We now give the proofs that were omitted in the text.

\subsection{Proofs in Section \ref{sec:model_based_regularity}}\label{appendix:missing_proofs_sec3_1}

\begin{proof}[{\bf Proof of Lemma \ref{lemma:covariance}}] Denote by $\{x_t\}_{t=0}^T$ the state trajectory induced by an arbitrary control $\pmb{K}$. By Assumption \ref{ass:State} the matrix $\mathbb{E}[x_0 x_0^\top ]$ is positive definite. For $t \ge 1$, we have
$$\mathbb{E}[x_t x_t^\top ] =(A-BK_{t-1}) \mathbb{E}[x_{t-1}x_{t-1}^\top](A-BK_{t-1})^\top + \mathbb{E}[w_{t-1} w_{t-1}^\top].$$ 
Now $(A-BK_{t-1}) \mathbb{E}[x_{t-1}x_{t-1}^\top](A-BK_{t-1})^\top$ is positive semi-definite and $\mathbb{E}[w_{t-1} w_{t-1}^\top]$ is positive definite. Hence $\mathbb{E}[x_t x_t^\top ]$ is positive definite and as a result $\sx>0$. In this case, we can simply take $\sx =$\\ $\min(\mathbb{E}[x_0x_0^\top],\sigma_{\min}(W))$.
\end{proof}

\begin{proof}[{\bf Proof of Proposition \ref{prop:P_positive}}]
This can be proved by backward induction. For $t=T$,  $ P_{T}^{\pmb{K}} = Q_T$ is positive definite since $Q_T$ is positive definite. Assume $P_{t+1}^{\pmb{K}}$ is positive definite for some $t+1$, then take any $z \in \mathbb{R}^d$ such that $z \neq 0$,
\[
z^{\top} P_t^{\pmb{K}} z=z^{\top} Q_t\,z+z^{\top}K_t^{\top}R_tK_tz+z^{\top}\left(A-BK_t\right)^{\top}P_{t+1}^{\pmb{K}}\left(A-BK_t\right)z > 0.
\]
The last inequality holds since $z^{\top} Q_t\,z>0$, $z^{\top}K_t^{\top}R_tK_tz \ge 0$ and  $z^{\top}\left(A-BK_t\right)^{\top}P_{t+1}^{\pmb{K}}\left(A-BK_t\right)z \ge 0.$ By backward induction, we have $P_t^{\pmb{K}}$ positive definite, $\forall\,t=0,1,\cdots,T$.
\end{proof}

To prove Lemma \ref{lemma 11}, let us start with a useful result for the value function. Define the value function $V_{\pmb{K}}(x,\tau)$ for $\tau=0,1,\cdots,T-1$, as 
\begin{equation*}
    V_{\pmb{K}}(x,\tau) = \mathbb{E}_{\pmb{w}}\left.\left[\sum_{t=\tau}^{T-1}(x_t^{\top}Q_tx_t+u_t^{\top}R_tu_t)+x_T^{\top}Q_Tx_T\right|x_{\tau} = x\right]=x^{\top}P_\tau x +{ L_{\tau}},
\end{equation*}
 with terminal condition 
\begin{equation*}
    V_{\pmb{K}}(x,T)=x^{\top}Q_Tx ,
\end{equation*}
where ${ L_\tau}$ is defined in \eqref{qt}.
We then define the $Q$ function, $Q_{\pmb{K}}(x,u,\tau)$ for $\tau=0,1,\cdots,T-1$ as
\begin{equation*}
    Q_{\pmb{K}}(x,u,\tau)=x^{\top}Q_\tau x + u^{\top}R_\tau u + \mathbb{E}_{w_{\tau}}\left[ V_{\pmb{K}}(Ax+Bu+w_{\tau},\tau +1)\right],
\end{equation*}
and the advantage function
\begin{equation*}
    A_{\pmb{K}}(x,u,\tau) = Q_{\pmb{K}}(x,u,\tau)-V_{\pmb{K}}(x,\tau).
\end{equation*}
Note that $C(\pmb{K}) = \mathbb{E}_{x_0\sim \mathcal{D}}[V(x_0,0)]$. Then we can write the difference of value functions between $\pmb{K}$ and $\pmb{K}^{\prime}$ in terms of advantage functions.
\begin{Lemma}\label{lemma 10}
Assume $\pmb{K}$ and $\pmb{K}'$ have finite costs. Denote $\{x_t'\}_{t=0}^T$ and $\{u_t'\}_{t=0}^{T-1}$ as the state and control sequences of a single trajectory generated by $\pmb{K}'$ starting from $x_0^\prime=x_0=x$, then
\begin{equation}\label{lemma 10 equ 1}
    V_{\pmb{K}'}(x,0)-V_{\pmb{K}}(x,0) = \mathbb{E}_{\pmb{w}}\left[\sum_{t=0}^{T-1} A_{\pmb{K}}(x_t',u_t',t)\right],
\end{equation}
and
$A_{\pmb{K}}(x,-K_\tau'x,\tau)\quad= 2x^{\top}(K_\tau'-K_\tau)^{\top}E_\tau x +x^{\top}(K_\tau'-K_\tau)^{\top}(R_\tau+B^{\top}P_{\tau +1}B)(K_\tau'-K_\tau)x,$
where $E_{\tau}$ is defined in \eqref{eq:barEt}.
\end{Lemma}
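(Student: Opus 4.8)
The plan is to establish the first identity by a telescoping (``performance difference'') argument and the second by a direct computation using the backward recursions for $P_\tau$ and $L_\tau$.

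For the first identity, write $c_t(x,u) = x^{\top}Q_tx + u^{\top}R_tu$ for $t\le T-1$ and $c_T(x) = x^{\top}Q_Tx$, so that $V_{\pmb{K}'}(x,0) = \mathbb{E}_{\pmb{w}}\big[\sum_{t=0}^{T-1}c_t(x_t',u_t') + c_T(x_T')\big]$. I would insert the telescoping sum $\sum_{t=0}^{T-1}\big(V_{\pmb{K}}(x_{t+1}',t+1) - V_{\pmb{K}}(x_t',t)\big) = V_{\pmb{K}}(x_T',T) - V_{\pmb{K}}(x_0',0) = c_T(x_T') - V_{\pmb{K}}(x,0)$, using $V_{\pmb{K}}(\cdot,T) = (\cdot)^{\top}Q_T(\cdot)$ and $x_0' = x$. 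Since $V_{\pmb{K}}(x,0)$ is deterministic given $x$, rearranging gives $V_{\pmb{K}'}(x,0) - V_{\pmb{K}}(x,0) = \mathbb{E}_{\pmb{w}}\big[\sum_{t=0}^{T-1}\big(c_t(x_t',u_t') + V_{\pmb{K}}(x_{t+1}',t+1) - V_{\pmb{K}}(x_t',t)\big)\big]$. Because $x_{t+1}' = Ax_t' + Bu_t' + w_t$ with $w_t$ independent of $(x_t',u_t')$, the tower property gives $\mathbb{E}[V_{\pmb{K}}(x_{t+1}',t+1)\mid x_t',u_t'] = \mathbb{E}_{w_t}[V_{\pmb{K}}(Ax_t'+Bu_t'+w_t,t+1)]$, so the bracketed term has conditional expectation $Q_{\pmb{K}}(x_t',u_t',t) - V_{\pmb{K}}(x_t',t) = A_{\pmb{K}}(x_t',u_t',t)$; the finiteness of $C(\pmb{K})$ and $C(\pmb{K}')$ justifies all exchanges of sum and expectation.

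For the second identity, I would substitute $u = -K_\tau'x$ into the definition of $Q_{\pmb{K}}$. Using $V_{\pmb{K}}(y,\tau+1) = y^{\top}P_{\tau+1}y + L_{\tau+1}$, the fact that $w_\tau$ is mean-zero, and $\mathbb{E}[w_\tau^{\top}P_{\tau+1}w_\tau] = \Tr(WP_{\tau+1})$, one gets $\mathbb{E}_{w_\tau}[V_{\pmb{K}}((A-BK_\tau')x + w_\tau,\tau+1)] = x^{\top}(A-BK_\tau')^{\top}P_{\tau+1}(A-BK_\tau')x + \Tr(WP_{\tau+1}) + L_{\tau+1}$, and $\Tr(WP_{\tau+1}) + L_{\tau+1} = L_\tau$ by \eqref{qt}. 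Hence $Q_{\pmb{K}}(x,-K_\tau'x,\tau) = x^{\top}\big[Q_\tau + (K_\tau')^{\top}R_\tau K_\tau' + (A-BK_\tau')^{\top}P_{\tau+1}(A-BK_\tau')\big]x + L_\tau$, while $V_{\pmb{K}}(x,\tau) = x^{\top}P_\tau x + L_\tau$ with $P_\tau = Q_\tau + K_\tau^{\top}R_\tau K_\tau + (A-BK_\tau)^{\top}P_{\tau+1}(A-BK_\tau)$ by \eqref{Definition of P}. Subtracting, the $Q_\tau$ and $L_\tau$ terms cancel and $A_{\pmb{K}}(x,-K_\tau'x,\tau) = x^{\top}\big[(K_\tau')^{\top}R_\tau K_\tau' - K_\tau^{\top}R_\tau K_\tau + (A-BK_\tau')^{\top}P_{\tau+1}(A-BK_\tau') - (A-BK_\tau)^{\top}P_{\tau+1}(A-BK_\tau)\big]x$.

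To finish, set $\Delta = K_\tau' - K_\tau$ and expand each difference around $K_\tau$: $(K_\tau')^{\top}R_\tau K_\tau' - K_\tau^{\top}R_\tau K_\tau = \Delta^{\top}R_\tau K_\tau + K_\tau^{\top}R_\tau\Delta + \Delta^{\top}R_\tau\Delta$, and, with $M = A - BK_\tau$, $(M-B\Delta)^{\top}P_{\tau+1}(M-B\Delta) - M^{\top}P_{\tau+1}M = -\Delta^{\top}B^{\top}P_{\tau+1}M - M^{\top}P_{\tau+1}B\Delta + \Delta^{\top}B^{\top}P_{\tau+1}B\Delta$. The terms linear in $\Delta$ sum to $\Delta^{\top}(R_\tau K_\tau - B^{\top}P_{\tau+1}M) + (R_\tau K_\tau - B^{\top}P_{\tau+1}M)^{\top}\Delta$, and $R_\tau K_\tau - B^{\top}P_{\tau+1}(A-BK_\tau) = (R_\tau + B^{\top}P_{\tau+1}B)K_\tau - B^{\top}P_{\tau+1}A = E_\tau$ by \eqref{eq:barEt}. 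Sandwiching between $x^{\top}(\cdot)x$ turns the two mutually-transposed linear scalars into $2x^{\top}\Delta^{\top}E_\tau x$ and combines the quadratic terms into $x^{\top}\Delta^{\top}(R_\tau + B^{\top}P_{\tau+1}B)\Delta x$, which is exactly the claimed expression. The only points requiring care are the filtration/independence structure in the first part and integrability of the quadratic forms, both of which follow from the standing finite-second-moment and finite-cost hypotheses; I do not anticipate a substantive obstacle.
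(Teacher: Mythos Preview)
Your proposal is correct and follows essentially the same route as the paper: the first identity is obtained by the same telescoping/performance-difference argument (adding and subtracting $V_{\pmb{K}}(x_t',t)$ and identifying the summand with $Q_{\pmb{K}}-V_{\pmb{K}}$), and the second by the same direct expansion of $Q_{\pmb{K}}(x,-K_\tau'x,\tau)-V_{\pmb{K}}(x,\tau)$ using \eqref{Definition of P}, \eqref{qt} and \eqref{eq:barEt}. Your write-up of the quadratic expansion via $\Delta=K_\tau'-K_\tau$ and $M=A-BK_\tau$ is a bit more explicit than the paper's, but the argument is the same.
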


\begin{proof} Denote by $c_t'(x)$ the cost generated by $\pmb{K}'$ with a single trajectory starting from $x'_0=x_0=x$. That is, $c_t^{\prime}(x) = (x_t^{\prime})^\top Q_tx_t^{\prime} + (u_t^{\prime})^\top R_tu_t^{\prime},\ t = 0,1,\cdots,T-1,$ and
$c_T^{\prime}(x) = (x_T^{\prime})^\top Q_Tx_T^{\prime},$
with $u_{t}' = - K'_t x'_t,\,\,\,\, x_{t+1}' = Ax_t' +Bu_t' +w_t,\,\,\,\, x_0^\prime=x.$

\noindent Therefore,
\begin{equation*}
\begin{split}
     V_{\pmb{K}'}(x,0)-V_{\pmb{K}}(x,0) & = \mathbb{E}_{\pmb{w}}\left[\sum_{t=0}^{T} c_t'(x)\right]- V_{\pmb{K}}(x,0) = \mathbb{E}_{\pmb{w}}\left[\sum_{t=0}^{T} \left(c_t'(x)+V_{\pmb{K}}(x_t',t)-V_{\pmb{K}}(x_t',t)\right)\right]- V_{\pmb{K}}(x,0)\\
     & = \mathbb{E}_{\pmb{w}}\left[\sum_{t=0}^{T-1}\left( c_t'(x)+V_{\pmb{K}}(x_{t+1}',t+1)-V_{\pmb{K}}(x_t',t)\right)\right]\\ &=\mathbb{E}_{\pmb{w}}\left.\left[ \sum_{t=0}^{T-1}\left( Q_{\pmb{K}}(x_t',u_t',t)-V_{\pmb{K}}(x_t',t)\right)\right|x_0=x\right]
      = \mathbb{E}_{\pmb{w}}\left.\left[\sum_{t=0}^{T-1} A_{\pmb{K}}(x_t',u_t',t)\right|x_0=x\right],
\end{split}
\end{equation*}
where the third equality holds since $c_T^{\prime}(x)=V_{\pmb{K}}(x_T^{\prime},T)$ with the same single trajectory.
For $u=-K_\tau'x$,
\begin{equation}\label{eq:At}
\begin{split}
    A_{\pmb{K}}(x,-K_\tau'x,\tau) & = Q_{\pmb{K}}(x,-K_\tau'x,\tau )-V_{\pmb{K}}(x,\tau) \\
    & = x^{\top}(Q_\tau+(K_\tau')^{\top}R_\tau K_\tau')x + \mathbb{E}_{w_\tau} \left[V_{\pmb{K}}((A-BK_\tau')x+w_{\tau},\tau +1)\right]- V_{\pmb{K}}(x,\tau) \\
    & = x^{\top}(Q_\tau+(K_\tau')^{\top}R_\tau K_\tau')x + \left(x^{\top}(A-BK_\tau')^{\top}P_{\tau+1}(A-BK_\tau')x +\Tr({WP_{\tau+1}})+{ L_{\tau+1}}\right) \\
    &\quad - \left( x^{\top}P_{\tau}x + { L_{\tau}}\right)\\
    & = x^{\top}(Q_\tau+(K_\tau'-K_\tau+K_\tau)^{\top}R_\tau(K_\tau'-K_\tau+K_\tau))x \\
    & \quad + x^{\top}(A-BK_\tau-B(K_\tau'-K_\tau))^{\top}P_{\tau +1}(A-BK_\tau-B(K_\tau'-K_\tau))x \\
    & \quad - x^{\top}(Q_\tau+K_\tau^{\top}R_\tau K_\tau+(A-BK_\tau)^{\top}P_{\tau+1}(A-BK_\tau))x \\
    & = 2x^{\top}(K_\tau'-K_\tau)^{\top}((R_\tau+B^{\top}P_{\tau +1}B)K_\tau-B^{\top}P_{\tau +1}A)x \\ 
    & \quad + x^{\top}(K_\tau'-K_\tau)^{\top}(R_\tau+B^{\top}P_{\tau +1}B)(K_\tau'-K_\tau)x.
\end{split}
\end{equation}
\end{proof}

\begin{proof}[{\bf Proof of Lemma \ref{lemma 11}}]
First for any $K_{\tau}^{\prime}$, from \eqref{eq:At},
\begin{equation}\label{Lemma 11 proof equ 1}
\begin{split}
    A_{\pmb{K}}(x,-K_\tau'x,\tau) & = Q_{\pmb{K}}(x,-K_\tau'x,\tau )-V_{\pmb{K}}(x,\tau) \\
    & = 2\Tr(xx^{\top}(K_\tau'-K_\tau)^{\top}E_\tau) + \Tr(xx^{\top}(K_\tau'-K_\tau)^{\top}(R_\tau+B^{\top}P_{\tau +1}B)(K_\tau'-K_\tau))\\
    & = \Tr\big(xx^{\top}(K_\tau'-K_\tau+(R_\tau+B^{\top}P_{\tau+1}B)^{-1}E_\tau)^{\top}(R_\tau+B^{\top}P_{\tau +1}B)\\
    & \quad (K_\tau'-K_\tau+(R_\tau+B^{\top}P_{\tau +1}B)^{-1}E_\tau)\big) -\Tr(xx^{\top}E_\tau^{\top}(R_\tau+B^{\top}P_{\tau +1}B)^{-1}E_\tau)\\
    & \geq -\Tr(xx^{\top}E_\tau^{\top}(R_\tau+B^{\top}P_{\tau +1}B)^{-1}E_\tau),
\end{split}
\end{equation}
with equality holds when $K_\tau'=K_\tau-(R_\tau + B^{\top}P_{\tau +1}B)^{-1}E_\tau$.
Then,
\begin{eqnarray*}
 C(\pmb{K})-C(\pmb{K}^*) & =& -\mathbb{E}\sum_{t=0}^{T-1}A_{\pmb{K}}(x_t^*,u_t^*,t)  \leq \mathbb{E}\sum_{t=0}^{T-1}\Tr \left(x_t^*(x_t^*)^{\top}E_t^{\top}(R_t+B^{\top}P_{t+1} B)^{-1}E_t\right)\\
    & \leq &\|\Sigma_{\pmb{K}^*}\|\sum_{t=0}^{T-1}\Tr(E_t^{\top}(R_t+B^{\top}P_{t+1}B)^{-1}E_t) \leq \frac{\|\Sigma_{\pmb{K}^*}\|}{\sr}\sum_{t=0}^{T-1}\Tr(E_t^{\top}E_t)\\
    & \leq& \frac{\|\Sigma_{\pmb{K}^*}\|}{4\sx^2\sr}\sum_{t=0}^{T-1}\Tr(\nabla_t C(\pmb{K})^{\top}\nabla_t C(\pmb{K})),
\end{eqnarray*}
where $\sx$ is defined in \eqref{Defn of barmu} and $\sr$ is defined in \eqref{Defn of barSigmaR}.
For the lower bound, consider $K_t'=K_t-(R_t + B^{\top}P_{t +1}B)^{-1}E_t$ where the equality holds in \eqref{Lemma 11 proof equ 1}. Using  $C(\pmb{K}^*)
\leq C(\pmb{K}^\prime)$
\begin{equation}\label{Lemma 11 proof equ 3}
\begin{split}
    C(\pmb{K})-C(\pmb{K}^*) & \geq C(\pmb{K})-C(\pmb{K'}) = -\mathbb{E}\sum_{t=0}^{T-1}A_{\pmb{K}}(x_t',u_t',t) = \mathbb{E}\sum_{t=0}^{T-1}\Tr(x_t'(x_t')^{\top}E_t^{\top}(R_t+B^{\top}P_{t+1} B)^{-1}E_t)\\
    & \geq \sx \sum_{t=0}^{T-1} \frac{1}{\|R_t+B^{\top}P_{t+1} B\|}\Tr(E_t^{\top}E_t)
\end{split}
\end{equation}
\end{proof}

\begin{proof}[{\bf Proof of Lemma \ref{lemma 12}}]
By lemma \ref{lemma 10} we have
\begin{equation*}
\begin{split}
    C(\pmb{K}')-C(\pmb{K}) & = \mathbb{E}\left[\sum_{t=0}^{T-1} A_{\pmb{K}}(x_t',-K_t^{\prime}x_t',t)\right]\\
    & = \sum_{t=0}^{T-1} \left(2\Tr(\Sigma_t'(K_t'-K_t)^{\top}E_t) + \Tr(\Sigma_t'(K_t'-K_t)^{\top}(R_t+B^{\top}P_{t +1}B)(K_t'-K_t))\right).
\end{split}
\end{equation*}
\end{proof}

\begin{proof}[{\bf Proof of Lemma \ref{lemma 13}}]
For $t=0,1,\cdots,T$,
\begin{equation*}
    C(\pmb{K})\geq\mathbb{E}[x_t^{\top}P_tx_t] \geq \|P_t\|\sigma_{\min}(\mathbb{E}[x_tx_t^{\top}])\geq \sx\|P_t\|,
\end{equation*}
\begin{equation*}
    C(\pmb{K})=\sum_{t=0}^{T-1}\Tr(\mathbb{E}[x_tx_t^{\top}](Q_t+K_t^{\top}R_tK_t))+\Tr(\mathbb{E}[x_Tx_T^{\top}]Q_T)\geq \sq\Tr(\Sigma_{\pmb{K}})\geq \sq\|\Sigma_{\pmb{K}}\| .
\end{equation*}
Therefore the statement in Lemma \ref{lemma 13} follows provided that $\sx>0$ and Assumption \ref{ass:parameters} holds.
\end{proof}

\begin{proof} [{\bf Proof of Proposition \ref{prop:Sigma_Gamma_relation}}]
Recall that $ \Sigma_{t} =  \mathbb{E}\left[x_t x_t^{\top}\right]$.
\noindent Note that 
\begin{eqnarray*}
\Sigma_1 &=& \mathbb{E}\left[x_{1}x_{1}^\top\right] = \mathbb{E}\left[\left((A-B\,K_0) x_0+w_0\right)\left((A-B\,K_0)x_0+w_0\right)^\top\right]\\ \nonumber
&=& (A-B\,K_0)\Sigma_0 \left(A-B\,K_0\right)^\top +W = \mathcal{G}_0(\Sigma_0) +W.
\end{eqnarray*}

\noindent Now we first prove that
\begin{eqnarray}\label{eqn:intermedidate1}
\Sigma_t = \mathcal{G}_{t-1}(\Sigma_0) + \sum_{s=1}^{t-1} D_{t-1,s} W D_{t-1,s}^{\top} +W,\ \forall\, t=2,3,\cdots,T.
\end{eqnarray}
\noindent When $t=2$,
\begin{eqnarray*}
\Sigma_2 &=& \mathbb{E}\left[x_{2}x_{2}^\top\right] = \mathbb{E}\left[\left((A-B\,K_1) x_1+w_1\right)\left((A-B\,K_1)x_1+w_1\right)^\top\right]\\
&=& (A-B\,K_1)\Sigma_1 \left(A-B\,K_1\right)^\top +W = \mathcal{G}_1(\Sigma_0) + (A-BK_1)W(A-BK_1)^\top + W,
\end{eqnarray*}
which satisfies \eqref{eqn:intermedidate1}. Assume \eqref{eqn:intermedidate1} holds for $t\leq k$. Then for $t =  k+1$,
\begin{eqnarray*}
\mathbb{E}\left[x_{t+1}x_{t+1}^\top\right] &= &\mathbb{E}\left[\left((A-B\,K_t) x_t+w_t\right)\left((A-B\,K_t)x_t+w_t\right)^\top\right]\\
&=& (A-B\,K_t)\Sigma_t \left(A-B\,K_t\right)^\top +W
=  \mathcal{G}_{t}(\Sigma_0) + \sum_{s=1}^{t} D_{t,s} W D_{t,s}^{\top} +W.
\end{eqnarray*}
Therefore \eqref{eqn:intermedidate1} holds,  $\forall\,t=1,2,\cdots,T$. Finally,
\[
\Sigma_{\pmb{K}} = \sum_{t=0}^T \Sigma_t = \Sigma_0
+\sum_{t=0}^{T-1}\mathcal{G}_t(\Sigma_0) +\sum_{t=1}^{T-1}\sum_{s=1}^t D_{t,s}WD_{t,s}^{\top} + TW = \mathcal{T}_{\pmb{K}}(\Sigma_0)+\Delta(\pmb{K},W).
\]

\end{proof}

\subsection{Proofs in Section \ref{sec:model_based_perturbation}}\label{appendix:missing_proofs_sec3_2}

\begin{proof}[{\bf Proof of Lemma \ref{Lemma 20}}]
By direct calculation,
\begin{eqnarray}\label{eqn:G_t_bound}
\|\mathcal{G}_t\| \leq \rho^{2(t+1)}, \quad \mbox{and} \quad\|\mathcal{G}^{\prime}_t\| \leq \rho^{2(t+1)}.
\end{eqnarray}

\noindent Denote $\mathcal{F}_t=\mathcal{F}_{K_t}$ and $\mathcal{F}_t^{\prime} = \mathcal{F}_{K^{\prime}_t}$ to ease the exposition. Then for any symmetric matrix $\Sigma\in\mathbb{R}^{d\times d}$ and $t \ge 0$,
\begin{eqnarray*}
\|(\mathcal{G}_{t+1}^{\prime}-\mathcal{G}_{t+1})(\Sigma)\| &=& \|\mathcal{F}_{t+1}^{\prime}\circ\mathcal{G}_t^{\prime}(\Sigma)-\mathcal{F}_{t+1}\circ\mathcal{G}_t(\Sigma)\|\\
&=& \|\mathcal{F}_{t+1}^{\prime}\circ\mathcal{G}_{t}^{\prime}(\Sigma)-\mathcal{F}_{t+1}^{\prime}\circ\mathcal{G}_{t}(\Sigma)+\mathcal{F}_{t+1}^{\prime}\circ\mathcal{G}_{t}(\Sigma)-\mathcal{F}_{t+1}\circ\mathcal{G}_{t}(\Sigma)\| \\
&\leq &\|\mathcal{F}_{t+1}^{\prime}\circ\mathcal{G}_{t}^{\prime}(\Sigma)-\mathcal{F}_{t+1}^{\prime}\circ\mathcal{G}_{t}(\Sigma)\|+\|\mathcal{F}_{t+1}^{\prime}\circ\mathcal{G}_{t}(\Sigma)-\mathcal{F}_{t+1}\circ\mathcal{G}_{t}(\Sigma)\| \\
& =  & \|\mathcal{F}_{t+1}^{\prime}\circ(\mathcal{G}_{t}^{\prime}-\mathcal{G}_{t})(\Sigma)\|+\|(\mathcal{F}_{t+1}^{\prime}-\mathcal{F}_{t+1})\circ\mathcal{G}_{t}(\Sigma)\|\\
&\leq & \|\mathcal{F}_{t+1}^{\prime}\| \,\|(\mathcal{G}_{t}^{\prime}-\mathcal{G}_{t})(\Sigma)\|+\|\mathcal{G}_{t}\| \,\|\mathcal{F}_{t+1}^{\prime}-\mathcal{F}_{t+1}\|\,\|\Sigma\|\\
&\leq& \rho^2\|(\mathcal{G}_{t}^{\prime}-\mathcal{G}_{t})(\Sigma)\| +\rho^{2(t+1)}\|\mathcal{F}_{t+1}^{\prime}-\mathcal{F}_{t+1}\| \|\Sigma\|.
\end{eqnarray*}
Therefore,
\begin{eqnarray}\label{tn1}
\|(\mathcal{G}_{t+1}^{\prime}-\mathcal{G}_{t+1})(\Sigma)\|
\leq \rho^2\|(\mathcal{G}_{t}^{\prime}-\mathcal{G}_{t})(\Sigma)\| +\rho^{2(t+1)}\|\mathcal{F}_{t+1}^{\prime}-\mathcal{F}_{t+1}\| \|\Sigma\|.
\end{eqnarray}
Summing \eqref{tn1} up for $t \in \{1,2,\cdots,T-2\}$ with $\|\mathcal{G}^{\prime}_0-\mathcal{G}_0\|=\|\mathcal{F}_0^{\prime}-\mathcal{F}_0\|$, we have
\[
\sum_{t=0}^{T-1}\Big\|(\mathcal{G}_{t}-\mathcal{G}_{t}^{\prime})(\Sigma)\Big\| \leq \frac{\rho^{2T}-1}{\rho^2-1} \Big( \sum_{t=0}^{T-1}\|\mathcal{F}_{t}-\mathcal{F}^{\prime}_{t}\|\Big)\|\Sigma\|.
\]
\end{proof}

\subsection{Proofs in Section \ref{sec:model_based_convergence}}
\label{appendix:model_based_convergence}

\begin{proof}[{\bf Proof of Lemma \ref{lemma 24}}]
Given \eqref{eq:gd} and condition \eqref{stepsize condition}, we have
$\|K_t'-K_t\| = \eta \|\nabla_{t} C(\pmb{K})\| \leq \frac{\sq\sx}{2C(\pmb{K})\|B\|}.$
Therefore, 
\begin{equation}\label{eqn:BKdiff_bd}
  \|B\|\|K_t'-K_t\| \leq \frac{\sq\sx}{2C(\pmb{K})}
\leq \frac{1}{2}.  
\end{equation}
The last inequality holds since $\sx\leq \frac{C(\pmb{K})}{\sq}$ given by Lemma \ref{lemma 13}.
Therefore, by Lemma \ref{lemma 19},
\begin{equation}\label{F_Kt bound}
    \begin{split}
        \sum_{t=0}^{T-1} \|\mathcal{F}_{K_t}-\mathcal{F}_{K^{\prime}_t}\| 
        & \leq  (2\rho+1)\|B\|\left(\sum_{t=0}^{T-1}\|K_t-K_t^{\prime}\|\right).
    \end{split}
\end{equation}
By Lemmas \ref{lemma 1} and \ref{lemma 12},
\begin{equation}\label{Lemma 24 proof eqn 1}
\begin{split}
     C(\pmb{K}')-C(\pmb{K}) & = \sum_{t=0}^{T-1} \Big[2\Tr\Big(\Sigma_t'(K_t'-K_t)^{\top}E_t\Big) + \Tr\Big(\Sigma_t'(K_t'-K_t)^{\top}(R_t+B^{\top}P_{t +1}B)(K_t'-K_t)\Big)\Big]\\
     & = \sum_{t=0}^{T-1} \Big[-4\eta\Tr\Big(\Sigma_t'\Sigma_tE_t^{\top}E_t\Big) + 4\eta^2\Tr\Big(\Sigma_t'\Sigma_tE_t^{\top}(R_t+B^{\top}P_{t +1}B)E_t\Sigma_t\Big)\Big]\\
     & = \sum_{t=0}^{T-1} \Big[-4\eta\Tr\Big((\Sigma_t'-\Sigma_t+\Sigma_t)\Sigma_tE_t^{\top}E_t\Big) + 4\eta^2\Tr\Big(\Sigma_t'\Sigma_tE_t^{\top}(R_t+B^{\top}P_{t +1}B)E_t\Sigma_t\Big)\Big]\\
     & \leq \sum_{t=0}^{T-1} \Big[-4\eta\Tr\Big(\Sigma_tE_t^{\top}E_t\Sigma_t\Big) + 4\eta\Tr((\Sigma_t'-\Sigma_t)\Sigma_tE_t^{\top}E_t\Sigma_t\Sigma_t^{-1}) \\
     &\qquad + 4\eta^2\Tr\Big(\Sigma_t'\Sigma_tE_t^{\top}(R_t+B^{\top}P_{t +1}B)E_t\Sigma_t\Big)\Big]\\
     & \leq \sum_{t=0}^{T-1} \Big[-4\eta\Tr\Big(\Sigma_tE_t^{\top}E_t\Sigma_t\Big) + 4\eta\frac{\|\Sigma_t'-\Sigma_t\|}{\sigma_{\min}(\Sigma_t)}\Tr\Big(\Sigma_tE_t^{\top}E_t\Sigma_t\Big)\\
     & \qquad \qquad + 4\eta^2\|\Sigma_t'(R_t+B^{\top}P_{t +1}B) \|\Tr\Big(\Sigma_tE_t^{\top}E_t\Sigma_t\Big)\Big]\\
     &  \leq -\eta\Big(1 - \frac{\sum_{t=0}^{T-1} \|\Sigma_t^{\prime}-\Sigma_t\|}{\sx}
     - \eta\|\Sigma_{\pmb{K}^{\prime}}\|\sum_{t=0}^{T-1}\|R_t+B^{\top}P_{t +1}B \|\Big) \sum_{t=0}^{T-1} \Big[\Tr(\nabla_tC(\pmb{K})^{\top} \nabla_tC(\pmb{K}))\Big].\\
\end{split}
\end{equation}

By Lemma \ref{lemma 11}, we have
\begin{equation}\label{Lemma 24 proof eqn 2}
\begin{split}
     C(\pmb{K}')-C(\pmb{K}) 
     & \leq -\eta\Big(1 - \frac{\sum_{t=0}^{T-1} \|\Sigma_t^{\prime}-\Sigma_t\|}{\sx}
     - \eta\|\Sigma_{\pmb{K}^{\prime}}\|\sum_{t=0}^{T-1}\|R_t+B^{\top}P_{t +1}B \|\Big)
     \Big(\frac{4\sx^2\sr}{\|\Sigma_{\pmb{K}^*}\|}\Big)\Big(C(\pmb{K})-C(\pmb{K}^*)\Big)
\end{split}
\end{equation}
provided that
\begin{equation}\label{Lemma 24 proof eqn 3}
\begin{split}
     1 - \frac{\sum_{t=0}^{T-1} \|\Sigma_t^{\prime}-\Sigma_t\|}{\sx}
     - \eta\|\Sigma_{\pmb{K}^{\prime}}\|\sum_{t=0}^{T-1}\|R_t+B^{\top}P_{t +1}B \| > 0.
\end{split}
\end{equation}

\noindent By \eqref{eq:Colla_intem}, \eqref{eq:gd}, and \eqref{eqn:BKdiff_bd}, 
\begin{eqnarray*}
\sum_{t=0}^{T-1} \|\Sigma_t^{\prime}-\Sigma_t\| \leq \frac{ \rho^{2T}-1}{\rho^2-1} \left(\frac{C(\pmb{K})}{\sq}+T\|W\|\right)\left(\eta(2\rho+1)\|B\|\sum_{t=0}^{T-1}\|\nabla_t C(\pmb{K})\|\right).
\end{eqnarray*}
Given the step size condition in \eqref{stepsize condition}, we have
\begin{equation}\label{eq:stepsize int}
    \eta(2\rho+1)\|B\|\sum_{t=0}^{T-1}\|\nabla_tC(\pmb{K})\| \leq \eta(2\rho+1)\|B\|\Big(T\cdot\max_t\{\|\nabla_tC(\pmb{K})\|\}\Big) \leq \frac{(\rho^2-1)\sq\sx}{2(\rho^{2T}-1)(C(\pmb{K})+\sq T\|W\|)}.
\end{equation}
Then, by Corollary \ref{Corr 20} and \eqref{F_Kt bound},
\begin{eqnarray*}
\frac{\|\Sigma_{\pmb{K}^{\prime}}-\Sigma_{\pmb{K}}\|}{\sx} 
&\leq&  \frac{\rho^{2T}-1}{\rho^2-1} \Big( \sum_{t=0}^{T-1}\|\mathcal{F}_{K_t}-\mathcal{F}_{K^{\prime}_t}\|\Big)\frac{\|\Sigma_0\|+T\|W\|}{\sx} \\ 
&\leq& \frac{\rho^{2T}-1}{\rho^2-1}(2\rho+1)\|B\|\left(\sum_{t=0}^{T-1}\eta\|\nabla_tC(\pmb{K}\|\right)\frac{C(\pmb{K})+\sq T \|W\|}{\sq\sx} \leq \frac{1}{2},
\end{eqnarray*}
where the last step holds by \eqref{eq:stepsize int}.
Therefore, the bound of $\|\Sigma_{\pmb{K}^\prime}\|$ in \eqref{Lemma 24 proof eqn 3} is given by
\begin{equation}\label{eq:SigmaKp_bound}
    \|\Sigma_{\pmb{K}^\prime}\|\leq\|\Sigma_{\pmb{K}^{\prime}}-\Sigma_{\pmb{K}}\| + \|\Sigma_{\pmb{K}}\|\leq \frac{1}{2}\sx + \frac{C(\pmb{K})}{\sq} \leq \frac{1}{2}\|\Sigma_{\pmb{K}^\prime}\| + \frac{C(\pmb{K})}{\sq},
\end{equation}
which indicates that $\|\Sigma_{\pmb{K}^\prime}\|\leq \frac{2C(\pmb{K})}{\sq}$.
Therefore, \eqref{Lemma 24 proof eqn 3} gives
\begin{equation*}
    \begin{split}
     & 1 - \frac{\sum_{t=0}^{T-1} \|\Sigma_t^{\prime}-\Sigma_t\|}{\sx}
     - \eta\|\Sigma_{\pmb{K}^{\prime}}\|\sum_{t=0}^{T-1}\|R_t+B^{\top}P_{t +1}B \|  \\
     & \geq 1- \frac{ (\rho^{2T}-1)}{(\rho^2-1)\sx} \left(\frac{C(\pmb{K})}{\sq}+T\|W\|\right)\left({\eta(2\rho+1)\|B\|\sum_{t=0}^{T-1}\|\nabla_t C(\pmb{K})\|}\right)  - \eta\frac{2C(\pmb{K})}{\sq}\sum_{t=0}^{T-1}\|R_t+B^{\top}P_{t +1}B \|  \\
    & = 1 - C_1\eta,
    \end{split}
\end{equation*}
where $C_1$ is defined in \eqref{eq:C1}.
So if $\eta \leq \frac{1}{2C_1}$, then,
\begin{equation*}
   1 - \frac{\sum_{t=0}^{T-1} \|\Sigma_t^{\prime}-\Sigma_t\|}{\sx}
     - \eta\|\Sigma_{\pmb{K}^{\prime}}\|\sum_{t=0}^{T-1}\|R_t+B^{\top}P_{t +1}B \| \geq 1-C_1\eta\geq \frac{1}{2} >0.
\end{equation*}
Hence,$
     C(\pmb{K}^{\prime})-C(\pmb{K}) \leq -\frac{\eta}{2}
     \Big(\frac{4\sx^2\sr}{\|\Sigma_{\pmb{K}^*}\|}\Big)\Big(C(\pmb{K})-C(\pmb{K}^*)\Big),$
and 
\begin{equation*}
     C(\pmb{K}^{\prime})-C(\pmb{K}^*)  =  \left(C(\pmb{K}^{\prime})-C(\pmb{K})\right)+\left(C(\pmb{K})-C(\pmb{K}^*)\right) \leq 
     \Big(1-2\eta\frac{\sx^2\sr}{\|\Sigma_{\pmb{K}^*}\|}\Big)\Big(C(\pmb{K})-C(\pmb{K}^*)\Big).
\end{equation*}
\end{proof}

\subsection{Proofs in Section \ref{sc:single_agent_model_free}}
\label{appendix:missing_proofs_sec4}
{Before proceeding to the proof of Theorem \ref{thm:projected_GD}, we show two important Lemmas which provide the intermediate steps. We first show the optimality condition for the projection operator in Lemma \ref{lemma:optimality_projection}. We then show the one-step convergence result in Lemma \ref{lemma 24 projection}.

\begin{Lemma}[Optimality Condition]\label{lemma:optimality_projection} Fix a policy matrix $\pmb{L}^1$ and write $\pmb{L}^* = \Pi_{\mathcal{S}}(\pmb{L}^1)$. Then for any $\pmb{L}^0\in \mathcal{S}$, we have
\begin{eqnarray}\label{eq:optimality_condition}
\sum_{t=0}^{T-1}\Tr \left( (L^0_t - L^*_t)(L^*_t - L^1_t)^{\top}\right) \ge 0.
\end{eqnarray}
\end{Lemma}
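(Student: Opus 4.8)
The plan is to invoke the standard first-order variational characterization of Euclidean projections onto closed convex sets. Since $\mathcal{S}$ is convex and closed, $\pmb{L}^* = \Pi_{\mathcal{S}}(\pmb{L}^1)$ is, by \eqref{eqn:projection}, the unique minimizer over $\pmb{x}\in\mathcal{S}$ of the function $f(\pmb{x}) = \tfrac12\sum_{t=0}^{T-1}\|x_t - L^1_t\|_F^2$. First I would fix an arbitrary $\pmb{L}^0\in\mathcal{S}$ and, using convexity of $\mathcal{S}$, form the segment $\pmb{L}^\theta := (1-\theta)\pmb{L}^* + \theta\pmb{L}^0$, which lies in $\mathcal{S}$ for every $\theta\in[0,1]$. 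Optimality of $\pmb{L}^*$ then gives $f(\pmb{L}^\theta)\ge f(\pmb{L}^*)$ for all such $\theta$, so the scalar function $\phi(\theta):=f(\pmb{L}^\theta)$ attains its minimum over $[0,1]$ at $\theta=0$, whence $\phi'(0^+)\ge 0$.

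The second step is to compute $\phi$ explicitly. Writing $L^\theta_t - L^1_t = (L^*_t - L^1_t) + \theta(L^0_t - L^*_t)$ and expanding each Frobenius norm squared,
\[
\phi(\theta) = \frac12\sum_{t=0}^{T-1}\Big(\|L^*_t - L^1_t\|_F^2 + 2\theta\,\Tr\big((L^*_t - L^1_t)^{\top}(L^0_t - L^*_t)\big) + \theta^2\|L^0_t - L^*_t\|_F^2\Big),
\]
which is a quadratic polynomial in $\theta$; hence differentiation is immediate and $\phi'(0) = \sum_{t=0}^{T-1}\Tr\big((L^*_t - L^1_t)^{\top}(L^0_t - L^*_t)\big)$. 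Finally, applying the trace identity $\Tr(A^{\top}B) = \Tr(BA^{\top})$ with $A = L^*_t - L^1_t$ and $B = L^0_t - L^*_t$ rewrites $\phi'(0)\ge 0$ as $\sum_{t=0}^{T-1}\Tr\big((L^0_t - L^*_t)(L^*_t - L^1_t)^{\top}\big)\ge 0$, which is exactly \eqref{eq:optimality_condition}.

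There is essentially no serious obstacle here: the only points requiring any care are (i) verifying that $\pmb{L}^\theta\in\mathcal{S}$, which is immediate from convexity of $\mathcal{S}$, and (ii) that taking the one-sided derivative at the boundary point $\theta=0$ is legitimate — but since $\phi$ is a genuine quadratic polynomial this is trivial. Note that strict convexity of $f$ (and hence uniqueness of $\pmb{L}^*$) is not even needed for the inequality itself; only the minimizing property of $\pmb{L}^*$ is used.
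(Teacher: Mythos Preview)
Your proof is correct. It is the standard textbook derivation of the first-order optimality condition for projection onto a closed convex set: form the segment into $\mathcal{S}$, observe the objective is minimized at $\theta=0$, and read off the sign of the derivative.

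The paper arrives at the same inequality by contradiction rather than direct differentiation. It assumes \eqref{eq:optimality_condition} fails at some $\pmb{L}^3\in\mathcal{S}$ with value $-b<0$, then builds an explicit convex combination $\overline{\pmb{L}} = \tfrac{1}{M}\pmb{L}^3 + (1-\tfrac{1}{M})\pmb{L}^*$, with $M$ chosen so that the cross term dominates the quadratic term, and checks that $f(\overline{\pmb{L}})<f(\pmb{L}^*)$, contradicting minimality of the projection. Conceptually the two arguments are the same first-order idea; your version is shorter and avoids the somewhat ad hoc choice of $M$, while the paper's version is self-contained (no appeal to differentiability or one-sided derivatives) and makes the contradiction completely explicit. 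Either is fine here.
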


\begin{proof}[Proof of Lemma \ref{lemma:optimality_projection}]
We show condition \eqref{eq:optimality_condition} by contradiction. Assume condition \eqref{eq:optimality_condition} does not hold, then there exist some $\pmb{L}^3 \in \mathcal{S}$ and some constant $b>0$ such that 
\begin{eqnarray}\label{eq:optimality_condition_contra}
\sum_{t=0}^{T-1}\Tr \left( (L^3_t - L^*_t)(L^*_t - L^1_t)^{\top}\right)  = -b <0.
\end{eqnarray}
Let $$M =1 + \frac{\sum_{t=0}^{T-1} \Tr \left((L^3_t-L_t^*)(L^3_t-L_t^*)^{\top}\right)}{-2\sum_{t=0}^{T-1} \Tr \left( (L^3_t-L_t^*)(L_t^*-L_t^1)^{\top} \right)} =1 + \frac{\sum_{t=0}^{T-1} \Tr \left((L^3_t-L_t^*)(L^3_t-L_t^*)^{\top}\right)}{2b} >1,$$ and take 
\[
\overline{\pmb{L}} = \frac{1}{M} \pmb{L}^3 + \left(1-\frac{1}{M}\right) \pmb{L}^*.
\]
By the convexity of $\mathcal{S}$, we have $\overline{\pmb{L}} \in \mathcal{S}$. Hence from definition {\eqref{eqn:projection}}, we have
\begin{eqnarray}\label{eq:mediate3}
\sum_{t=0}^{T-1} \Tr \left((L_t^*-L_t^1)(L_t^*-L_t^1)^{\top}\right) \leq \sum_{t=0}^{T-1} \Tr \left((\overline{L}_t-L_t^1)(\overline{L}_t-L_t^1)^{\top}\right).
\end{eqnarray}
On the other hand,
\begin{eqnarray}
&&\sum_{t=0}^{T-1} \Tr \left((\overline{L}_t-L_t^1)(\overline{L}_t-L_t^1)^{\top}\right)\nonumber\\
&=&\sum_{t=0}^{T-1} \Tr \left(\left(\frac{1}{M}(L^3_t-L^*_t)+(L_t^*-L_t^1)\right)\left(\frac{1}{M}(L^3_t-L^*_t)+(L_t^*-L_t^1)\right)^{\top}\right)\nonumber\\
&=&\sum_{t=0}^{T-1} \frac{1}{M^2}\Tr \left((L^3_t-L_t^*)(L^3_t-L_t^*)^{\top}\right)+\sum_{t=0}^{T-1} \Tr \left((L_t^*-L_t^1)(L_t^*-L_t^1)^{\top}\right) \nonumber\\
&&\qquad \qquad + \sum_{t=0}^{T-1} \frac{2}{M}\Tr \left( (L^3_t-L_t^*)(L_t^*-L_t^1)^{\top} \right).
\label{eq:mediate1}
\end{eqnarray}
By the definition of $M$ we have, 
\begin{eqnarray*}
&&\sum_{t=0}^{T-1} \Tr \left((L^3_t-L_t^*)(L^3_t-L_t^*)^{\top}\right)+ 2M\sum_{t=0}^{T-1} \Tr \left( (L^3_t-L_t^*)(L_t^*-L_t^1)^{\top} \right)\\
&=& \sum_{t=0}^{T-1} \Tr \left((L^3_t-L_t^*)(L^3_t-L_t^*)^{\top}\right)- 2Mb \\
&=& \sum_{t=0}^{T-1} \Tr \left((L^3_t-L_t^*)(L^3_t-L_t^*)^{\top}\right)- 2b - \sum_{t=0}^{T-1} \Tr \left((L^3_t-L_t^*)(L^3_t-L_t^*)^{\top}\right)= -2b <0.
\end{eqnarray*}
Thus substituting this in \eqref{eq:mediate1} contradicts \eqref{eq:mediate3} which completes the proof.


\end{proof}

\begin{Lemma}\label{lemma 24 projection}
Assume Assumption 2.1 holds,  { $\sx>0$}, {$\pmb{K}\in \mathcal{S}$} and that 
\begin{equation}\label{eq:gd projection}
    K_t^{\prime}=K_t-\eta\nabla_{t} C(\pmb{K}), \qquad {\rm where }
\end{equation}
\begin{equation}\label{stepsize condition projection}
    \eta \leq \min\left\{C_1,C_2\right\}, \qquad {\rm with}
\end{equation}
\begin{equation}\label{eq:C1 projection}
    C_1=\frac{(\rho^2-1)\sq\sx}{4d\,T^2\,{\sqrt{d+k}}(\rho^{2T}-1)(2\rho+1)(C(\pmb{K})+\sq T\|W\|)\|B\|\max_{t}\{\|\nabla_tC(\pmb{K})\|\}}
\end{equation}
\begin{equation}\label{eq:C2}
    C_2 = \frac{\sq}{8C(\pmb{K})\sum_{t=0}^{T-1}\|R_t+B^{\top}P_{t +1}B \|}.
\end{equation}
Take
\begin{equation}\label{eq:projection}
    \widetilde{\pmb{K}} = \Pi_{\mathcal{S}}\left( \pmb{K}^{\prime}\right),
\end{equation}
with $\pmb{K}^{\prime} = (K_0^{\prime},\cdots,K_{T-1}^{\prime})$ and $K_t^{\prime}$ defined in \eqref{eq:gd projection} $(t=0,1,\cdots,T-1)$.
Then we have
\begin{equation*}
   2\eta\sum_{t=0}^T \|G_t(\pmb{K})\|_F^2 =  2\eta\sum_{t=0}^T \Tr \left(G_t(\pmb{K})^{\top}G_t(\pmb{K})\right) \leq C(\pmb{K})-C(\widetilde{\pmb{K}}).
\end{equation*}
\end{Lemma}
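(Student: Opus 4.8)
\textbf{Proof plan for Lemma \ref{lemma 24 projection}.}

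The plan is to mimic the descent argument of Lemma \ref{lemma 24}, but with the gradient mapping $G(\pmb{K})$ playing the role of $\nabla C(\pmb{K})$, and using the optimality condition for the projection (Lemma \ref{lemma:optimality_projection}) to convert the ``one Euclidean gradient step then project'' update into a controllable perturbation of $\pmb{K}$. First I would unpack the definitions: by \eqref{eqn:defn_grad_map} we have $\widetilde{\pmb{K}} = \Pi_{\mathcal{S}}(\pmb{K}-\eta\nabla C(\pmb{K})) = \pmb{K} - 2\eta\, G(\pmb{K})$, so that the step actually taken is $\widetilde{K}_t - K_t = -2\eta\, G_t(\pmb{K})$. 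The "almost smoothness" identity of Lemma \ref{lemma 12}, applied with $\pmb{K}' = \widetilde{\pmb{K}}$, gives
\begin{equation*}
C(\widetilde{\pmb{K}})-C(\pmb{K}) = \sum_{t=0}^{T-1}\Big[2\Tr\big(\widetilde{\Sigma}_t(\widetilde{K}_t-K_t)^{\top}E_t\big) + \Tr\big(\widetilde{\Sigma}_t(\widetilde{K}_t-K_t)^{\top}(R_t+B^{\top}P_{t+1}B)(\widetilde{K}_t-K_t)\big)\Big],
\end{equation*}
where $\widetilde{\Sigma}_t = \mathbb{E}[\widetilde{x}_t\widetilde{x}_t^{\top}]$ is the covariance under $\widetilde{\pmb{K}}$.

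Next I would handle the first-order term. Since $\nabla_t C(\pmb{K}) = 2E_t\Sigma_t$ by Lemma \ref{lemma 1}, I want to replace $E_t$ by something involving $\nabla_t C(\pmb{K})$ and then $G_t(\pmb{K})$. Write $\widetilde{\Sigma}_t = \Sigma_t + (\widetilde{\Sigma}_t - \Sigma_t)$ in each trace and treat $(\widetilde{\Sigma}_t-\Sigma_t)$ as an error term controlled, exactly as in the proof of Lemma \ref{lemma 24} (via \eqref{eq:Colla_intem} and the step-size bound \eqref{eq:stepsize int}), by $\sum_t\|\widetilde{\Sigma}_t-\Sigma_t\| \le \tfrac12\sx$ using condition $C_1$ in \eqref{eq:C1 projection} (note the extra factor $d\sqrt{d+k}T$ there, which comes from passing between Frobenius and spectral norms and from $\|\nabla C(\pmb{K})\| \le \|G(\pmb{K})\|\cdot(\text{poly})$ or directly bounding $\|G_t(\pmb{K})\|$ via non-expansiveness of $\Pi_{\mathcal S}$ by $\|\nabla_t C(\pmb{K})\|$). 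The key algebraic move is: the leading part of the first-order term, $\sum_t 2\Tr(\Sigma_t(\widetilde{K}_t-K_t)^{\top}E_t) = \sum_t \Tr((\widetilde{K}_t-K_t)^{\top}\nabla_t C(\pmb{K}))$, and then use $\widetilde{K}_t-K_t = -2\eta G_t(\pmb{K})$ together with the optimality condition from Lemma \ref{lemma:optimality_projection} applied with $\pmb{L}^1 = \pmb{K}-\eta\nabla C(\pmb{K})$, $\pmb{L}^* = \widetilde{\pmb{K}}$ and $\pmb{L}^0 = \pmb{K}$: this yields $\sum_t \Tr\big((K_t - \widetilde{K}_t)(\widetilde{K}_t - K_t + \eta\nabla_t C(\pmb{K}))^{\top}\big) \ge 0$, i.e. $\sum_t \Tr\big((\widetilde{K}_t-K_t)^{\top}\nabla_t C(\pmb{K})\big) \le -\tfrac1\eta\sum_t\|\widetilde{K}_t-K_t\|_F^2 = -4\eta\sum_t\|G_t(\pmb{K})\|_F^2$. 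This is the standard gradient-mapping inequality and is the crux of the argument.

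Finally I would collect the pieces: the main term contributes $-4\eta\sum_t\|G_t(\pmb{K})\|_F^2$; the error from $\widetilde{\Sigma}_t - \Sigma_t$ in the first-order term is bounded, using $\sum_t\|\widetilde\Sigma_t-\Sigma_t\|/\sx \le \tfrac12$, by $2\eta\sum_t\|G_t(\pmb{K})\|_F^2$ (half the main term); and the second-order term $\sum_t \Tr(\widetilde{\Sigma}_t(\widetilde{K}_t-K_t)^{\top}(R_t+B^{\top}P_{t+1}B)(\widetilde{K}_t-K_t))$ is bounded by $4\eta^2\|\Sigma_{\widetilde{\pmb{K}}}\|\big(\sum_t\|R_t+B^{\top}P_{t+1}B\|\big)\sum_t\|G_t(\pmb{K})\|_F^2$, and with $\|\Sigma_{\widetilde{\pmb{K}}}\| \le 2C(\pmb{K})/\sq$ (exactly as in \eqref{eq:SigmaKp_bound}) and the step-size bound $\eta \le C_2$ from \eqref{eq:C2}, this is at most $\tfrac12\cdot 4\eta\sum_t\|G_t(\pmb{K})\|_F^2$. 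Adding up, $C(\widetilde{\pmb{K}}) - C(\pmb{K}) \le (-4 + 2 + 2)\eta\sum_t\|G_t(\pmb{K})\|_F^2 \le -2\eta\sum_t\|G_t(\pmb{K})\|_F^2$, which rearranges to the claim $2\eta\sum_t\|G_t(\pmb{K})\|_F^2 \le C(\pmb{K}) - C(\widetilde{\pmb{K}})$. The main obstacle I anticipate is being careful with the norm bookkeeping in the step-size conditions: one must verify that $C_1, C_2$ as stated genuinely force both $\sum_t\|\widetilde\Sigma_t - \Sigma_t\| \le \tfrac12\sx$ (so the covariance perturbation analysis of Lemma \ref{lemma:perturbation}/Corollary \ref{Corr 20} applies and the $\rho$ defined via $\pmb{K}$ and $\pmb{K}' = \pmb{K}-\eta\nabla C(\pmb{K})$ — rather than $\widetilde{\pmb{K}}$ — still controls $\widetilde{\pmb{K}}$, which needs non-expansiveness of the projection to compare $\|\widetilde K_t - K_t\|$ with $\|K_t' - K_t\| = \eta\|\nabla_t C(\pmb{K})\|$) and the second-order smallness condition, tracking all the $d$, $\sqrt{d+k}$, and $T$ factors correctly.
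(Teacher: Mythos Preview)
Your overall strategy is the paper's: apply Lemma~\ref{lemma 12} with $\widetilde{\pmb{K}}$ in place of $\pmb{K}'$, use the projection optimality condition (Lemma~\ref{lemma:optimality_projection}) to obtain the gradient-mapping inequality $\sum_t\Tr\big((\widetilde{K}_t-K_t)^\top\nabla_t C(\pmb{K})\big)\le -4\eta\sum_t\|G_t(\pmb{K})\|_F^2$, and then control the covariance-error and second-order remainders via $C_1,C_2$. Two steps, however, are incorrect as written.

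First, the covariance-error term $4\eta\sum_t\Tr\big((\widetilde\Sigma_t-\Sigma_t)G_t(\pmb{K})^\top E_t\big)$ is \emph{not} bounded by $2\eta\sum_t\|G_t\|_F^2$ from the coarse estimate $\sum_t\|\widetilde\Sigma_t-\Sigma_t\|/\sx\le\tfrac12$ alone. Since $E_t=\tfrac12\nabla_t C(\pmb{K})\Sigma_t^{-1}$, bounding this trace produces a factor $\|G_t\|\cdot\|\nabla_t C\|$, and in the projected case $G_t\neq-\tfrac12\nabla_t C$, so there is no way to convert this to $\|G_t\|^2$ without an extra ingredient. The paper instead extracts from $\eta\le C_1$ the sharper bound
\[
\sum_t\|\widetilde\Sigma_t-\Sigma_t\|\le \frac{\sx}{2dT^2\sqrt{d+k}\,\max_t\|\nabla_t C(\pmb{K})\|}\sum_t\|G_t(\pmb{K})\|,
\]
so that the $1/\max_t\|\nabla_t C\|$ in $C_1$ cancels the $\|\nabla_t C\|$ coming from $|\Tr((\widetilde\Sigma_t-\Sigma_t)G_t^\top E_t)|\le d\,\|\widetilde\Sigma_t-\Sigma_t\|\,\|G_t\|\,\|\nabla_t C\|/\sx$. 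After Cauchy--Schwarz on $(\sum_t\|G_t\|)^2$ this error contributes $+\eta\sum_t\|G_t\|_F^2$, not $+2\eta$.

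Second, your final budget $-4+2+2=0$, not $-2$. With the correct constants the paper obtains $-4+1+1=-2$: the covariance-error term contributes $+\eta$ as above, and the second-order term satisfies $4\eta\,\|\Sigma_{\widetilde{\pmb{K}}}\|\sum_t\|R_t+B^\top P_{t+1}B\|\le 4\,C_2\cdot\tfrac{2C(\pmb{K})}{\sq}\sum_t\|R_t+B^\top P_{t+1}B\|=1$, i.e.\ $+\eta$ rather than $+2\eta$.
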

\begin{proof}
By definition of $ \widetilde{\pmb{K}}$, and as $\pmb{K}\in \mathcal{S}$, we have
\begin{eqnarray}\label{eq:projection_condition}
\sum_{t=0}^{T-1}\Tr \left((\widetilde{{K}}_t -{K}_t^{\prime} ) (\widetilde{{K}}_t - {K}^{\prime}_t )^\top\right)\leq \sum_{t=0}^{T-1}\Tr \left(({{K}}_t -{K}_t^{\prime} ) ({{K}_t} -{K}_t^{\prime} )^\top\right).
\end{eqnarray}
Take $\pmb{L}^1=\pmb{K}^{\prime}$ and $\pmb{L}^0 = \pmb{K}$ in Lemma
\ref{lemma:optimality_projection}, we have
\begin{eqnarray}\label{eq:optimality_condition3}
\sum_{t=0}^{T-1}\Tr \left( (K_t - \widetilde{K}_t)(\widetilde{K}_t - K^{\prime}_t)^{\top}\right) \ge 0.
\end{eqnarray}
Combining \eqref{eq:projection_condition} and \eqref{eq:optimality_condition3} leads to
\begin{eqnarray}\label{eq:projection_condition2}
\sum_{t=0}^{T-1}\Tr \left(({{K}}_t -{K}_t^{\prime} ) ( {K}_t-\widetilde{{K}}_t  )^\top\right)\geq \sum_{t=0}^{T-1}\Tr \left(({{K}}_t -\widetilde{{K}}_t  ) ({{K_t}} -\widetilde{{K}}_t  )^\top\right).
\end{eqnarray}
Given the definition \eqref{eq:projection}, we have $G(\pmb{K}) = \frac{\widetilde{\pmb{K}}-\pmb{K}}{2\eta}$ and $G_t(\pmb{K}) = \frac{\widetilde{K}_t - K_t}{2\eta}$. By Lemmas \ref{lemma 1} and \ref{lemma 12},
\begin{equation}\label{Lemma 24 projection proof eqn 1}
\begin{split}
     C(\widetilde{\pmb{K}})-C(\pmb{K}) & = \sum_{t=0}^{T-1} \Big[2\Tr\Big(\widetilde{\Sigma}_t(\widetilde{K}_t-K_t)^{\top}E_t\Big) + \Tr\Big(\widetilde{\Sigma}_t(\widetilde{K}_t-K_t)^{\top}(R_t+B^{\top}P_{t +1}B)(\widetilde{K}_t-K_t)\Big)\Big]\\
      & = \sum_{t=0}^{T-1} \Big[4\eta\Tr\Big((\widetilde{\Sigma}_t-\Sigma_t)(G_t(\pmb{K}))^{\top}E_t\Big) +4\eta\Tr\Big(\Sigma_t(G_t(\pmb{K}))^{\top}E_t\Big)\\
      &\qquad + 4\eta^2\Tr\Big(\widetilde{\Sigma}_t (G_t(\pmb{K}))^{\top}(R_t+B^{\top}P_{t +1}B)G_t(\pmb{K})\Big)\Big],\\
\end{split}
\end{equation}
with $\widetilde{\Sigma}_t :=\mathbb{E}[\widetilde{x}_t\widetilde{x}_t^{\top}]$ and $\{\widetilde{x}_t\}_{t=0}^{T-1}$ is the trajectory under policy $\widetilde{\pmb{K}}$.

First, we have
\begin{eqnarray*}
&&\sum_{t=0}^{T-1}\Tr\Big(\Sigma_t(G_t(\pmb{K}))^{\top}E_t\Big) =\sum_{t=0}^{T-1} \Tr\Big((G_t(\pmb{K}))^{\top}E_t\Sigma_t\Big) = \frac{1}{4\eta^2}\sum_{t=0}^{T-1}\Tr ((\widetilde{K}_t-{K}_t)^{\top}(K_t-K_t^{\prime})) \\
&\leq& -\frac{1}{4\eta^2}\sum_{t=0}^{T-1} \Tr((\widetilde{K}_t-{K}_t)(\widetilde{K}_t-{K}_t)^{\top}) = -\Tr ((G_t(\pmb{K}))^{\top}(G_t(\pmb{K}))) ,
\end{eqnarray*}
in which the last inequality holds by \eqref{eq:projection_condition2}.

Second given \eqref{eq:gd projection} and condition \eqref{eq:C1 projection}, we have
\[
\|K_t'-K_t\| = \eta \|\nabla_{t} C(\pmb{K})\| \leq \frac{\sq\sx}{2\,T{\sqrt{d+k}}C(\pmb{K})\|B\|}.
\]
Therefore, 
\begin{eqnarray*}
\sum_{t=0}^{T-1}\|B\|\|\tilde{K}_t-K_t\| &\leq& \sum_{t=0}^{T-1} \|B\|\|\tilde{K}_t-K_t\|_F \leq  \sum_{t=0}^{T-1}\|B\|\|{K}^{\prime}_t-K_t\|_F\\
&\leq&\sqrt{d+k} \|B\|\sum_{t=0}^{T-1}\|K_t'-K_t\| \leq \frac{\sq\sx}{2C(\pmb{K})}
\leq \frac{1}{2}.
\end{eqnarray*}
The second inequality holds by \eqref{eq:projection_condition2} and the last inequality holds since $\sx\leq \frac{C(\pmb{K})}{\sq}$ given by Lemma  \ref{lemma 13}. 
By \eqref{eq:Colla_intem},
\begin{eqnarray}\label{eqn:diff_Sig_bd}
&&\sum_{t=0}^{T-1} \|\widetilde{\Sigma}_t-\Sigma_t\| 
\leq   {\frac{ \rho^{2T}-1}{\rho^2-1} \left(\frac{C(\pmb{K})}{\sq}+T\|W\|\right)\left(2\rho\,\|B\|\,|||\pmb{K}-\widetilde{\pmb{K}}|||+\|B\|^2\,|||\pmb{K}-\widetilde{\pmb{K}}|||^2\right)}\nonumber
\\
&&\leq \frac{ \rho^{2T}-1}{\rho^2-1} \left(\frac{C(\pmb{K})}{\sq}+T\|W\|\right)\left(2(2\rho+1)\|B\|\sum_{t=0}^{T-1}\eta\|G_t(\pmb{K})\|\right)\nonumber\\
&&\leq \frac{\sx}{2d\,T^2\,\sqrt{d+k}\cdot\max_t\|\nabla_tC(\pmb{K})\|}\sum_{t=0}^{T-1}\|G_t(\pmb{K})\|,
\end{eqnarray}
where the last inequality holds by step size condition $\eta\leq C_1$.
Hence
\begin{eqnarray*}
\eqref{Lemma 24 projection proof eqn 1} &\leq &\sum_{t=0}^{T-1}\left[2\eta \frac{d\|\widetilde{\Sigma}_t-\Sigma_t\|}{\sigma_{\min}(\Sigma_t)} \|G_t(\pmb{K})\|\,\|\nabla_tC(\pmb{K})\|-4\eta \Tr ((G_t(\pmb{K}))^{\top}(G_t(\pmb{K})))\right.\\
&&\left.  + 4\eta^2\|\Sigma_{\widetilde{\pmb{K}}}\|\,\|R_t+B^{\top}P_{t +1}B\|\Tr\Big( (G_t(\pmb{K}))^{\top}G_t(\pmb{K})\Big)\right]\\
&\leq&2\eta\frac{d}{\sx}\Big(\sum_{t=0}^{T-1}\|\widetilde{\Sigma}_t-\Sigma_t\|\Big) \Big(\sum_{t=0}^{T-1}\|G_t(\pmb{K})\|\Big)\Big(\sum_{t=0}^{T-1}\|\nabla_tC(\pmb{K})\|\Big)-4\eta\sum_{t=0}^{T-1} \Tr ((G_t(\pmb{K}))^{\top}(G_t(\pmb{K})))\\
&& + 4\eta^2\sum_{t=0}^{T-1}\|\Sigma_{\widetilde{\pmb{K}}}\|\,\|R_t+B^{\top}P_{t +1}B\|\Tr\Big( (G_t(\pmb{K}))^{\top}G_t(\pmb{K})\Big)\\
&\leq& \frac{\eta}{T}\Big(\sum_{t=0}^{T-1}\|G_t(\pmb{K})\|\Big)^2-4\eta\sum_{t=0}^{T-1} \Tr ((G_t(\pmb{K}))^{\top}(G_t(\pmb{K})))\\
&& + 4\eta^2\sum_{t=0}^{T-1}\|\Sigma_{\widetilde{\pmb{K}}}\|\,\|R_t+B^{\top}P_{t +1}B\|\Tr\Big( (G_t(\pmb{K}))^{\top}G_t(\pmb{K})\Big)\\
&\leq &\sum_{t=0}^{T-1}\left[\eta \|G_t(\pmb{K})\|^2-4\eta \Tr ((G_t(\pmb{K}))^{\top}(G_t(\pmb{K})))\right.\\
&&\left.  + 4\eta^2\|\Sigma_{\widetilde{\pmb{K}}}\|\,\|R_t+B^{\top}P_{t +1}B\|\Tr\Big( (G_t(\pmb{K}))^{\top}G_t(\pmb{K})\Big)\right]\\
&\leq &\sum_{t=0}^{T-1}\eta\left[-3 + 4\eta\|\Sigma_{\widetilde{\pmb{K}}}\|\,\|R_t+B^{\top}P_{t +1}B\|\right]\Tr\Big( (G_t(\pmb{K}))^{\top}G_t(\pmb{K})\Big),
\end{eqnarray*}
where the third inequality holds by \eqref{eqn:diff_Sig_bd} and the fourth inequality holds by Cauchy-Schwarz inequality. By \eqref{eq:projection_condition2} we have $\sqrt{d+k}\sum_{t=0}^{T-1}\|\nabla_tC(\pmb{K})\|\geq \sum_{t=0}^{T-1}\|G_t(\pmb{K})\|$ and thus $\eqref{eqn:diff_Sig_bd}\leq\frac{\sx}{2}$ and
\[
 \|\Sigma_{\widetilde{\pmb{K}}}\| \leq \left\|\Sigma_{\widetilde{\pmb{K}}}-\Sigma_{\pmb{K}}+\Sigma_{\pmb{K}}\right\|\leq \frac{\sx}{2}+\|\Sigma_{\pmb{K}}\|\leq \frac{\|\Sigma_{\widetilde{\pmb{K}}}\|}{2}+\frac{C(\pmb{K})}{\sq}.   
\]
Thus $\|\Sigma_{\widetilde{\pmb{K}}}\|\leq\frac{2C(\pmb{K})}{\sq}$.  Therefore when $\eta \leq C_2$, we have
\[
\eqref{Lemma 24 projection proof eqn 1} \leq  -2\eta\sum_{t=0}^T \Tr ((G_t(\pmb{K}))^{\top}(G_t(\pmb{K}))).
\]
\end{proof}

\begin{proof}[{\bf Proof of Theorem \ref{thm:projected_GD}}]
The key step in this proof is Lemma \ref{lemma 24 projection} and it suffices to show that the projected policy gradient method enjoys sublinear convergence rate in the setting of known paramaters. This is because moving from the analysis for the case of known parameters to that for the case of unknown parameters follows the same procedure of policy gradient descent (without projection). In particular, the zeroth order estimation of the gradient term $\eta\nabla_{t} C(\pmb{K})$ in \eqref{eq:gd projection} is the same for policy gradient method and projected policy gradient method. 

We now show that the projected policy gradient method with known parameters enjoys a sublinear convergence rate. Since the step size conditions \eqref{stepsize condition projection}-\eqref{eq:C2} are independent of the term $G_t(\pmb{K})$, the existence of $\eta$ follows the analysis in Theorem \ref{thm:convergence_egd}. Hence when $\eta\in\HH(\frac{1}{C(\pmb{K}^0)+1})$ is an appropriate polynomial in $\frac{1}{C(\pmb{K}^0)+1}$ and model parameters, by Lemma \ref{lemma 24 projection}, we have for any $N \in \mathbb{N}^+$,
\begin{eqnarray}
\sum_{n=1}^N \left(\sum_{t=0}^{T-1} \Tr ((G_t^{\pmb{K}^n})^{\top}(G_t^{\pmb{K}^n}))\right) \leq \frac{\sum_{n=1}^N C(\pmb{K}^{n-1})-C(\pmb{K}^{n})}{2\eta} \leq \frac{C(\pmb{K}^0)-C(\pmb{K}^{*})}{2\eta},
\end{eqnarray}

Therefore $\left\{\frac{1}{N} \sum_{n=0}^{N-1}\left(\sum_{t=0}^{T-1} \|G_t({\pmb{K}^n})\|^2_F\right)\right\}_{N \ge 1}$ converges at rate $\mathcal{O}\left(\frac{1}{N}\right)$, which thus completes the proof for the case of known parameters. 
\end{proof}
}

\begin{proof}[{\bf Proof of Lemma \ref{Model-free C_K perturbation}}]
\noindent 
{
Under Assumption \ref{ass:State-2}, we have
$\mathbb{E}\left[x_0x_0^\top\right] = \widetilde{W}_0\mathbb{E}\left[z_0z_0^\top\right]\widetilde{W}_0^\top,
$ and $ \left\|\mathbb{E}\left[x_0x_0^\top\right]\right\| \leq \sigma_0^2\|\widetilde{W}_0\|^2.$ With the sub-Gaussian distributed noise, 
$W=\mathbb{E}\left[w_tw_t^\top\right] = \widetilde{W}\mathbb{E}\left[v_tv_t^\top\right]\widetilde{W}^\top,$
then we have $\left\|W\right\| \leq\sigma_w^2\left\|\widetilde{W}^2\right\|$.}

\noindent Denote $S_{t} = Q_{t}+K_t^{T}R_{t}K_t$,  $\forall\,t=1,\cdots,T-1$. Thus, for $t = 0, 1,\cdots,T-2$,
\begin{equation*}
    \begin{split}
        \mathbb{E}[x_{t+1}^\top Q_{t+1} x_{t+1} + u_{t+1}^\top R_{t+1}u_{t+1}] &= \mathbb{E}[x_{t+1}^\top S_{t+1}x_{t+1}]= \Tr(\mathbb{E}[x_{t+1}^\top S_{t+1}x_{t+1}]) = \Tr(\mathbb{E}[x_{t+1}x_{t+1}^{\top}] S_{t+1})\\
        & = \Tr\left(\mathcal{G}_{t}(\Sigma_0)S_{t+1} + \sum_{s=1}^{t} D_{t,s} W D_{t,s}^{\top}S_{t+1} +WS_{t+1}\right).
    \end{split}
\end{equation*}
The last equality holds by \eqref{eqn:intermedidate1}.
\noindent Therefore,
\begin{equation*}
    \begin{split}
        C(\pmb{K}^\prime)-C(\pmb{K})
        & = \underbrace{\mathbb{E}[x_0^\top(K_0^\prime)^\top R_0K_0^\prime x_0 - x_0^\top K_0^\top R_0K_0x_0] }_{(\rom{1})}+ 
        \underbrace{\sum_{t=0}^{T-2}\Tr\Big(\mathcal{G}_{t}^\prime(\Sigma_0)S_{t+1}^\prime-\mathcal{G}_{t}(\Sigma_0)S_{t+1}\Big)}_{(\rom{2})}\\
        & \quad + \underbrace{\sum_{t=0}^{T-2}\Tr\Big(\sum_{s=1}^{t}\left( D_{t,s}^\prime W (D_{t,s}^\prime)^{\top}S_{t+1}^\prime-D_{t,s} W D_{t,s}^{\top}S_{t+1}\right)
        + W(S_{t+1}^\prime-S_{t+1})\Big)}_{(\rom{3})}\\
        & \quad + \underbrace{\Tr\left(\mathcal{G}_{T-1}(\Sigma_0)Q_T - \mathcal{G}_{T-1}^\prime(\Sigma_0)Q_T + \sum_{s=1}^{T-1}\left( D_{T-1,s}^\prime W (D_{T-1,s}^\prime)^{\top}Q_T - D_{T-1,s} W D_{T-1,s}^{\top}Q_T\right)\right)}_{(\rom{4})}.
    \end{split}
\end{equation*}
For the first term, $    (\rom{1}) \leq \Tr(\mathbb{E}[x_0x_0^\top])\|(K_0^\prime)^\top R_0 K_0^\prime - K_0^\top R_0K_0\|.$
 For the second term $(\rom{2})$, since
 \begin{eqnarray*}
 \sum_{t=0}^{T-2}\left(\Tr\left(\mathcal{G}_{t}(\Sigma_0)S_{t+1}\right)\right)  & = \mathbb{E}\left[ \sum_{t=0}^{T-2} \left(\Tr\left(\Pi_{i=0}^t (A-BK_i)x_0x_0^\top \Pi_{i=0}^t (A-BK_{t-i})^{\top} S_{t+1}\right)\right)\right] \leq \Tr\left(\mathbb{E}\left[x_0x_0^\top\right]\right) \left\|\sum_{t=0}^{T-2}\mathcal{G}_{t}(S_{t+1})\right\|,
 \end{eqnarray*}

we have,
$(\rom{2}) \leq  \Tr\left(\mathbb{E}\left[x_0x_0^\top\right]\right) \left\|\sum_{t=0}^{T-2}\left(\mathcal{G}_{t}^\prime\left(S_{t+1}^\prime\right)-\mathcal{G}_{t}\left(S_{t+1}\right)\right)\right\|.$
    
We denote $\mathcal{G}_{d}:=\sum_{t=0}^{T-2}\left(\mathcal{G}_t^{\prime}\left(S^{\prime}_{t+1}\right)-\mathcal{G}_{t}\left(S_{t+1}\right)\right)$, then
\begin{equation}\label{eq:G_d_bound}
    \begin{split}
        \|\mathcal{G}_d\| 
        &\leq \sum_{t=0}^{T-2} \Big\| \mathcal{G}_t^{\prime}\left(Q_{t+1}+(K_{t+1}^{\prime})^{\top}R_{t+1}K_{t+1}^{\prime}\right)-\mathcal{G}_t\left(Q_{t+1}+(K_{t+1}^{\prime})^{\top}R_{t+1}K_{t+1}^{\prime}\right)-\\
        & \quad \mathcal{G}_t\circ\left(K_{t+1}^{\top}R_{t+1}K_{t+1}-(K_{t+1}^{\prime})^{\top}R_{t+1}K_{t+1}^{\prime}\right)\Big\| \\
        & \leq \frac{\rho^{2T}-1}{\rho^2-1} \left((2\rho+1)\|B\|\sum_{t=0}^{T-2}\|K_t-K_t^{\prime}\|\right)\left(\sum_{t=1}^{T-1} \|Q_t+(K_t^{\prime})^{\top}R_tK_t^{\prime}\|\right)\\
        & \quad +\sum_{t=0}^{T-2}\left\|\mathcal{G}_t\right\|\left\|(K_{t+1}^{\prime})^{\top}R_{t+1}K_{t+1}^{\prime}-K_{t+1}^{\top}R_{t+1}K_{t+1}\right\| \\
        & \leq \frac{\rho^{2T}-1}{\rho^2-1} \left((2\rho+1)\|B\|\sum_{t=0}^{T-2}\|K_t-K_t^{\prime}\|\right)\left(\sum_{t=1}^{T-1} \|Q_t+(K_t^{\prime})^{\top}R_tK_t^{\prime}-K_t^\top R_tK_t+K_t^\top R_tK_t\|\right)\\
        & \quad +\frac{\rho^2(\rho^{2(T-1)}-1)}{\rho^2-1}\sum_{t=1}^{T-1}\left\|(K_t^{\prime})^{\top}R_tK_t^{\prime}-K_t^{\top}R_tK_t\right\| \\
        & \leq \frac{\rho^{2T}-1}{\rho^2-1} (2\rho+1)\|B\|\,\vertiii{\pmb{K}^\prime-\pmb{K}}\left(\vertiii{\pmb{Q}}+\vertiii{\pmb{K}}^2\,\vertiii{\pmb{R}}\right)\\
        & \quad +\left(\frac{\rho^{2T}-1}{\rho^2-1} (2\rho+1)\|B\|\,\vertiii{\pmb{K}^\prime-\pmb{K}}+\frac{\rho^2(\rho^{2(T-1)}-1)}{\rho^2-1}\right)\sum_{t=1}^{T-1}\left\|(K_t^{\prime})^{\top}R_tK_t^{\prime}-K_t^{\top}R_tK_t\right\|. \\
    \end{split}
\end{equation}
where the second inequality holds by Lemma \ref{Lemma 20} and \eqref{F_Kt bound}, and the third inequality holds by \eqref{eqn:G_t_bound}.
For the first term in $(\rom{3})$, we have
\begin{eqnarray*}
&& \sum_{t=0}^{T-2}\Tr\left(\sum_{s=1}^{t}D_{t,s}^\prime W (D_{t,s}^\prime)^{\top}S_{t+1}^\prime- D_{t,s} W D_{t,s}^{\top}S_{t+1}\right) \\
& = & \sum_{t=0}^{T-2}\Tr\left(\sum_{s=1}^{t}D_{t,s}^\prime W (D_{t,s}^\prime)^{\top}(S_{t+1}^\prime-S_{t+1})+(D_{t,s}^\prime W (D_{t,s}^\prime)^{\top} - D_{t,s} W D_{t,s}^{\top})S_{t+1}\right)\\
& \leq & \Big(\sum_{t=0}^{T-2}\sum_{s=1}^{t}\Tr(W)\|D_{t,s}^\prime\|^2\Big)\left\|\sum_{t=1}^{T-1}(K_{t}^\prime)^\top R_{t}K_{t}^\prime-K_{t}^\top R_{t}K_{t}\right\|\\
& & +\sum_{t=0}^{T-2}\left\|\sum_{s=1}^{t}D_{t,s}^\prime W (D_{t,s}^\prime)^{\top} - D_{t,s} W D_{t,s}^{\top}\right\|\Big(\sum_{t=1}^{T-1}\Tr(Q_t)+\|K_t\|^2\Tr(R_t)\Big)\\
& \leq & {\Tr(W)\frac{(T-1)(\rho^{2(T-1)}-1)}{\rho^2-1}}\left\|\sum_{t=1}^{T-1}(K_{t}^\prime)^\top R_{t}K_{t}^\prime-K_{t}^\top R_{t}K_{t}\right\|\\
& & + T \frac{ (\rho^{2T}-1)}{\rho^2-1} (2\rho+1)\|B\|\,\|W\|\,\vertiii{\pmb{K}^\prime-\pmb{K}}\left(\Tr\left(\sum_{t=1}^{T-1}Q_t\right)+\vertiii{\pmb{K}}^2\Tr\left(\sum_{t=1}^{T-1}R_t\right)\right),\\
\end{eqnarray*}
where the last step holds by \eqref{eq:sum_D_bound}.
The second term in $(\rom{3})$ is bounded by
\[
\sum_{t=0}^{T-2}\Tr\Big( W(S_{t+1}^\prime-S_{t+1})\Big) \leq \Tr(W)\sum_{t=1}^{T-1}\left\|(K_t^{\prime})^{\top}R_tK_t^{\prime}-K_t^{\top}R_tK_t\right\|.
\]
{Similarly, by \eqref{eq:sum_D_bound} and \eqref{F_Kt bound}, $(\rom{4})$ is bounded by
\begin{eqnarray*}
(\rom{4}) &\leq&
\Tr(\mathbb{E}[x_0x_0^\top])\sum_{t=0}^{T-1}\Big\|(\mathcal{G}_{t}^{\prime}-\mathcal{G}_{t})(Q_T)\Big\|+\Tr\left(\sum_{s=1}^{T-1} D_{T-1,s}^\prime W (D_{T-1,s}^\prime)^{\top}Q_T - D_{T-1,s} W D_{T-1,s}^{\top}Q_T\right) \\
& \leq& \Tr(\mathbb{E}[x_0x_0^\top])\frac{\rho^{2T}-1}{\rho^2-1}(2\rho+1)\|B\|\|Q_T\|\,\vertiii{\pmb{K}^\prime-\pmb{K}} + \Tr(Q_T)\frac{\rho^{2T}-1}{\rho^2-1} (2\rho+1)\|B\|\,\|W\|\,\vertiii{\pmb{K}^\prime-\pmb{K}}.
\end{eqnarray*}}
Now we bound the term $ \sum_{t=1}^{T-1}\left\|(K_t^{\prime})^{\top}R_tK_t^{\prime}-K_t^{\top}R_tK_t\right\|$, which appears several times in previous inequalities:
\begin{equation*}
\begin{split}
    \sum_{t=1}^{T-1}\left\|(K_t^{\prime})^{\top}R_tK_t^{\prime}-K_t^{\top}R_tK_t\right\| &= \sum_{t=1}^{T-1}\left\|(K_t^{\prime}-K_t+K_t)^{\top}R_t(K_t^{\prime}-K_t+K_t)-K_t^{\top}R_tK_t\right\|\\
    & \leq  \sum_{t=1}^{T-1}\|K_t^{\prime}-K_t\|^2\|R_t\|+2\|K_t\|\|R_t\|\|K_t^{\prime}-K_t\| \leq 3\vertiii{\pmb{K}}\,\vertiii{\pmb{R}}\,\vertiii{\pmb{K}^\prime-\pmb{K}}.\\
\end{split}
\end{equation*}
The last step holds since $\|K_t^{\prime}-K_t\|\leq \|K_t\|$ by assumption.

Therefore,
\begin{equation*}
    \begin{split}
        |C(\pmb{K}^\prime)-C(\pmb{K})|
        & \leq \Tr(\mathbb{E}[x_0x_0^\top])\Big\{3\vertiii{\pmb{K}}\|R_0\|\vertiii{\pmb{K}^\prime-\pmb{K}} + \frac{\rho^{2T}-1}{\rho^2-1}(2\rho+1)\|B\|\|Q_T\|\,\vertiii{\pmb{K}^\prime-\pmb{K}} \\
        & \quad + \frac{\rho^{2T}-1}{\rho^2-1} (2\rho+1)\|B\|\,\vertiii{\pmb{K}^\prime-\pmb{K}}\left(\vertiii{\pmb{Q}}+\vertiii{\pmb{K}}^2\,\vertiii{\pmb{R}}\right)\\
        & \quad +\left(\frac{\rho^{2T}-1}{\rho^2-1} (2\rho+1)\|B\|\,\vertiii{\pmb{K}^\prime-\pmb{K}}+\frac{\rho^2(1-\rho^{2(T-1)})}{\rho^2-1}\right)3\vertiii{\pmb{K}}\,\vertiii{\pmb{R}}\,\vertiii{\pmb{K}^\prime-\pmb{K}}\Big\} \\
        & \quad + 3{\Tr(W)\Big(\frac{(T-1)(\rho^{2(T-1)}-1)}{\rho^2-1}}+1\Big)\vertiii{\pmb{K}}\,\vertiii{\pmb{R}}\,\vertiii{\pmb{K}^\prime-\pmb{K}}\\
        & \quad + \left(T \frac{ (\rho^{2T}-1)}{\rho^2-1} (2\rho+1)\|B\|\,\|W\|\,\vertiii{\pmb{K}^\prime-\pmb{K}}\right)\left(\Tr\left(\sum_{t=1}^{T-1}Q_t\right)+\vertiii{\pmb{K}}^2\Tr\left(\sum_{t=1}^{T-1}R_t\right)\right)\\
        & \quad  +  \Tr(Q_T)\frac{\rho^{2T}-1}{\rho^2-1} (2\rho+1)\|B\|\,\|W\|\,\vertiii{\pmb{K}^\prime-\pmb{K}}.
    \end{split}
\end{equation*}
{By \eqref{rho eqn}, Lemma \ref{lemma 13}, and Lemma \ref{lemma 25}, $\rho$ is bounded above by polynomials in $\|A\|$, $\|B\|$, $\vertiii{\pmb{R}}$, $\frac{1}{\sx}$, $\frac{1}{\sr}$ and $C(\pmb{K})$, or a constant $1+\xi$.} Therefore, we rewrite the above inequality by
\begin{equation}\label{lemma 12 inte}
    |C(\pmb{K}^\prime)-C(\pmb{K})| \leq h_{CK}\vertiii{\pmb{K}^\prime-\pmb{K}} + h_{CK}^\prime \vertiii{\pmb{K}^\prime-\pmb{K}}^2,
\end{equation}
where $h_{CK}\in\HCK$ and $h_{CK}^\prime\in\HCK$ are polynomials in $C(\pmb{K})$ and model parameters. 
Given assumption  \eqref{Lemma 27 assumption}, we have $\vertiii{\pmb{K}^\prime-\pmb{K}} \le 1$ and hence
\[
\vertiii{\pmb{K}^\prime-\pmb{K}} \ge  \vertiii{\pmb{K}^\prime-\pmb{K}}^2.
\]
Define $h_{cost} = h_{CK}+h_{CK}^\prime$, then \eqref{lemma 12 inte} gives
\begin{equation*}
    |C(\pmb{K}^\prime)-C(\pmb{K})| \leq h_{cost}\vertiii{\pmb{K}^\prime-\pmb{K}},
\end{equation*}
with $h_{cost}\in\HCK$.
\end{proof}

\begin{proof}[{\bf Proof of Lemma \ref{lemma 28}}]
Recall $\nabla_t C(\pmb{K})=2E_t\Sigma_t$ {and $W=\mathbb{E}\left[w_tw_t^\top\right]=\widetilde{W}\mathbb{E}\left[v_tv_t^\top\right]\widetilde{W}^\top$}. We have,
\begin{equation}\label{eq:gra_CK_proof_eq1}
    \|\nabla_t C(\pmb{K}^{\prime}) - \nabla_t C(\pmb{K})\| =  \|2E_t^{\prime}\Sigma_t^{\prime}-2E_t\Sigma_t\|\leq  2\|E_t^{\prime}-E_t\|\|\Sigma_t^{\prime}\|+2\|E_t\|\|\Sigma_t^{\prime}-\Sigma_t\|,
\end{equation}
For the second term, by Lemma \ref{lemma 11} and Cauchy-Schwarz inequality,
\begin{equation}\label{bound E_t}
\begin{split}
     \left\|E_t\right\|\leq \sum_{t=0}^{T-1}\left\|E_t\right\| &\leq \sum_{t=0}^{T-1} \sqrt{\Tr(E_t^{\top}E_t)}
     \leq \sqrt{T\cdot\frac{\max_t \|R_t+B^{\top}P_{t+1} B\|}{\sx} \left(C(\pmb{K})-C(\pmb{K}^*)\right)}.
\end{split}
\end{equation}
By \eqref{tn1} and direct calculation, we have
\begin{eqnarray*}
\|(\mathcal{G}_{t+1}^{\prime}-\mathcal{G}_{t+1})(\Sigma_0)\|
\leq \rho^{2(t+1)}\left(\sum_{s=0}^{t+1}\|\mathcal{F}_{K_s^{\prime}}-\mathcal{F}_{K_s}\|\,\|\Sigma_0\|\right).
\end{eqnarray*}

\noindent By \eqref{F_Kt bound} and \eqref{eq:sum_D_bound}, for $t=1,2,\cdots,T-1$,
\begin{equation}\label{bound Sigma_t}
\begin{split}
       \|\Sigma_t^{\prime}-\Sigma_t\| &\leq \|(\mathcal{G}_t^{\prime}-\mathcal{G}_t)(\Sigma_0)\|+\left\|\sum_{s=0}^{t-1}D_{t-1,s}{W}D_{t-1,s}^{\top}-D^{\prime}_{t-1,s}{W}(D^{\prime}_{t-1,s})^{\top}\right\|\\
       & \leq \rho^{2t}(2\rho+1)\|B\|\,\|\Sigma_0\|\,\vertiii{\pmb{K}^\prime-\pmb{K}}\ + \frac{(\rho^{2T}-1)}{\rho^2-1} (2\rho+1)\|B\|\,\|{W}\|\,\vertiii{\pmb{K}^\prime-\pmb{K}}.
\end{split}
\end{equation}
Therefore the second term in \eqref{eq:gra_CK_proof_eq1} is bounded by the product of \eqref{bound E_t} and \eqref{bound Sigma_t}.\\

Next we bound the first term in \eqref{eq:gra_CK_proof_eq1}. {Similar to \eqref{eq:SigmaKp_bound},  $\|\Sigma_t^{\prime}\|\leq$ $\|\sum_{t=0}^T \Sigma_t^{\prime}\|=\|\Sigma_{\pmb{K}^{\prime}}\| \leq \| \Sigma_{\pmb{K}}^\prime-\Sigma_{\pmb{K}}\| + \|\Sigma_{\pmb{K}}\|\leq \frac{C(\pmb{K})}{\sq} + \|\Sigma_{\pmb{K}}\|$.} For $\|E_t^{\prime}-E_t\|$, we first need a bound on $\left\|P_t^{\prime}-P_t\right\|$. Since
$P_0 = S_0+\sum_{t=0}^{T-2}\mathcal{G}_t(S_{t+1})+\mathcal{G}_{T-1}(Q_T)$, by \eqref{eq:G_d_bound}, we have
\begin{equation}\label{bound for diff P_0}
    \begin{split}
        \|P_t^\prime-P_t\|& \leq\|P_0^\prime-P_0\|\leq 3\|K_0\|\|R_0\|\|K_0^\prime-K_0\| + \|\mathcal{G}_d\| + \frac{\rho^{2T}-1}{\rho^2-1}(2\rho+1)\|B\|\|Q_T\|\left(\sum_{t=0}^{T-1}\|K_t-K_t^{\prime}\|\right) \\
        & \leq \frac{\rho^{2T}-1}{\rho^2-1} (2\rho+1)\|B\|\,\vertiii{\pmb{K}^\prime-\pmb{K}}\left(\vertiii{\pmb{Q}}+\vertiii{\pmb{K}}^2\vertiii{\pmb{R}}\right) \\
        & \quad +3\left(1+\frac{\rho^{2T}-1}{\rho^2-1} (2\rho+1)\|B\|\,\vertiii{\pmb{K}^\prime-\pmb{K}}+\frac{\rho^2(1-\rho^{2(T-1)})}{\rho^2-1}\right)\cdot\vertiii{\pmb{K}}\,\vertiii{\pmb{R}}\,\vertiii{\pmb{K}^\prime-\pmb{K}} \\
        & \quad +\frac{\rho^{2T}-1}{\rho^2-1}(2\rho+1)\|B\|\|Q_T\|\vertiii{\pmb{K}^\prime-\pmb{K}}.
    \end{split} 
\end{equation}
Thus,
\begin{equation*}
\begin{split}
     \left\|E_t^{\prime}-E_t\right\| &= \left\| R_t(K_t^{\prime}-K_t)-B^{\top}(P_{t+1}^{\prime}-P_{t+1})A+B^{\top}(P_{t+1}^{\prime}-P_{t+1})BK_t^{\prime}+B^{\top}P_{t+1}B(K_t^{\prime}-K_t)\right\|\\
     & \leq \left(\|R_t\|+\|B\|^2\|P_{0}\| \right)\vertiii{\pmb{K}^\prime-\pmb{K}}+\|B\|\,\|P_{0}^{\prime}-P_{0}\|\,\|A\|+2\|B\|^2\|P_{0}^{\prime}-P_{0}\|\vertiii{\pmb{K}}.
\end{split}
\end{equation*}
Given the bound on $\vertiii{\pmb{K}}=\sum_{t=0}^{T-1}\|K_t\|$ in Lemma \ref{lemma 25} and the bound on $\|P_t\|$ in Lemma \ref{lemma 13}, all the terms in \eqref{eq:gra_CK_proof_eq1} can be bounded by polynomials of related parameters multiplied by $\vertiii{\pmb{K}^\prime-\pmb{K}}$ and $\vertiii{\pmb{K}^\prime-\pmb{K}}^2$. Similarly to the proof of Lemma \ref{Model-free C_K perturbation}, we have $\vertiii{\pmb{K}^\prime-\pmb{K}}\leq 1$ and 
\[
 \|\nabla_t C(\pmb{K}^{\prime}) - \nabla_t C(\pmb{K})\| \leq h_{grad}\vertiii{\pmb{K}^\prime-\pmb{K}},
\]
{for some polynomial $h_{grad}\in\HCK$}.

\end{proof}

\subsection{Proofs in Section \ref{sc:experiment}}
\label{appendix:missing_proofs_sec5}
\begin{proof}[Proof of Proposition \ref{prop:liquidation}]
Denote $H_t:=
\begin{pmatrix}
 1+\gamma k_t^1 &\gamma k_t^2\\ 
 k_t^1& 1+ k_t^2 
\end{pmatrix}
$. Since  $H_t$ has two eigenvalues $1$ and $\gamma k_t^1 +k_t^2+1$,  $H_t$ is positive definite when $\gamma k_t^1 +k_t^2>-1$ ($ 0\leq t \leq T-1$).

Then let us show the first claim by induction. Assume $\mathbb{E}[x_s x_s^{\top}]$ is positive definite for all $s\leq t$, then
\begin{eqnarray*}
\mathbb{E}[x_{t+1} x_{t+1}^{\top}] &=& \mathbb{E}[\left((A -BK_t)x_t +w_t\right)\left((A -BK_t)x_t +w_t\right)^{\top}]= \mathbb{E}[\left(H_tx_t +w_t\right)\left(H_tx_t +w_t\right)^{\top}]\\
& = & \mathbb{E}[H_t x_t x_t^{\top} H_t^{\top} +w_t w_t^{\top} +w_tw_t^{\top} + 2 H_t x_t w^{\top}_t]= H_t \mathbb{E}[x_t x_t^{\top}] H_t^\top +  \begin{pmatrix}
 \sigma & 0\\ 
 0&0
\end{pmatrix}.
\end{eqnarray*}
Hence $\mathbb{E}[x_{t+1} x_{t+1}^{\top}]$ is positive definite since  $\mathbb{E}[x_t x_t^{\top}]$  is positive definite and $H_t$ is positive definite. Therefore $\sx>0$.

The second claim can be proved by backward induction. For $t=T$,  $ P_{T}^{\pmb{K}} = Q_T$ is positive definite since $Q_T$ is positive definite. Assume $P_{t+1}^{\pmb{K}}$ is positive definite for some $t+1$, then take any $z \in \mathbb{R}^d$ such that $z \neq 0$,
\[
z^{\top} P_t^K z=z^{\top} Q_t\,z+z^{\top}K_t^{\top}R_tK_tz+z^{\top}H_t^{\top}P_{t+1}^{\pmb{K}}H_tz > 0.
\]
Note that $H_t$ is positive definite when  $\gamma k_t^1 +k_t^2>-1$ and $1+\gamma k_t^1>0$. The last inequality holds since $Q_t$ and   $H_t^{\top}P_{t+1}^{\pmb{K}}H_t$ are  positive definite, and  $K_t^{\top}R_tK_t$ is positive semi-definite.  Hence we have $P_t^{\pmb{K}}$ positive definite for all $t=0,1,2,\cdots,T$.
\end{proof}

\end{document}